\crefname{equation}{}{}
\Crefname{equation}{}{}
\crefname{definition}{\textbf{definition}}{definitions}
\Crefname{definition}{Definition}{Definitions}
\crefname{assumption}{\textbf{assumption}}{assumptions}
\Crefname{assumption}{Assumption}{Assumptions}
\definecolor{maroon}{RGB}{192,80,77}
\definecolor{mypink3}{cmyk}{0, 0.7808, 0.4429, 0.1412}
\newtheorem{theorem}{Theorem}[section]
\newtheorem{lemma}[theorem]{Lemma}
\newtheorem{proposition}[theorem]{Proposition}
\newtheorem{definition}[theorem]{Definition}
\newtheorem{remark}[theorem]{Remark}
\newtheorem{assumption}[theorem]{Assumption}
\newcommand\norm[1]{\left\lVert#1\right\rVert}
\newcommand{\argmax}{\mathop{\mathrm{argmax}}}
\def\E{\mathbb{E}}
\def\P{\mathbb{P}}
\def\Var{\mathrm{Var}}
\def\R{\mathbb{R}}
\begin{document}

\title{{Towards Instance-Optimal Offline Reinforcement Learning with Pessimism\footnote{To appear at {Conference on Neural Information Processing Systems} (NeurIPS), 2021.}}}

\author[1,2]{Ming Yin }
\author[1]{Yu-Xiang Wang}
\affil[1]{Department of Computer Science, UC Santa Barbara}
\affil[2]{Department of Statistics and Applied Probability, UC Santa Barbara}
\affil[ ]{\texttt{ming\_yin@ucsb.edu}   \quad
	\texttt{yuxiangw@cs.ucsb.edu}}

\date{}

\maketitle

\begin{abstract}

We study the \emph{offline reinforcement learning}  (offline RL) problem, where the goal is to learn a reward-maximizing policy in an unknown \emph{Markov Decision Process} (MDP) using the data coming from a policy $\mu$. In particular, we consider the sample complexity problems of offline RL for finite-horizon MDPs. Prior works study this problem based on different data-coverage assumptions, and their learning guarantees are expressed by the covering coefficients which lack the explicit characterization of system quantities. In this work, we analyze the \emph{Adaptive Pessimistic Value Iteration} (APVI) algorithm and derive the suboptimality upper bound that nearly matches
\begin{equation}\label{eqn:intrinsic}
O\left(\sum_{h=1}^H\sum_{s_h,a_h}d^{\pi^\star}_h(s_h,a_h)\sqrt{\frac{\mathrm{Var}_{P_{s_h,a_h}}{(V^\star_{h+1}+r_h)}}{d^\mu_h(s_h,a_h)}}\sqrt{\frac{1}{n}}\right).
\end{equation}
In complementary, we also prove a \emph{per-instance} information-theoretical lower bound under the weak assumption that $d^\mu_h(s_h,a_h)>0$ if $d^{\pi^\star}_h(s_h,a_h)>0$. Different from the previous minimax lower bounds, the \emph{per-instance lower bound} (via local minimaxity) is a much stronger criterion as it applies to individual instances separately. Here $\pi^\star$ is a optimal policy, $\mu$ is the behavior policy and $d_h^\mu$ is the marginal state-action probability. We call \eqref{eqn:intrinsic} the \emph{intrinsic offline reinforcement learning bound} since it directly implies all the existing optimal results: minimax rate under uniform data-coverage assumption, horizon-free setting, single policy concentrability, and the tight problem-dependent results. Later, we extend the result to the \emph{assumption-free} regime (where we make no assumption on $
\mu$) and obtain the assumption-free intrinsic bound. Due to its generic form, we believe the intrinsic bound could help illuminate what makes a specific problem hard and reveal the fundamental challenges in offline RL.



\end{abstract}

\tableofcontents


\section{Introduction}\label{sec:introduction}

In \emph{offline reinforcement learning} (offline RL \cite{levine2020offline,lange2012batch}), the goal is to learn a reward-maximizing policy in an unknown environment (\emph{Markov Decision Process} or MDP) using the historical data coming from a (fixed) behavior policy $\mu$. Unlike online RL, where the agent can keep interacting with the environment and gain new feedback by exploring unvisited state-action space, offline RL usually populates when such online interplays are expensive or even unethical. Due to its nature of without the access to interact with the MDP model (which causes the distributional mismatches), most of the literature that study the sample complexity / provable efficiency of offline RL (\emph{e.g.} \cite{le2019batch,chen2019information,xie2020q,xie2020batch,yin2021near,yin2021nearoptimal,ren2021nearly,rashidinejad2021bridging,xie2021policy}) rely on making different data-coverage assumptions for making the problem learnable and provide the near-optimal worst-case performance bounds that depend on their data-coverage coefficients. Those results are valuable in general as they do not depend on the structure of the particular problem, therefore, remain valid even for pathological MDPs. But is this good enough?

In practice, the empirical performances of offline reinforcement learning (\emph{e.g.} \cite{gulcehre2020rl,fu2020d4rl,fu2021benchmarks,janner2021reinforcement}) are often far better than what those non-adaptive / problem-independent bounds would indicate. Although empirical evidence can help explain why we may observe better or worse performances on different MDPs, a systematic understanding of what types of decision processes and what kinds of behavior policies are inherently easier or more challenging for offline RL is lacking. Besides, despite the fact that a non-adaptive bound can learn even the pathological examples within the assumption family, there is no guarantee for the instances outside the family. However, practical offline reinforcement learning problems are usually beyond the scope of certain data-coverage assumptions, which limits the applicability of those results. Can we make as few assumptions as possible? Or even more, what can we guarantee when no assumption is made about offline learning? 

Those motivate us to derive the provably efficient bounds that are adaptive to the individual instances but only require minimal assumptions so they can be widely applied in most cases. Ideally, such bounds should characterize the system structures of the specific problems, hold even for peculiar instances that do not satisfy the standard data-coverage assumptions, and recover the worst-case guarantees when the assumptions are satisfied. As mentioned in \cite{zanette2019tighter}, a fully adaptive characterization in RL is important as it might bring considerable saving in the time spent designing domain-specific RL solutions and in training a human expert to judge and recognize the complexity of different problems.

\subsection{Our contribution}\label{sec:contribution}

In this work, we provide the analysis for the \emph{adaptive pessimistic value iteration} (APVI) (Algorithm~\ref{alg:APVI}) with finite horizon time-inhomogeneous (non-stationary) MDPs and derive a strong adaptive bound that is near-optimal under the weak assumption $d^\mu_h(s_h,a_h)>0$ if $d^{\pi^\star}_h(s_h,a_h)>0$ (Theorem~\ref{thm:APVI}). Specifically, our bound (quantity \eqref{eqn:intrinsic}) explicitly depends on the marginal importance ratios (between the optimal policy $\pi^\star$ and the behavior policy $\mu$) and the per-step conditional variances. In addition, we provide an instance-dependent (local minimax) lower bound (Theorem~\ref{thm:adaptive_lower_bound}) to certify \eqref{eqn:intrinsic} is nearly optimal at the instance level for offline learning and call it \emph{the intrinsic offline learning bound}. The intrinsic bound has the following consequences.

\begin{itemize}
	\item In the non-adaptive / worst-case regime (\ref{subsec:one}-\ref{subsec:two}), the intrinsic bound implies $\widetilde{O}(H^3/d_m\epsilon^2)$ complexity under the uniform data-coverage \ref{assum:uniform}, $\tilde{O}(H^3SC^\star/\epsilon^2)$ complexity under the single policy concentrability assumption \ref{assum:single_concen} and $\widetilde{O}(H/d_m\epsilon^2)$ complexity when the sum of rewards is bounded by $1$. All of those are optimal in their respectively regimes \citep{yin2021near,rashidinejad2021bridging,xie2021policy,ren2021nearly};
	\item In the adaptive domain (\ref{subsec:three}), the intrinsic bound implies the tight problem-dependent counterpart of \cite{zanette2019tighter}, yields $\tilde{O}(H^3/nd_m)$ fast convergence in the deterministic systems, has improved complexity in the partially deterministic systems and a family of highly mixing problems, and remains optimal when reducing to the tabular contextual bandits.   
\end{itemize}

Beyond the above, due to the generic form of the intrinsic bound, we could come up with as many problem instances (that are of our interests) as possible and study their properties. In this sense, the intrinsic bound helps illuminate the fundamental nature of offline RL.

Furthermore, as a step towards \emph{assumption-free} offline reinforcement learning, we build a {modified} AVPI and obtain an adaptive bound that could characterize the suboptimality gap in the state-action space that is agnostic to the behavior policy (Theorem~\ref{thm:AFRL}). To the best of our knowledge, all of these results are the first of its kinds.

\subsection{Related work}
Finite sample analysis for offline reinforcement learning can be traced back to \cite{szepesvari2005finite,antos2008fitted,antos2008learning} for the \emph{infinite horizon discounted setting} via Fitted Q-Iteration (FQI) type function approximation algorithms. \citep{chen2019information,le2019batch,xie2020batch,xie2020q} follow this line of research and derive the information-theoretical bounds. Recently, \cite{xie2020batch} considers the offline RL with only the {realizability} assumption, \cite{liu2020provably,chang2021mitigating} considers the offline RL {without sufficient coverage} and \cite{kidambi2020morel,uehara2021pessimistic} uses the model-based approach for addressing offline RL. Under those weak coverage assumption, their finite sample analysis are suboptimal (\emph{e.g.} in terms of the effective horizon $(1-\gamma)^{-1}$). Recently, \cite{yin2021near,yin2021nearoptimal,ren2021nearly} study the finite horizon case. In the linear MDP case, \cite{jin2020pessimism} studies the pessimistic algorithm for offline policy learning under only the compliance assumption, and, concurrently, \cite{xie2021bellman} proposes the general pessimistic function approximation framework with instantiation in linear MDP and \cite{zanette2021provable} shows actor-critic style algorithm is near-optimal for linear Bellman complete model. In addition, \cite{wang2020statistical,zanette2020exponential} prove some exponential lower bounds under their linear function approximation assumptions.

Among them, there are a few works that achieve the sample optimality under their respective assumptions. Under the uniform data coverage (minimal state-action probability $d_m>0$), \cite{yin2021near} first proves the optimal $\tilde{O}(H^3/d_m\epsilon^2)$ complexity in the time-inhomogeneous MDP. Recently, \cite{yin2021nearoptimal} designs the offline variance reduction algorithm to achieve the optimal $\tilde{O}(H^2/d_m\epsilon^2)$ rate for the time-homogeneous case.  Under the setting where the total cumulative reward is bounded by $1$, \cite{ren2021nearly} obtains the horizon-free result with $\tilde{O}(1/d_m)$. More recently, \cite{rashidinejad2021bridging} considers the single concentrability coefficient $C^\star:=\max_{s,a}{d^{\pi^\star}(s,a)}/{d^\mu(s,a)}$ and derives the upper bound $\tilde{O}[(1-\gamma)^{-5}SC^\star/\epsilon^2]$ in the infinite horizon setting which is recently improved by the concurrent work \cite{xie2021policy}. While those worst-case guarantees are desirable, none of them can explain the hardness of the individual problems.\footnote{We do mention \cite{zanette2021provable} is near-optimal in their setting, but it is unclear whether it remains optimal in the standard setting where $Q^\pi\in[0,H]$, since there is an additional $H$ factor by rescaling.}

\section{Preliminaries }\label{sec:formulation}

\textbf{Episodic non-stationary (time-varying) reinforcement learning.} A finite-horizon \emph{Markov Decision Process} (MDP) is denoted by a tuple $M=(\mathcal{S}, \mathcal{A}, P, r, H, d_1)$ \citep{sutton2018reinforcement}, where $\mathcal{S}$ is the finite state space and $\mathcal{A}$ is the finite action space with $S:=|\mathcal{S}|<\infty,A:=|\mathcal{A}|<\infty$. A non-stationary transition kernel $P_h:\mathcal{S}\times\mathcal{A}\times\mathcal{S} \mapsto [0, 1]$ maps each state action$(s_h,a_h)$ to a probability distribution $P_h(\cdot|s_h,a_h)$ and $P_h$ can be different across the time. Besides, $r : \mathcal{S} \times{A} \mapsto \mathbb{R}$ is the expected instantaneous reward function satisfying $0\leq r\leq1$. $d_1$ is the initial state distribution. $H$ is the horizon. A policy $\pi=(\pi_1,\ldots,\pi_H)$ assigns each state $s_h \in \mathcal{S}$ a probability distribution over actions according to the map $s_h\mapsto \pi_h(\cdot|s_h)$ $\forall h\in[H]$.  An MDP together with a policy $\pi$ induce a random trajectory $ s_1, a_1, r_1, \ldots, s_H,a_H,r_H,s_{H+1}$ with $s_1 \sim d_1, a_h \sim \pi(\cdot|s_h), s_{h+1} \sim P_h (\cdot|s_h, a_), \forall h \in [H]$ and $r_h$ is a random realization given the observed $s_h,a_h$.

\textbf{$Q$-values, Bellman (optimality) equations.} The value function $V^\pi_h(\cdot)\in \R^S$ and Q-value function $Q^\pi_h(\cdot,\cdot)\in \R^{S\times A}$ for any policy $\pi$ is defined as:
$
V^\pi_h(s)=\E_\pi[\sum_{t=h}^H r_{t}|s_h=s] ,\;\;Q^\pi_h(s,a)=\E_\pi[\sum_{t=h}^H  r_{t}|s_h,a_h=s,a],\;\forall s,a\in\mathcal{S},\mathcal{A},h\in[H].
$ The performance is defined as $v^\pi:=\E_{d_1}\left[V^\pi_1\right]=\E_{\pi,d_1}\left[\sum_{t=1}^H  r_t\right]$, where we denote $V_h^\pi,Q_h^\pi$ as column vectors and $P_h\in\R^{SA\times S}$ the transition matrix, then the vector form Bellman (optimality) equations follow $\forall h\in[H]$:
$
Q^\pi_h=r_h+P_hV^\pi_{h+1},\;\;V^\pi_h=\E_{a\sim\pi_h}[Q^\pi_h], \;\;\;Q^\star_h=r_h+P_hV^\star_{h+1},\; V^\star_h=\max_a Q^\star_h(\cdot,a).
$In addition, we denote the per-step marginal state-action occupancy $d^\pi_h(s,a)$ as:
{$
d^\pi_h(s,a):=\P[s_h=s|s_1\sim d_1,\pi]\cdot\pi_h(a|s),
$
}which is the marginal state-action probability at time $h$. 


\textbf{Offline setting and the goal.} The offline RL requires the agent to find a policy $\pi$ such that the performance $v^\pi$ is maximized, given only the episodic data {\small$\mathcal{D}=\left\{\left(s_{h}^{\tau}, a_{h}^{\tau}, r_{h}^{\tau}, s_{h+1}^{\tau}\right)\right\}_{\tau\in[n]}^{h\in[H]}$} rolled out from some behavior policy $\mu$. The offline nature requires we cannot change $\mu$ and in particular we do not assume the functional knowledge of $\mu$. That is to say, given the batch data $\mathcal{D}$ and a targeted accuracy $\epsilon>0$, the offline RL seeks to find a policy $\pi_\text{alg}$ such that $v^\star-v^{\pi_\text{alg}}\leq\epsilon$.

\subsection{Assumptions in offline RL}

We revise several types of assumptions proposed by existing studies that can yield provably efficient results. Recall $d^\mu_h(s_h,a_h)$ is the marginal state-action probability and $\mu$ is the behavior policy.

\begin{assumption}[Uniform data coverage \citep{yin2021near}]\label{assum:uniform}
	The behavior policy obeys that $d_m:=\min_{h,s_h,a_h} d_h^\mu (s_h,a_h) > 0$. Here the infimum is over all the states satisfying there exists certain policy so that this state can be reached by the current MDP with this policy. 
\end{assumption}
This is the strongest assumption in offline RL as it requires $\mu$ to explore each state-action pairs with positive probability. Under \ref{assum:uniform}, it mostly holds $1/d_m\geq SA$. This reveals offline learning is generically harder than \emph{the generative model setting} \citep{agarwal2020model} in the statistical sense. On the other hand, this is required for the \emph{uniform OPE} task in \cite{yin2021near} as it seeks to simultaneously evaluate all the policies within the policy class and it is in general a harder task than offline learning itself.

\begin{assumption}[Uniform concentrability \cite{szepesvari2005finite,chen2019information}]\label{assum:concen}
	For all the policies, $C_\mu:=\sup_{\pi,h} ||d^\pi_h(\cdot,\cdot)/d^\mu_h(\cdot,\cdot)||_\infty<\infty$.
\end{assumption}

This is a classical offline RL condition that is commonly assumed in the function approximation scheme (\emph{e.g.} Fitted Q-Iteration). Qualitatively, this is a uniform data-coverage assumption that is similar to Assumption~\ref{assum:uniform}, but quantitatively, the coefficient $C_\mu$ can be smaller than $1/d_m$ due the $d^\pi_h$ term in the numerator. 

\begin{assumption}[\cite{liu2019off}]\label{assum:single_concen}
	There exists one optimal policy $\pi^\star$, s.t. $\forall s_h,a_h\in\mathcal{S},\mathcal{A}$, $d^\mu_h(s_h,a_h)>0$ if $d^{\pi^\star}_h(s_h,a_h)>0$. We further denote the trackable set as $\mathcal{C}_h:=\{(s_h,a_h):d^\mu_h(s_h,a_h)>0\}$. 
\end{assumption}
 Assumption~\ref{assum:single_concen} is (arguably) the weakest assumption needed for accurately learning the optimal value $v^\star$ and we will use \ref{assum:single_concen} for most parts of this paper. It only requires $\mu$ to trace the state-action space of one optimal policy and can be agnostic at other locations. \cite{rashidinejad2021bridging,xie2021policy} considers this assumption and provide analysis is based on the single concentrability coefficient  $C^\star:=\max_{s,a}{d^{\pi^\star}(s,a)}/{d^\mu(s,a)}$. The dependence on $C^\star$ makes their result less adaptive since there can be lots of locations that have the ratio ${d^{\pi^\star}(s,a)}/{d^\mu(s,a)}$ much smaller than $C^\star$. Furthermore, what could we end up with when \ref{assum:single_concen} is not met? We will provide our answers in the subsequent sections.

\section{A warm-up case study: Vanilla Pessimistic Value Iteration}\label{sec:VPVI}

As a step towards the optimal and strong adaptive offline RL bound, we analyze \emph{the vanilla pessimistic value iteration} (VPVI), a tabular counterpart of \emph{pessimistic value iteration} (PEVI initiated in \cite{jin2020pessimism}), to understand what is missing for achieving the fully adaptivity. In particular, VPVI relies on the model-based construction.

\textbf{Model-based Components.} Given data {\small$\mathcal{D}=\left\{\left(s_{h}^{\tau}, a_{h}^{\tau}, r_{h}^{\tau}, s_{h+1}^{\tau}\right)\right\}_{\tau\in[n]}^{h \in[H]}$}, we denote $n_{s_h,a_h}:=\sum_{\tau=1}^n\mathbf{1}[s_h^{\tau},a_h^{\tau}=s_h,a_h]$ be the total counts that visit $(s_h,a_h)$ pair at time $h$, then we use the offline plug-in estimator to construct the estimators for ${P_h}$ and $r_h$ as:
{\small
\begin{equation}\label{eqn:mb_est}
\widehat{P}_h(s'|s_h,a_h)=\frac{\sum_{\tau=1}^n\mathbf{1}[(s^{\tau}_{h+1},a^{\tau}_h,s^{\tau}_h)=(s^\prime,s_h,a_h)]}{n_{s_h,a_h}},\; \widehat{r}_h(s_h,a_h)=\frac{\sum_{\tau=1}^n\mathbf{1}[(a^{\tau}_h,s^{\tau}_h)=(s_h,a_h)]\cdot r_h^\tau}{n_{s_h,a_h}},
\end{equation}
}if $n_{s_h,a_h}>0$ and $\widehat{P}_h(s'|s_h,a_h)={1}/{S},\widehat{r}_h(s_h,a_h)=0$ if $n_{s_h,a_h}=0$. In particular, we use the word ``vanilla'' as it directly mirrors \cite{jin2020pessimism} with a pessimistic penalty of order $O(H/\sqrt{n_{s_h,a_h}})$.\footnote{This is due to $\sqrt{\phi\left(s_{h}, a_{h}\right)^{\top} \Lambda_{h}^{-1} \phi\left(s_{h}, a_{h}\right)}$ reduces to $\sqrt{{1}/n_{s_h,a_h}}$ when setting $\phi(s_h,a_h)=\mathbf{1}(s_h,a_h)$ and $\lambda=0$.} With $\widehat{P}_h,\widehat{r}_h$ in Algorithm~\ref{alg:VPVI} (which we defer to Appendix), VPVI guarantees the following:

\begin{theorem}\label{thm:VPVI}
Under the Assumption~\ref{assum:single_concen}, denote $\bar{d}_m:=\min_{h\in[H]}\{d^\mu_h(s_h,a_h):d^\mu_h(s_h,a_h)>0\}$. For any $0<\delta<1$, there exists absolute constants $c_0,C'>0$, such that when $n>c_0 \cdot 1/\bar{d}_m\cdot\iota$ ($\iota=\log(HSA/\delta)$), with probability $1-\delta$, the output policy $\widehat{\pi}$ of VPVI satisfies
{
\begin{equation}\label{eqn:VPVI}
0\leq v^\star-v^{\widehat{\pi}}\leq C'H\sum_{h=1}^H\sum_{(s_h,a_h)\in\mathcal{C}_h}d^{\pi^\star}_h(s_h,a_h)\cdot\sqrt{\frac{\iota}{ n\cdot d^\mu_h{(s_h,a_h)}}}.
\end{equation}
}
\end{theorem}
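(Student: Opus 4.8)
The plan is to run the pessimism argument in three moves: (i) reduce $v^\star-v^{\hpi}$ to the expected pessimism penalty along the optimal occupancy $d_h^{\pi^\star}$; (ii) show that, with high probability, the penalty $\Gamma_h$ pointwise dominates the one-step Bellman evaluation error; and (iii) convert the empirical visitation counts $n_{s_h,a_h}$ into the population quantities $n\,d_h^\mu(s_h,a_h)$.

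For (i), set $\Gamma_h(s_h,a_h)=C_1 H\sqrt{\iota/n_{s_h,a_h}}$ on $\{n_{s_h,a_h}>0\}$, let $\widehat{Q}_h=\widehat{r}_h+\widehat{P}_h\widehat{V}_{h+1}-\Gamma_h$, $\widehat{V}_h=\max_a\widehat{Q}_h$ clipped to $[0,H-h+1]$, and $\hpi$ greedy w.r.t.\ $\widehat{Q}_h$. With $\iota_h(s,a):=(r_h+P_h\widehat{V}_{h+1})(s,a)-\widehat{Q}_h(s,a)$, the extended value-difference identity (cf.\ \cite{jin2020pessimism}) gives
\begin{equation*}
v^\star-v^{\hpi}=\sum_{h=1}^H\E_{\pi^\star}\!\left[\iota_h(s_h,a_h)\right]-\sum_{h=1}^H\E_{\hpi}\!\left[\iota_h(s_h,a_h)\right]+\sum_{h=1}^H\E_{\pi^\star}\!\left[\langle\widehat{Q}_h(s_h,\cdot),\,\pi^\star_h(\cdot|s_h)-\hpi_h(\cdot|s_h)\rangle\right].
\end{equation*}
Greediness of $\hpi$ makes the last sum $\le 0$; on the good event of (ii) we will have $0\le\iota_h\le 2\Gamma_h$ pointwise, so the middle sum is $\ge0$ and can be dropped, leaving $v^\star-v^{\hpi}\le 2\sum_{h=1}^H\sum_{s_h,a_h}d_h^{\pi^\star}(s_h,a_h)\,\Gamma_h(s_h,a_h)$; the bound $v^\star-v^{\hpi}\ge0$ is immediate from optimality of $\pi^\star$.

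For (ii), let $\mathcal{E}:=\{\,|(\widehat{r}_h-r_h)(s,a)+(\widehat{P}_h-P_h)(\cdot|s,a)^\top\widehat{V}_{h+1}|\le\Gamma_h(s,a)\ \ \forall\, h,\ (s,a)\text{ with }n_{s,a}>0\,\}$, which a backward induction shows is exactly what forces both $\widehat{V}_h\le V_h^\star$ (pessimism) and $0\le\iota_h\le 2\Gamma_h$. Controlling $\P[\mathcal{E}]$ is the heart of the proof, and the issue is that $\widehat{V}_{h+1}$ is built from the same trajectories as $\widehat{P}_h$: a naive bound loses a spurious $\sqrt{S}$ (from an $\ell_1$ deviation of $\widehat{P}_h$ or an $\epsilon$-net of $[0,H]^S$) or breaks the pointwise domination. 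The fix is to decouple against the fixed reference $V^\star_{h+1}$ — writing $(\widehat{P}_h-P_h)^\top\widehat{V}_{h+1}=(\widehat{P}_h-P_h)^\top V^\star_{h+1}+(\widehat{P}_h-P_h)^\top(\widehat{V}_{h+1}-V^\star_{h+1})$, where the first term is, conditional on the counts, an average of $n_{s,a}$ i.i.d.\ evaluations of a \emph{fixed} vector in $[0,H]^S$ and is controlled by $O(H\sqrt{\iota/n_{s,a}})$ via Hoeffding plus a union bound over the $HSA$ triples, while the residual is handled recursively as a lower-order term (or, alternatively, removed by a leave-one-out/absorbing-MDP decoupling as in prior tabular analyses, e.g.\ \cite{yin2021near}), so that $\Gamma_h(s,a)=C_1H\sqrt{\iota/n_{s,a}}$ with $C_1$ absolute suffices on an event of probability $\ge1-\delta/2$ once $n\gtrsim\iota/\bar{d}_m$.

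For (iii), a multiplicative Chernoff bound plus a union bound gives $n_{s_h,a_h}\ge\frac{1}{2} n\,d_h^\mu(s_h,a_h)$ simultaneously over all $h$ and all $(s_h,a_h)\in\mathcal{C}_h$ (using $d_h^\mu(s_h,a_h)\ge\bar{d}_m$ and $n>c_0\iota/\bar{d}_m$), with probability $\ge1-\delta/2$; hence $\Gamma_h(s_h,a_h)\le C_1H\sqrt{2\iota/(n\,d_h^\mu(s_h,a_h))}$ on $\mathcal{C}_h$. Finally, Assumption~\ref{assum:single_concen} forces $d_h^{\pi^\star}(s_h,a_h)=0$ whenever $(s_h,a_h)\notin\mathcal{C}_h$, so the sum from (i) is supported on $\mathcal{C}_h$; intersecting the two high-probability events and combining gives \eqref{eqn:VPVI} with $C'=2\sqrt{2}\,C_1$ and failure probability $\le\delta$. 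I expect step (ii) — the $\sqrt{S}$-free concentration for the data-dependent $\widehat{V}_{h+1}$ and its interaction with the burn-in threshold — to be the main obstacle; and it is precisely the crude use of $H$ in place of a Bernstein-type $\sqrt{\Var_{P_{s_h,a_h}}(V^\star_{h+1}+r_h)}$ here that motivates the variance-aware APVI analysis.
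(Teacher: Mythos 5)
Your steps (i) and (iii) match the paper's argument: the same extended value-difference decomposition (Lemma~\ref{lem:decompose_difference}), the same use of pessimism to drop both the $\E_{\widehat{\pi}}$ term and the greedy inner-product term, and the same multiplicative Chernoff conversion of $n_{s_h,a_h}$ into $n\,d^\mu_h(s_h,a_h)$ on $\mathcal{C}_h$. The divergence, and the gap, is in your step (ii). The paper needs no decoupling against $V^\star_{h+1}$ at all: because $\widehat{V}_{h+1}$ is computed backwardly and depends only on the transition tuples from steps $h+1,\ldots,H$, while $\widehat{P}_h(\cdot|s,a)$ uses only the step-$h$ transitions, the two are \emph{conditionally independent}; Hoeffding's inequality therefore applies directly to $(\widehat{P}_h-P_h)\widehat{V}_{h+1}$ at each $(s_h,a_h)$ and gives the clean $2\sqrt{H^2\iota/n_{s_h,a_h}}$ bound with only an $HSA$ union bound (Lemma~\ref{lem:bellman_diff}). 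You instead treat the data dependence as a genuine obstruction and propose the split $(\widehat{P}_h-P_h)^\top V^\star_{h+1}+(\widehat{P}_h-P_h)^\top(\widehat{V}_{h+1}-V^\star_{h+1})$.

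That split does not close as sketched. The cross term $(\widehat{P}_h-P_h)^\top(\widehat{V}_{h+1}-V^\star_{h+1})$ can be controlled either by $\|\widehat{P}_h-P_h\|_1\,\|\widehat{V}_{h+1}-V^\star_{h+1}\|_\infty\lesssim H\sqrt{S\iota/n_{s,a}}$, which reinstates exactly the $\sqrt{S}$ you are trying to avoid, or by a uniform bound $\|\widehat{V}_{h+1}-V^\star_{h+1}\|_\infty\lesssim H^2/\sqrt{n\bar{d}_m}$, which is \emph{not} available under Assumption~\ref{assum:single_concen} alone: the behavior policy need only cover the support of $\pi^\star$, so at states outside that support the clipped, heavily penalized $\widehat{V}_{h+1}(s)$ can sit at $0$ while $V^\star_{h+1}(s)=\Theta(H)$, no matter how large $n$ is. Moreover, even where such an $\ell_\infty$ control does hold (as in the paper's self-bounding lemma for APVI), its proof itself rests on first concentrating $(\widehat{P}_h-P_h)\widehat{V}_{h+1}$, so invoking it here as the "recursive lower-order" step is circular. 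The leave-one-out alternative you mention could be made to work but is much heavier machinery than needed. The single missing observation is that the stage-wise, time-inhomogeneous construction of Algorithm~\ref{alg:VPVI} already decouples $\widehat{V}_{h+1}$ from $\widehat{P}_h$; with that in hand, your step (ii) collapses to one Hoeffding bound and the rest of your argument goes through as written.
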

The full proof can be found in Appendix~\ref{sec:VPVI_proof}. Theorem~\ref{thm:VPVI} makes some improvements over the existing works. First, it is more adaptive than the results with uniform data-coverage Assumption~\ref{assum:uniform} (\cite{yin2021near,ren2021nearly}). In addition, by straightforward calculation \eqref{eqn:VPVI} can be bounded by $\tilde{O}(\sqrt{H^4SC^\star/n})$ which improves VI-LCB \citep{rashidinejad2021bridging} by a factor of $H$.\footnote{To be rigorous, translating the result from the infinite horizon setting to the finite horizon setting requires explanation. We add this discussion in Appendix~\ref{sec:dis_VPVI}.} Besides, the analysis of VPVI also improves the direct reduction of PEVI \citep{jin2020pessimism} in the tabular case by a factor $SA$ since their $\beta=SAH$ when $d=SA$.  

However, VPVI is not optimal as the dependence on horizon is $H^4$ which does not match the optimal worst case guarantee $H^3$ \citep{yin2021near} in the nonstationary setting. Also, the explicit dependence on $H$ in \eqref{eqn:VPVI} possibly hides some key features of the specific offline RL instances. For example, no improvement can be made if the system has the deterministic transition.

\begin{algorithm}[H]
	\caption{Adaptive (\emph{assumption-free}) Pessimistic Value Iteration or LCBVI-Bernstein}
	\label{alg:APVI}
	\small{
		\begin{algorithmic}[1]
			\STATE {\bfseries Input:} Offline dataset $\mathcal{D}=\{(s_h^\tau,a_h^\tau,r_h^\tau,s_{h+1}^\tau)\}_{\tau,h=1}^{n,H}$. Set $C_1=2,C_2=14$, failure probability $\delta$.
			
			\STATE {\bfseries Initialization:} Set $\widehat{V}_{H+1}(\cdot)\leftarrow 0$.  Set $\iota=\log(HSA/\delta)$. (if assumption-free, set $M^\dagger,\widehat{M}^\dagger$ as in Section~\ref{sec:assumption_free}.)
			
			\FOR{time $h=H,H-1,\ldots,1$}
			\STATE Set $\widehat{Q}_h(\cdot,\cdot)\leftarrow {\widehat{r}_h(\cdot,\cdot)}+(\widehat{P}_{h}\cdot \widehat{V}_{h+1})(\cdot,\cdot)$\;\;(use ${\widehat{r}_h^\dagger}+(\widehat{P}_{h}^\dagger\cdot \widehat{V}_{h+1})$ if assumption-free)
			\STATE $\forall s_h,a_h$, set $\Gamma_h(s_h,a_h)=C_1\sqrt{\frac{\mathrm{Var}_{\widehat{P}_{s_h,a_h}}(\widehat{r}_h+\widehat{V}_{h+1})\cdot\iota}{n_{s_h,a_h}}}+\frac{C_2H\cdot\iota}{n_{s_h,a_h}}$ if $n_{s_h,a_h}\geq 1$, o.w. set to $\frac{CH\iota}{1}$. 
			 \STATE (If assumption-free, use $C_1\sqrt{{\mathrm{Var}_{\widehat{P}^\dagger_{s_h,a_h}}(\widehat{r}^\dagger_h+\widehat{V}_{h+1})\cdot\iota}/{n_{s_h,a_h}}}+\frac{C_2H\cdot\iota}{n_{s_h,a_h}}$ if $n_{s_h,a_h}\geq 1$, o.w. use $0$.)
			\STATE Set $\widehat{Q}^p_h(\cdot,\cdot)\leftarrow \widehat{Q}_h(\cdot,\cdot)-\Gamma_h(\cdot,\cdot)$.   Set $\overline{Q}_h(\cdot,\cdot)\leftarrow \min\{\widehat{Q}^p_h(\cdot,\cdot),H-h+1\}^{+}$.\COMMENT{Pessmistic update}
			\STATE $\forall s_h$, Select $\widehat{\pi}_h(\cdot|s_h)\leftarrow \argmax_{\pi_h}\langle \overline{Q}_h(s_h,\cdot),\pi_h(\cdot|s_h)\rangle$. Set $\widehat{V}_h(s_h)\leftarrow\langle \overline{Q}_h(s_h,\cdot), \widehat{\pi}_h(\cdot|s_h) \rangle$.
			\ENDFOR
			
			\STATE {\bfseries Output: $\{\widehat{\pi}_h\}$. } 
			
		\end{algorithmic}
	}
\end{algorithm}

\section{Intrinsic Offline Reinforcement Learning bound}\label{sec:intrinsic}

Now we go deeper to understand what is the more intrinsic characterization for offline reinforcement learning. From the study of VPVI, penalizing the Q-function by $\widetilde{O}(H/\sqrt{n_{s_h,a_h}})$ is crude as it estimates the confidence width of $\widehat{Q}_h$ in Algorithm~\ref{alg:VPVI} too conservatively therefore loses the accuracy (the bound is suboptimal). This motivates us to use empirical standard deviation instead to create a more adaptive (and also less conservative) Bernstein-type confidence width as the pessimistic penalty:
{\small
 \begin{equation}\label{eqn:pessimistic_pen}
\Gamma_h(s_h,a_h)=\widetilde{O}\bigg[\sqrt{\frac{\mathrm{Var}_{\widehat{P}_{s_h,a_h}}(\widehat{r}_h+\widehat{V}_{h+1})}{n_{s_h,a_h}}}+\frac{H}{n_{s_h,a_h}}\bigg]\;(\text{if} \;n_{s_h,a_h}>0);\;=\widetilde{O}(H)\;(\text{if} \;n_{s_h,a_h}=0).
\end{equation}
}and update $\widehat{Q}_h\leftarrow\widehat{Q}_h-\Gamma_h$. On one hand, $\sqrt{{\mathrm{Var}_{\widehat{P}_{s_h,a_h}}(\widehat{r}_h+\widehat{V}_{h+1})}/{n_{s_h,a_h}}}$ is a ``less pessimistic'' penalty than VPVI due to $\sqrt{\mathrm{Var}_{\widehat{P}}(\widehat{r}_h+\widehat{V}_{h+1})}\leq H$ and critically this design is more data-adaptive since it holds negative view towards the locations with high uncertainties and recommends the locations that we are confident about, as opposed to the online RL (which encourages exploration in the uncertain locations). Such principles are not reflected by the isotropic design in VPVI. On the other hand, it carries the extremely negative view towards fully agnostic locations $\widetilde{O}(H)$ which in turn causes the agent unlikely to choose them. We summarized the this \emph{adaptive pessimistic value iteration} (APVI) into the Algorithm~\ref{alg:APVI}, with $\widehat{P}_h,\widehat{r}_h$ defined in \eqref{eqn:mb_est}. APVI has the following guarantee. A sketch of the analysis is presented in Section~\ref{sec:proof_sketch} and Appendix~\ref{sec:proof_APVI} includes the full proof.

\begin{theorem}[Intrinsic offline RL bound]\label{thm:APVI}
	Under the Assumption~\ref{assum:single_concen}, denote $\bar{d}_m:=\min_{h\in[H]}\{d^\mu_h(s_h,a_h):d^\mu_h(s_h,a_h)>0\}$. For any $0<\delta<1$, there exists absolute constants $c_0,C'>0$, such that when $n>c_0 \cdot 1/\bar{d}_m\cdot\iota$ ($\iota=\log(HSA/\delta)$), with probability $1-\delta$, the output policy $\widehat{\pi}$ of APVI (Algorithm~\ref{alg:APVI}) satisfies ($\widetilde{O}$ hides log factor and higher order terms)
	\begin{equation}\label{eqn:APVI}
	0\leq v^\star-v^{\widehat{\pi}}\leq C'\sum_{h=1}^H\sum_{(s_h,a_h)\in\mathcal{C}_h}d^{\pi^\star}_h(s_h,a_h)\cdot\sqrt{\frac{\mathrm{Var}_{P_{s_h,a_h}}(r_h+V^\star_{h+1})\cdot\iota}{ n\cdot d^\mu_h{(s_h,a_h)}}}+\widetilde{O}\left(\frac{H^3}{n\cdot \bar{d}_m}\right)
	\end{equation}
\end{theorem}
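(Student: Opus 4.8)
The plan is to proceed by the standard pessimism decomposition but to carry a \emph{Bernstein-type} bound on the per-step estimation error rather than the crude Hoeffding bound used in VPVI. First I would establish the usual three facts that make pessimism work: (i) with high probability the estimated Q-function is a \emph{lower} bound, i.e. $\overline{Q}_h(s_h,a_h)\le Q^\star_h(s_h,a_h)$ for all $h,s_h,a_h$ on the good event, which gives the left inequality $0\le v^\star-v^{\widehat\pi}$ directly and, combined with the greedy choice of $\widehat\pi$, lets me bound the suboptimality by $v^\star - v^{\widehat\pi}\le \sum_{h=1}^H \mathbb{E}_{\pi^\star}[\,(\text{estimation error at }h)\,]$; (ii) the per-step error is controlled by the pessimistic penalty, $|\widehat{Q}_h - Q^\star_h| \lesssim \Gamma_h + (\text{propagation of }\widehat V_{h+1}-V^\star_{h+1})$; (iii) the telescoping identity $v^\star - v^{\widehat\pi} = \sum_h \sum_{s_h,a_h} d^{\pi^\star}_h(s_h,a_h)\big(Q^\star_h - \overline{Q}_h\big)(s_h,a_h)$ up to the error from $\widehat\pi$'s own trajectory, which by pessimism contributes with a favorable sign. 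The key point is that only state-action pairs in $\mathcal{C}_h$ carry positive $d^{\pi^\star}_h$ mass (by Assumption~\ref{assum:single_concen}), so only visited pairs matter, and for those $n_{s_h,a_h}\gtrsim n\, d^\mu_h(s_h,a_h)$ with high probability once $n \gtrsim \bar d_m^{-1}\iota$ (a multiplicative-Chernoff / Bernstein argument on the counts).

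Next I would open up the penalty $\Gamma_h(s_h,a_h)=C_1\sqrt{\mathrm{Var}_{\widehat P_{s_h,a_h}}(\widehat r_h+\widehat V_{h+1})\,\iota/n_{s_h,a_h}}+C_2H\iota/n_{s_h,a_h}$. Two things need to be shown: that $\Gamma_h$ is a \emph{valid} confidence width, i.e. $|(\widehat P_h - P_h)\widehat V_{h+1} + (\widehat r_h - r_h)| \le \Gamma_h$ on the good event — this is exactly an empirical-Bernstein inequality applied conditionally, using the empirical variance $\mathrm{Var}_{\widehat P_{s_h,a_h}}$ with the lower-order $H/n_{s_h,a_h}$ term absorbing the bias between empirical and true variance; and that the empirical variance can be replaced by the \emph{true} variance of the \emph{true} value function, $\mathrm{Var}_{\widehat P_{s_h,a_h}}(\widehat r_h+\widehat V_{h+1}) \le 2\,\mathrm{Var}_{P_{s_h,a_h}}(r_h + V^\star_{h+1}) + (\text{lower order})$. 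The latter swap is where one pays a few more $H/n$-type terms: bound $\mathrm{Var}_{\widehat P}(\widehat V_{h+1}) - \mathrm{Var}_{\widehat P}(V^\star_{h+1})$ by $\|\widehat V_{h+1}-V^\star_{h+1}\|_\infty \cdot (\text{const}\cdot H)$, bound $\mathrm{Var}_{\widehat P}(V^\star_{h+1}) - \mathrm{Var}_P(V^\star_{h+1})$ by another Bernstein/Hoeffding step of order $H^2\sqrt{\iota/n_{s_h,a_h}}$, and control $\|\widehat V_{h+1}-V^\star_{h+1}\|_\infty$ inductively — here I would need a crude global bound (the VPVI-style $\widetilde O(H^2/\sqrt{n\bar d_m})$ or even just the trivial pessimism bound) to feed into these lower-order corrections, which is exactly what gets packaged into the $\widetilde O(H^3/(n\bar d_m))$ additive term.

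Assembling, the main term becomes $C'\sum_h \sum_{(s_h,a_h)\in\mathcal{C}_h} d^{\pi^\star}_h(s_h,a_h)\sqrt{\mathrm{Var}_{P_{s_h,a_h}}(r_h+V^\star_{h+1})\iota/(n\,d^\mu_h(s_h,a_h))}$, coming from the $\sqrt{\mathrm{Var}}$ part of $\Gamma_h$ with $n_{s_h,a_h}$ replaced by its lower bound $\asymp n\,d^\mu_h(s_h,a_h)$; all the remaining pieces — the $H\iota/n_{s_h,a_h}$ part of $\Gamma_h$ summed against $\sum_h\sum d^{\pi^\star}_h\le H$, the variance-transfer corrections, the count-concentration slack, and the contribution of the fully-unvisited pairs outside $\mathcal C_h$ (which carry zero $d^{\pi^\star}$ mass) — collapse into $\widetilde O(H^3/(n\bar d_m))$ after using $1/n_{s_h,a_h}\lesssim 1/(n\bar d_m)$ and $\sum_h 1 = H$ bounds. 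I expect the \textbf{main obstacle} to be the propagation/self-referential step: controlling $\widehat V_{h+1}-V^\star_{h+1}$ well enough that the variance-transfer error stays genuinely lower-order (quadratic in $1/\sqrt n$, not just $1/\sqrt n$), which requires either a careful "total-variance" / law-of-total-variance argument to tame $\sum_h \mathrm{Var}_{P}(V^\star_{h+1})\le H^2$ when summing the cross terms, or a bootstrap where a first coarse bound on $\|\widehat V - V^\star\|_\infty$ is refined; the bookkeeping to make sure the $H^3$ (and not a worse power) is the final additive order is the delicate part.
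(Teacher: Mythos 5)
Your proposal is correct and follows essentially the same route as the paper: the extended value-difference decomposition with a nonnegative, empirical-Bernstein-bounded penalty term (the $\widehat\pi$ contribution dropping out by pessimism), multiplicative-Chernoff concentration of the counts $n_{s_h,a_h}\gtrsim n\,d^\mu_h(s_h,a_h)$, and a self-bounding/bootstrap lemma giving the crude bound $\|\widehat V_{h+1}-V^\star_{h+1}\|_\infty\lesssim H^2/\sqrt{n\bar d_m}$ that transfers the empirical variance to $\mathrm{Var}_{P_{s_h,a_h}}(r_h+V^\star_{h+1})$ with all slack absorbed into the $\widetilde O\left(H^3/(n\bar d_m)\right)$ term. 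The only cosmetic difference is that the paper obtains this theorem as a special case of the assumption-free result via the pessimistic augmented MDP $M^\dagger$, which collapses to the original MDP under Assumption~\ref{assum:single_concen}.
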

\begin{remark}
	APVI (Algorithm~\ref{alg:APVI}) can also be called \textbf{LCBVI-Bernstein} as it creates the offline counterpart of UCBVI in \cite{azar2017minimax}. However, to highlight that the resulting bound fully adapts to the specific system structure, we use the word ``adaptive'' instead.
\end{remark}

APVI makes significant improvements in a lot of aspects. First and foremost, the dominate term is fully expressed by the system quantities that admits no explicit dependence on $H,S,A$. To the best of our knowledge, this is the first offline RL bound that concretely depicts the interrelations within the problem when the problem instance is a tuple $(M,\pi^\star,\mu)$: an MDP $M$ (coupled with the optimal policy $\pi^\star$) with the data rolling from an offline logging policy $\mu$. As we will discuss later, this result indicates (nearly) all the optimal worst-case non-adaptive bounds (and clearly also the VPVI) under their respective regimes / assumptions. Thus, \eqref{eqn:APVI} is generic. More interestingly, Theorem~\ref{thm:APVI} caters to the specific MDP structures and adaptively yields improved sample complexities (\emph{e.g.} faster convergence in deterministic systems) that existing works cannot imply. Such features are crucial as it helps us to understand what type of problems are harder / easier than others, and even more, in a \emph{quantitative} way. Last but not least, to illustrate this bound exhibits the intrinsic nature of offline RL, we prove a \emph{per-instance dependent} information-theoretical lower bound that shares a similar formulation. The proof of Theorem~\ref{thm:adaptive_lower_bound} can be found in Appendix~\ref{sec:proof_lower_bound}.

\begin{theorem}[Instance-dependent information theoretical offline lower bound]\label{thm:adaptive_lower_bound}
	Denote $\mathcal{G}:=\{(\mu,M): \exists \pi^\star\;s.t. \; d^\mu_h(s,a)>0\;\text{if}\;d^{\pi^\star}_h(s,a)>0\}$. Fix an instance $\mathcal{P}=(\mu,M)\in\mathcal{G}$. Let $\mathcal{D}$ consists of $n$ episodes and define $\xi=\sup_{h,s_h,a_h,s_{h+1}, d^\mu_h(s_h,a_h)\cdot \Var_{P_{s_h,a_h}}(V_{h+1}^\star)>0}\frac{P_h(s_{h+1}|s_h,a_h)\left(V_{h+1}^\star(s_{h+1})-\E_{P_{s_h,a_h}}[V_{h+1}^\star]\right)}{\sqrt{2\cdot  d^\mu_h(s_h,a_h)\cdot \Var_{P_{s_h,a_h}}(V_{h+1}^\star)}}$. Let $\widehat{\pi}$ to be the output of any algorithm.  Define the {local non-asymptotic minimax risk} as 
	\begin{equation}\label{eqn:local_risk}
	\mathfrak{R}_{n}(\mathcal{P}):=\sup_{\mathcal{P}'\in\mathcal{G}}\inf_{\widehat{\pi}}\max_{\mathcal{Q}\in\{\mathcal{P},\mathcal{P}'\}}\sqrt{n}\cdot\E_{\mathcal{Q}}\left[v^\star(\mathcal{Q})-v^{\widehat{\pi}}\right]
	\end{equation}
	where $v^\star(\mathcal{Q})$ denotes the optimal value under the instance $\mathcal{Q}$. Then there exists universal constants $c_0,p,C>0$, such that if $n\geq c_0{H^6\xi^4}/{(\sum_{h=1}^H \sum_{s_h,a_h}d^{\pi^\star}_h(s_h,a_h)\sqrt{\frac{\Var_{P_{s_h,a_h}}(V_{h+1}^\star)}{ \zeta \cdot d^\mu_h(s_h,a_h)}})^2}$, with constant probability $p>0$, 
	Then we have (here $\zeta =H/\bar{d}_m$):
	\begin{equation}\label{eqn:lower}
	\mathfrak{R}_{n}(\mathcal{P})\geq C\cdot {\sum_{h=1}^H\sum_{(s_h,a_h)\in\mathcal{C}_h}d^{\pi^\star}_h(s_h,a_h)\cdot\sqrt{\frac{\mathrm{Var}_{P_{s_h,a_h}}(r_h+V^\star_{h+1})}{\zeta \cdot d^\mu_h{(s_h,a_h)}}}},
	\end{equation}
	where $\mathcal{P}=(\mu,M)$ and $M=(\mathcal{S}, \mathcal{A}, P, r, H, d_1)$.
\end{theorem}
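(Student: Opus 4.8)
The plan is to establish the lower bound on the local non-asymptotic minimax risk \eqref{eqn:local_risk} via a two-point (Le Cam style) argument, constructing for the fixed instance $\mathcal{P}=(\mu,M)$ a carefully perturbed alternative instance $\mathcal{P}'\in\mathcal{G}$ that is statistically close (bounded KL divergence between the $n$-episode data distributions) yet has an optimal policy whose value gap from $\widehat\pi$ on one of the two instances is forced to be large. The natural construction is to perturb the transition kernels $P_h(\cdot|s_h,a_h)$ at each $(s_h,a_h)\in\mathcal{C}_h$ along the direction that moves probability mass between the next states according to the sign of $V^\star_{h+1}(s_{h+1})-\E_{P_{s_h,a_h}}[V^\star_{h+1}]$, with perturbation size scaled so that (i) the KL divergence accumulated over all $H$ layers and all $n$ episodes is $O(1)$, which given the $d^\mu_h$-weighting and the $\chi^2 \approx$ (perturbation)$^2/P_h$ bound is exactly where the per-instance variance terms $\Var_{P_{s_h,a_h}}(V^\star_{h+1})/d^\mu_h(s_h,a_h)$ and the parameter $\xi$ enter, and (ii) the induced change in the optimal value $v^\star$ is of order $\sum_h\sum_{s_h,a_h} d^{\pi^\star}_h(s_h,a_h)\sqrt{\Var_{P_{s_h,a_h}}(V^\star_{h+1})/(\zeta\, d^\mu_h(s_h,a_h))}$. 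The role of $\xi$ is to ensure the perturbed transition vectors remain valid probability distributions (nonnegativity), which caps the allowable per-coordinate perturbation and hence shows up in the sample-size threshold $n\gtrsim H^6\xi^4/(\cdots)^2$.

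First I would fix notation: write $\delta_h(s_h,a_h,s_{h+1})$ for the perturbation added to $P_h(s_{h+1}|s_h,a_h)$, chosen proportional to $P_h(s_{h+1}|s_h,a_h)\big(V^\star_{h+1}(s_{h+1})-\E_{P_{s_h,a_h}}[V^\star_{h+1}]\big)$ times a global scale $\lambda>0$ to be optimized; note this choice automatically preserves the simplex constraint $\sum_{s_{h+1}}\delta_h=0$ and, by the definition of $\xi$, stays nonnegative once $\lambda\xi \le 1$ roughly. Second, I would compute the KL divergence $\mathrm{KL}(\P_{\mathcal{P}}^{\otimes n}\,\|\,\P_{\mathcal{P}'}^{\otimes n})$: by the chain rule for KL over trajectories this decomposes as $n\sum_{h}\sum_{s_h,a_h} d^\mu_h(s_h,a_h)\,\mathrm{KL}\big(P_h(\cdot|s_h,a_h)\,\|\,P'_h(\cdot|s_h,a_h)\big)$, and each term is $\lesssim \lambda^2\Var_{P_{s_h,a_h}}(V^\star_{h+1})$ up to the second-order Taylor remainder controlled again by $\xi$. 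Summing and requiring this to be $\le$ a small constant pins down $\lambda^2 \asymp 1/\big(n\sum_h\sum_{s_h,a_h} d^\mu_h(s_h,a_h)\Var_{P_{s_h,a_h}}(V^\star_{h+1})\big)$; I would then want to relate this to the per-$(s_h,a_h)$ choice of $\lambda$ rather than a global one — in fact the cleanest route is to allow a separate scale $\lambda_{h,s_h,a_h}$ per layer/pair, optimize each under a shared KL budget via a Cauchy–Schwarz / Lagrangian argument, and recover precisely the weighted-sum form of \eqref{eqn:lower}.

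Third, I would lower-bound the value separation: using the performance-difference / simulation lemma, $v^\star(\mathcal{P}')-v^\star(\mathcal{P}) \ge \sum_h \sum_{s_h,a_h} d^{\pi^\star}_h(s_h,a_h)\,\langle \delta_h(s_h,a_h,\cdot),\,V^\star_{h+1}\rangle - (\text{higher order})$, and by the aligned choice of $\delta_h$ the inner product equals $\lambda_{h,s_h,a_h}\Var_{P_{s_h,a_h}}(V^\star_{h+1})$ (up to a reward-variance adjustment converting $V^\star_{h+1}$ to $r_h+V^\star_{h+1}$, using that under the plug-in/Bellman structure the relevant second moment is $\Var_{P_{s_h,a_h}}(r_h+V^\star_{h+1})$; I would absorb the reward contribution by an analogous perturbation of $r_h$ if needed). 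Plugging the optimized $\lambda_{h,s_h,a_h}$ gives exactly $\sqrt{n}\cdot(v^\star(\mathcal{P}')-v^\star(\mathcal{P})) \gtrsim \sum_h\sum_{s_h,a_h} d^{\pi^\star}_h(s_h,a_h)\sqrt{\Var_{P_{s_h,a_h}}(r_h+V^\star_{h+1})/(\zeta\,d^\mu_h(s_h,a_h))}$ after identifying $\zeta = H/\bar d_m$ from the constant KL budget and the $n\sum d^\mu_h(\cdots)$ normalization. Finally, a standard Le Cam two-point inequality — no estimator can be close to optimal on both $\mathcal{P}$ and $\mathcal{P}'$ when their data laws have $O(1)$ KL and their optimal values differ by the above quantity — yields the claimed $\mathfrak{R}_n(\mathcal{P})$ bound with constant probability $p$; the threshold on $n$ is exactly what makes the discarded higher-order (Taylor-remainder and truncation) terms negligible relative to the leading term, which is why it reads $n\gtrsim H^6\xi^4/(\text{leading sum})^2$.

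The main obstacle I expect is twofold: (a) doing the per-$(s_h,a_h)$ optimization of the perturbation scales \emph{simultaneously across all $H$ layers} under a single aggregate KL constraint while keeping the value-gap lower bound additive — this requires care because the value gap at layer $h$ depends on $V^\star_{h+1}$ of the \emph{unperturbed} instance, so one must argue the perturbations do not significantly move the $V^\star$'s themselves (this is precisely what the lower bound on $n$, via $\xi$, buys us), and (b) controlling the second-order/remainder terms in both the KL expansion and the performance-difference expansion uniformly, since $V^\star_{h+1}\in[0,H]$ and the perturbations, though small, interact multiplicatively over the horizon — bounding these by $\widetilde O(H^3/n\bar d_m)$-type quantities and showing they are dominated under the stated $n$-threshold is the delicate bookkeeping step. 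A secondary subtlety is ensuring $\mathcal{P}'\in\mathcal{G}$, i.e. that the perturbed $\mu$ (unchanged) still covers an optimal policy of the perturbed $M$; since we only perturb within the support $\mathcal{C}_h$ and optimal actions can only shift among already-covered pairs, this holds, but it should be stated explicitly.
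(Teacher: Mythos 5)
Your proposal is correct and follows essentially the same route as the paper: a local two-point (Le Cam/Hellinger) reduction with a value-aligned multiplicative perturbation $P'_h \propto P_h\cdot\bigl(1+\lambda_{h,s_h,a_h}(V^\star_{h+1}-\E_{P_{s_h,a_h}}[V^\star_{h+1}])\bigr)$ scaled per state-action pair as $\lambda_{h,s_h,a_h}\asymp 1/\sqrt{\zeta\, n_{s_h,a_h}\,\Var_{P_{s_h,a_h}}(V^\star_{h+1})}$, with $\xi$ controlling both nonnegativity of $P'$ and the Taylor remainders that the $n$-threshold renders negligible, and a separate reward perturbation combined via $\sqrt{a}+\sqrt{b}\geq\sqrt{a+b}$. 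The only cosmetic differences are that the paper bounds the trajectory divergence by a product of per-step Hellinger distances (each $\leq 1/(2nH)$) rather than the $d^\mu$-weighted KL chain rule, and fixes the scale directly rather than via a Lagrangian optimization.
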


The interpretation of Theorem~\ref{thm:adaptive_lower_bound} is: for any instance $\mathcal{P}$, learning requires \eqref{eqn:lower} (divided by $1/\sqrt{n}$) for any algorithm. Note this notion is significantly stronger than the previous minimax offline lower bounds \citep{yin2021near,rashidinejad2021bridging,xie2021policy,jin2020pessimism} (where they only select a particular family of hard problems), therefore, their lower bounds in general do not hold for individual instances. 

The quantity \eqref{eqn:intrinsic} nearly-matches the per-instance lower bound \eqref{eqn:lower} (they deviate by a factor of $\zeta=H/\bar{d}_m$ due to the technical reason) and, in addition, we provide a matching minimax lower bound in Appendix~\ref{sec:minimax_lower}. These results certify Theorem~\ref{thm:APVI} is not only adaptive but also near-optimal. Hence, we call the quantity $\sum_{h=1}^H\sum_{(s_h,a_h)\in\mathcal{C}_h}d^{\pi^\star}_h(s_h,a_h)\cdot\sqrt{\frac{\mathrm{Var}_{P_{s_h,a_h}}(r_h+V^\star_{h+1})}{ n\cdot d^\mu_h{(s_h,a_h)}}}$ \emph{intrinsic offline reinforcement learning bound}. In the sequel, we provide thorough discussions to explain the intrinsic bound embraces the fundamental challenges in offline RL and the strong adaptivity. The detailed technical derivations that are missing in Section~\ref{subsec:one}-\ref{subsec:three} are deferred to Appendix~\ref{sec:missing_dev}.

\begin{figure}[H]
	\centering     
	\includegraphics[width=115mm]{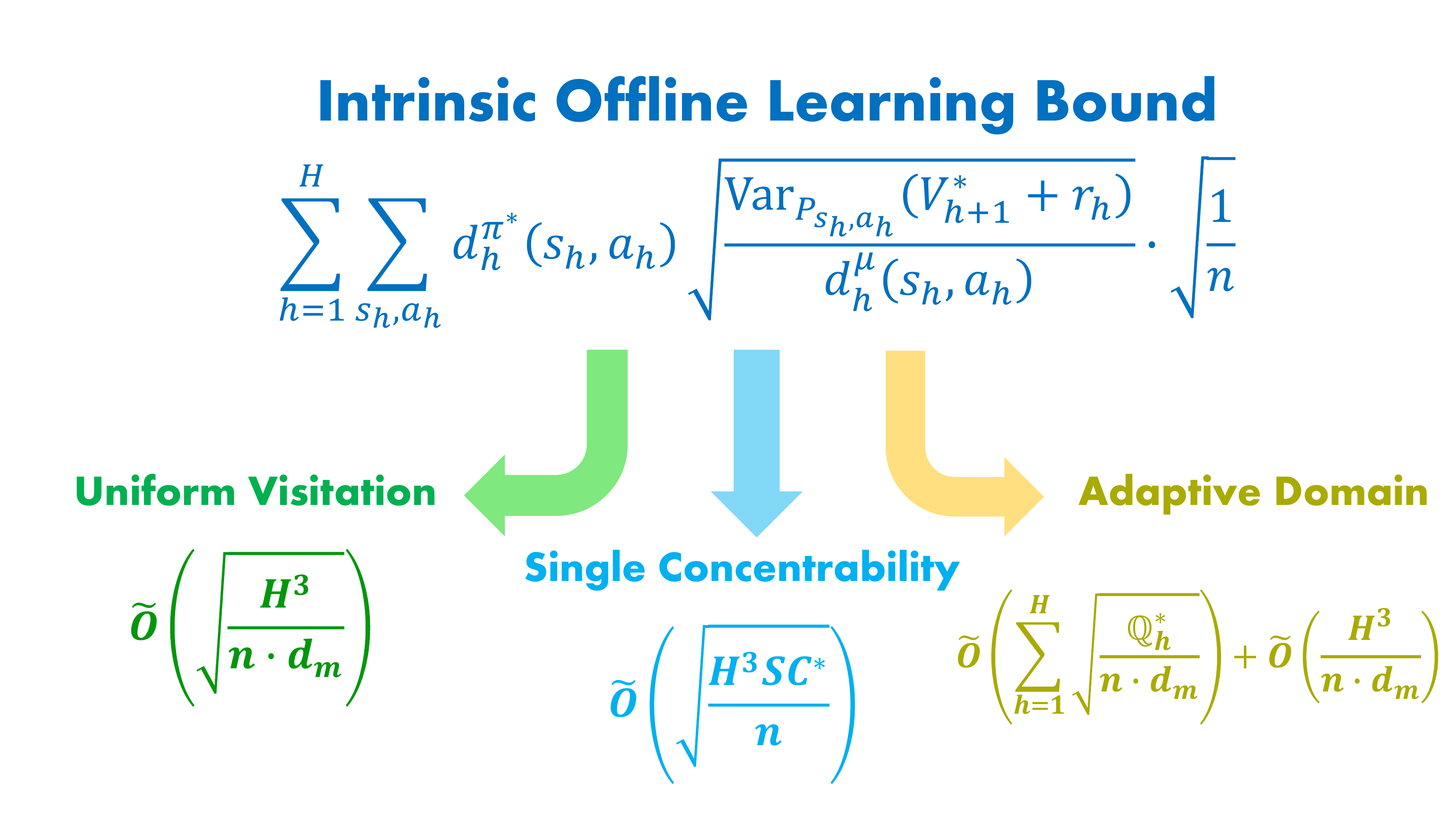}
	\caption{A visualization on how intrinsic learning bound subsumes existing best-known results: uniform visitation, single concentrability (partial coverage) and adaptive domain.}
	\label{fig:main}
\end{figure}

\subsection{Optimality under Uniform data-coverage assumption}\label{subsec:one}

Under the uniform exploration Assumption~\ref{assum:uniform} with parameter $d_m:=\min_{h,s_h,a_h} d_h^\mu (s_h,a_h) > 0$, \cite{yin2021near} analyzes the model-based plug-in approach and obtains the optimal sample complexity $\widetilde{O}(H^3/d_m\epsilon^2)$ and shows $\Omega(H^3/d_m\epsilon^2)$ is also the lower bound. Indeed, this rate can be directly implied by the intrinsic RL bound via \emph{Cauchy inequality} and \emph{the Sum of Total Variance} (Lemma~\ref{lem:H3toH2}):\footnote{Here $\odot$ denotes element-wise multiplication. Also note under \ref{assum:uniform}, our $\bar{d}_m=d_m$.}
{\small
\begin{equation}\label{eqn:inter_derivation}
\begin{aligned}
&\sum_{h=1}^H\langle d^{\pi^\star}_h(\cdot),\sqrt{\frac{\mathrm{Var}_{P_{(\cdot)}}(r_h+V^\star_{h+1})}{ n\cdot d^\mu_h{(\cdot)}}}\rangle = \sum_{h=1}^H\langle \sqrt{d^{\pi^\star}_h(\cdot)},\sqrt{\frac{d^{\pi^\star}_h(\cdot)\odot\mathrm{Var}_{P_{(\cdot)}}(r_h+V^\star_{h+1})}{ n\cdot d_m}}\rangle\\
\leq &\sum_{h=1}^H\norm{ \sqrt{d^{\pi^\star}_h(\cdot)}}_2\norm{\sqrt{\frac{d^{\pi^\star}_h(\cdot)\odot\mathrm{Var}_{P_{(\cdot)}}(r_h+V^\star_{h+1})}{ n\cdot d_m}}}_2\leq \sqrt{\frac{H\cdot \mathrm{Var}_{\pi^\star}(\sum_{h=1}^{H} r_{h})}{n\cdot d_m}}\leq \sqrt{\frac{H^3}{n\cdot d_m}}
\end{aligned}
\end{equation}
}which translates to $\widetilde{O}(H^3/d_m\epsilon^2)$  complexity. Our result maintains the optimal worst-case guarantee when $\mu$ has the uniform data-coverage:

\begin{proposition}
	Under Assumption~\ref{assum:uniform} and apply Theorem~\ref{thm:APVI}, APVI achieves the sample complexity of minimax-rate $\widetilde{O}(H^3/d_m\epsilon^2)$ (Theorem~4.1 and Theorem~G.2 in \cite{yin2021near}).
\end{proposition}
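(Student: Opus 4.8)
The plan is to obtain the proposition as a direct corollary of Theorem~\ref{thm:APVI}, by worst-casing the instance-dependent quantity in \eqref{eqn:APVI} using only the uniform-coverage parameter $d_m$. First I would record the two facts that make the reduction legitimate under Assumption~\ref{assum:uniform}: (i) every state–action pair reachable by $\pi^\star$ at step $h$ is visited by $\mu$ with probability at least $d_m$, so $\supp(d^{\pi^\star}_h)\subseteq\mathcal{C}_h$ and the sum in \eqref{eqn:APVI} ranges over a subset of $\{(s_h,a_h):d^\mu_h(s_h,a_h)\ge d_m\}$; and (ii) the quantity $\bar d_m$ appearing in Theorem~\ref{thm:APVI} coincides with $d_m$ in this regime. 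Consequently we may replace each $d^\mu_h(s_h,a_h)$ in the denominator of \eqref{eqn:APVI} by $d_m$.

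Next I would run the computation already displayed in \eqref{eqn:inter_derivation}: factor $1/(n\,d_m)$ out of the square root, write $d^{\pi^\star}_h(\cdot)=\sqrt{d^{\pi^\star}_h(\cdot)}\cdot\sqrt{d^{\pi^\star}_h(\cdot)}$, apply Cauchy–Schwarz over the state–action pairs at each fixed $h$ using $\norm{\sqrt{d^{\pi^\star}_h(\cdot)}}_2=1$, and then a second Cauchy–Schwarz over $h\in[H]$ to pull out a $\sqrt H$. What remains is $C'\sqrt{\frac{H}{n\,d_m}\sum_{h=1}^H\langle d^{\pi^\star}_h(\cdot),\Var_{P_{(\cdot)}}(r_h+V^\star_{h+1})\rangle}$, and the Sum-of-Total-Variance identity (Lemma~\ref{lem:H3toH2}) evaluates $\sum_{h=1}^H\langle d^{\pi^\star}_h(\cdot),\Var_{P_{(\cdot)}}(r_h+V^\star_{h+1})\rangle=\Var_{\pi^\star}\!\big(\sum_{h=1}^H r_h\big)\le H^2$ since $0\le\sum_h r_h\le H$. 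Thus the dominant term in \eqref{eqn:APVI} is at most $C'\sqrt{H^3/(n\,d_m)}$.

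Finally I would absorb the additive higher-order term and translate the error bound into a sample complexity. Requiring $C'\sqrt{H^3/(n\,d_m)}\le\epsilon/2$ forces $n\ge 4C'^2\iota\,H^3/(d_m\epsilon^2)$; for any such $n$, and assuming without loss of generality $\epsilon\le 1$, the term $\widetilde O(H^3/(n\bar d_m))=\widetilde O(H^3/(n\,d_m))$ is at most $\widetilde O(\epsilon^2)\le\epsilon/2$, and the precondition $n>c_0\iota/\bar d_m$ of Theorem~\ref{thm:APVI} is automatically satisfied in this regime. Hence with probability $1-\delta$ we get $v^\star-v^{\widehat\pi}\le\epsilon$ as soon as $n=\widetilde O(H^3/(d_m\epsilon^2))$, matching the lower bound $\Omega(H^3/(d_m\epsilon^2))$ of Theorem~4.1/G.2 in \cite{yin2021near}.

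There is no genuinely hard step here — the proposition is simply the uniform worst case of the intrinsic bound — but the one place to be careful is the variance aggregation in the second paragraph: collapsing $\sum_h\langle d^{\pi^\star}_h,\Var_{P_{(\cdot)}}(r_h+V^\star_{h+1})\rangle$ into a single return variance relies on the martingale-difference cancellation captured by Lemma~\ref{lem:H3toH2}, and it is essential that the Bernstein penalty $\Gamma_h$ in Algorithm~\ref{alg:APVI} is built from $\Var(\widehat r_h+\widehat V_{h+1})$ rather than from $\Var(\widehat V_{h+1})$ alone, so that the per-step variances are exactly the conditional variances of the increments $r_h+V^\star_{h+1}-Q^\star_h$. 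I should also state explicitly the identification $\bar d_m=d_m$ under \ref{assum:uniform} so that the additive term and the $n$-lower-bound precondition of Theorem~\ref{thm:APVI} are phrased in terms of $d_m$.
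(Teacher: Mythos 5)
Your argument is correct and is essentially the paper's own derivation: worst-case the denominator by $d_m$ (with $\bar d_m=d_m$), apply Cauchy--Schwarz per step and then over $h$, and invoke Lemma~\ref{lem:H3toH2} exactly as in \eqref{eqn:inter_derivation}, followed by the routine $\epsilon$-translation. One small precision: Lemma~\ref{lem:H3toH2} gives $\sum_{h}\langle d^{\pi^\star}_h,\Var_{P_{(\cdot)}}(r_h+V^\star_{h+1})\rangle\le\Var_{\pi^\star}\big(\sum_{h}r_h\big)$ rather than equality (the lemma has an additional nonnegative term from the randomness of $a_h\mid s_h$), but the inequality points in the direction you need, so the bound stands.
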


\begin{remark}\label{remark:time-variant}
	We believe if the MDP is time-invariant, then by a modified construction of $\widehat{P}$, $\widehat{r}$ in \eqref{eqn:mb_est} our result will imply the minimax-rate of $\widetilde{O}(H^2/d_m\epsilon^2)$ as achieved in \cite{yin2021nearoptimal}. We include this discussion in Appendix~\ref{sec:missing_dev}.
\end{remark}

\subsection{Bounded sum of total rewards and the Horizon-Free case}\label{subsec:btr}
There is another thread of studies that follow the bounded sum of total rewards assumption: \emph{i.e.} $r_h\geq 0$, $\sum_{h=1}^H r_h\in [0,1]$ \citep{krishnamurthy2016pac,jiang2017contextual,zhang2020reinforcement}. Such a setting is much weaker than the uniform bounded instantaneous reward condition, as explained in \cite{jiang2018open}. In offline RL, \cite{ren2021nearly} derives the nearly horizon-free worst case bound $\widetilde{O}(\sqrt{1/nd_m})$ for the time-invariant MDPs, under the Assumption~\ref{assum:uniform}. As a comparison, our Theorem~\ref{thm:APVI} achieves the following guarantee for the time-varying (non-stationary) MDPs.

\begin{proposition}
	Assume $r_h\geq 0$, $\sum_{h=1}^H r_h\leq 1$. Then in the time-varying case AVPI (Theorem~\ref{thm:APVI}) outputs a policy $\widehat{\pi}$ such that the suboptimality gap $v^\star-v^{\widehat{\pi}}$ is bounded by $\widetilde{O}(\sqrt{H/nd_m})$ with high probability under the Assumption~\ref{assum:uniform}. 
\end{proposition}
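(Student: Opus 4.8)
The plan is to derive this as a direct corollary of the intrinsic bound in Theorem~\ref{thm:APVI} by specializing the variance-aggregation argument \eqref{eqn:inter_derivation} to the bounded-total-reward regime. The only structural change from Section~\ref{subsec:one} is that the "law of total variance" ceiling is now $1$ instead of $H^2$, because $\sum_{h=1}^H r_h\in[0,1]$ implies $V^\star_1\le 1$, and critically $V^\star_{h+1}+r_h-\E_{P_{s_h,a_h}}[V^\star_{h+1}+r_h]$ measures a fluctuation of the same bounded cumulative-reward quantity. So the first step is to invoke Theorem~\ref{thm:APVI} with Assumption~\ref{assum:uniform} in force, which gives $\bar d_m=d_m$ and lets me replace $d^\mu_h(s_h,a_h)$ in the denominator by $d_m$ uniformly, turning the dominant term into $\sum_{h=1}^H\langle\sqrt{d^{\pi^\star}_h(\cdot)},\sqrt{d^{\pi^\star}_h(\cdot)\odot\Var_{P_{(\cdot)}}(r_h+V^\star_{h+1})/(n d_m)}\rangle$ exactly as in \eqref{eqn:inter_derivation}.

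The second step is the two applications of Cauchy–Schwarz. Applying it inside each $h$ (over the $(s_h,a_h)$ sum) gives $\sum_h \|\sqrt{d^{\pi^\star}_h}\|_2\,\|\sqrt{d^{\pi^\star}_h\odot\Var_{P}(r_h+V^\star_{h+1})/(nd_m)}\|_2 = \sum_h \sqrt{(1/(nd_m))\,\E_{\pi^\star}[\Var_{P_{s_h,a_h}}(r_h+V^\star_{h+1})]}$ since $\|\sqrt{d^{\pi^\star}_h}\|_2=1$. Applying Cauchy–Schwarz again over $h\in[H]$ pulls out a factor $\sqrt{H}$ and leaves $\sqrt{(H/(nd_m))\sum_{h=1}^H\E_{\pi^\star}[\Var_{P_{s_h,a_h}}(V^\star_{h+1}+r_h)]}$. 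Then the law of total variance (the same Lemma~\ref{lem:H3toH2} cited after \eqref{eqn:inter_derivation}, which identifies $\sum_h\E_{\pi^\star}[\Var_{P_{s_h,a_h}}(V^\star_{h+1}+r_h)]$ with $\Var_{\pi^\star}(\sum_{h=1}^H r_h)$) bounds that inner sum by $\Var_{\pi^\star}(\sum_{h=1}^H r_h)\le \E_{\pi^\star}[(\sum_h r_h)^2]\le 1$, using $0\le\sum_h r_h\le 1$. This yields the dominant term $\widetilde O(\sqrt{H/(nd_m)})$.

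The third step is to check the lower-order term $\widetilde O(H^3/(n\bar d_m))$ from \eqref{eqn:APVI} does not dominate. Under Assumption~\ref{assum:uniform}, $\bar d_m=d_m$ and $1/d_m\gtrsim SA\ge 1$, and we are in the regime where the target accuracy $\epsilon=\widetilde\Theta(\sqrt{H/(nd_m)})$, i.e.\ $n=\widetilde\Theta(H/(d_m\epsilon^2))$; substituting shows $H^3/(nd_m)=\widetilde O(\epsilon^2 H^2)\le\widetilde O(\epsilon)$ in the relevant small-$\epsilon$ range, so it is a genuine higher-order term and is absorbed into $\widetilde O(\sqrt{H/(nd_m)})$. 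Combining, $v^\star-v^{\widehat\pi}\le\widetilde O(\sqrt{H/(nd_m)})$ with probability $1-\delta$, which is the claim.

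The one delicate point — the main (though minor) obstacle — is making sure the variance term that appears in \eqref{eqn:APVI} is $\Var_{P_{s_h,a_h}}(r_h+V^\star_{h+1})$ rather than $\Var_{P_{s_h,a_h}}(V^\star_{h+1})$ alone, so that the telescoping into $\Var_{\pi^\star}(\sum_h r_h)$ is exact; Theorem~\ref{thm:APVI} is stated with exactly this $r_h+V^\star_{h+1}$ form, so the law of total variance applies verbatim to the cumulative reward martingale and no cross-term bookkeeping is needed. The rest is the routine double Cauchy–Schwarz already spelled out in \eqref{eqn:inter_derivation}, just with the bound $1$ in place of $H^2$ at the final step.
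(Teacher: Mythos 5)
Your proposal is correct and follows essentially the same route as the paper: apply Theorem~\ref{thm:APVI}, run the double Cauchy--Schwarz of \eqref{eqn:inter_derivation} with $\bar d_m=d_m$, and then bound the sum of conditional variances via Lemma~\ref{lem:H3toH2} by $\mathrm{Var}_{\pi^\star}(\sum_{h=1}^H r_h)\le 1$ instead of $H^2$. The only nit is that Lemma~\ref{lem:H3toH2} gives the sum of conditional variances as one of two nonnegative summands of the total variance, so the step is an inequality rather than an identity, but this is exactly how the paper uses it and does not affect the argument.
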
   

The derivation is straightforward by using $\mathrm{Var}_{\pi^\star}(\sum_{h=1}^{H} r_{h})\leq 1$ in \eqref{eqn:inter_derivation}. This proposition is interesting since it indicates when the MDP is non-stationary, $\widetilde{O}(H/d_m\epsilon^2)$ is required in the worst case even under $\sum_{h=1}^H r_h\leq 1$.\footnote{Suppose in this case we can achieve $\widetilde{O}(1/d_m\epsilon^2)$ just like \cite{ren2021nearly}, then by a rescaling we obtain the $\widetilde{O}(H^2/d_m\epsilon^2)$ under the usual $0\leq r_h\leq 1$ assumption which violates the $\Omega(H^3/d_m\epsilon^2)$ lower bound.} The extra $H$ factor resembles the challenge that we have $H$ transitions ($P_1,\ldots,P_H$) to learn, as opposed to the bandit-type $1/d_m\epsilon^2$ result due to there is only one $P$ throughout (time-invariant). This reveals that one hardness in solving the MDP is in proportion to the number of different transition kernels within the MDP. Such a finding could help researchers understand the special settings like \emph{low switching cost in transitions} \citep{bai2019provably} or \emph{non-stationarity} \citep{cheung2020reinforcement}.

\subsection{Optimality with Single Concentrability}\label{subsec:two}

In the finite horizon discounted setting, \cite{rashidinejad2021bridging} proposes the single policy concentrability assumption which is defined as $C^\star:=\max_{h,s,a}\frac{d^{\pi^\star}_h(s,a)}{d^\mu_h(s,a)}<\infty$ in the current episodic non-stationary MDP setting. As discussed in Appendix~\ref{sec:dis_VPVI}, their lower bound translates to $\Omega(\sqrt{\frac{H^3SC^\star}{n}})$ and their VI-LCB algorithm yields $\widetilde{O}(\sqrt{\frac{H^5SC^\star}{n}})$ suboptimality gap in $H$-horizon case. Since single policy concentrability is strictly weaker than its uniform version (Assumption~\ref{assum:concen}), we only discuss this set up. In particular, we have the following implication from our Theorem~\ref{thm:APVI} (whose derivation can be found in Appendix~\ref{sec:missing_dev}):

\begin{proposition}
	Let $\pi^\star$ be a deterministic policy such that $C^\star:=\max_{h,s,a}\frac{d^{\pi^\star}_h(s,a)}{d^\mu_h(s,a)}<\infty$. Then by Theorem~\ref{thm:APVI}, with high probability the output policy of APVI satisfies the suboptimality gap $\widetilde{O}(\sqrt{\frac{H^3SC^\star}{n}})$ in the time-varying (non-stationary) MDPs. 
\end{proposition}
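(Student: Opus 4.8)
The plan is to start from the intrinsic bound \eqref{eqn:APVI} and simply upper-bound each marginal importance ratio pointwise by $C^\star$. Concretely, on the trackable set $\mathcal{C}_h$ we have $d^{\pi^\star}_h(s_h,a_h) \le C^\star d^\mu_h(s_h,a_h)$, so
\begin{equation*}
d^{\pi^\star}_h(s_h,a_h)\sqrt{\frac{\Var_{P_{s_h,a_h}}(r_h+V^\star_{h+1})}{n\, d^\mu_h(s_h,a_h)}} \;=\; \sqrt{d^{\pi^\star}_h(s_h,a_h)}\cdot\sqrt{\frac{d^{\pi^\star}_h(s_h,a_h)\,\Var_{P_{s_h,a_h}}(r_h+V^\star_{h+1})}{n\, d^\mu_h(s_h,a_h)}} \;\le\; \sqrt{\frac{C^\star}{n}}\,\sqrt{d^{\pi^\star}_h(s_h,a_h)}\,\sqrt{\Var_{P_{s_h,a_h}}(r_h+V^\star_{h+1})}.
\end{equation*}
I would then apply Cauchy–Schwarz over the state-action pairs at each fixed $h$, exactly as in \eqref{eqn:inter_derivation}: the first factor $\sum_{s_h,a_h}\sqrt{d^{\pi^\star}_h}\cdot\sqrt{\Var(\cdot)}$ splits into $\|\sqrt{d^{\pi^\star}_h(\cdot)}\|_2 \le 1$ (since $d^{\pi^\star}_h$ is a probability distribution) times $\|\sqrt{d^{\pi^\star}_h(\cdot)\odot\Var_{P_{(\cdot)}}(r_h+V^\star_{h+1})}\|_2 = \sqrt{\sum_{s_h,a_h} d^{\pi^\star}_h(s_h,a_h)\Var_{P_{s_h,a_h}}(r_h+V^\star_{h+1})}$.

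Next I would invoke the Law of Total Variance (the "Sum of Total Variance" lemma, Lemma~\ref{lem:H3toH2}) to control $\sum_{h=1}^H \sum_{s_h,a_h} d^{\pi^\star}_h(s_h,a_h)\Var_{P_{s_h,a_h}}(r_h+V^\star_{h+1}) = \Var_{\pi^\star}\!\big(\sum_{h=1}^H r_h\big) \le H^2$, using $0\le \sum_h r_h \le H$. Combining the per-$h$ Cauchy–Schwarz bound with one more Cauchy–Schwarz over $h\in[H]$ (to pull the $\sqrt{\cdot}$ out of the sum over $h$) gives
\begin{equation*}
\sum_{h=1}^H \sqrt{\frac{C^\star}{n}}\sqrt{\sum_{s_h,a_h} d^{\pi^\star}_h\Var_{P_{s_h,a_h}}(r_h+V^\star_{h+1})} \;\le\; \sqrt{\frac{C^\star}{n}}\cdot\sqrt{H}\cdot\sqrt{\Var_{\pi^\star}\!\Big(\textstyle\sum_h r_h\Big)} \;\le\; \sqrt{\frac{C^\star H^3}{n}}.
\end{equation*}
However, this only produces $\widetilde O(\sqrt{C^\star H^3/n})$, which is missing the $\sqrt{S}$ factor claimed — so the routing is slightly different: one should instead bound $\|\sqrt{d^{\pi^\star}_h(\cdot)}\|$ differently. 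Actually the $\sqrt{S}$ appears because we should apply Cauchy–Schwarz splitting off $\sqrt{C^\star/n}$ first and then $\|\mathbf{1}_{\mathcal{C}_h}\|_2 = \sqrt{|\mathcal{C}_h|}\le\sqrt{SA}$; to get $S$ rather than $SA$ one uses that $\pi^\star$ is deterministic, so for each state only one action is in the support, giving $|\{(s_h,a_h): d^{\pi^\star}_h>0\}| \le S$. The cleanest route: write $d^{\pi^\star}_h\sqrt{\Var/(n\,d^\mu_h)} \le \sqrt{C^\star/n}\sqrt{d^{\pi^\star}_h}\sqrt{\Var}$, then Cauchy–Schwarz over the (at most $S$) support points of $d^{\pi^\star}_h$ with weights $1$: $\sum \sqrt{d^{\pi^\star}_h}\sqrt{\Var} \le \sqrt{S}\sqrt{\sum d^{\pi^\star}_h \Var}$, and finish with Cauchy–Schwarz over $h$ and the Law of Total Variance as above, yielding $\sqrt{C^\star S H \cdot H^2/n} = \sqrt{H^3 S C^\star/n}$.

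The higher-order term $\widetilde O(H^3/(n\bar d_m))$ in \eqref{eqn:APVI} must also be shown to be lower order: under the single-concentrability condition with deterministic $\pi^\star$, the relevant regime is $n \gtrsim H^3 S C^\star/\epsilon^2$ and one wants $H^3/(n\bar d_m) \lesssim \epsilon$, which holds provided $n \gtrsim H^3/(\bar d_m \epsilon)$ — a weaker requirement than the sample-complexity regime once $\epsilon$ is small, so it is absorbed. I expect the main obstacle to be nothing deep but rather bookkeeping: getting the $S$ (not $SA$) factor requires explicitly using that $\pi^\star$ is deterministic so that the support of $d^{\pi^\star}_h$ has at most $S$ elements, and one must be careful that the Cauchy–Schwarz steps are applied in the order that keeps the $C^\star$ outside and matches $\Var_{\pi^\star}(\sum_h r_h)\le H^2$ rather than a looser $H^3$-type bound. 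The rest is a direct substitution into Theorem~\ref{thm:APVI}.
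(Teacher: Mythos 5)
Your final route is exactly the paper's derivation: pull out $\sqrt{C^\star/n}$ via $d^{\pi^\star}_h \le C^\star d^\mu_h$, use determinism of $\pi^\star$ to restrict to at most $S$ support points per step, apply Cauchy--Schwarz with unit weights over those states to get the $\sqrt{S}$, then Cauchy--Schwarz over $h$ together with the Sum of Total Variance lemma to bound $\sqrt{H}\cdot\sqrt{\Var_{\pi^\star}(\sum_h r_h)} \le \sqrt{H^3}$. Your first attempt (splitting $\sqrt{d^{\pi^\star}_h}$ against $\sqrt{d^{\pi^\star}_h\odot\Var}$ as in the uniform-coverage case) indeed does not apply here since only one factor of $\sqrt{d^{\pi^\star}_h}$ remains after extracting $C^\star$, but you correctly identified and fixed this, so the proposal as a whole matches the paper's proof.
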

This can computed similar to \eqref{eqn:inter_derivation} except we use $\frac{d^{\pi^\star}_h(s,a)}{d^\mu_h(s,a)}\leq C^\star$. Our implication improves the VI-LCB by the factor $H^2$ (in terms of sample complexity) and is optimal (recover the concurrent \cite{xie2021policy}). Qualitatively, single concentrability is the same as Assumption~\ref{assum:single_concen}, but the use of $C^\star$ makes the bound highly problem independent and limits the adaptivity. Problem dependent bound is a more interesting domain as it tailors to each MDP separately. We discuss it now.

\subsection{Problem dependent domain}\label{subsec:three}

We define the \emph{pre-step environmental norm} (the finite horizon counterpart of \cite{maillard2014hard}) as: $\mathbb{Q}^\star_h=\max_{s_h,a_h}\mathrm{Var}_{P_{s_h,a_h}}(r_h+V^\star_{h+1})$ for all $h\in[H]$, and relax the total sum of rewards to be bounded by any arbitrary value $\mathcal{B}$ (\emph{i.e.} $\sum_{h=1}^H r_h\leq \mathcal{B}$), then Theorem~\ref{thm:APVI} implies:
\begin{proposition}\label{prop}
	Under Assumption~\ref{assum:uniform}, with high probability, subopmality of AVPI is bounded by 
	{
	\[
	\min\left\{\widetilde{O}\big(\sum_{h=1}^H\sqrt{\frac{\mathbb{Q}^\star_h}{n\bar{d}_m}}\big),\widetilde{O}\big(\sqrt{\frac{H\cdot \mathcal{B}^2}{n\bar{d}_m}}\big)\right\}+\widetilde{O}(\frac{H^3}{n\bar{d}_m}).
	\]}  
\end{proposition}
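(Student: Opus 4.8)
The plan is to start from the intrinsic bound \eqref{eqn:APVI} of Theorem~\ref{thm:APVI} and bound the leading sum in two complementary ways, then take the minimum. Recall that under Assumption~\ref{assum:uniform} we have $\bar d_m=d_m$ and, since $\pi^\star$ can be taken deterministic, $\sum_{s_h,a_h}d^{\pi^\star}_h(s_h,a_h)=1$ for every $h$. The first bound controls the per-step conditional variance crudely by the pre-step environmental norm: for each $h$,
\begin{equation*}
\sum_{(s_h,a_h)\in\mathcal{C}_h}d^{\pi^\star}_h(s_h,a_h)\sqrt{\frac{\mathrm{Var}_{P_{s_h,a_h}}(r_h+V^\star_{h+1})\cdot\iota}{n\cdot d^\mu_h(s_h,a_h)}}
\;\le\; \sqrt{\frac{\mathbb{Q}^\star_h\,\iota}{n\,d_m}}\;\sum_{(s_h,a_h)\in\mathcal{C}_h}d^{\pi^\star}_h(s_h,a_h)
\;\le\;\sqrt{\frac{\mathbb{Q}^\star_h\,\iota}{n\,d_m}},
\end{equation*}
where I used $\mathrm{Var}_{P_{s_h,a_h}}(r_h+V^\star_{h+1})\le\mathbb{Q}^\star_h$ and $d^\mu_h(s_h,a_h)\ge d_m$ on $\mathcal{C}_h$. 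Summing over $h$ gives the $\widetilde O\big(\sum_{h=1}^H\sqrt{\mathbb{Q}^\star_h/(n\bar d_m)}\big)$ term.

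For the second bound I would follow the Cauchy--Schwarz-plus-law-of-total-variance argument already displayed in \eqref{eqn:inter_derivation}. Writing the inner term as $\langle\sqrt{d^{\pi^\star}_h},\sqrt{d^{\pi^\star}_h\odot\mathrm{Var}_{P_{(\cdot)}}(r_h+V^\star_{h+1})/(n\,d_m)}\rangle$ and applying Cauchy--Schwarz with $\|\sqrt{d^{\pi^\star}_h}\|_2=1$, the whole sum is at most
\begin{equation*}
\sum_{h=1}^H\sqrt{\frac{\sum_{s_h,a_h}d^{\pi^\star}_h(s_h,a_h)\mathrm{Var}_{P_{s_h,a_h}}(r_h+V^\star_{h+1})}{n\,d_m}}
\;\le\;\sqrt{\frac{H\cdot\mathrm{Var}_{\pi^\star}\!\big(\sum_{h=1}^H r_h\big)}{n\,d_m}},
\end{equation*}
where the last step is another Cauchy--Schwarz over $h$ together with the Sum-of-Total-Variance identity (Lemma~\ref{lem:H3toH2}), exactly as in \eqref{eqn:inter_derivation}. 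It then remains to observe that when $0\le\sum_{h=1}^H r_h\le\mathcal{B}$ the variance $\mathrm{Var}_{\pi^\star}(\sum_h r_h)$ is bounded by $\mathcal{B}^2$ (indeed by $\mathcal B^2/4$, but $\mathcal B^2$ suffices), giving the $\widetilde O\big(\sqrt{H\mathcal{B}^2/(n\bar d_m)}\big)$ term.

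Finally, since both estimates bound the same quantity, the leading term of \eqref{eqn:APVI} is at most their minimum, and the lower-order term $\widetilde O(H^3/(n\bar d_m))$ from Theorem~\ref{thm:APVI} is carried over unchanged; absorbing the $\iota$ factors into $\widetilde O(\cdot)$ yields the claimed bound. There is no real obstacle here — the proposition is essentially a corollary, and the only mild subtlety is making sure the two applications of Cauchy--Schwarz in the second branch are chained correctly and that the Sum-of-Total-Variance lemma is invoked with $r_h+V^\star_{h+1}$ rather than $V^\star_{h+1}$ alone (so that $\mathrm{Var}_{\pi^\star}(\sum_h r_h)$, not a reward-free analogue, appears). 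I would also note in passing that the first branch does not actually need the reward sum to be bounded, which is why the $\min$ is stated with $\mathcal{B}$ only appearing in the second argument.
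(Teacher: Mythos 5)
Your argument is correct and matches the paper's own (very terse) derivation in Appendix~\ref{sec:missing_dev}: the first branch bounds each conditional variance by $\mathbb{Q}^\star_h$ and each $d^\mu_h$ by $d_m$, the second reuses the Cauchy--Schwarz plus Sum-of-Total-Variance chain of \eqref{eqn:inter_derivation} with $\mathrm{Var}_{\pi^\star}(\sum_h r_h)\le\mathcal{B}^2$, and the minimum plus the carried-over $\widetilde{O}(H^3/(n\bar d_m))$ term gives the claim. The details you fill in (including the observation that the $\mathbb{Q}^\star_h$ branch needs no bound on the reward sum) are all accurate, so this is essentially the paper's proof.
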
 
Such a result mirrors the online version of the tight problem-dependent bound \cite{zanette2019tighter} but with a more general \emph{pre-step environmental norm} for the non-stationary MDPs.\footnote{\cite{zanette2019tighter} uses the maximal version by maximizing over $h$.} For the problem instances with either small $\mathcal{B}$ or small $\mathbb{Q}_h^\star$, our result yields much better performances, as discussed in the following.

\textbf{Deterministic systems.} For many practical applications of interest, the systems are equipped with low stochasticity, \emph{e.g.} robotics, or even deterministic dynamics, \emph{e.g.} the game of GO. In those scenarios, the agent needs less experience for each state-action therefore the learning procedure could be much faster. In particular, when the system is fully deterministic (in both transitions and rewards) then $\mathbb{Q}^\star_h=0$ for all $h$. This enables a faster convergence rate of order $\frac{H^3}{n\bar{d}_m}$ and significantly improves over the existing non-adaptive results that have order $\frac{1}{\sqrt{n}}$. The convergence rate $\frac{1}{n}$ matches \cite{wen2013efficient} by translating their constant (in $T$) regret into the PAC bound.

\textbf{Partially deterministic systems.} Practical worlds are complicated and we could sometimes have a mixture model which contains both deterministic and stochastic steps. In those scenarios, the main complexity is decided by the number of stochastic stages: suppose there are $t$ stochastic $P_h,r_h$'s and $H-t$ deterministic $P_{h'},r_{h'}$'s, then completing the offline learning guarantees {\small$t\cdot\sqrt{{\max Q^\star_h}/{n \bar{d}_m}}$} suboptimality gap, which could be much smaller than {\small$H\cdot\sqrt{{\max Q^\star_h}/{n \bar{d}_m}}$} when $t\ll H$. 

\textbf{Fast mixing domains.} Consider a class of highly mixing non-stationary MDPs (a variant of \cite{zanette2018problem}) that satisfies the transition $P_h(\cdot|s_h,a_h):=\nu_h(\cdot)$ depends on neither the state $s_h$ nor the action $a_h$. Define $\bar{s}_{t} := \arg \max V_{t}^{\star}(s)$ and $\underline{s}_{t} := \arg \max V_{t}^{\star}(s)$. Also, denote $\mathrm{rng}V^\star_h$ to be the range of $V^\star_h$.
In such cases, Bellman optimality equations have the form
{
\[
V_{h}^{\star}\left(\bar{s}_{h}\right)=\max _{a}\left(r_h\left(\bar{s}_{h}, a\right)+\nu_h^{\top} V_{h+1}^{\star}\right),\;\;V_{h}^{\star}\left(\underline{s}_{h}\right)=\max _{a}\left(r_h\left(\underline{s}_{h}, a\right)+\nu_h^{\top} V_{h+1}^{\star}\right),
\]
}which yields $\mathrm{rng}V^\star_h=V_{h}^{\star}\left(\bar{s}_{h}\right)-V_{h}^{\star}\left(\underline{s}_{h}\right)=\max_ar_h\left(\bar{s}_{h}, a\right)-\min_ar_h\left(\underline{s}_{h}, a\right)\leq 1$, and this in turn gives $\mathbb{Q}_h^\star\leq 1+(\mathrm{rng}V^\star_h)^2=2$. As a result, the suboptimality is bounded by $\widetilde{O}(\sqrt{H^2/nd_m})$ in the worst case. This result reveals, although this is a family of stochastic non-stationary MDPs, but it is only as hard as the family of stationary MDPs in the minimax sense ($\Omega(H^2/d_m\epsilon^2)$).

\textbf{Tabular contextual bandits.} Our result also implies $\widetilde{O}(\sum_{x_1,a_1}d^{\pi^\star}_1(x_1,a_1)\sqrt{\frac{\mathrm{Var}{(r_1)}}{n\cdot d^\mu_1(x_1,a_1)}})$ gap for the \emph{offline tabular contextual bandit} problem and improves to $\widetilde{O}(1/nd_m)$ when the reward is deterministic. In either cases, the result is optimal and this is due to: when $r_1$ is deterministic, the agent only needs one sample at every location (see \cite{bubeck2012regret} for a survey).

\section{Towards Assumption-Free Offline RL}\label{sec:assumption_free}
While assumption~\ref{assum:single_concen} is (arguably) the weakest assumption for correctly learning the optimal value, for the real-world applications even this might not be guaranteed. Can we still learn something meaningful? In this section, we consider this most general setting where the behavior policy $\mu$ can be arbitrary. In this case, $\mu$ might not cover any optimal policy $\pi^\star$ (\emph{i.e.} there might be high reward location $(s,a)$ that $\mu$ can never visit, \emph{e.g.} in the extreme case where a clumsy doctor only uses one treatment all the time), and, irrelevant to the number of episode $n$, a constant suboptimality gap needs to be suffered. To tackle this problem, we create a fictitious augmented MDP $M^\dagger$ that can help characterize the discrepancy of the values between the original MDP ${M}$ and the estimated MDP $\widehat{M}^\dagger$. In particular, $M^\dagger$ is negative towards agnostic state-actions $s_h,a_h$ by setting $r^\dagger_h=0 $ and transitions to an absorbing state $s^\dagger_{h+1}$. 

\textbf{Pessimistic augmented MDP.} $M^\dagger$ is defined with one extra state $s_h^\dagger$ for all $h\in\{2,\ldots,H+1\}$ with the augmented state space $\mathcal{S}^\dagger=\mathcal{S}\cup\{s^\dagger_h\}$. The transition and the reward are defined as follows: 
{\small
	\[
	P^{\dagger}_h(\cdot \mid s_h, a_h)=\left\{\begin{array}{ll}
	P_h(\cdot \mid s_h, a_h), \;n_{s_h,a_h}>0, \\
	\delta_{s^{\dagger}_{h+1}}, \; s_h=s_h^{\dagger} \text { or } n_{s_h,a_h}=0.
	\end{array} \;\; r^{\dagger}( s_h, a_h)=\left\{\begin{array}{ll}
	r(s_h, a_h), \; n_{s_h,a_h}>0, \\
	0, \; s_h=s^{\dagger}_{h} \text { or } n_{s_h,a_h}=0.
	\end{array}\right.\right.
	\]}here $\delta_s$ is the Dirac measure and we denote $V^{\dagger \pi}_h$ and $v^{\dagger\pi}$ to be the values under $M^\dagger$. $\widehat{M}^\dagger$ is the empirical counterpart of $M^\dagger$ with $\widehat{P}$, $\widehat{r}$ (the same as \eqref{eqn:mb_est}) replacing $P$, $r$. By Algorithm~\ref{alg:APVI}, we have

\begin{theorem}[Assumption-free offline reinforcement learning]\label{thm:AFRL}
	Let us make no assumption for $\mu$ and still denote $\bar{d}_m:=\min_{h\in[H]}\{d^\mu_h(s_h,a_h):d^\mu_h(s_h,a_h)>0\}$. For any $0<\delta<1$, there exists absolute constants $c_0,C'>0$, such that when $n>c_0 \cdot 1/\bar{d}_m\cdot\iota$ ($\iota=\log(HSA/\delta)$), with probability $1-\delta$, the output policy $\widehat{\pi}$ of APVI satisfies (recall $\mathcal{C}_h:=\{(s_h,a_h):d^\mu_h(s_h,a_h)>0\}$)
	{\small
	\begin{equation}\label{eqn:AFRL}
	 v^\star-v^{\widehat{\pi}}\leq \sum_{h=2}^{H+1}d^{\dagger\pi^\star}_h(s^\dagger_h)+C'\sum_{h=1}^H\sum_{(s_h,a_h)\in\mathcal{C}_h}d^{\dagger\pi^\star}_h(s_h,a_h)\cdot\sqrt{\frac{\mathrm{Var}_{P^\dagger_{s_h,a_h}}(r^\dagger_h+V^{\dagger\pi^\star}_{h+1})\cdot\iota}{ n\cdot d^\mu_h{(s_h,a_h)}}}+\widetilde{O}\left(\frac{H^3}{n\bar{d}_m}\right),
	\end{equation}
}where $d^{\dagger\pi^\star}_h(s_h,a_h)\leq d^{\pi^\star}_h(s_h,a_h),V^{\dagger\pi^\star}_h(s_h)\leq V^\star_h(s_h)$ for all $s_h,a_h\in\mathcal{S}\times\mathcal{A}$, and for all $h\in[H]$, $d^{\dagger\pi^\star}_h(s^\dagger_h)=\sum_{t=1}^{h-1}\sum_{(s_t,a_t)\in\mathcal{S}\times\mathcal{A}\backslash\mathcal{C}_t}d^{\dagger\pi^\star}_t(s_t,a_t)$. The proof is in Appendix~\ref{sec:proof_af}.
\end{theorem}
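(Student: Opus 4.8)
The plan is to reduce the assumption-free guarantee to the analysis of Theorem~\ref{thm:APVI} carried out on the fictitious MDP $M^\dagger$, plus a separate control of the bias introduced by the augmentation. Let $\widehat\pi$ be the APVI output in assumption-free mode, i.e.\ greedy with respect to the pessimistic $\overline Q^\dagger_h$ built from $\widehat M^\dagger$. Because $M^\dagger$ only zeroes out rewards and redirects the data-agnostic pairs ($n_{s_h,a_h}=0$) into the absorbing state, every policy collects weakly less in $M^\dagger$ than in $M$; in particular $v^{\widehat\pi}\ge v^{\dagger\widehat\pi}$, so it suffices to bound $v^\star-v^{\dagger\widehat\pi}$. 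Fixing any optimal policy $\pi^\star$ of $M$, I decompose
\[
v^\star-v^{\dagger\widehat\pi}=\underbrace{\big(v^\star-v^{\dagger\pi^\star}\big)}_{(\mathrm I)}+\underbrace{\big(v^{\dagger\pi^\star}-v^{\dagger\widehat\pi}\big)}_{(\mathrm{II})}.
\]

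For $(\mathrm I)$ I would couple the $\pi^\star$-trajectory under $M$ with its $\pi^\star$-trajectory under $M^\dagger$ using common randomness. On the high-probability event $\mathcal E_0$ that $n_{s_h,a_h}>0$ precisely when $d^\mu_h(s_h,a_h)>0$ (Chernoff on the visitation counts, valid once $n>c_0\iota/\bar{d}_m$), the two trajectories agree until the first step whose pair $(s_h,a_h)$ leaves $\mathcal C_h$; after that $M^\dagger$ earns $0$ while $M$ earns at most $1$ per step. Hence $(\mathrm I)\le\sum_{h=1}^H\P_{\pi^\star,M^\dagger}[\text{absorbed by step }h]=\sum_{h=1}^H d^{\dagger\pi^\star}_{h+1}(s^\dagger_{h+1})=\sum_{h=2}^{H+1}d^{\dagger\pi^\star}_h(s^\dagger_h)$, the first term of \eqref{eqn:AFRL}. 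The claimed identity $d^{\dagger\pi^\star}_h(s^\dagger_h)=\sum_{t<h}\sum_{(s_t,a_t)\notin\mathcal C_t}d^{\dagger\pi^\star}_t(s_t,a_t)$, together with $d^{\dagger\pi^\star}_h\le d^{\pi^\star}_h$ and $V^{\dagger\pi^\star}_h\le V^\star_h$, are immediate from the absorbing structure.

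For $(\mathrm{II})$ I would rerun the proof of Theorem~\ref{thm:APVI} on $M^\dagger$, using $\pi^\star$ as the comparator policy (a legitimate policy on $M^\dagger$). Two features let the argument proceed without Assumption~\ref{assum:single_concen}: (a) at every data-agnostic pair both $M^\dagger$ and $\widehat M^\dagger$ have the \emph{identical} deterministic transition $\delta_{s^\dagger}$ and reward $0$, so the Bellman error there is exactly zero and the width $\Gamma^\dagger_h$ may be taken to be $0$; (b) at every pair in $\mathcal C_h$ we have $n_{s_h,a_h}\ge1$ on $\mathcal E_0$, so the Bernstein concentration of $\widehat r^\dagger_h+(\widehat P^\dagger_h-P^\dagger_h)V^{\dagger\pi^\star}_{h+1}$ used in Theorem~\ref{thm:APVI} applies verbatim. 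Thus the pessimism property $\overline V^\dagger_h\le V^{\dagger\star}_h$ holds w.h.p., and the standard pessimistic-value-iteration comparator inequality gives $(\mathrm{II})\le 2\sum_h\sum_{(s_h,a_h)\in\mathcal C_h}d^{\dagger\pi^\star}_h(s_h,a_h)\,\Gamma^\dagger_h(s_h,a_h)+(\text{error-propagation remainder})$. Substituting $\Gamma^\dagger_h$, using $n_{s_h,a_h}\ge\tfrac12 nd^\mu_h(s_h,a_h)$, replacing the empirical variance $\Var_{\widehat P^\dagger_{s_h,a_h}}(\widehat r^\dagger_h+\widehat V_{h+1})$ by $\Var_{P^\dagger_{s_h,a_h}}(r^\dagger_h+V^{\dagger\pi^\star}_{h+1})$ up to lower-order slack, and absorbing the $C_2H\iota/n_{s_h,a_h}$ terms and all correction terms into $\widetilde O(H^3/(n\bar{d}_m))$ (via $\sum_h\sum_{s_h,a_h}d^{\dagger\pi^\star}_h\le H$ and $1/n_{s_h,a_h}\le 2/(n\bar{d}_m)$) yields exactly \eqref{eqn:AFRL}. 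A union bound over the $O(HSA)$ events produces the stated $\iota=\log(HSA/\delta)$.

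The main obstacle is $(\mathrm{II})$: faithfully transplanting the Bernstein error-propagation machinery of Theorem~\ref{thm:APVI} to $M^\dagger$ — in particular showing that swapping the empirical variance proxy for $\Var_{P^\dagger}(r^\dagger_h+V^{\dagger\pi^\star}_{h+1})$ costs only the $\widetilde O(H^3/(n\bar{d}_m))$ remainder, and that comparing against $\pi^\star$ (optimal for $M$, not for $M^\dagger$) does not break the recursion, which works because the pessimistic comparator inequality holds against an arbitrary reference policy. In contrast, part $(\mathrm I)$ and the structural identities follow routinely from the absorbing construction.
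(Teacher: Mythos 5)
Your proposal is correct and follows essentially the same route as the paper: the same decomposition $v^\star-v^{\widehat\pi}\leq(v^\star-v^{\dagger\pi^\star})+(v^{\dagger\pi^\star}-v^{\dagger\widehat\pi})$ using $v^{\dagger\widehat\pi}\leq v^{\widehat\pi}$, the same absorption-mass bound for part $(\mathrm I)$ (the paper's Theorem~\ref{thm:pess_discrepancy}, proved via occupancy measures rather than your coupling, but equivalently), and the same transplant of the Bernstein pessimistic analysis to $M^\dagger$ against the comparator $\pi^\star$ for part $(\mathrm{II})$. The obstacle you correctly flag — swapping $\mathrm{Var}_{\widehat P^\dagger}(\widehat r^\dagger_h+\widehat V_{h+1})$ for $\mathrm{Var}_{P^\dagger}(r^\dagger_h+V^{\dagger\pi^\star}_{h+1})$ at only $\widetilde O(H^3/(n\bar d_m))$ cost — is exactly where the paper inserts its crude self-bounding estimate $\|\widehat V_{h+1}-V^{\dagger\pi^\star}_{h+1}\|_\infty\lesssim H^2/\sqrt{n\bar d_m}$ (Lemma~\ref{lem:self_bound}) together with the variance-concentration Lemma~\ref{lem:var_change}.
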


\textbf{Take-aways of Theorem~\ref{thm:AFRL}.} In $M^\dagger$, there is no agnostic location any more since the original unknown spaces now all have \emph{known} deterministic transitions to $s^\dagger$ in $M^\dagger$. At a price, the algorithm has to suffer the constant suboptimality $\sum_{h=2}^{H+1}d^{\dagger\pi^\star}_h(s^\dagger_h)$ due to no data in the region. The quantity $\sum_{h=2}^{H+1}d^{\dagger\pi^\star}_h(s^\dagger_h)$ helps characterize the hardness when nothing is assumed about $\mu$: it is always less than $H$ (cannot suffer more than $H$ suboptimality); under Assumption~\ref{assum:uniform}, it is $0$ since $M^\dagger=M$ with high probability (by Chernoff bound) and this causes $\mathcal{S}\times\mathcal{A}\backslash \mathcal{C}_h=\emptyset$; under Assumption~\ref{assum:single_concen}, it is also $0$ and \ref{thm:AFRL} reduces to Theorem~\ref{thm:APVI} (see Appendix~\ref{sec:proof_APVI}).

\subsection{Assumption Free vs Without Great Coverage (Partial Coverage)}

Recently there is a surge of studies that aim at weakening the assumptions of provable offline / batch RL. Those learning bounds are derived (mostly) under the insufficient data coverage assumptions. One type of works consider the assumption \emph{without great coverage} (or partial coverage): \cite{chang2021mitigating,uehara2021pessimistic} assume $\max_{s,a}d^{\pi_e}(s,a)/\mu(s,a)<\infty$ where $\pi_e$ is either an expert policy or a policy of great quality and they further compete against with this policy $\pi_e$. Those assumptions are similar to \ref{assum:single_concen} and therefore are stronger than the assumption-free RL we considered in \ref{thm:AFRL}.

In addition, there are other studies that apply to the case where $\mu$ can be arbitrary: \cite{liu2020provably} considers the behavior policy with insufficient coverage probability $\epsilon_\zeta$ (see their Definition~1), and they end up with the constant suboptimality gap $\frac{V_{\max}\epsilon_\zeta}{1-\gamma}$ (their Theorem~1), when the insufficient coverage probability $\epsilon_\zeta>0$, this gap has order $(1-\gamma)^{-2}$, which is larger in order than the biggest possible suboptimality gap $(1-\gamma)^{-1}$ therefore unable to characterize the essential statistical gap over the region that can never be visited by the behavior policy (and this happens similarly in \cite{kidambi2020morel}, see their Theorem~1); \cite{jin2020pessimism} derive the nice assumption-free result via regularization and their bound can incur $O(H^2)$ constant gap when there is at least one $(s_h,a_h)$ cannot be obtained by $\mu$ for all $h\in[H]$ (\emph{i.e.} replacing $nd^\mu_h(s_h,a_h)$ by $1$ in \eqref{eqn:VPVI}). The concurrent work \cite{xie2021bellman} provides a better characterization (and they call it \emph{off-support error}) with roughly $\frac{1}{1-\gamma}  \sum_{(s, a) \in \mathcal{S} \times \mathcal{A}}\left(d_{\pi} \backslash \nu\right)(s, a)\left[\Delta f_{\pi}(s, a)-\left(\mathcal{T}^{\pi} \Delta f_{\pi}\right)(s, a)\right]$, however, in the worst case $\Delta f_{\pi}(s, a)-\left(\mathcal{T}^{\pi} \Delta f_{\pi}\right)(s, a)$ might be large (which depends on the quality (assumption) of the function approximation class). 

In contrast, our $\sum_{h=2}^{H+1}d^{\dagger\pi^\star}_h(s^\dagger_h)$ quantity (with $d^{\dagger\pi^\star}_h(s^\dagger_h)=\sum_{t=1}^{h-1}\sum_{(s_t,a_t)\in\mathcal{S}\times\mathcal{A}\backslash\mathcal{C}_t}d^{\dagger\pi^\star}_t(s_t,a_t)\leq 1$) describes the ``must-suffer'' gap in a more precise way by absorbing all the agnostic probabilities into $s^\dagger$ and it is always bounded between $0$ and $H$. It reduces to $0$ when $\pi^\star$ is covered. The gap is always of order $H$ (as opposed to $O(H^2)$).

\subsection{The statistical limits for Offline Learning and OPE in tabular RL}

\begin{table*}[h]\label{table1}
	\centering\resizebox{\columnwidth}{!}{
		\begin{tabular}{ |c|c|c|c| } 
			\hline
			Task & Dominate Bound & Type  \\
			\hline 
			Offline policy learning  & $\sum_{h=1}^H\sum_{s_h,a_h}d^{\pi^\star}_h(s_h,a_h)\sqrt{\frac{\mathrm{Var}_{P_{s_h,a_h}}{(V^\star_{h+1}+r_h)}}{d^\mu_h(s_h,a_h)}}\sqrt{\frac{1}{n}}$ & Instance-dependent (Theorem~\ref{thm:APVI},\ref{thm:adaptive_lower_bound})  \\ 
			\hline
			OPE $(|v^\pi-\hat{v}^\pi|)$ & $\sqrt{\frac{1}{n} \sum_{h=0}^{H} \sum_{s_{h}, a_{h}} \frac{d_{h}^{\pi}\left(s_{h},a_h\right)^{2}}{d_{h}^{\mu}\left(s_{h},a_h\right)}  \operatorname{Var}_{P_{s_h,a_h}}\left(V_{h+1}^{\pi}+r_{h}\right) }$ & Upper bound, Cramer-Rao lower bound  \\ 
			\hline
		\end{tabular}
	}
	\caption{Showing the statistical optimalities for offline policy learning ($v^\star-v^{\hat{\pi}}$) and offline policy evaluation (OPE) ($|v^\pi-\hat{v}^\pi|$) for the non-stationary tabular MDPs. The upper bound of OPE comes from \cite{yin2020asymptotically,duan2020minimax} and the Cramer-Rao lower bound comes from \cite{jiang2016doubly}.}
\end{table*}

Table~\ref{table1} shows the statistical optimality for \emph{offline policy learning} and \emph{offline policy evaluation} (OPE) in the non-stationary tabular MDPs. By Cauchy-Schwartz inequality, it can be checked that the rate between the two bounds (roughly) deviate by a factor of $H$ (in terms of sample complexity), and this reveals that offline learning is inherently harder than OPE from the statistical aspect.

\section{Sketch of the Analysis}\label{sec:proof_sketch}
We only briefly sketch the key proving ideas in Section~\ref{sec:intrinsic}. We provide an extended proof overview in Appendix~\ref{sec:e_p_o}.  
Our analysis of the intrinsic learning bound in Section~\ref{sec:intrinsic} leverage the key design feature of APVI that $\widehat{V}_{h+1}$ only depends on the transition data from time $h+1$ to $H$ while $\widehat{P}_h$ only uses transition pairs at time $h$. This enables concentration inequalities due the \emph{conditional} independence. To cater for the data-adaptive bonus \eqref{eqn:pessimistic_pen}, we use \emph{Empirical} Bernstein inequality to get {\small$(\widehat{P}_h-P_h)\widehat{V}_{h+1}\lesssim \sqrt{{\mathrm{Var}_{\hat{P}}(\widehat{V}_{h+1})}/{n_{s_h,a_h}}}$}. Especially, to recover the {\small$\sqrt{\mathrm{Var}_P(V^\star_{h+1})}$} structure to we use a self-bounding reduction as follows. First, {\small$\sqrt{{\mathrm{Var}_{\hat{P}}(\widehat{V}_{h+1})}}-\sqrt{\mathrm{Var}_P(\widehat{V}_{h+1})}\lesssim H/\sqrt{n\bar{d}_m}$} and {\small$\sqrt{\mathrm{Var}_P(\widehat{V}_{h+1})}-\sqrt{\mathrm{Var}_P({V}^\star_{h+1})}\leq ||\widehat{V}_{h+1}-V^\star_{h+1}||_\infty$}. Next, we use \eqref{eqn:VPVI} as the intermediate step to crude bounding  {\small$||\widehat{V}_{h+1}-V^\star_{h+1}||_\infty\lesssim H^2/\sqrt{n\bar{d}_m}$} (where ``the use of \eqref{eqn:VPVI}'' is the more intricate self-bounding Lemma~\ref{lem:self_bound} in the actual proof) and this yields the desired structure of {\small$\sqrt{\mathrm{Var}_P({V}^\star_{h+1})}+H^2/\sqrt{n\bar{d}_m}$}. Lastly, we can combine this with \emph{the extended value difference lemma} in \cite{cai2020provably} to bound $V_1^\star-\widehat{V}_1$ and leverage the pessimistic design for bounding $\widehat{V}_1-V_1^{\widehat{\pi}}$.

For the per-instance lower bound, similar to \cite{khamaru2020temporal}, we reduce the problem from $\mathfrak{R}_{n}(\mathcal{P})$ to the two point testing problem and construct a problem-dependent local instance {\small$P'_h(s_{h+1}|s_h,a_h)=P_h(s_{h+1}|s_h,a_h)+\frac{P_h(s_{h+1}|s_h,a_h)\left(V_{h+1}^\star(s_{h+1})-\E_{P_{s_h,a_h}}[V_{h+1}^\star]\right)}{8\sqrt{\zeta\cdot n_{s_h,a_h}\cdot \Var_{P_{s_h,a_h}}(V_{h+1}^\star)}}$}. The design with the subtraction of ``the baseline'' $\E_{P_{s_h,a_h}}[V_{h+1}^\star]$ is the key to make sure $P'$ center around the instance $P$.


\section{Discussion and Conclusion}\label{sec:conclusion}

This work studies the offline reinforcement learning problem and contributes the intrinsic offline learning bound which is a near-optimal and strong adaptive bound that subsumes existing worst-case bounds under various assumptions. The adaptive characterization of the intrinsic bound abandons the explicit dependence on $H,S,A,C^\star,d_m$ and helps reveal the fundamental hardness of each individual instances. In this sense, it draws a clearer picture of what offline reinforcement learning looks like and serves as a step towards instance optimality in offline RL.

Nevertheless, it is still unclear whether \eqref{eqn:APVI} is optimal over all the instances. For example, for fully deterministic systems, our bound provides a faster convergence $H^3/n\bar{d}_m$, however, $H^3$ might be very suboptimal comparing to algorithms that are designed specifically for deterministic MDPs, since the agent only need to experience each location $(s,a)$ once to fully acquire the dynamic $P(\cdot|s,a)$ and $r(s,a)$. Recently, \cite{xiao2021optimality} goes beyond the minimax (worst case) optimality and studies the instance optimality behavior for the simplified batch bandit setting. One of their findings is: for ``easy enough'' tasks, different type of algorithms can be equally good, provably. This seems to suggest instance optimality only matters for problems that are hard to learn. How to formally define the instance optimality metric for different problems remains an open problem and how to design a single algorithm that can achieve optimality for all instances could be challenging (or even infeasible). We leave those as the future works.




\subsection*{Acknowledgment}
The research is partially supported by NSF Awards \#2007117 and \#2003257. MY would like to thank Chenjun Xiao for bringing up a related literature \citep{xiao2021optimality} and Masatoshi Uehara for helpful suggestions.

\bibliographystyle{plainnat}
\bibliography{sections/stat_rl}

\begin{thebibliography}{58}
\providecommand{\natexlab}[1]{#1}
\providecommand{\url}[1]{\texttt{#1}}
\expandafter\ifx\csname urlstyle\endcsname\relax
  \providecommand{\doi}[1]{doi: #1}\else
  \providecommand{\doi}{doi: \begingroup \urlstyle{rm}\Url}\fi

\bibitem[Agarwal et~al.(2020)Agarwal, Kakade, and Yang]{agarwal2020model}
Alekh Agarwal, Sham Kakade, and Lin~F Yang.
\newblock Model-based reinforcement learning with a generative model is minimax
  optimal.
\newblock In \emph{Conference on Learning Theory}, pages 67--83, 2020.

\bibitem[Antos et~al.(2008{\natexlab{a}})Antos, Munos, and
  Szepesvari]{antos2008fitted}
Andras Antos, Remi Munos, and Csaba Szepesvari.
\newblock Fitted q-iteration in continuous action-space mdps.
\newblock In \emph{Advances in Neural Information Processing Systems}, pages
  9--16, 2008{\natexlab{a}}.

\bibitem[Antos et~al.(2008{\natexlab{b}})Antos, Szepesv{\'a}ri, and
  Munos]{antos2008learning}
Andr{\'a}s Antos, Csaba Szepesv{\'a}ri, and R{\'e}mi Munos.
\newblock Learning near-optimal policies with bellman-residual minimization
  based fitted policy iteration and a single sample path.
\newblock \emph{Machine Learning}, 71\penalty0 (1):\penalty0 89--129,
  2008{\natexlab{b}}.

\bibitem[Azar et~al.(2017)Azar, Osband, and Munos]{azar2017minimax}
Mohammad~Gheshlaghi Azar, Ian Osband, and R{\'e}mi Munos.
\newblock Minimax regret bounds for reinforcement learning.
\newblock In \emph{Proceedings of the 34th International Conference on Machine
  Learning-Volume 70}, pages 263--272. JMLR. org, 2017.

\bibitem[Bai et~al.(2019)Bai, Xie, Jiang, and Wang]{bai2019provably}
Yu~Bai, Tengyang Xie, Nan Jiang, and Yu-Xiang Wang.
\newblock Provably efficient q-learning with low switching cost.
\newblock In \emph{Advances in Neural Information Processing Systems},
  volume~32, 2019.

\bibitem[Brafman and Tennenholtz(2002)]{brafman2002r}
Ronen~I Brafman and Moshe Tennenholtz.
\newblock R-max-a general polynomial time algorithm for near-optimal
  reinforcement learning.
\newblock \emph{Journal of Machine Learning Research}, 3\penalty0
  (Oct):\penalty0 213--231, 2002.

\bibitem[Bubeck and Cesa-Bianchi(2012)]{bubeck2012regret}
S{\'e}bastien Bubeck and Nicolo Cesa-Bianchi.
\newblock Regret analysis of stochastic and nonstochastic multi-armed bandit
  problems.
\newblock \emph{Foundations and Trends in Machine Learning}, 2012.

\bibitem[Buckman et~al.(2021)Buckman, Gelada, and
  Bellemare]{buckman2020importance}
Jacob Buckman, Carles Gelada, and Marc~G Bellemare.
\newblock The importance of pessimism in fixed-dataset policy optimization.
\newblock \emph{International Conference on Learning Representations}, 2021.

\bibitem[Cai et~al.(2020)Cai, Yang, Jin, and Wang]{cai2020provably}
Qi~Cai, Zhuoran Yang, Chi Jin, and Zhaoran Wang.
\newblock Provably efficient exploration in policy optimization.
\newblock In \emph{International Conference on Machine Learning}, pages
  1283--1294. PMLR, 2020.

\bibitem[Cai and Low(2004)]{cai2004adaptation}
T~Tony Cai and Mark~G Low.
\newblock An adaptation theory for nonparametric confidence intervals.
\newblock \emph{The Annals of statistics}, 32\penalty0 (5):\penalty0
  1805--1840, 2004.

\bibitem[Chang et~al.(2021)Chang, Uehara, Sreenivas, Kidambi, and
  Sun]{chang2021mitigating}
Jonathan~D Chang, Masatoshi Uehara, Dhruv Sreenivas, Rahul Kidambi, and Wen
  Sun.
\newblock Mitigating covariate shift in imitation learning via offline data
  without great coverage.
\newblock \emph{Advances in Neural Information Processing Systems}, 2021.

\bibitem[Chen and Jiang(2019)]{chen2019information}
Jinglin Chen and Nan Jiang.
\newblock Information-theoretic considerations in batch reinforcement learning.
\newblock In \emph{International Conference on Machine Learning}, pages
  1042--1051, 2019.

\bibitem[Chernoff et~al.(1952)]{chernoff1952measure}
Herman Chernoff et~al.
\newblock A measure of asymptotic efficiency for tests of a hypothesis based on
  the sum of observations.
\newblock \emph{The Annals of Mathematical Statistics}, 23\penalty0
  (4):\penalty0 493--507, 1952.

\bibitem[Cheung et~al.(2020)Cheung, Simchi-Levi, and
  Zhu]{cheung2020reinforcement}
Wang~Chi Cheung, David Simchi-Levi, and Ruihao Zhu.
\newblock Reinforcement learning for non-stationary markov decision processes:
  The blessing of (more) optimism.
\newblock In \emph{International Conference on Machine Learning}, pages
  1843--1854. PMLR, 2020.

\bibitem[Duan et~al.(2020)Duan, Jia, and Wang]{duan2020minimax}
Yaqi Duan, Zeyu Jia, and Mengdi Wang.
\newblock Minimax-optimal off-policy evaluation with linear function
  approximation.
\newblock In \emph{International Conference on Machine Learning}, pages
  8334--8342, 2020.

\bibitem[Fu et~al.(2020)Fu, Kumar, Nachum, Tucker, and Levine]{fu2020d4rl}
Justin Fu, Aviral Kumar, Ofir Nachum, George Tucker, and Sergey Levine.
\newblock D4rl: Datasets for deep data-driven reinforcement learning.
\newblock \emph{arXiv preprint arXiv:2004.07219}, 2020.

\bibitem[Fu et~al.(2021)Fu, Norouzi, Nachum, Tucker, Wang, Novikov, Yang,
  Zhang, Chen, Kumar, et~al.]{fu2021benchmarks}
Justin Fu, Mohammad Norouzi, Ofir Nachum, George Tucker, Ziyu Wang, Alexander
  Novikov, Mengjiao Yang, Michael~R Zhang, Yutian Chen, Aviral Kumar, et~al.
\newblock Benchmarks for deep off-policy evaluation.
\newblock \emph{International Conference on Learning Representations}, 2021.

\bibitem[Gulcehre et~al.(2020)Gulcehre, Wang, Novikov, Paine, Colmenarejo,
  Zolna, Agarwal, Merel, Mankowitz, Paduraru, et~al.]{gulcehre2020rl}
Caglar Gulcehre, Ziyu Wang, Alexander Novikov, Tom~Le Paine, Sergio~G{\'o}mez
  Colmenarejo, Konrad Zolna, Rishabh Agarwal, Josh Merel, Daniel Mankowitz,
  Cosmin Paduraru, et~al.
\newblock Rl unplugged: Benchmarks for offline reinforcement learning.
\newblock \emph{Advances in neural information processing systems}, 2020.

\bibitem[Janner et~al.(2021)Janner, Li, and Levine]{janner2021reinforcement}
Michael Janner, Qiyang Li, and Sergey Levine.
\newblock Reinforcement learning as one big sequence modeling problem.
\newblock \emph{Advances in neural information processing systems}, 2021.

\bibitem[Jiang and Agarwal(2018)]{jiang2018open}
Nan Jiang and Alekh Agarwal.
\newblock Open problem: The dependence of sample complexity lower bounds on
  planning horizon.
\newblock In \emph{Conference On Learning Theory}, pages 3395--3398, 2018.

\bibitem[Jiang and Li(2016)]{jiang2016doubly}
Nan Jiang and Lihong Li.
\newblock Doubly robust off-policy value evaluation for reinforcement learning.
\newblock In \emph{Proceedings of the 33rd International Conference on
  International Conference on Machine Learning-Volume 48}, pages 652--661.
  JMLR. org, 2016.

\bibitem[Jiang et~al.(2017)Jiang, Krishnamurthy, Agarwal, Langford, and
  Schapire]{jiang2017contextual}
Nan Jiang, Akshay Krishnamurthy, Alekh Agarwal, John Langford, and Robert~E
  Schapire.
\newblock Contextual decision processes with low bellman rank are
  pac-learnable.
\newblock In \emph{International Conference on Machine Learning-Volume 70},
  pages 1704--1713, 2017.

\bibitem[Jin et~al.(2020)Jin, Yang, and Wang]{jin2020pessimism}
Ying Jin, Zhuoran Yang, and Zhaoran Wang.
\newblock Is pessimism provably efficient for offline rl?
\newblock \emph{International Conference on Machine Learning}, 2020.

\bibitem[Jung and Stone(2010)]{jung2010gaussian}
Tobias Jung and Peter Stone.
\newblock Gaussian processes for sample efficient reinforcement learning with
  rmax-like exploration.
\newblock In \emph{Joint European Conference on Machine Learning and Knowledge
  Discovery in Databases}, pages 601--616. Springer, 2010.

\bibitem[Khamaru et~al.(2021{\natexlab{a}})Khamaru, Pananjady, Ruan,
  Wainwright, and Jordan]{khamaru2020temporal}
Koulik Khamaru, Ashwin Pananjady, Feng Ruan, Martin~J Wainwright, and Michael~I
  Jordan.
\newblock Is temporal difference learning optimal? an instance-dependent
  analysis.
\newblock \emph{SIAM Journal on Mathematics of Data Science},
  2021{\natexlab{a}}.

\bibitem[Khamaru et~al.(2021{\natexlab{b}})Khamaru, Xia, Wainwright, and
  Jordan]{khamaru2021instance}
Koulik Khamaru, Eric Xia, Martin~J Wainwright, and Michael~I Jordan.
\newblock Instance-optimality in optimal value estimation: Adaptivity via
  variance-reduced q-learning.
\newblock \emph{arXiv preprint arXiv:2106.14352}, 2021{\natexlab{b}}.

\bibitem[Kidambi et~al.(2020)Kidambi, Rajeswaran, Netrapalli, and
  Joachims]{kidambi2020morel}
Rahul Kidambi, Aravind Rajeswaran, Praneeth Netrapalli, and Thorsten Joachims.
\newblock Morel: Model-based offline reinforcement learning.
\newblock \emph{Advances in neural information processing systems}, 2020.

\bibitem[Krishnamurthy et~al.(2016)Krishnamurthy, Agarwal, and
  Langford]{krishnamurthy2016pac}
Akshay Krishnamurthy, Alekh Agarwal, and John Langford.
\newblock {PAC} reinforcement learning with rich observations.
\newblock In \emph{Advances in Neural Information Processing Systems}, pages
  1840--1848, 2016.

\bibitem[Lange et~al.(2012)Lange, Gabel, and Riedmiller]{lange2012batch}
Sascha Lange, Thomas Gabel, and Martin Riedmiller.
\newblock Batch reinforcement learning.
\newblock In \emph{Reinforcement learning}, pages 45--73. Springer, 2012.

\bibitem[Le et~al.(2019)Le, Voloshin, and Yue]{le2019batch}
Hoang Le, Cameron Voloshin, and Yisong Yue.
\newblock Batch policy learning under constraints.
\newblock In \emph{International Conference on Machine Learning}, pages
  3703--3712, 2019.

\bibitem[Levine et~al.(2020)Levine, Kumar, Tucker, and Fu]{levine2020offline}
Sergey Levine, Aviral Kumar, George Tucker, and Justin Fu.
\newblock Offline reinforcement learning: Tutorial, review, and perspectives on
  open problems.
\newblock \emph{arXiv preprint arXiv:2005.01643}, 2020.

\bibitem[Liu et~al.(2019)Liu, Swaminathan, Agarwal, and Brunskill]{liu2019off}
Yao Liu, Adith Swaminathan, Alekh Agarwal, and Emma Brunskill.
\newblock Off-policy policy gradient with state distribution correction.
\newblock In \emph{Uncertainty in Artificial Intelligence}, 2019.

\bibitem[Liu et~al.(2020)Liu, Swaminathan, Agarwal, and
  Brunskill]{liu2020provably}
Yao Liu, Adith Swaminathan, Alekh Agarwal, and Emma Brunskill.
\newblock Provably good batch reinforcement learning without great exploration.
\newblock \emph{Advances in neural information processing systems}, 2020.

\bibitem[Maillard et~al.(2014)Maillard, Mann, and Mannor]{maillard2014hard}
Odalric-Ambrym Maillard, Timothy~A Mann, and Shie Mannor.
\newblock How hard is my mdp?" the distribution-norm to the rescue".
\newblock \emph{Advances in Neural Information Processing Systems},
  27:\penalty0 1835--1843, 2014.

\bibitem[Maurer and Pontil(2009)]{maurer2009empirical}
Andreas Maurer and Massimiliano Pontil.
\newblock Empirical bernstein bounds and sample variance penalization.
\newblock \emph{Conference on Learning Theory}, 2009.

\bibitem[Rashidinejad et~al.(2021)Rashidinejad, Zhu, Ma, Jiao, and
  Russell]{rashidinejad2021bridging}
Paria Rashidinejad, Banghua Zhu, Cong Ma, Jiantao Jiao, and Stuart Russell.
\newblock Bridging offline reinforcement learning and imitation learning: A
  tale of pessimism.
\newblock \emph{arXiv preprint arXiv:2103.12021}, 2021.

\bibitem[Ren et~al.(2021)Ren, Li, Dai, Du, and Sanghavi]{ren2021nearly}
Tongzheng Ren, Jialian Li, Bo~Dai, Simon~S Du, and Sujay Sanghavi.
\newblock Nearly horizon-free offline reinforcement learning.
\newblock \emph{Advances in neural information processing systems}, 2021.

\bibitem[Sridharan(2002)]{sridharan2002gentle}
Karthik Sridharan.
\newblock A gentle introduction to concentration inequalities.
\newblock \emph{Dept. Comput. Sci., Cornell Univ., Tech. Rep}, 2002.

\bibitem[Sutton and Barto(2018)]{sutton2018reinforcement}
Richard~S Sutton and Andrew~G Barto.
\newblock \emph{Reinforcement learning: An introduction}.
\newblock MIT press, 2018.

\bibitem[Szepesv{\'a}ri and Munos(2005)]{szepesvari2005finite}
Csaba Szepesv{\'a}ri and R{\'e}mi Munos.
\newblock Finite time bounds for sampling based fitted value iteration.
\newblock In \emph{Proceedings of the 22nd international conference on Machine
  learning}, pages 880--887, 2005.

\bibitem[Tropp et~al.(2011)]{tropp2011freedman}
Joel Tropp et~al.
\newblock Freedman's inequality for matrix martingales.
\newblock \emph{Electronic Communications in Probability}, 16:\penalty0
  262--270, 2011.

\bibitem[Uehara and Sun(2021)]{uehara2021pessimistic}
Masatoshi Uehara and Wen Sun.
\newblock Pessimistic model-based offline rl: Pac bounds and posterior sampling
  under partial coverage.
\newblock \emph{arXiv preprint arXiv:2107.06226}, 2021.

\bibitem[Wang et~al.(2021)Wang, Foster, and Kakade]{wang2020statistical}
Ruosong Wang, Dean~P Foster, and Sham~M Kakade.
\newblock What are the statistical limits of offline rl with linear function
  approximation?
\newblock \emph{International Conference on Machine Learning}, 2021.

\bibitem[Wen and Van~Roy(2013)]{wen2013efficient}
Zheng Wen and Benjamin Van~Roy.
\newblock Efficient exploration and value function generalization in
  deterministic systems.
\newblock \emph{Advances in Neural Information Processing Systems}, 26, 2013.

\bibitem[Xiao et~al.(2021)Xiao, Wu, Mei, Dai, Lattimore, Li, Szepesvari, and
  Schuurmans]{xiao2021optimality}
Chenjun Xiao, Yifan Wu, Jincheng Mei, Bo~Dai, Tor Lattimore, Lihong Li, Csaba
  Szepesvari, and Dale Schuurmans.
\newblock On the optimality of batch policy optimization algorithms.
\newblock In \emph{International Conference on Machine Learning}, pages
  11362--11371. PMLR, 2021.

\bibitem[Xie and Jiang(2020)]{xie2020q}
Tengyang Xie and Nan Jiang.
\newblock Q* approximation schemes for batch reinforcement learning: A
  theoretical comparison.
\newblock In \emph{Uncertainty in Artificial Intelligence}, pages 550--559,
  2020.

\bibitem[Xie and Jiang(2021)]{xie2020batch}
Tengyang Xie and Nan Jiang.
\newblock Batch value-function approximation with only realizability.
\newblock \emph{International Conference on Machine Learning}, 2021.

\bibitem[Xie et~al.(2021{\natexlab{a}})Xie, Cheng, Jiang, Mineiro, and
  Agarwal]{xie2021bellman}
Tengyang Xie, Ching-An Cheng, Nan Jiang, Paul Mineiro, and Alekh Agarwal.
\newblock Bellman-consistent pessimism for offline reinforcement learning.
\newblock \emph{Advances in neural information processing systems},
  2021{\natexlab{a}}.

\bibitem[Xie et~al.(2021{\natexlab{b}})Xie, Jiang, Wang, Xiong, and
  Bai]{xie2021policy}
Tengyang Xie, Nan Jiang, Huan Wang, Caiming Xiong, and Yu~Bai.
\newblock Policy finetuning: Bridging sample-efficient offline and online
  reinforcement learning.
\newblock \emph{Advances in neural information processing systems},
  2021{\natexlab{b}}.

\bibitem[Yin and Wang(2020)]{yin2020asymptotically}
Ming Yin and Yu-Xiang Wang.
\newblock Asymptotically efficient off-policy evaluation for tabular
  reinforcement learning.
\newblock In \emph{International Conference on Artificial Intelligence and
  Statistics}, pages 3948--3958. PMLR, 2020.

\bibitem[Yin and Wang(2021)]{yin2021optimal}
Ming Yin and Yu-Xiang Wang.
\newblock Optimal uniform ope and model-based offline reinforcement learning in
  time-homogeneous, reward-free and task-agnostic settings.
\newblock \emph{Advances in neural information processing systems}, 2021.

\bibitem[Yin et~al.(2021{\natexlab{a}})Yin, Bai, and Wang]{yin2021near}
Ming Yin, Yu~Bai, and Yu-Xiang Wang.
\newblock Near-optimal provable uniform convergence in offline policy
  evaluation for reinforcement learning.
\newblock In \emph{International Conference on Artificial Intelligence and
  Statistics}, pages 1567--1575. PMLR, 2021{\natexlab{a}}.

\bibitem[Yin et~al.(2021{\natexlab{b}})Yin, Bai, and Wang]{yin2021nearoptimal}
Ming Yin, Yu~Bai, and Yu-Xiang Wang.
\newblock Near-optimal offline reinforcement learning via double variance
  reduction.
\newblock \emph{Advances in neural information processing systems},
  2021{\natexlab{b}}.

\bibitem[Zanette(2021)]{zanette2020exponential}
Andrea Zanette.
\newblock Exponential lower bounds for batch reinforcement learning: Batch rl
  can be exponentially harder than online rl.
\newblock \emph{International Conference on Machine Learning}, 2021.

\bibitem[Zanette and Brunskill(2018)]{zanette2018problem}
Andrea Zanette and Emma Brunskill.
\newblock Problem dependent reinforcement learning bounds which can identify
  bandit structure in mdps.
\newblock In \emph{International Conference on Machine Learning}, pages
  5747--5755. PMLR, 2018.

\bibitem[Zanette and Brunskill(2019)]{zanette2019tighter}
Andrea Zanette and Emma Brunskill.
\newblock Tighter problem-dependent regret bounds in reinforcement learning
  without domain knowledge using value function bounds.
\newblock In \emph{International Conference on Machine Learning}, pages
  7304--7312. PMLR, 2019.

\bibitem[Zanette et~al.(2021)Zanette, Wainwright, and
  Brunskill]{zanette2021provable}
Andrea Zanette, Martin~J Wainwright, and Emma Brunskill.
\newblock Provable benefits of actor-critic methods for offline reinforcement
  learning.
\newblock \emph{Advances in neural information processing systems}, 2021.

\bibitem[Zhang et~al.(2021)Zhang, Ji, and Du]{zhang2020reinforcement}
Zihan Zhang, Xiangyang Ji, and Simon~S Du.
\newblock Is reinforcement learning more difficult than bandits? a near-optimal
  algorithm escaping the curse of horizon.
\newblock \emph{Conference of Learning Theory}, 2021.

\end{thebibliography}

\appendix

\clearpage
\begin{center}
	 {\LARGE \textbf{Appendix}}
\end{center}


 \begin{algorithm}[H]
	\caption{Vanilla Pessimistic Value Iteration}
	\label{alg:VPVI}
	\small{
		\begin{algorithmic}[1]
			\STATE {\bfseries Input:} Offline dataset $\mathcal{D}=\{(s_h^\tau,a_h^\tau,r_h^\tau,s_{h+1}^\tau)\}_{\tau,h=1}^{n,H}$. Absolute Constant $C$, failure probability $\delta$.
			
			\STATE {\bfseries Initialization:} Set $\widehat{V}_{H+1}(\cdot)\leftarrow 0$.

			\FOR{time $h=H,H-1,\ldots,1$}
			\STATE Set $\widehat{Q}_h(\cdot,\cdot)\leftarrow {\widehat{r}_h(\cdot,\cdot)}+(\widehat{P}_{h}\cdot \widehat{V}_{h+1})(\cdot,\cdot)$
			\STATE $\forall s_h,a_h$, set $\Gamma_h(s_h,a_h)=\frac{CH\log(HSA/\delta)}{\sqrt{n_{s_h,a_h}}}$ if $n_{s_h,a_h}\geq 1$, o.w. set to $\frac{CH\log(HSA/\delta)}{1}$.
			\STATE Set $\widehat{Q}^p_h(\cdot,\cdot)\leftarrow \widehat{Q}_h(\cdot,\cdot)-\Gamma_h(\cdot,\cdot)$.\COMMENT{Pessmistic update}
			\STATE Set $\overline{Q}_h(\cdot,\cdot)\leftarrow \min\{\widehat{Q}^p_h(\cdot,\cdot),H-h+1\}^{+}$.
			\STATE Select $\widehat{\pi}_h(\cdot|s_h)\leftarrow \argmax_{\pi_h}\langle \overline{Q}_h(s_h,\cdot),\pi_h(\cdot|s_h)\rangle$, $\forall s_h$.
			\STATE Set $\widehat{V}_h(s_h)\leftarrow\langle \overline{Q}_h(s_h,\cdot), \widehat{\pi}_h(\cdot|s_h) \rangle$,  $\forall s_h$. 
			\ENDFOR
			
			\STATE {\bfseries Output: $\{\widehat{\pi}_h\}$. } 
			
		\end{algorithmic}
	}
\end{algorithm}

\section{Extended Proof Overview and Some Notations}\label{sec:e_p_o}

Our analysis of the intrinsic learning bound in Section~\ref{sec:intrinsic} leverage the key design feature of APVI that $\widehat{V}_{h+1}$ only depends on the transition data from time $h+1$ to $H$ while $\widehat{P}_h$ only uses transition pairs at time $h$. This enables concentration inequalities due the \emph{conditional} independence.\footnote{This trick is also leveraged in \cite{yin2021near}, but they consider the empirical optimal value $\widehat{V}^{\widehat{\pi}^\star}$ instead.} To cater for the data-adaptive bonus \eqref{eqn:pessimistic_pen}, we need to use \emph{Empirical} Bernstein inequality to get {\small$(\widehat{P}_h-P_h)\widehat{V}_{h+1}\lesssim \sqrt{{\mathrm{Var}_{\hat{P}}(\widehat{V}_{h+1})}/{n_{s_h,a_h}}}$}. Especially, to recover the {\small$\sqrt{\mathrm{Var}_P(V^\star_{h+1})}$} structure to we use a self-bounding reduction as follows. First, {\small$\sqrt{{\mathrm{Var}_{\hat{P}}(\widehat{V}_{h+1})}}-\sqrt{\mathrm{Var}_P(\widehat{V}_{h+1})}\lesssim H/\sqrt{n\bar{d}_m}$} and {\small$\sqrt{\mathrm{Var}_P(\widehat{V}_{h+1})}-\sqrt{\mathrm{Var}_P({V}^\star_{h+1})}\leq ||\widehat{V}_{h+1}-V^\star_{h+1}||_\infty$}. Next, we use \eqref{eqn:VPVI} as the intermediate step to crude bounding  {\small$||\widehat{V}_{h+1}-V^\star_{h+1}||_\infty\lesssim H^2/\sqrt{n\bar{d}_m}$} (where ``the use of \eqref{eqn:VPVI}'' is the more intricate self-bounding Lemma~\ref{lem:self_bound} in the actual proof) and this yields the desired structure of {\small$\sqrt{\mathrm{Var}_P({V}^\star_{h+1})}+H^2/\sqrt{n\bar{d}_m}$}. Lastly, we can combine this with \emph{the extended value difference lemma} in \cite{cai2020provably} to bound $V_1^\star-\widehat{V}_1$ and leverage the pessimistic design for bounding $\widehat{V}_1-V_1^{\widehat{\pi}}$.

For the per-instance lower bound, we reduce the problem from $\mathfrak{R}_{n}(\mathcal{P})$ to the two point testing problem and construct a problem-dependent local instance {\small$P'_h(s_{h+1}|s_h,a_h)=P_h(s_{h+1}|s_h,a_h)+\frac{P_h(s_{h+1}|s_h,a_h)\left(V_{h+1}^\star(s_{h+1})-\E_{P_{s_h,a_h}}[V_{h+1}^\star]\right)}{2\sqrt{n_{s_h,a_h}\cdot \Var_{P_{s_h,a_h}}(V_{h+1}^\star)}}$} to make sure the reduction work. Comparing to the lower bound in \cite{jin2020pessimism}, our result is per-instance and it is able recover the $\sqrt{\mathrm{Var}_P(V^\star)}$ term within those hard instances.

For the assumption-free offline RL, the use of \emph{pessimistic augmented MDP} help characterize the constant gap (due to the agnostic locations) via the following conclusion (Lemma~\ref{thm:pess_discrepancy}): 
\[
v^\pi-\sum_{h=1}^H\sum_{t=1}^{h-1}\sum_{(s_t,a_t)\in\mathcal{S}\times\mathcal{A}\backslash \mathcal{C}_h}d^\pi_t(s_t,a_t)\leq v^\pi-\sum_{h=1}^Hd^{\dagger\pi}_h(s^\dagger_h)\leq v^{\dagger\pi}\leq v^\pi.
\]
Especially, the mass of the absorbing state $s^\dagger_h$ have the expression
\[
d^{\dagger\pi}_h(s^\dagger_h)=\sum_{t=1}^{h-1}\sum_{(s_t,a_t)\in\mathcal{S}\times\mathcal{A}\backslash\mathcal{C}_t}d^{\dagger\pi}_t(s_t,a_t)
\]
which absorbs all the first time exit probabilities $d^{\dagger\pi}_t(s_t,a_t)$ under $M^\dagger$, see Section~\ref{sec:inter} for detailed explanations.

We use the following notations throughout the entire appendix. First recall $\mathcal{C}_h=\{(s_h,a_h):d^\mu_h(s_h,a_h)>0\}$ and $\bar{d}_m:=\min_{h\in[H],(s_h,a_h)\in\mathcal{C}_h}\{d^\mu_h(s_h,a_h)\}$. Also, $\iota=\log(HSA/\delta)$. Next, for any $V\in\R^S$, denote $\mathcal{T}_h(V)(s,a):=r_h(s,a)+(P_h\cdot V)(s,a)$ $\forall s,a\in\mathcal{S},\mathcal{A}$ be the Bellman update operator.

\section{Proof of VPVI (Theorem~\ref{thm:VPVI})}\label{sec:VPVI_proof}

We begin with the following helpful lemma.

\begin{lemma}\label{lem:sufficient_sample} For any $0<\delta<1$, there exists an absolute constant $c_1$ such that when total episode $n>c_1 \cdot 1/\bar{d}_m\cdot \log(HSA/\delta)$, then with probability $1-\delta$, $\forall h\in[H]$
	\[
	n_{s_h,a_h}\geq n\cdot d^\mu_h(s_h,a_h)/2,\quad\forall \; (s_h,a_h)\in\mathcal{C}_h.
	\]
	Furthermore, we denote 
	\begin{equation}\label{eqn:good_event}
	\mathcal{E}:=\{n_{s_h,a_h}\geq n\cdot d^\mu_h(s_h,a_h)/2,\;\forall \; (s_h,a_h)\in\mathcal{C}_h,\;h\in[H].\}
	\end{equation}
	then equivalently $P(\mathcal{E})>1-\delta$.
	
	In addition, we denote 
	\begin{equation}\label{eqn:good_event_1}
	\mathcal{E}':=\{n_{s_h,a_h}\leq \frac{3}{2} n\cdot d^\mu_h(s_h,a_h),\;\forall \; (s_h,a_h)\in\mathcal{C}_h,\;h\in[H].\}
	\end{equation}
	then similarly $P(\mathcal{E}')>1-\delta$.
\end{lemma}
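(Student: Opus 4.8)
The statement to prove is Lemma~\ref{lem:sufficient_sample}, which is a concentration bound on the empirical counts $n_{s_h,a_h}$ around their expectations $n\cdot d^\mu_h(s_h,a_h)$, uniformly over all $(s_h,a_h)\in\mathcal{C}_h$ and all $h\in[H]$. The plan is to apply a multiplicative Chernoff bound to each fixed $(s_h,a_h)$ and then take a union bound over the (at most) $HSA$ relevant triples.

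First I would fix $h\in[H]$ and $(s_h,a_h)\in\mathcal{C}_h$, and observe that $n_{s_h,a_h}=\sum_{\tau=1}^n \mathbf{1}[(s_h^\tau,a_h^\tau)=(s_h,a_h)]$ is a sum of $n$ i.i.d.\ Bernoulli random variables, each with success probability exactly $d^\mu_h(s_h,a_h)$ (since each episode is an independent rollout of $\mu$, and the marginal probability of visiting $(s_h,a_h)$ at step $h$ is by definition $d^\mu_h(s_h,a_h)$). Hence $\E[n_{s_h,a_h}]=n\cdot d^\mu_h(s_h,a_h)$. The multiplicative Chernoff bounds give, for any $0<\theta<1$,
\[
\P\!\left(n_{s_h,a_h}\le (1-\theta)\,n\,d^\mu_h(s_h,a_h)\right)\le \exp\!\left(-\tfrac{\theta^2}{2}\,n\,d^\mu_h(s_h,a_h)\right),
\]
\[
\P\!\left(n_{s_h,a_h}\ge (1+\theta)\,n\,d^\mu_h(s_h,a_h)\right)\le \exp\!\left(-\tfrac{\theta^2}{3}\,n\,d^\mu_h(s_h,a_h)\right).
\]
Taking $\theta=1/2$ handles both the lower bound $\mathcal{E}$ (with $1-\theta=1/2$) and the upper bound $\mathcal{E}'$ (with $1+\theta=3/2$).

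Next I would control each tail probability using the assumption $n>c_1\cdot(1/\bar d_m)\cdot\log(HSA/\delta)$. Since $d^\mu_h(s_h,a_h)\ge \bar d_m$ for every $(s_h,a_h)\in\mathcal{C}_h$, we get $n\,d^\mu_h(s_h,a_h)\ge n\,\bar d_m > c_1\log(HSA/\delta)$, so each of the two tail probabilities above is at most $\exp(-\tfrac{c_1}{8}\log(HSA/\delta))=(HSA/\delta)^{-c_1/8}$ (using $\theta^2/3=1/12\ge 1/8$ would need care; more simply use $\theta^2/3 = 1/12$ and choose $c_1$ accordingly). Choosing $c_1$ large enough (e.g.\ $c_1\ge 24$) makes each tail at most $\delta/(HSA)$. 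A union bound over the at most $HSA$ pairs $(s_h,a_h)$ and $H$ values of $h$ — actually at most $HSA$ triples total since for each $h$ there are at most $SA$ pairs — yields $\P(\mathcal{E})\ge 1-\delta$ and $\P(\mathcal{E}')\ge 1-\delta$ separately. (If one wants both simultaneously, replace $\delta$ by $\delta/2$ and absorb the constant into $c_1$; but as stated the lemma only claims each event holds with probability $1-\delta$ individually, so no adjustment is needed.)

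The main (and only mild) obstacle is bookkeeping the constants: one must verify that the chosen $c_1$ simultaneously makes $\theta=1/2$ work in both Chernoff directions and beats the union-bound factor $HSA$, and one should be slightly careful that the union bound is over triples $(h,s_h,a_h)$ rather than double-counting. There is also a trivial point worth a sentence: for $(s_h,a_h)\notin\mathcal{C}_h$ we have $d^\mu_h(s_h,a_h)=0$, so $n_{s_h,a_h}=0$ deterministically and these triples need not enter the union bound at all, which is why the statement is quantified only over $\mathcal{C}_h$. None of this requires any genuinely new idea beyond the standard Chernoff-plus-union-bound template.
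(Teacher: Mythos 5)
Your proposal is correct and matches the paper's own proof: both apply the multiplicative Chernoff bound with $\theta=1/2$ to each $n_{s_h,a_h}$ (a Binomial with mean $n\,d^\mu_h(s_h,a_h)\ge n\bar d_m$), union bound over the at most $HSA$ triples with $(s_h,a_h)\in\mathcal{C}_h$, and absorb the constants into $c_1$. No further comment is needed.
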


\begin{proof}[Proof of Lemma~\ref{lem:sufficient_sample}]
	Define $E:=\{\exists h, (s_h,a_h)\in\mathcal{C}_h \;\text{s.t.}\; n_{s_h,a_h} <  n d_h^\mu(s_h,a_h)/2 \}$. Then combining the first part of multiplicative Chernoff bound (Lemma~\ref{lem:chernoff_multiplicative} in the Appendix) and a union bound, we obtain
	\begin{align*}
	\P[E] &\leq \sum_{h}\sum_{(s_h,a_h)\in\mathcal{C}_h} \P[n_{s_h,a_h} < n d^{\mu}_h(s_h,a_h)/2] \\
	&\leq HSA\cdot e^{-\frac{ n\cdot d_m}{8}}:=\delta
	\end{align*}
	solving this for $n$ then provides the stated result.
	
	For $\mathcal{E}'$ we can similarly use the second part of Lemma~\ref{lem:chernoff_multiplicative} to prove.
\end{proof}

Now in Lemma~\ref{lem:decompose_difference}, take $\pi=\pi^\star$, $\widehat{Q}_h=\overline{Q}_h$ and $\widehat{\pi}=\widehat{\pi}$ in Algorithm~\ref{alg:VPVI}, we have 
\begin{equation}\label{eqn:weak_sub_decomp}
V_1^{\pi^\star}(s)-V_1^{\widehat{\pi}}(s)\leq \sum_{h=1}^H\E_{\pi^\star}\left[\xi_h(s_h,a_h)\mid s_1=s\right]-\sum_{h=1}^H\E_{\widehat{\pi}}\left[\xi_h(s_h,a_h)\mid s_1=s\right]
\end{equation}
here $\xi_h(s,a)=(\mathcal{T}_h\widehat{V}_{h+1})(s,a)-\overline{Q}_h(s,a)$. This is true since by the definition of $\widehat{\pi}$ in Algorithm~\ref{alg:VPVI} $\langle\overline{Q}_{h}\left(s_{h}, \cdot\right), \pi_{h}\left(\cdot | s_{h}\right)-\widehat{\pi}_{h}\left(\cdot | s_{h}\right)\rangle\leq 0$ almost surely. Next we prove the asymmetric bound for $\xi_h$, which is the key lemma for the proof.

\begin{lemma}\label{lem:bellman_diff}
	Denote $\xi_h(s,a)=(\mathcal{T}_h\widehat{V}_{h+1})(s,a)-\overline{Q}_h(s,a)$, where $\widehat{V}_{h+1}$ and  $\overline{Q}_h$ are the quantities in Algorithm~\ref{alg:VPVI} and $\mathcal{T}_h(V):=r_h+P_h\cdot V$ for any $V$. Then with probability $1-\delta$, then for any $h,s_h,a_h$ such that $d^\mu_h(s_h,a_h)>0$, we have ($C'$ is an absolute constant)
	\[
	0\leq \xi_h(s_h,a_h)=(\mathcal{T}_h\widehat{V}_{h+1})(s_h,a_h)-\overline{Q}_h(s_h,a_h)\leq C'\cdot \sqrt{\frac{H^2\log(HSA/\delta)}{n\cdot d^\mu_h(s_h,a_h)}}.
	\]
\end{lemma}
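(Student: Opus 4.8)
The plan is to condition on the high-probability count event $\mathcal{E}$ of Lemma~\ref{lem:sufficient_sample} together with a concentration event for the plug-in estimators, and then exploit the asymmetry built into the pessimistic update. First I would record two elementary facts about Algorithm~\ref{alg:VPVI}: by backward induction $\widehat{V}_{h+1}\in[0,H-h]^S$ (since $\widehat{V}_{H+1}=0$ and $\overline{Q}_{h'}=\min\{\cdot,H-h'+1\}^{+}$), so for every $(s,a)$ we have $0\le (\mathcal{T}_h\widehat{V}_{h+1})(s,a)=r_h(s,a)+(P_h\widehat{V}_{h+1})(s,a)\le H-h+1$; and on $\mathcal{E}$ every $(s_h,a_h)$ with $d^\mu_h(s_h,a_h)>0$ is visited $n_{s_h,a_h}\ge n\,d^\mu_h(s_h,a_h)/2\ge 1$ times (using $n>c_0/\bar d_m\cdot\iota$), so the relevant branch of $\Gamma_h$ is the $n_{s_h,a_h}\ge 1$ one.

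The crux is the estimate $|(\mathcal{T}_h\widehat{V}_{h+1})(s_h,a_h)-\widehat{Q}_h(s_h,a_h)|\le |(\widehat{r}_h-r_h)(s_h,a_h)|+|((\widehat{P}_h-P_h)\widehat{V}_{h+1})(s_h,a_h)|$. The first term is $\le\sqrt{\iota/(2n_{s_h,a_h})}$ by Hoeffding since $r_h\in[0,1]$. For the second term I would invoke the design feature emphasized in Section~\ref{sec:proof_sketch}: $\widehat{V}_{h+1}$ is a measurable function of the transition data at steps $h+1,\dots,H$ only, hence conditionally on the $\sigma$-field generated by that part of the data it is a fixed vector in $[0,H-h]^S$; freezing it and applying Hoeffding to the $n_{s_h,a_h}$ i.i.d.\ next-state draws at $(s_h,a_h)$ gives $|((\widehat{P}_h-P_h)\widehat{V}_{h+1})(s_h,a_h)|\le H\sqrt{\iota/(2n_{s_h,a_h})}$, importantly with no $\sqrt{S}$ factor (which a crude $\|\widehat{P}_h-P_h\|_1$ bound would incur). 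A union bound over the $\le HSA$ triples keeps the total failure probability at $O(\delta)$, after which $|\mathcal{T}_h\widehat{V}_{h+1}-\widehat{Q}_h|(s_h,a_h)\le c\,H\sqrt{\iota/n_{s_h,a_h}}$ for an absolute constant $c$; I then choose the algorithm constant $C$ so that $\Gamma_h(s_h,a_h)$ dominates this confidence width.

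With this in hand both directions are short. Lower bound $\xi_h\ge 0$: $\widehat{Q}^p_h=\widehat{Q}_h-\Gamma_h\le \mathcal{T}_h\widehat{V}_{h+1}+|\widehat{Q}_h-\mathcal{T}_h\widehat{V}_{h+1}|-\Gamma_h\le \mathcal{T}_h\widehat{V}_{h+1}$, and since $0\le\mathcal{T}_h\widehat{V}_{h+1}\le H-h+1$, applying the monotone map $x\mapsto\min\{x,H-h+1\}^{+}$ (which fixes $[0,H-h+1]$) yields $\overline{Q}_h(s_h,a_h)\le (\mathcal{T}_h\widehat{V}_{h+1})(s_h,a_h)$, i.e.\ $\xi_h\ge 0$. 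Upper bound: $\widehat{Q}^p_h\ge \mathcal{T}_h\widehat{V}_{h+1}-|\widehat{Q}_h-\mathcal{T}_h\widehat{V}_{h+1}|-\Gamma_h\ge \mathcal{T}_h\widehat{V}_{h+1}-2\Gamma_h$; as this lower bound is itself $\le H-h+1$ the clipping is inactive, so $\overline{Q}_h\ge \mathcal{T}_h\widehat{V}_{h+1}-2\Gamma_h$ and thus $\xi_h\le 2\Gamma_h=O(H\sqrt{\iota/n_{s_h,a_h}})$. Substituting $n_{s_h,a_h}\ge n\,d^\mu_h(s_h,a_h)/2$ from $\mathcal{E}$ gives $\xi_h(s_h,a_h)\le C'\sqrt{H^2\iota/(n\,d^\mu_h(s_h,a_h))}$, as claimed.

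I expect the main obstacle to be the concentration step for $((\widehat{P}_h-P_h)\widehat{V}_{h+1})(s_h,a_h)$ — specifically, making rigorous the claim that $\widehat{V}_{h+1}$ may be treated as independent of the step-$h$ transition noise, since in episodic data the slices at steps $h$ and $h+1$ share the state $s_{h+1}^\tau$, and obtaining the $S$-free bound genuinely relies on this decoupling. I would handle it exactly as in \cite{yin2021near,azar2017minimax} rather than reprove it; everything else (the clipping bookkeeping, the choice of $C$, the union bound, and merging with the count event $\mathcal{E}$) is routine.
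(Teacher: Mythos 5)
Your proposal is correct and follows essentially the same route as the paper's proof: Hoeffding for $\widehat r_h$, the conditional-independence decoupling of $\widehat V_{h+1}$ from $\widehat P_h$ to get the $S$-free bound on $((\widehat P_h-P_h)\widehat V_{h+1})(s_h,a_h)$, the pessimism/clipping bookkeeping giving $0\le\xi_h\le 2\Gamma_h$, and the Chernoff count event to replace $n_{s_h,a_h}$ by $n\,d^\mu_h(s_h,a_h)/2$. The only (cosmetic) differences are that you handle the clipping via monotonicity of $x\mapsto\min\{x,H-h+1\}^{+}$ instead of the paper's case split on the sign of $\widehat Q^p_h$, and by keeping all widths in terms of $n_{s_h,a_h}$ until the end you avoid the paper's auxiliary upper-count event $\mathcal{E}'$.
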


\begin{proof}[Proof of Lemma~\ref{lem:bellman_diff}]
	Let us first consider the case where $n_{s_h,a_h}\geq 1$ for all $(s_h,a_h)\in\mathcal{C}_h$. In this case, by Hoeffding's inequality and a union bound, w.p. $1-\delta$, since $0\leq r_h\leq 1$,
	\begin{equation}\label{eqn:r}
	|\widehat{r}_h(s_h,a_h)-r_h(s_h,a_h)|\leq 2\sqrt{\frac{\log(HSA/\delta)}{n_{s_h,a_h}}} \;\forall (s_h,a_h)\in\mathcal{C}_h, h\in[H].
	\end{equation}
	Next, recall $\widehat{\pi}_{h+1}$ in Algorithm~\ref{alg:VPVI} is computed backwardly therefore only depends on sample tuple from time $h+1$ to $H$. Aa a result $\widehat{V}_{h+1}=\langle \overline{Q}_{h+1}, \widehat{\pi}_{h+1} \rangle$ also only depends on the sample tuple from time $h+1$ to $H$. On the other side, by our construction $\widehat{P}_h$ only depends on the transition pairs from $h$ to $h+1$. Therefore $\widehat{V}_{h+1}$ and $\widehat{P}_h$ are \emph{Conditionally} independent (This trick is also use in \cite{yin2021near}) so by Hoeffding's inequality again\footnote{It is worth mentioning if sub-policy $\widehat{\pi}_{h+1:t}$ depends on the data from all time steps $1,2,\ldots,H$, then $\widehat{V}_{h+1}$ and $\widehat{P}_h$ are no longer conditionally independent and Hoeffding's inequality cannot be applied. } (note $||\widehat{V}_h||_\infty\leq ||\overline{Q}_h||\leq H$ by VPVI)
	\begin{equation}\label{eqn:v}
	\left|\left((\widehat{P}_h-P_h)\widehat{V}_{h+1}\right)(s_h,a_h)\right|\leq 2\sqrt{\frac{H^2\cdot\log(HSA/\delta)}{n_{s_h,a_h}}}, \;\;\forall (s_h,a_h)\in\mathcal{C}_h.
	\end{equation}
	Now apply Lemma~\ref{lem:sufficient_sample}, we have with high probability the event $\mathcal{E}$ \eqref{eqn:good_event} is true, combining this with \eqref{eqn:r}, \eqref{eqn:v} and rescaling the constants we obtain with probability $1-\delta$, for all $h\in[H]$,
	\begin{equation}\label{eqn:r_v}
	\begin{aligned}
	&|\widehat{r}_h(s_h,a_h)-r_h(s_h,a_h)|\leq C\sqrt{\frac{\log(HSA/\delta)}{6n \cdot d^\mu_h{(s_h,a_h)}}} \\
	&\left|\left((\widehat{P}_h-P_h)\widehat{V}_{h+1}\right)(s_h,a_h)\right|\leq C\sqrt{\frac{H^2\cdot\log(HSA/\delta)}{6n \cdot d^\mu_h{(s_h,a_h)}}}, \;\;\forall (s_h,a_h)\in\mathcal{C}_h.
	\end{aligned}
	\end{equation}
	Now we are ready to prove the Lemma.
	
	\textbf{Step1:} we prove $\xi_h(s_h,a_h)\geq 0$ for all $(s_h,a_h)\in\mathcal{C}_h$, $h\in[H]$ with probability $1-\delta$. 
	
	We can condition on $\mathcal{E}'$ and \eqref{eqn:r_v} is true since our lemma is high probability version.
	Indeed, if $\widehat{Q}^p_h(s_h,a_h)<0$, then $\overline{Q}_h(s_h,a_h)=0$. In this case, $\xi_h(s_h,a_h)=(\mathcal{T}_h\widehat{V}_{h+1})(s_h,a_h)\geq 0$. If $\widehat{Q}^p_h(s_h,a_h)\geq 0$, then by definition $\overline{Q}_h(s_h,a_h)=\min\{\widehat{Q}^p_h(s_h,a_h),H-h+1\}^+\leq \widehat{Q}^p_h(s_h,a_h)$ and this implies
	\begin{align*}
	\xi_h(s_h,a_h)\geq& (\mathcal{T}_h\widehat{V}_{h+1})(s_h,a_h)-\widehat{Q}^p_h(s_h,a_h)\\
	=&(r_h-\widehat{r}_h)(s_h,a_h)+(P_h-\widehat{P}_h)\widehat{V}_{h+1}(s_h,a_h)+\Gamma_h(s_h,a_h)\\
	\geq &-{2}C\sqrt{\frac{H^2\cdot\log(HSA/\delta)}{6n \cdot d^\mu_h{(s_h,a_h)}}}+\Gamma_h(s_h,a_h)\\
	\geq &-C\sqrt{\frac{2H^2\cdot\log(HSA/\delta)}{3n \cdot d^\mu_h{(s_h,a_h)}}}+C\sqrt{\frac{H^2\cdot\log(HSA/\delta)}{3/2\cdot n \cdot d^\mu_h{(s_h,a_h)}}}=0
	\end{align*}
	where the second inequality uses \eqref{eqn:r_v} and the third inequality uses $\mathcal{E}'$.
	
	\textbf{Step2:} we prove $\xi_h(s_h,a_h)\leq C'\cdot \sqrt{\frac{H^2\log(HSA/\delta)}{n\cdot d^\mu_h(s_h,a_h)}}$ for all $h\in[H],(s_h,a_h)\in\mathcal{C}_h$ with probability $1-\delta$.
	
	First, since the construction $\widehat{V}_h \leq H-h+1$ for all $h\in[H]$, this implies
	\[
	\widehat{Q}^p_h=\widehat{Q}_h-\Gamma_h\leq \widehat{Q}_h= \widehat{r}_h+(\widehat{P}_{h}\widehat{V}_{h+1})\leq 1+(H-h)=H-h+1
	\]
	which uses $ \widehat{r}_h\leq 1$ almost surely and $\widehat{P}_{h}$ is row-stochastic. Due to this, we have the equivalent definition 
	\[
	\overline{Q}_h:=\min\{\widehat{Q}^p_h,H-h+1\}^+=\max\{\widehat{Q}^p_h,0\}\geq \widehat{Q}^p_h.
	\]
	Therefore
	
	{\small
	\begin{align*}
	\xi_h(s_h,a_h)=&(\mathcal{T}_h\widehat{V}_{h+1})(s_h,a_h)-\overline{Q}_h(s_h,a_h)\leq (\mathcal{T}_h\widehat{V}_{h+1})(s_h,a_h)-\widehat{Q}^p_h(s_h,a_h)\\
	=&(\mathcal{T}_h\widehat{V}_{h+1})(s_h,a_h)-\widehat{Q}_h(s_h,a_h)+\Gamma_h(s_h,a_h)\\
	=&(r_h-\widehat{r}_h)(s_h,a_h)+(P_h-\widehat{P}_h)\widehat{V}_{h+1}(s_h,a_h)+\Gamma_h(s_h,a_h)\\
	\leq &{2}C\sqrt{\frac{H^2\cdot\log(HSA/\delta)}{6n \cdot d^\mu_h{(s_h,a_h)}}}+\Gamma_h(s_h,a_h)\\
	\leq &C\sqrt{\frac{2H^2\cdot\log(HSA/\delta)}{3n \cdot d^\mu_h{(s_h,a_h)}}}+C\sqrt{\frac{2H^2\cdot\log(HSA/\delta)}{ n \cdot d^\mu_h{(s_h,a_h)}}}\\
	=&(\sqrt{\frac{2}{3}}+\sqrt{2})C\sqrt{\frac{H^2\cdot\log(HSA/\delta)}{ n \cdot d^\mu_h{(s_h,a_h)}}}:=C'\sqrt{\frac{H^2\cdot\log(HSA/\delta)}{ n \cdot d^\mu_h{(s_h,a_h)}}}
	\end{align*}
}where the first inequality uses \eqref{eqn:r_v} and the second one uses $P(\mathcal{E})\geq 1-\delta$ \eqref{eqn:good_event}.
	
	Combining Step 1 and Step 2 we finish the proof.
\end{proof}

Now we can finish proving the Theorem~\ref{thm:VPVI}. 

\begin{proof}[Proof of Theorem~\ref{thm:VPVI}]
	
	Indeed, applying Lemma~\ref{lem:bellman_diff} to \eqref{eqn:weak_sub_decomp} and average over initial distribution $s_1$, we obtain with probability $1-\delta$
	{\small
	\begin{align*}
	v^{\pi^\star}-v^{\widehat{\pi}}\leq& \sum_{h=1}^H\E_{\pi^\star}\left[\xi_h(s_h,a_h)\right]-\sum_{h=1}^H\E_{\widehat{\pi}}\left[\xi_h(s_h,a_h)\right]\\
	\leq & \sum_{h=1}^H\E_{\pi^\star}\left[\xi_h(s_h,a_h)\right]-\sum_{h=1}^H\E_{\widehat{\pi}}\left[0\right]\\
	\leq & C'H\sum_{h=1}^H\E_{\pi^\star}\left[\sqrt{\frac{\log(HSA/\delta)}{ n \cdot d^\mu_h{(s_h,a_h)}}}\right]-0\\
	=&C'H\sum_{h=1}^H\sum_{(s_h,a_h)\in\mathcal{C}_h}d^{\pi^\star}_h(s_h,a_h)\cdot\sqrt{\frac{\log(HSA/\delta)}{  d^\mu_h{(s_h,a_h)}}}\cdot \sqrt{\frac{1}{n}}
	\end{align*}}
	Note the second inequality is valid since by Line~5 of Algorithm~\ref{alg:VPVI} the Q-value at locations with $n_{s_h,a_h}=0$ are heavily penalized with $O(H)$, hence the greedy $\widehat{\pi}$ will search at locations where $n_{s_h,a_h}>0$ (which implies $d^\mu_h(s_h,a_h)>0$).
	The third inequality is valid since $d^{\pi^\star}_h(s_h,a_h)>0$ only if $d^\mu_h(s_h,a_h)>0$. Therefore the expectation over $\pi^\star$, instead of summing over all $(s_h,a_h)\in\mathcal{S}\times\mathcal{A}$, is a sum over $(s_h,a_h)$ s.t. $d^\mu_h(s_h,a_h)>0$. This completes the proof.
	
\end{proof}

\section{Discussion: the lower bound for single policy concentrability}\label{sec:dis_VPVI}

To be rigorous, here we provide some detailed explanations of \cite{rashidinejad2021bridging}. In particular, we can mirror their construction to obtain the $\Omega(\sqrt{\frac{H^3SC^\star}{n}})$ lower bound in the non-stationary finite horizon episodic setting. Indeed, their construction relies on the family with MDPs consisting of $S/4$ replicas of sub-MDPs with states $s_{0},s_1, s_{\oplus}, s_{\ominus}$. There is an additional state $s_{-1}$ and in total there are $S+1$ states. Here $s_{0},s_{\oplus}, s_{\ominus}$ all have only $1$ action $a_1$ and $s_1$ has two actions $a_1,a_2$ with transition $\mathbb{P}\left(s_{\oplus}^{j} \mid s_{1}^{j}, a_{1}\right)=\mathbb{P}\left(s_{\ominus}^{j} \mid s_{1}^{j}, a_{1}\right)=1 / 2$, $\mathbb{P}\left(s_{\oplus}^{j} \mid s_{1}^{j}, a_{2}\right)=1 / 2+v_{j} \delta$ and $\mathbb{P}\left(s_{\ominus}^{j} \mid s_{1}^{j}, a_{2}\right)=1 / 2-v_{j} \delta$. $v_{j} \in\{-1,+1\}$ is the design choice w.r.t $j$-th replica and $\delta \in[0,1 / 4]$. $s_{-1}$ transition to itself with probability $1$. The rewards for all of the states are $0$ except $s_{\oplus}^{j}$ has reward $1$ (See their Figure~5). In such a case, if $v_j = 1$, the optimal action at $s^j_1$ is $a_2$, otherwise, the optimal one is $a_1$. We can roughly create
\[
\begin{aligned}
d^{\star}\left(s_{0}^{j}\right) &=O(\frac{1}{S}), \quad d^{\star}\left(s_{1}^{j}\right)=O(\frac{1}{H S}) \\
d^{\star}\left(s_{\oplus}^{j}\right) &=\frac{\left(\frac{1}{2} \mathbf{1}\left\{v_{j}=-1\right\}+\left(\frac{1}{2}+\delta\right) \mathbf{1}\left\{v_{j}=1\right\}\right)\cdot H}{2} \cdot d^{\star}\left(s_{1}^{j}\right), \\
d^{\star}\left(s_{\ominus}^{j}\right) &=\frac{\left(\frac{1}{2} \mathbf{1}\left\{v_{j}=1\right\}+\left(\frac{1}{2}-\delta\right) \mathbf{1}\left\{v_{j}=-1\right\}\right)\cdot H}{2} \cdot d^{\star}\left(s_{1}^{j}\right), \quad d^{\star}\left(s_{-1}\right)=0
\end{aligned}
\]
and the behavior policy as 
\[
\begin{aligned}
\mu_{0}\left(s_{0}^{j}\right) &=\frac{d^{\star}\left(s_{0}^{j}\right)}{C^{\star}}, \quad \mu_{0}\left(s_{1}^{j}, a_{2}\right)=\frac{d^{\star}\left(s_{1}^{j}\right)}{C^{\star}}, \quad \mu_{0}\left(s_{1}^{j}, a_{1}\right)=d^{\star}\left(s_{1}^{j}\right) \cdot\left(1-\frac{1}{C^{\star}}\right) \\
\mu_{0}\left(s_{\oplus}^{j}\right) &= O(\frac{H}{C^{\star}} \cdot d^{\star}\left(s_{1}^{j}\right)), \quad \mu_{0}\left(s_{\ominus}^{j}\right)=O(\frac{H}{C^{\star}}) \cdot d^{\star}\left(s_{1}^{j}\right) \\
\mu_{0}\left(s_{-1}\right) &=1-\sum_{j}\left(\mu_{0}\left(s_{0}^{j}\right)+\mu_{0}\left(s_{1}^{j}\right)+\mu_{0}\left(s_{\oplus}^{j}\right)+\mu_{0}\left(s_{\ominus}^{j}\right)\right)
\end{aligned}
\]
By Fano's inequality, we can obtain: as long as 
$
O(\frac{n \delta^{2}}{HS C^{\star}}) \leq 1
$, then it holds $\inf _{\hat{\pi}} \sup _{P} \mathbb{E}\left[|v^\star-v^{\widehat{\pi}}|\right] \gtrsim H\delta$. One can set $\delta=O(\sqrt{\frac{HSC^\star}{n}})$ to obtain the result.

\section{Proof of Assumption-Free Offline Reinforcement Learning (Theorem~\ref{thm:AFRL})}\label{sec:proof_af}

Due to the assumption-free setting, the behavior policy $\mu$ is on longer guaranteed to trace any optimal policy $\pi^\star$. Therefore, in order to characterize the gap for the state-action agnostic space, we design the \emph{pessimistic augmented MDP} $M^\dagger$ to reformulate the system so that the stat-actions that are agnostic to the behavior policy are subsumed into new state $s^\dagger$. Indeed, it comes from its optimistic counterpart which has a long history (\emph{e.g.} RMAX exploration \cite{brafman2002r,jung2010gaussian}). Recently, \cite{liu2019off,kidambi2020morel,buckman2020importance} leverage this idea for continuous offline policy optimization, but their use either does not follow the assumption-free regime (see Assumption~1 of \cite{liu2019off}) or is more empirically orientated \citep{buckman2020importance,kidambi2020morel}. We find this helps to characterize the statistical gap when no assumption is made in offline RL, which provides a formal understanding of the hardness in distributional mismatches.

\subsection{Pessimistic augmented MDP} Let us define $M^\dagger$ use one extra state $s_h^\dagger$ for all $h\in\{2,\ldots,H\}$ with augmented state space $\mathcal{S}^\dagger=\mathcal{S}\cup\{s^\dagger_h\}$ and the transition and reward is defined as follows: (recall $\mathcal{C}_h:=\{(s_h,a_h):d^\mu_h(s_h,a_h)>0\}$)
{\small
\[
P^{\dagger}_h(\cdot \mid s_h, a_h)=\left\{\begin{array}{ll}
P_h(\cdot \mid s_h, a_h) & s_h, a_h \in \mathcal{C}_h, \\
\delta_{s^{\dagger}_{h+1}} & s_h=s_h^{\dagger} \text { or } s_h, a_h \notin \mathcal{C}_h,
\end{array} \;\; r^{\dagger}( s_h, a_h)=\left\{\begin{array}{ll}
r(s_h, a_h) & s_h, a_h \in \mathcal{C}_h\\
0 & s_h=s^{\dagger}_{h} \text { or } s_h, a_h \notin \mathcal{C}_h
\end{array}\right.\right.
\]
}and we further define for any $\pi$
\begin{equation}\label{eqn:value_pMDP}
V^{\dagger \pi}_h(s)=\E^\dagger_\pi\left[\sum_{t=h}^H r_t^\dagger\middle| s_h=s\right], v^{\dagger\pi}=\E^\dagger_\pi\left[\sum_{t=1}^H r_t^\dagger\right]\;\forall h\in[H].
\end{equation}
Furthermore, denote $\mathcal{K}_h:=\{(s_h,a_h):n_{s_h,a_h}>0\}$, we also create a fictitious version $\widetilde{M}^\dagger$ with:
{\small
\begin{equation}\label{eqn:wt_M}
\widetilde{P}^{\dagger}_h(\cdot \mid s_h, a_h)=\left\{\begin{array}{ll}
P_h(\cdot \mid s_h, a_h) & s_h, a_h \in \mathcal{K}_h, \\
\delta_{s^{\dagger}_{h+1}} & s_h=s_h^{\dagger} \text { or } s_h, a_h \notin \mathcal{K}_h,
\end{array} \;\; \widetilde{r}^{\dagger}( s_h, a_h)=\left\{\begin{array}{ll}
{r}(s_h, a_h) & s_h, a_h \in \mathcal{K}_h\\
0 & s_h=s^{\dagger}_{h} \text { or } s_h, a_h \notin \mathcal{C}_h
\end{array}\right.\right.
\end{equation}
}and the value functions under $\widetilde{M}^\dagger$ is similarly defined. Note in Section~\ref{sec:assumption_free}, we call \eqref{eqn:wt_M} $M^\dagger$. However, it does not really matter since $\widetilde{M}^\dagger=M^\dagger$ with high probability, as stated in the following.

\begin{lemma}\label{lem:tilde_equal_non}
	 For any $0<\delta<1$, there exists absolute constant $c$ s.t. when $n\geq c\cdot 1/\bar{d}_m \cdot\log(HSA/\delta)$,
	$$\P(\widetilde{M}^\dagger=M^\dagger)\geq 1-\delta.$$
\end{lemma}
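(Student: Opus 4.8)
The plan is to reduce the claimed MDP-equality to a set-equality between $\mathcal{C}_h$ and $\mathcal{K}_h$ and then read it off from Lemma~\ref{lem:sufficient_sample}. First I would note that $\widetilde{M}^\dagger$ and $M^\dagger$ live on the same augmented state space $\mathcal{S}^\dagger=\mathcal{S}\cup\{s^\dagger_h\}$, and that their transition kernels and reward functions are given by literally the same formula, the only difference being that $\widetilde{M}^\dagger$ branches on whether $(s_h,a_h)\in\mathcal{K}_h=\{(s_h,a_h):n_{s_h,a_h}>0\}$ while $M^\dagger$ branches on whether $(s_h,a_h)\in\mathcal{C}_h=\{(s_h,a_h):d^\mu_h(s_h,a_h)>0\}$. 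On the absorbing part (the state $s^\dagger_h$) both are defined identically ($\delta_{s^\dagger_{h+1}}$ with reward $0$), so it suffices to prove $\mathcal{K}_h=\mathcal{C}_h$ for every $h$: whenever the two index sets coincide, the ``in-support'' branch $(P_h,r_h)$ and the ``out-of-support'' branch $(\delta_{s^\dagger_{h+1}},0)$ are assigned to the same pairs, hence $\widetilde{P}^\dagger_h=P^\dagger_h$ and $\widetilde{r}^\dagger_h=r^\dagger_h$ pointwise, i.e. $\widetilde{M}^\dagger=M^\dagger$ as tuples.

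Next I would dispatch the inclusion $\mathcal{K}_h\subseteq\mathcal{C}_h$, which holds deterministically for data generated by $\mu$: if $(s_h,a_h)$ appears in $\mathcal{D}$ then $n_{s_h,a_h}>0$, so $(s_h,a_h)$ is reachable at step $h$ under $\mu$, whence $d^\mu_h(s_h,a_h)>0$. For the reverse inclusion $\mathcal{C}_h\subseteq\mathcal{K}_h$ I would invoke Lemma~\ref{lem:sufficient_sample}: there is an absolute constant $c_1$ such that if $n>c_1\cdot\frac{1}{\bar{d}_m}\cdot\log(HSA/\delta)$, then with probability at least $1-\delta$ the event $\mathcal{E}$ of \eqref{eqn:good_event} holds, i.e. $n_{s_h,a_h}\ge n\cdot d^\mu_h(s_h,a_h)/2$ for all $(s_h,a_h)\in\mathcal{C}_h$ and all $h\in[H]$. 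Since $d^\mu_h(s_h,a_h)\ge\bar{d}_m>0$ on $\mathcal{C}_h$, this gives $n_{s_h,a_h}\ge n\bar{d}_m/2>0$, so $(s_h,a_h)\in\mathcal{K}_h$. Combining the two inclusions yields $\mathcal{K}_h=\mathcal{C}_h$ for every $h$ on the event $\mathcal{E}$, and therefore $\widetilde{M}^\dagger=M^\dagger$ on $\mathcal{E}$; taking $c=c_1$ (or $c=c_1+1$ to absorb the strict-vs-nonstrict inequality) finishes the argument.

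I do not expect a genuine obstacle here: the lemma is essentially a one-line corollary of Lemma~\ref{lem:sufficient_sample}. The only points that need a little care are the bookkeeping that equality of the two support-index sets really does imply equality of the two augmented MDPs as full tuples (including matching behavior on $s^\dagger_h$, which is immediate), and reusing the same failure probability $\delta$ so that a single application of Lemma~\ref{lem:sufficient_sample} suffices without an extra union bound.
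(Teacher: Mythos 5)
Your proposal is correct and follows essentially the same route as the paper: the paper's one-line proof observes that $\{\widetilde{M}^\dagger\neq M^\dagger\}\subseteq\{\exists\, h,(s_h,a_h)\text{ with }d^\mu_h(s_h,a_h)>0\text{ and }n_{s_h,a_h}=0\}$ (i.e., the only possible discrepancy is $\mathcal{C}_h\not\subseteq\mathcal{K}_h$, since $\mathcal{K}_h\subseteq\mathcal{C}_h$ holds almost surely) and bounds that event's probability by the Chernoff argument of Lemma~\ref{lem:sufficient_sample}. Your version merely spells out the reduction to the set equality $\mathcal{K}_h=\mathcal{C}_h$ and invokes the event $\mathcal{E}$ directly, which is the same argument in slightly more detail.
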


\begin{proof}
	Note $\{\widetilde{M}^\dagger\neq M^\dagger\}\subset \{\exists\; d^\mu_h(s_h,a_h)>0\;and\;n_{s_h,a_h}=0 \}$. Similar to Lemma~\ref{lem:sufficient_sample}, this happens with probability less than $\delta$ under the condition of $n$.
\end{proof}

We have the following theorem to characterize the difference between the augmented MDP $M^\dagger$ and the original MDP $M$.
\begin{theorem}\label{thm:pess_discrepancy}
	Denote $M^\dagger=\{\mathcal{S},\mathcal{A}, H,r^\dagger,P^\dagger,d_1\}$ and for any $\pi$ denote $V^{\dagger\pi}_h$ be the value under $M^\dagger$. Then
	\begin{equation}
	v^\pi-\sum_{h=2}^{H+1}\sum_{t=1}^{h-1}\sum_{(s_t,a_t)\in\mathcal{S}\times\mathcal{A}\backslash \mathcal{C}_h}d^\pi_t(s_t,a_t)\leq v^\pi-\sum_{h=2}^{H+1}d^{\dagger\pi}_h(s^\dagger_h)\leq v^{\dagger\pi}\leq v^\pi
	\end{equation}
\end{theorem}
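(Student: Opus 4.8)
\textbf{Proof plan for Theorem~\ref{thm:pess_discrepancy}.} The plan is to establish the three inequalities from right to left, using a coupling between trajectories generated under $M$ and under $M^\dagger$. The key observation is that a trajectory under $M^\dagger$ (with policy $\pi$) agrees with a trajectory under $M$ until the first time $t$ at which the pair $(s_t,a_t)$ falls outside $\mathcal{C}_t$; from that point on the $M^\dagger$-trajectory is absorbed into $s^\dagger$ and collects zero reward for the remaining steps, while the corresponding $M$-trajectory continues to collect (nonnegative, at most $1$ per step) rewards. Since $r^\dagger = r$ on $\bigcup_h \mathcal{C}_h$ and $r^\dagger \ge 0$ everywhere, this coupling immediately gives $v^{\dagger\pi} \le v^\pi$: under the coupling every reward collected in $M^\dagger$ is matched by an identical reward in $M$ up to the exit time, and $M$ only collects additional nonnegative rewards afterward.

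\textbf{Middle inequality.} For $v^\pi - \sum_{h=2}^{H+1} d^{\dagger\pi}_h(s^\dagger_h) \le v^{\dagger\pi}$, I would quantify the gap $v^\pi - v^{\dagger\pi}$ exactly. Under the coupling, the reward discrepancy at step $t$ occurs only on trajectories that have already exited by time $t$, and on such trajectories $M^\dagger$ contributes $0$ while $M$ contributes $r_t \in [0,1]$. Hence
\begin{equation}
v^\pi - v^{\dagger\pi} \;=\; \sum_{t=1}^{H} \E^\dagger_\pi\!\big[ \mathbf{1}[\text{exited by time } t]\cdot r_t^{(M)}\big] \;\le\; \sum_{t=1}^{H} \P^\dagger_\pi[\text{exited by time } t].
\end{equation}
The event ``exited by time $t$'' in $M^\dagger$ is exactly the event of being in the absorbing state at step $t+1$, whose probability is $d^{\dagger\pi}_{t+1}(s^\dagger_{t+1})$; reindexing the sum over $h = t+1$ running from $2$ to $H+1$ gives $v^\pi - v^{\dagger\pi} \le \sum_{h=2}^{H+1} d^{\dagger\pi}_h(s^\dagger_h)$, which is the desired inequality.

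\textbf{Leftmost inequality.} For $v^\pi - \sum_{h=2}^{H+1}\sum_{t=1}^{h-1}\sum_{(s_t,a_t)\notin\mathcal{C}_t} d^\pi_t(s_t,a_t) \le v^\pi - \sum_{h=2}^{H+1} d^{\dagger\pi}_h(s^\dagger_h)$, it suffices to show $d^{\dagger\pi}_h(s^\dagger_h) \le \sum_{t=1}^{h-1}\sum_{(s_t,a_t)\notin\mathcal{C}_t} d^\pi_t(s_t,a_t)$ for each $h$. Here I would unfold the recursion for the absorbing-state mass. Since $s^\dagger$ is absorbing, $d^{\dagger\pi}_h(s^\dagger_h) = d^{\dagger\pi}_{h-1}(s^\dagger_{h-1}) + (\text{first-exit mass at step } h-1)$, and the first-exit mass at step $t$ equals $\sum_{(s_t,a_t)\notin\mathcal{C}_t} d^{\dagger\pi}_t(s_t,a_t)$. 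Telescoping yields $d^{\dagger\pi}_h(s^\dagger_h) = \sum_{t=1}^{h-1}\sum_{(s_t,a_t)\notin\mathcal{C}_t} d^{\dagger\pi}_t(s_t,a_t)$ (this is in fact the displayed identity for $d^{\dagger\pi}_h(s^\dagger_h)$ in the statement). Finally, for $(s_t,a_t) \in \mathcal{C}_t$ the transitions of $M^\dagger$ and $M$ agree, so by induction on $t$ the sub-probability of reaching any non-exited state is the same under both, giving $d^{\dagger\pi}_t(s_t,a_t) \le d^\pi_t(s_t,a_t)$ for all $(s_t,a_t)$; summing over the non-$\mathcal{C}_t$ pairs closes the chain.

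\textbf{Main obstacle.} The routine parts are the two monotonicity facts ($d^{\dagger\pi}_t \le d^\pi_t$ and $V^{\dagger\pi}_h \le V^\star_h$-type bounds), which follow by a straightforward backward/forward induction. The step requiring the most care is making the coupling argument rigorous — precisely defining the joint distribution over trajectory pairs and verifying that the ``exit time'' is a well-defined stopping time under which the two processes are measurably identical before exit — and then correctly bookkeeping the reindexing $h = t+1$ so that the range $h \in \{2,\dots,H+1\}$ matches the step indices $t \in \{1,\dots,H\}$. I expect the cleanest write-up avoids explicit coupling and instead works directly with a backward induction on $h$ comparing $V^{\dagger\pi}_h$ and $V^\pi_h$, carrying the inductive hypothesis $V^\pi_h(s) - \sum_{t=h}^{H} (\text{future exit prob from } s) \le V^{\dagger\pi}_h(s) \le V^\pi_h(s)$ for $s \in \mathcal{S}$ and handling $s = s^\dagger_h$ (where $V^{\dagger\pi}_h = 0$) as a base case.
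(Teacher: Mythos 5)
Your proposal is correct and follows essentially the same route as the paper: the rightmost inequality via the first-exit stopping time (under which the $M$- and $M^\dagger$-trajectories agree until exit), the leftmost via the two facts $d^{\dagger\pi}_t(s_t,a_t)\le d^\pi_t(s_t,a_t)$ and the telescoping identity $d^{\dagger\pi}_h(s^\dagger_h)=\sum_{t=1}^{h-1}\sum_{(s_t,a_t)\notin\mathcal{C}_t}d^{\dagger\pi}_t(s_t,a_t)$, both proved exactly as you sketch. Your middle inequality accounts for the reward discrepancy probabilistically ($\sum_t\P[\text{exited by }t]$) where the paper does the equivalent occupancy-measure algebra using mass conservation and $r\le 1$; the two are the same bound.
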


Before proving Theorem~\ref{thm:pess_discrepancy}, we first prove the following helper Lemmas~\ref{lem:geq_dagger}, \ref{lem:d_s_dagger}.
\begin{lemma}\label{lem:geq_dagger}
	$\forall h\in[H], (s_h,a_h)\in\mathcal{S}\times\mathcal{A}$, $d^\pi_h(s_h,a_h)\geq d^{\dagger\pi}_h(s_h,a_h)$.
\end{lemma}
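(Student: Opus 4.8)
The statement is that the marginal state-action occupancies under the augmented MDP $M^\dagger$ are pointwise dominated by those under the original MDP $M$, for any fixed policy $\pi$. The natural approach is induction on the time index $h$, using the fact that $M^\dagger$ agrees with $M$ on the "trackable" set $\mathcal{C}_h$ and only \emph{diverts} probability mass into the absorbing chain of states $s^\dagger_h$ on the complement $\mathcal{S}\times\mathcal{A}\setminus\mathcal{C}_h$; no mass is ever \emph{created} on a genuine state.

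\emph{Base case.} At $h=1$ there is no augmented state, and both $M$ and $M^\dagger$ start from the same initial distribution $d_1$, with the same $\pi_1$, so $d^\pi_1(s_1,a_1)=d^{\dagger\pi}_1(s_1,a_1)$ for every $(s_1,a_1)\in\mathcal{S}\times\mathcal{A}$, which in particular gives the desired inequality.

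\emph{Inductive step.} Suppose $d^\pi_h(s_h,a_h)\geq d^{\dagger\pi}_h(s_h,a_h)$ for all $(s_h,a_h)\in\mathcal{S}\times\mathcal{A}$. First I would pass from state-action occupancies to state occupancies: $d^{\dagger\pi}_h(s_h)=\sum_{a_h}d^{\dagger\pi}_h(s_h,a_h)$, so the inductive hypothesis gives $d^\pi_h(s_h)\geq d^{\dagger\pi}_h(s_h)$ for every $s_h\in\mathcal{S}$ (the mass that "should" have gone to $s_h$ has only been reduced). Then for any $(s_{h+1},a_{h+1})$ with $s_{h+1}\in\mathcal{S}$, using the Markov recursion and the key point that $P^\dagger_h(s_{h+1}\mid s_h,a_h)=P_h(s_{h+1}\mid s_h,a_h)$ when $(s_h,a_h)\in\mathcal{C}_h$ and $P^\dagger_h(s_{h+1}\mid s_h,a_h)=0$ (since $s_{h+1}\neq s^\dagger_{h+1}$) when $(s_h,a_h)\notin\mathcal{C}_h$, we get
\[
d^{\dagger\pi}_{h+1}(s_{h+1},a_{h+1}) = \pi_{h+1}(a_{h+1}\mid s_{h+1})\sum_{(s_h,a_h)\in\mathcal{C}_h} d^{\dagger\pi}_h(s_h,a_h)\, P_h(s_{h+1}\mid s_h,a_h),
\]
whereas $d^\pi_{h+1}(s_{h+1},a_{h+1})=\pi_{h+1}(a_{h+1}\mid s_{h+1})\sum_{(s_h,a_h)\in\mathcal{S}\times\mathcal{A}} d^{\pi}_h(s_h,a_h)\, P_h(s_{h+1}\mid s_h,a_h)$. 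Restricting the latter sum to $(s_h,a_h)\in\mathcal{C}_h$ only decreases it (all summands are nonnegative), and on $\mathcal{C}_h$ the inductive hypothesis gives $d^\pi_h(s_h,a_h)\geq d^{\dagger\pi}_h(s_h,a_h)$ termwise; multiplying by the common nonnegative factors $\pi_{h+1}(a_{h+1}\mid s_{h+1})P_h(s_{h+1}\mid s_h,a_h)$ and summing preserves the inequality. Hence $d^\pi_{h+1}(s_{h+1},a_{h+1})\geq d^{\dagger\pi}_{h+1}(s_{h+1},a_{h+1})$, closing the induction. (For the augmented coordinate $s_h=s^\dagger_h$ there is nothing to prove, since $d^\pi_h$ is not even defined there, or can be taken to be $+\infty$/irrelevant; the lemma only asserts the inequality on $\mathcal{S}\times\mathcal{A}$.)

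\emph{Main obstacle.} There is no deep obstacle — the only thing to be careful about is bookkeeping: making sure the augmented transition kernel is being read correctly (mass leaving $\mathcal{C}_h^c$ goes entirely to $s^\dagger_{h+1}$, never back to a real state), and that the reduction from state-action to state occupancy is done at the right place so that the termwise comparison on $\mathcal{C}_h$ is legitimate. Since all quantities are nonnegative, every step of the induction is monotone, so the argument is routine once the recursion is written out.
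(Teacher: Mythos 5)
Your proposal is correct and follows essentially the same route as the paper: induction on $h$, with the base case $d^\pi_1=d^{\dagger\pi}_1$ and the inductive step driven by the observation that $P^\dagger_h(s_{h+1}\mid s_h,a_h)\leq P_h(s_{h+1}\mid s_h,a_h)$ for every real successor state (equality on $\mathcal{C}_h$, zero off it), so that the termwise comparison propagates through the nonnegative Markov recursion. The intermediate reduction to state occupancies is unnecessary but harmless, and your handling of the absorbing state (its outgoing mass never returns to $\mathcal{S}$) matches the paper's.
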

\begin{proof}[Proof of Lemma~\ref{lem:geq_dagger}]
	There are two cases for $(s_h,a_h)\in\mathcal{S}\times\mathcal{A}$: either $(s_h,a_h)\in\mathcal{C}_h$ or $(s_h,a_h)\notin\mathcal{C}_h$.
	
	\textbf{Step1:} by the definition of $P^\dagger_h$, it directly holds: for all $s_{h+1}\in\mathcal{S}$ and $(s_h,a_h)\in\mathcal{S}\times\mathcal{A}$, $P^\dagger_h(s_{h+1}|s_h,a_h)\leq P_h(s_{h+1}|s_h,a_h)$.
	
	\textbf{Step2:} we prove the argument by induction. It is clear when $h=1$ $d^\pi_1(s_1,a_1)= d^{\dagger\pi}_1(s_1,a_1)$ (since there is no $s^\dagger_1$). Then for any $(s_h,a_h)\in\mathcal{S}\times\mathcal{A}$,
	\begin{align*}
	&d^\pi_{h+1}(s_{h+1},a_{h+1})=\sum_{s_h,a_h\in\mathcal{S}\times\mathcal{A}}P^\pi(s_{h+1},a_{h+1}|s_{h},a_{h})d^\pi_{h}(s_{h},a_{h})\\
	=&\sum_{s_h,a_h\in\mathcal{S}\times\mathcal{A}}\pi(a_{h+1}|s_{h+1})P^\pi_h(s_{h+1}|s_{h},a_{h})d^\pi_{h}(s_{h},a_{h})\\
	\geq &\sum_{s_h,a_h\in\mathcal{S}\times\mathcal{A}}\pi(a_{h+1}|s_{h+1})P^{\dagger\pi}_h(s_{h+1}|s_{h},a_{h})d^\pi_{h}(s_{h},a_{h})\\
	\geq &\sum_{s_h,a_h\in\mathcal{S}\times\mathcal{A}}\pi(a_{h+1}|s_{h+1})P^{\dagger\pi}_h(s_{h+1}|s_{h},a_{h})d^{\dagger\pi}_{h}(s_{h},a_{h})\\
	= &\sum_{s_h,a_h\in\mathcal{S}\times\mathcal{A},s_{h}=s_h^\dagger}\pi(a_{h+1}|s_{h+1})P^{\dagger\pi}_h(s_{h+1}|s_{h},a_{h})d^{\dagger\pi}_{h}(s_{h},a_{h})=d^{\dagger\pi}_{h+1}(s_{h+1},a_{h+1}).\\
	\end{align*}
	where the first inequality uses Step1, the second inequality uses induction assumption and the second to last equal sign uses $P^{\dagger\pi}_h(s_{h+1}|s_{h}^\dagger,a_{h})=0$ for $s_{h+1}\in\mathcal{S}$. By induction we conclude the proof for this lemma. 
	
\end{proof}
Next we prove the second lemma that measures $d^{\dagger\pi}_h(s^\dagger_h)$. 
\begin{lemma}\label{lem:d_s_dagger}
	For all $h\in[2,H+1]$, $d^{\dagger\pi}_h(s^\dagger_h)=\sum_{t=1}^{h-1}\sum_{(s_t,a_t)\in\mathcal{S}\times\mathcal{A}\backslash\mathcal{C}_t}d^{\dagger\pi}_t(s_t,a_t)$.
\end{lemma}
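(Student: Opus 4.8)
The plan is to prove the identity by a short induction on $h$, exploiting the two structural features of $M^\dagger$: (i) $s^\dagger$ is absorbing, i.e. $P^\dagger_h(s^\dagger_{h+1}\mid s^\dagger_h,a_h)=1$ for every $a_h$; and (ii) the original kernel $P_h$ places no mass on the fictitious state, so the \emph{only} ways to occupy $s^\dagger_{h+1}$ are to already sit in $s^\dagger_h$, or to have just played a pair $(s_h,a_h)\in\mathcal{S}\times\mathcal{A}\backslash\mathcal{C}_h$, for which $P^\dagger_h(\cdot\mid s_h,a_h)=\delta_{s^\dagger_{h+1}}$. Intuitively this makes $d^{\dagger\pi}_h(s^\dagger_h)$ the accumulated ``first-time exit'' mass of $\pi$ out of the covered region up to step $h-1$, which is exactly the right-hand side.

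For the base case $h=2$, I would expand the one-step propagation $d^{\dagger\pi}_2(s^\dagger_2)=\sum_{(s_1,a_1)}P^\dagger_1(s^\dagger_2\mid s_1,a_1)\,d^{\dagger\pi}_1(s_1,a_1)$; since there is no $s^\dagger_1$ the sum ranges over $\mathcal{S}\times\mathcal{A}$, and $P^\dagger_1(s^\dagger_2\mid s_1,a_1)$ equals $1$ on $(s_1,a_1)\notin\mathcal{C}_1$ and $0$ otherwise, giving $d^{\dagger\pi}_2(s^\dagger_2)=\sum_{(s_1,a_1)\in\mathcal{S}\times\mathcal{A}\backslash\mathcal{C}_1}d^{\dagger\pi}_1(s_1,a_1)$, which is the claim at $h=2$.

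For the inductive step, assuming the identity at level $h$, I would propagate once more,
\[
d^{\dagger\pi}_{h+1}(s^\dagger_{h+1})=\sum_{(s_h,a_h)}P^\dagger_h(s^\dagger_{h+1}\mid s_h,a_h)\,d^{\dagger\pi}_h(s_h,a_h),
\]
and split the sum into three disjoint groups: $s_h=s^\dagger_h$; $s_h\in\mathcal{S}$ with $(s_h,a_h)\notin\mathcal{C}_h$; and $s_h\in\mathcal{S}$ with $(s_h,a_h)\in\mathcal{C}_h$. By (i) the first group contributes $d^{\dagger\pi}_h(s^\dagger_h)$ (transition probability $1$, and summing over actions recovers the state marginal); the second contributes $\sum_{(s_h,a_h)\in\mathcal{S}\times\mathcal{A}\backslash\mathcal{C}_h}d^{\dagger\pi}_h(s_h,a_h)$; and by (ii) the third contributes $0$, since on $\mathcal{C}_h$ the kernel is $P_h(\cdot\mid s_h,a_h)$, which is supported on $\mathcal{S}$ and hence gives zero mass to $s^\dagger_{h+1}$. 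Substituting the induction hypothesis into the first term yields $\sum_{t=1}^{h}\sum_{(s_t,a_t)\in\mathcal{S}\times\mathcal{A}\backslash\mathcal{C}_t}d^{\dagger\pi}_t(s_t,a_t)$, which closes the induction.

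There is no genuine obstacle here beyond bookkeeping; the only points that need care are the conventions that $d^{\dagger\pi}_h(s^\dagger_h)$ denotes the \emph{state} marginal of the absorbing state (equivalently $\sum_{a_h}d^{\dagger\pi}_h(s^\dagger_h,a_h)$, which is unambiguous since all actions at $s^\dagger_h$ behave identically), and that in the right-hand sums $s_t$ ranges over the original space $\mathcal{S}$ rather than $\mathcal{S}^\dagger$. The identical argument applies to $\widetilde{M}^\dagger$ with $\mathcal{K}_h$ in place of $\mathcal{C}_h$, which — combined with Lemma~\ref{lem:tilde_equal_non} — is the form actually used in the proof of Theorem~\ref{thm:AFRL}.
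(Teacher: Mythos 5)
Your proposal is correct and follows essentially the same route as the paper: the paper likewise derives the one-step recursion $d^{\dagger\pi}_{h+1}(s^\dagger_{h+1})=\sum_{(s_h,a_h)\notin\mathcal{C}_h}d^{\dagger\pi}_h(s_h,a_h)+d^{\dagger\pi}_h(s^\dagger_h)$ by splitting the backward propagation over the absorbing state, the uncovered pairs, and the covered pairs (which contribute zero), and then unrolls it recursively. Your version merely makes the base case and the induction explicit, which is a presentational difference only.
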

\begin{proof}[Proof of Lemma~\ref{lem:d_s_dagger}]
	Indeed,
	\begin{align*}
	&d^{\dagger\pi}_{h+1}(s^\dagger_{h+1})=\sum_{a_{h+1}}d^{\dagger\pi}_{h+1}(s^\dagger_{h+1},a_{h+1})\\
	=&\sum_{a_{h+1}}\sum_{(s_h,a_h)\notin\mathcal{C}_h,s_h=s^\dagger_h}P^{\dagger}(s^\dagger_{h+1},a_{h+1}\mid s_h,a_h)d^{\dagger\pi}_h(s_h,a_h)\\
	=&\sum_{a_{h+1}}\left(\sum_{(s_h,a_h)\notin\mathcal{C}_h}P^{\dagger}(s^\dagger_{h+1},a_{h+1}\mid s_h,a_h)d^{\dagger\pi}_h(s_h,a_h)+\sum_{a_h}P^{\dagger}(s^\dagger_{h+1},a_{h+1}\mid s_h^\dagger,a_h)d^{\dagger\pi}_h(s_h^\dagger,a_h)\right)\\
	=&\sum_{a_{h+1}}\left(\sum_{(s_h,a_h)\notin\mathcal{C}_h}P^{\dagger}(s^\dagger_{h+1},a_{h+1}\mid s_h,a_h)d^{\dagger\pi}_h(s_h,a_h)+\sum_{a_h}\pi(a_{h+1}\mid s_{h+1}^\dagger)d^{\dagger\pi}_h(s_h^\dagger,a_h)\right)\\
	=&\sum_{a_{h+1}}\left(\sum_{(s_h,a_h)\notin\mathcal{C}_h}P^{\dagger}(s^\dagger_{h+1},a_{h+1}\mid s_h,a_h)d^{\dagger\pi}_h(s_h,a_h)\right)+d^{\dagger\pi}_h(s_h^\dagger)\\
	=&\sum_{a_{h+1}}\left(\sum_{(s_h,a_h)\notin\mathcal{C}_h}\pi(a_{h+1}\mid s^\dagger_{h+1})d^{\dagger\pi}_h(s_h,a_h)\right)+d^{\dagger\pi}_h(s_h^\dagger)=\sum_{(s_h,a_h)\notin\mathcal{C}_h}d^{\dagger\pi}_h(s_h,a_h)+d^{\dagger\pi}_h(s_h^\dagger).\\
	\end{align*}
	Apply the above recursively we obtain the result. 
\end{proof}

Now we are ready to prove Theorem~\ref{thm:pess_discrepancy}. 

\begin{proof}[Proof of Theorem~\ref{thm:pess_discrepancy}]
	\textbf{Step1:} we first show $v^{\dagger\pi}\leq v^\pi$.
	
	Consider the stopping time $T=\inf\{t:s.t.\;(s_t,a_t)\notin\mathcal{C}_h\}\land H$. Then $1\leq T\leq H$.
	{\small
	\begin{align*}
	v^\pi=&E_{\pi}\left[\sum_{h=1}^{H}  r\left(s_{h}, a_{h}\right)\right]=E_{\pi}\left[\sum_{h=1}^{T-1}  r\left(s_{h}, a_{h}\right)+\sum_{h=T}^{H} r\left(s_{h}, a_{h}\right)\right]\\
	=&E_{\pi}^\dagger\left[\sum_{h=1}^{T-1}  r\left(s_{h}, a_{h}\right)\right]+E_{\pi}\left[\sum_{h=T}^{H} r\left(s_{h}, a_{h}\right)\right]
	\geq E_{\pi}^\dagger\left[\sum_{h=1}^{T-1}  r\left(s_{h}, a_{h}\right)\right]+E_{\pi}\left[\sum_{h=T}^{H} 0\right]\\
	=& E_{\pi}^\dagger\left[\sum_{h=1}^{T-1}  r\left(s_{h}, a_{h}\right)\right]+E_{\pi}^\dagger\left[\sum_{h=T}^{H} 0\right]
	= E_{\pi}^\dagger\left[\sum_{h=1}^{T-1}  r\left(s_{h}, a_{h}\right)\right]+E_{\pi}^\dagger\left[\sum_{h=T}^{H} r(s_h,a_h)\right]=v^{\dagger\pi},\\
	\end{align*}}
	where the third and the fourth equal signs use the distribution of $T$ is identical under either $M$ or $M^\dagger$ by construction. The fifth equal sign uses the definition of pessimistic reward.
	
	\textbf{Step2:} Next we show 
	{\small
	\begin{equation}\label{eqn:dagger_bound}
	v^\pi\leq  v^{\dagger\pi}+\sum_{h=2}^{H+1}d^{\dagger\pi}_h(s^\dagger_h)\leq v^{\dagger\pi}+\sum_{h=2}^{H+1}\sum_{t=1}^{h-1}\sum_{(s_t,a_t)\in\mathcal{S}\times\mathcal{A}\backslash \mathcal{C}_t}d^\pi_t(s_t,a_t).
	\end{equation}
    }Indeed,
	{\small
	\begin{align*}
	&v^\pi=\sum_{h=1}^H\sum_{(s_h,a_h)\in\mathcal{S}\times\mathcal{A}}d^\pi_h(s_h,a_h)r(s_h,a_h)\\
	=&\sum_{h=1}^H\sum_{(s_h,a_h)\in\mathcal{S}\times\mathcal{A}}\left(d^\pi_h(s_h,a_h)-d^{\dagger\pi}_h(s_h,a_h)\right)r(s_h,a_h)+\sum_{h=1}^H\sum_{(s_h,a_h)\in\mathcal{S}\times\mathcal{A}}d^{\dagger\pi}_h(s_h,a_h)r(s_h,a_h)\\
	\leq&\sum_{h=1}^H\sum_{(s_h,a_h)\in\mathcal{S}\times\mathcal{A}}\left(d^\pi_h(s_h,a_h)-d^{\dagger\pi}_h(s_h,a_h)\right)\cdot 1+\sum_{h=1}^H\sum_{(s_h,a_h)\in\mathcal{S}\times\mathcal{A}}d^{\dagger\pi}_h(s_h,a_h)r(s_h,a_h)\\
	= &\sum_{h=1}^H\left(1-\sum_{(s_h,a_h)\in\mathcal{S}\times\mathcal{A}}d^{\dagger\pi}_h(s_h,a_h)\right)+\sum_{h=1}^H\sum_{(s_h,a_h)\in\mathcal{S}\times\mathcal{A}}d^{\dagger\pi}_h(s_h,a_h)r(s_h,a_h)\\
	=&\sum_{h=2}^H d^{\dagger\pi}_h(s_h^\dagger)+\sum_{h=1}^H\sum_{(s_h,a_h)\in\mathcal{S}\times\mathcal{A}}d^{\dagger\pi}_h(s_h,a_h)r(s_h,a_h)\\
	=&\sum_{h=2}^H d^{\dagger\pi}_h(s_h^\dagger)+\sum_{h=1}^H\sum_{(s_h,a_h)\in\mathcal{S}\times\mathcal{A}}d^{\dagger\pi}_h(s_h,a_h)
	\left(r(s_h,a_h)-r^\dagger(s_h,a_h)\right)
	+\sum_{h=1}^H\sum_{(s_h,a_h)\in\mathcal{S}\times\mathcal{A}}d^{\dagger\pi}_h(s_h,a_h)
	r^\dagger(s_h,a_h)\\
	=&\sum_{h=2}^H d^{\dagger\pi}_h(s_h^\dagger)+\sum_{h=1}^H\sum_{(s_h,a_h)\notin\mathcal{C}_h}d^{\dagger\pi}_h(s_h,a_h)
	\left(r(s_h,a_h)-r^\dagger(s_h,a_h)\right)
	+\sum_{h=1}^H\sum_{(s_h,a_h)\in\mathcal{S}\times\mathcal{A}}d^{\dagger\pi}_h(s_h,a_h)
	r^\dagger(s_h,a_h)\\
	\end{align*}
	\begin{align*}
	=&\sum_{h=2}^H d^{\dagger\pi}_h(s_h^\dagger)+\sum_{h=1}^H\sum_{(s_h,a_h)\notin\mathcal{C}_h}d^{\dagger\pi}_h(s_h,a_h)
	\left(r(s_h,a_h)-r^\dagger(s_h,a_h)\right)
	+v^{\dagger\pi}\\
	\leq &\sum_{h=2}^H d^{\dagger\pi}_h(s_h^\dagger)+\sum_{h=1}^H\sum_{(s_h,a_h)\notin\mathcal{C}_h}d^{\dagger\pi}_h(s_h,a_h)\cdot 1
	+v^{\dagger\pi}=\sum_{h=2}^{H+1} d^{\dagger\pi}_h(s_h^\dagger)+v^{\dagger\pi}\\
	\end{align*}
	}The first inequality is due to Lemma~\ref{lem:geq_dagger}. The fourth equal sign uses $d^{\dagger}_1(s^\dagger_1)=0$. The sixth equal sign is due to $r(s_h,a_h)=r^\dagger(s_h,a_h)$ when $(s_h,a_h)\in \mathcal{C}_h$. The seventh equal sign is due to $r^\dagger(s^\dagger_h,a_h)=0$. The last equal sign uses Lemma~\ref{lem:d_s_dagger}. The right inequality in \eqref{eqn:dagger_bound} uses Lemma~\ref{lem:geq_dagger}. Step 1 and Step 2 conclude the proof of Theorem~\ref{thm:pess_discrepancy}.
\end{proof}

\subsubsection{Strong adaptive assumption-free bound}\label{sec:af_bound}

Now we are ready to launch the \emph{assumption-free} AVPI (Algorithm~\ref{alg:APVI}) with the following model-based construction $\widehat{M}^\dagger$ (recall $\mathcal{K}_h:=\{(s_h,a_h):n_{s_h,a_h}>0\}$):
{\small
\[
\widehat{P}^{\dagger}_h(\cdot \mid s_h, a_h)=\left\{\begin{array}{ll}
\widehat{P}_h(\cdot \mid s_h, a_h) & s_h, a_h \in \mathcal{K}_h, \\
\delta_{s^{\dagger}_{h+1}} & s_h=s_h^{\dagger} \text { or } s_h, a_h \notin \mathcal{K}_h,
\end{array} \;\; \widehat{r}^{\dagger}( s_h, a_h)=\left\{\begin{array}{ll}
\widehat{r}(s_h, a_h) & s_h, a_h \in \mathcal{S}\times\mathcal{A} \\
0 & s_h=s^{\dagger}_{h} \text { or } s_h, a_h \notin \mathcal{C}_h
\end{array}\right.\right.
\]
}where $\widehat{P},\widehat{r}$ is defined as
\begin{equation}
\widehat{P}_h(s'|s_h,a_h)=\frac{\sum_{\tau=1}^n\mathbf{1}[(s^{\tau}_{h+1},a^{\tau}_h,s^{\tau}_h)=(s^\prime,s_h,a_h)]}{n_{s_h,a_h}},\; \widehat{r}_h(s_h,a_h)=\frac{\sum_{\tau=1}^n\mathbf{1}[(a^{\tau}_h,s^{\tau}_h)=(s_h,a_h)]\cdot r_h^\tau}{n_{s_h,a_h}},
\end{equation}

The benefit of using $\widetilde{M}^\dagger$ \eqref{eqn:wt_M} is that in $\widetilde{M}^\dagger$ there is no agnostic location even no assumption is made.  The $\widehat{M}^\dagger$ creates a empirical estimate for $\widetilde{M}^\dagger$. In this case, the pessimistic bonus is designed as 
 \[
 \Gamma_h(s_h,a_h)=2\sqrt{\frac{\mathrm{Var}_{\widehat{P}^\dagger_{s_h,a_h}}(\widehat{r}^\dagger_h+\widehat{V}_{h+1})\cdot\iota}{n_{s_h,a_h}}}+\frac{14H\cdot\iota}{3n_{s_h,a_h}}
 \]
  if $n_{s_h,a_h}\in\mathcal{K}_h$ and $0$ otherwise (here $\widehat{V}_{h+1}$ is computed backwardly from the next time step in Algorithm~\ref{alg:APVI}). Now let us start the proof. First of all, let us assume $\widetilde{M}^\dagger=M^\dagger$ for the moment so we can get rid of the tilde expression for notation convenience. We will formally recover the result for $M^\dagger$ at the end by Lemma~\ref{lem:tilde_equal_non}. 
  
  In particular, while we always use $\pi^\star$ to denote the optimal policy in the \emph{Original} MDP, we augment it in the $M^\dagger$($\widetilde{M}^\dagger$) arbitrarily and abuse the notation as:
  \begin{equation}\label{eqn:optimal_policy}
 \pi^\star( \cdot |s_h)=\left\{\begin{array}{ll}
  \pi^\star( \cdot |s_h) & s_h \in \mathcal{S} \\
   arbitrary\;distribution& s_h=s^{\dagger}_{h} 
  \end{array}\right.
  \end{equation}
  
  and always use $\widehat{\pi}$ to denote the output of Algorithm~\ref{alg:APVI}. We rely on the following lemma that characterize the suboptimality gap.

\begin{lemma}\label{lem:sub_gap}

Recall $\pi^\star$ in \eqref{eqn:optimal_policy} and define $(\mathcal{T}^\dagger_{h} V)(\cdot,\cdot):=r_h^\dagger(\cdot,\cdot)+(P_h^\dagger V)(\cdot,\cdot)$ for any $V\in\R^{S+1}$. Note $\widehat{\pi}$, $\overline{Q}_h$, $\widehat{V}_h$ are defined in Algorithm~\ref{alg:APVI} and denote $\xi^\dagger_h(s,a)=(\mathcal{T}^\dagger_h\widehat{V}_{h+1})(s,a)-\overline{Q}_h(s,a)$.  
\begin{equation}\label{eqn:sub_decomp}
V_1^{\dagger\pi^\star}(s)-V_1^{\dagger\widehat{\pi}}(s)\leq \sum_{h=1}^H\E^\dagger_{\pi^\star}\left[\xi^\dagger_h(s_h,a_h)\mid s_1=s\right]-\sum_{h=1}^H\E^\dagger_{\widehat{\pi}}\left[\xi^\dagger_h(s_h,a_h)\mid s_1=s\right].
\end{equation}
where $V_1^{\dagger\pi}$ is defined in \eqref{eqn:value_pMDP}. Furthermore, \eqref{eqn:sub_decomp} holds for all $V_h^{\dagger\pi^\star}(s)-V_h^{\dagger\widehat{\pi}}(s)$.
\end{lemma}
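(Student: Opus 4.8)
The plan is to obtain \eqref{eqn:sub_decomp} as essentially a line-by-line transcription of the derivation of \eqref{eqn:weak_sub_decomp} in the proof of Theorem~\ref{thm:VPVI}, only carried out inside the pessimistic augmented MDP $M^\dagger$ rather than the original $M$. The crucial observation is that the extended value difference lemma (Lemma~\ref{lem:decompose_difference}, following \cite{cai2020provably}) is a purely algebraic identity valid for \emph{any} finite MDP, \emph{any} pair of policies, and \emph{any} family $\{Q_h\}$ of state--action functions together with their induced values $V_h(s)=\langle Q_h(s,\cdot),\pi_h(\cdot|s)\rangle$. Since $M^\dagger$ is a bona fide finite MDP on the augmented state space $\mathcal{S}^\dagger=\mathcal{S}\cup\{s^\dagger_h\}$, the lemma applies to it verbatim. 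Moreover, in the assumption-free mode Algorithm~\ref{alg:APVI} runs on $\widehat{M}^\dagger$ (it uses $\widehat{r}^\dagger_h,\widehat{P}^\dagger_h$), so its iterates $\overline{Q}_h,\widehat{V}_h$ are exactly the pessimistic value-iteration iterates of $\widehat{M}^\dagger$, and $\xi^\dagger_h(s,a)=(\mathcal{T}^\dagger_h\widehat{V}_{h+1})(s,a)-\overline{Q}_h(s,a)$ is precisely their Bellman residual measured against the \emph{true} operator $\mathcal{T}^\dagger_h$ of $M^\dagger$ --- exactly the object the decomposition lemma consumes.

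The execution is then two invocations of Lemma~\ref{lem:decompose_difference} inside $M^\dagger$ followed by an addition. First I would take the comparator to be $\pi^\star$ (extended arbitrarily onto $s^\dagger_h$ as in \eqref{eqn:optimal_policy}) against the greedy $\widehat{\pi}$ of Algorithm~\ref{alg:APVI}; this writes $\widehat{V}_1(s)-V_1^{\dagger\pi^\star}(s)$ as a sum of per-step policy-mismatch terms $\E^\dagger_{\pi^\star}[\langle\overline{Q}_h(s_h,\cdot),\widehat{\pi}_h(\cdot|s_h)-\pi^\star_h(\cdot|s_h)\rangle]$ minus $\sum_h\E^\dagger_{\pi^\star}[\xi^\dagger_h]$. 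Because Line~8 of Algorithm~\ref{alg:APVI} selects $\widehat{\pi}_h(\cdot|s_h)\in\argmax_{\pi_h}\langle\overline{Q}_h(s_h,\cdot),\pi_h(\cdot|s_h)\rangle$ \emph{for every} $s_h\in\mathcal{S}^\dagger$ --- including the absorbing state $s^\dagger_h$ --- each mismatch term is nonnegative pointwise, which is exactly why the arbitrary extension of $\pi^\star$ in \eqref{eqn:optimal_policy} is harmless; dropping them yields $V_1^{\dagger\pi^\star}(s)-\widehat{V}_1(s)\le\sum_h\E^\dagger_{\pi^\star}[\xi^\dagger_h(s_h,a_h)\mid s_1=s]$. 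Second I would take the comparator to be $\widehat{\pi}$ itself: the policy-mismatch term vanishes identically, and since $\widehat{V}_1(s)=\langle\overline{Q}_1(s,\cdot),\widehat{\pi}_1(\cdot|s)\rangle$ by construction, the identity collapses to $\widehat{V}_1(s)-V_1^{\dagger\widehat{\pi}}(s)=-\sum_h\E^\dagger_{\widehat{\pi}}[\xi^\dagger_h(s_h,a_h)\mid s_1=s]$. Summing the two displays cancels $\widehat{V}_1(s)$ and gives \eqref{eqn:sub_decomp}.

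For the ``furthermore'' sentence, the identical two-step argument can be launched from any step $h$ --- equivalently, apply Lemma~\ref{lem:decompose_difference} to the horizon-$(H-h+1)$ sub-process of $M^\dagger$ started at step $h$ --- replacing $\sum_{h=1}^H$ by $\sum_{t=h}^H$ and conditioning on $s_h=s$; nothing else changes. I do not expect a genuine obstacle here: the only point deserving a moment's care is the bookkeeping at $s^\dagger_h$, namely verifying that $\overline{Q}_h,\widehat{V}_h,\widehat{\pi}_h$ are all well-defined there (they are, since $n_{s^\dagger_h,a}=0$ forces $\Gamma_h(s^\dagger_h,a)=0$ while $\widehat{P}^\dagger_h(\cdot|s^\dagger_h,a)=\delta_{s^\dagger_{h+1}}$ and $\widehat{r}^\dagger(s^\dagger_h,a)=0$, so the backward recursion is fully determined) and that both the consistency relation $\widehat{V}_h=\langle\overline{Q}_h,\widehat{\pi}_h\rangle$ and the greedy inequality hold on the augmented space --- both immediate from the algorithm. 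The remainder is just the $M$-case computation with daggers attached.
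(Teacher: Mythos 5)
Your proof is correct and essentially identical to the paper's: the paper simply invokes Lemma~\ref{lem:decompose_difference} in $M^\dagger$ with $\mathcal{T}_h=\mathcal{T}^\dagger_h$, $\pi=\pi^\star$, $\widehat{Q}_h=\overline{Q}_h$, and drops the policy-mismatch term via the greedy property $\langle\overline{Q}_h(s_h,\cdot),\pi_h(\cdot|s_h)-\widehat{\pi}_h(\cdot|s_h)\rangle\leq 0$, exactly as you do. Your two explicit invocations of the extended value difference identity are precisely how Lemma~\ref{lem:decompose_difference} is itself proved in the appendix, so the argument matches step for step, including the remark that the $V_h^{\dagger\pi^\star}-V_h^{\dagger\widehat{\pi}}$ case is identical.
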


\begin{proof}[Proof of Lemma~\ref{lem:sub_gap}]
Apply Lemma~\ref{lem:decompose_difference} with $\mathcal{T}_h=\mathcal{T}^\dagger_h$, $\pi=\pi^\star$, $\widehat{Q}_h=\overline{Q}_h$ and $\widehat{\pi}=\widehat{\pi}$ in Algorithm~\ref{alg:APVI}, we can obtain the result since by the definition of $\widehat{\pi}$ in Algorithm~\ref{alg:APVI} $\langle\overline{Q}_{h}\left(s_{h}, \cdot\right), \pi_{h}\left(\cdot | s_{h}\right)-\widehat{\pi}_{h}\left(\cdot | s_{h}\right)\rangle\leq 0$ almost surely for any $\pi$. The proof for $V_h^{\dagger\pi^\star}(s)-V_h^{\dagger\widehat{\pi}}(s)$ is identical.
\end{proof}

Next we prove the adaptive asymmetric bound for $\xi^\dagger_h$, which is the key for recover the structure of intrinsic bound.

\begin{lemma}\label{lem:bellman_diff_tight}
	Denote $\xi^\dagger_h(s,a)=(\mathcal{T}^\dagger_h\widehat{V}_{h+1})(s,a)-\overline{Q}_h(s,a)$, where $\widehat{V}_{h+1}$ and  $\overline{Q}_h$ are the quantities in Algorithm~\ref{alg:APVI} and $\mathcal{T}^\dagger_h(V):=r^\dagger_h+P^\dagger_h\cdot V$ for any $V\in\R^{S+1}$. Then with probability $1-\delta$, then for any $h,s_h,a_h$ such that $n_{s_h,a_h}>0$, we have 
	\begin{align*}
	0\leq &\xi^\dagger_h(s_h,a_h)=(\mathcal{T}^\dagger_h\widehat{V}_{h+1})(s_h,a_h)-\overline{Q}_h(s_h,a_h)\\
	\leq &4\sqrt{\frac{\mathrm{Var}_{\widehat{P}^\dagger_{s_h,a_h}}(\widehat{r}^\dagger_h+\widehat{V}_{h+1})\cdot\log(HSA/\delta)}{n_{s_h,a_h}}}+\frac{28H\cdot\log(HSA/\delta)}{3n_{s_h,a_h}}
	\end{align*}
\end{lemma}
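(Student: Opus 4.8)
The plan is to mimic the two-step proof of Lemma~\ref{lem:bellman_diff}, replacing the Hoeffding control of $(\widehat{P}_h-P_h)\widehat{V}_{h+1}$ by an \emph{empirical Bernstein} bound --- which is exactly the discrepancy that the bonus $\Gamma_h$ in Algorithm~\ref{alg:APVI} was engineered to absorb. First I would restrict attention to the entries the statement concerns: if $n_{s_h,a_h}>0$ then $(s_h,a_h)\in\mathcal{K}_h\subseteq\mathcal{C}_h$, so by the definitions of $M^\dagger$ and $\widehat{M}^\dagger$ we have $P^\dagger_h(\cdot\mid s_h,a_h)=P_h(\cdot\mid s_h,a_h)$, $r^\dagger_h(s_h,a_h)=r_h(s_h,a_h)$, and $\widehat{P}^\dagger_h,\widehat{r}^\dagger_h$ coincide on these entries with the plug-in estimators \eqref{eqn:mb_est}; moreover $\widehat{P}^\dagger_h(\cdot\mid s_h,a_h)$ is supported on $\mathcal{S}$, so $\widehat{V}_{h+1}$ at the absorbing state is irrelevant. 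Writing $\widehat{Q}_h:=\widehat{r}^\dagger_h+\widehat{P}^\dagger_h\widehat{V}_{h+1}$ and recalling $\overline{Q}_h=\min\{\widehat{Q}_h-\Gamma_h,\,H-h+1\}^{+}$, the whole lemma reduces to controlling the one-step estimation error $\Delta_h(s_h,a_h):=\big(\widehat{Q}_h-(\mathcal{T}^\dagger_h\widehat{V}_{h+1})\big)(s_h,a_h)$.

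The crux is to exhibit a $(1-\delta)$-probability event $\mathcal{E}_1$ on which $|\Delta_h(s_h,a_h)|\le\Gamma_h(s_h,a_h)$ simultaneously for all $h$ and all $(s_h,a_h)$ with $n_{s_h,a_h}\ge1$. Here I would invoke the conditional-independence structure already used in Lemma~\ref{lem:bellman_diff} (and in \cite{yin2021near}): $\widehat{V}_{h+1}$ is a function only of the data at steps $h+1,\dots,H$, while $\widehat{P}^\dagger_h,\widehat{r}^\dagger_h$ at entry $(s_h,a_h)$ are functions only of the step-$h$ tuples, so conditionally on the former the quantities $Z^\tau:=r^\tau_h+\widehat{V}_{h+1}(s^\tau_{h+1})$ over the $n_{s_h,a_h}$ visits to $(s_h,a_h)$ act as i.i.d.\ samples in $[0,H]$ with mean $(\mathcal{T}^\dagger_h\widehat{V}_{h+1})(s_h,a_h)$, empirical mean $\widehat{Q}_h(s_h,a_h)$, and empirical variance $\mathrm{Var}_{\widehat{P}^\dagger_{s_h,a_h}}(\widehat{r}^\dagger_h+\widehat{V}_{h+1})$ --- the last because the empirical law of $\{s^\tau_{h+1}\}$ equals $\widehat{P}^\dagger_h(\cdot\mid s_h,a_h)$, adding the constant $\widehat{r}^\dagger_h$ leaves variances unchanged, and any residual reward-realization variance is lower order and absorbed into the additive term. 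Applying the Maurer--Pontil empirical Bernstein inequality together with a union bound over $h,s_h,a_h$ (producing $\iota=\log(HSA/\delta)$) gives $|\Delta_h|\lesssim\sqrt{\mathrm{Var}_{\widehat{P}^\dagger_{s_h,a_h}}(\widehat{r}^\dagger_h+\widehat{V}_{h+1})\,\iota/n_{s_h,a_h}}+H\iota/n_{s_h,a_h}$, with the constants arranged so that this is at most $\Gamma_h(s_h,a_h)$ (for $n_{s_h,a_h}=1$ the bound is trivial, since $|\Delta_h|\le H\le\Gamma_h$).

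Granting $\mathcal{E}_1$, both inequalities follow exactly as in Lemma~\ref{lem:bellman_diff}. Since $\widehat{r}^\dagger_h\le1$, $\widehat{V}_{h+1}\le H-h$ and $\widehat{P}^\dagger_h$ is row-stochastic, $\widehat{Q}_h-\Gamma_h\le\widehat{Q}_h\le H-h+1$, so $\overline{Q}_h=\max\{\widehat{Q}_h-\Gamma_h,\,0\}\ge\widehat{Q}_h-\Gamma_h$. For $\xi^\dagger_h\ge0$: if $\widehat{Q}_h-\Gamma_h<0$ then $\overline{Q}_h=0$ and $\xi^\dagger_h=(\mathcal{T}^\dagger_h\widehat{V}_{h+1})(s_h,a_h)\ge0$ because $r^\dagger\ge0$ and $\widehat{V}_{h+1}\ge0$; otherwise $\xi^\dagger_h\ge(\mathcal{T}^\dagger_h\widehat{V}_{h+1})-(\widehat{Q}_h-\Gamma_h)=-\Delta_h+\Gamma_h\ge0$ on $\mathcal{E}_1$. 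For the upper bound, using $\overline{Q}_h\ge\widehat{Q}_h-\Gamma_h$, $\xi^\dagger_h\le(\mathcal{T}^\dagger_h\widehat{V}_{h+1})-\widehat{Q}_h+\Gamma_h=-\Delta_h+\Gamma_h\le2\Gamma_h=4\sqrt{\frac{\mathrm{Var}_{\widehat{P}^\dagger_{s_h,a_h}}(\widehat{r}^\dagger_h+\widehat{V}_{h+1})\cdot\iota}{n_{s_h,a_h}}}+\frac{28H\iota}{3n_{s_h,a_h}}$, which is the claimed estimate; note that no coverage assumption on $\mu$ enters, the bound being only about entries the data actually visit. I expect the main obstacle to be the rigorous justification of the ``conditionally i.i.d.'' claim for the data-dependent vector $\widehat{V}_{h+1}$, since the next states $s^\tau_{h+1}$ are shared between the step-$h$ tuples and the step-$(\ge h+1)$ trajectories; this is precisely the delicate point relied on (without full detail) in Lemma~\ref{lem:bellman_diff}, handled via the backward-in-time construction of $\widehat{V}_{h+1}$, so that conditioned on all step-$(\ge h+1)$ data $\widehat{V}_{h+1}$ is a fixed vector while the step-$h$ samples remain fresh.
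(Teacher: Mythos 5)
Your proposal is correct and follows essentially the same route as the paper's proof: exploit the backward construction so that $\widehat{V}_{h+1}$ is conditionally independent of the step-$h$ transitions, apply the Maurer--Pontil empirical Bernstein inequality with a union bound to show the estimation error is dominated by $\Gamma_h$, and then run the same two-case argument (using $\widehat{Q}^p_h\le H-h+1$ so that $\overline{Q}_h\ge\widehat{Q}^p_h$) to get $0\le\xi^\dagger_h\le 2\Gamma_h$. The only cosmetic difference is that you apply Bernstein once to the combined samples $r^\tau_h+\widehat{V}_{h+1}(s^\tau_{h+1})$, whereas the paper bounds the reward and transition errors separately and merges the two variance terms via $\sqrt{a}+\sqrt{b}\le\sqrt{2(a+b)}$ and the conditional independence of $r_h$ and $s_{h+1}$.
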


\begin{proof}[Proof of Lemma~\ref{lem:bellman_diff_tight}]
	Recall we are under $M^\dagger$ ($\widehat{M}^\dagger$). For all $(s_h,a_h)\in\mathcal{K}_h$, by Empirical Bernstein inequality (Lemma~\ref{lem:empirical_bernstein_ineq}) and a union bound\footnote{Here note even though $|\mathcal{S}^\dagger|=S+1$, for state $s^\dagger_h$ we always have $n_{s^\dagger_h,a_h}=0$ for any $a_h$. Therefore apply the union bound only provides $HSA$ in th log term instead of $H(S+1)A$.}, w.p. $1-\delta$, since $0\leq r^\dagger_h\leq 1$,
	\begin{equation}\label{eqn:r_bern}
	|\widehat{r}^\dagger_h(s_h,a_h)-r^\dagger_h(s_h,a_h)|\leq \sqrt{\frac{2\mathrm{Var}_{\widehat{P}^\dagger}(\widehat{r}^\dagger_h)\log(HSA/\delta)}{n_{s_h,a_h}}} +\frac{7\log(HSA/\delta)}{3n_{s_h,a_h}}\;\forall (s_h,a_h)\in\mathcal{K}_h, h\in[H].
	\end{equation}
	Next, recall $\widehat{\pi}_{h+1}$ in Algorithm~\ref{alg:APVI} is computed backwardly therefore only depends on sample tuple from time $h+1$ to $H$. Aa a result $\widehat{V}_{h+1}=\langle \overline{Q}_{h+1}, \widehat{\pi}_{h+1} \rangle$ also only depends on the sample tuple from time $h+1$ to $H$. On the other side, by our construction $\widehat{P}^\dagger_h$ only depends on the transition pairs from $h$ to $h+1$. Therefore $\widehat{V}_{h+1}$ and $\widehat{P}^\dagger_h$ are \emph{Conditionally} independent (This trick is also use in \cite{yin2021near}) so by Empirical Bernstein inequality again\footnote{It is worth mentioning if sub-policy $\widehat{\pi}_{h+1:t}$ depends on the data from all time steps $1,2,\ldots,H$, then $\widehat{V}_{h+1}$ and $\widehat{P}_h$ are no longer conditionally independent and Hoeffding's inequality cannot be applied. }  and a union bound (note $||\widehat{V}_h||_\infty\leq ||\overline{Q}_h||\leq H$ by APVI) for all $(s_h,a_h)\in\mathcal{K}_h$, w.p. $1-\delta$,
	\begin{equation}\label{eqn:v_bern}
	\left|\left((\widehat{P}^\dagger_h-P^\dagger_h)\widehat{V}_{h+1}\right)(s_h,a_h)\right|\leq \sqrt{\frac{2\mathrm{Var}_{\widehat{P}^\dagger_{s_h,a_h}}(\widehat{V}_{h+1})\cdot\log(HSA/\delta)}{n_{s_h,a_h}}}+\frac{7H\cdot\log(HSA/\delta)}{3n_{s_h,a_h}}.
	\end{equation}

	Now we are ready to prove the Lemma.
	
	\textbf{Step1:} we prove $\xi_h(s_h,a_h)\geq 0$ for all $(s_h,a_h)\in\mathcal{K}_h$, $h\in[H]$ with probability $1-\delta$. 
	
	Indeed, if $\widehat{Q}^p_h(s_h,a_h)<0$, then $\overline{Q}_h(s_h,a_h)=0$. In this case, $\xi_h(s_h,a_h)=(\mathcal{T}_h\widehat{V}_{h+1})(s_h,a_h)\geq 0$ (note $\widehat{V}_{h}\geq 0$ by the definition). If $\widehat{Q}^p_h(s_h,a_h)\geq 0$, then by definition $\overline{Q}_h(s_h,a_h)=\min\{\widehat{Q}^p_h(s_h,a_h),H-h+1\}^+\leq \widehat{Q}^p_h(s_h,a_h)$ and this implies
	\begin{align*}
	&\xi^\dagger_h(s_h,a_h)\geq (\mathcal{T}^\dagger_h\widehat{V}_{h+1})(s_h,a_h)-\widehat{Q}^p_h(s_h,a_h)\\
	=&(r^\dagger_h-\widehat{r}^\dagger_h)(s_h,a_h)+(P^\dagger_h-\widehat{P}^\dagger_h)\widehat{V}_{h+1}(s_h,a_h)+\Gamma_h(s_h,a_h)\\
	\geq &-2\sqrt{\frac{\mathrm{Var}_{\widehat{P}^\dagger_{s_h,a_h}}(\widehat{r}^\dagger_h+\widehat{V}_{h+1})\cdot\log(HSA/\delta)}{n_{s_h,a_h}}}-\frac{14H\cdot\log(HSA/\delta)}{3n_{s_h,a_h}}+\Gamma_h(s_h,a_h)=0\\
	\end{align*}
	where the inequality uses \eqref{eqn:r_bern}, \eqref{eqn:v_bern} and $\sqrt{a}+\sqrt{b}\leq \sqrt{2(a+b)}$ and $r_h$ and $s_{h+1}$ are conditionally independent given $s_h,a_h$.
	The last equal sign uses Line~6 of Algorithm~\ref{alg:APVI}.
	
	\textbf{Step2:} we prove $\xi^\dagger_h(s_h,a_h)
	\leq 4\sqrt{\frac{\mathrm{Var}_{\widehat{P}^\dagger_{s_h,a_h}}(\widehat{r}^\dagger_h+\widehat{V}_{h+1})\cdot\log(HSA/\delta)}{n_{s_h,a_h}}}+\frac{28H\cdot\log(HSA/\delta)}{3n_{s_h,a_h}}$ for all $h\in[H],(s_h,a_h)\in\mathcal{K}_h$ with probability $1-\delta$.
	
	First, since by construction $\widehat{V}_h \leq H-h+1$ for all $h\in[H]$, this implies
	\[
	\widehat{Q}^p_h=\widehat{Q}_h-\Gamma_h\leq \widehat{Q}_h= \widehat{r}^\dagger_h+(\widehat{P}^\dagger_{h}\widehat{V}_{h+1})\leq 1+(H-h)=H-h+1
	\]
	which uses $ \widehat{r}^\dagger_h\leq 1$ almost surely and $\widehat{P}^\dagger_{h}$ is row-stochastic. Due to this, we have the equivalent definition 
	\[
	\overline{Q}_h:=\min\{\widehat{Q}^p_h,H-h+1\}^+=\max\{\widehat{Q}^p_h,0\}\geq \widehat{Q}^p_h.
	\]
	Therefore
	\begin{align*}
	&\xi^\dagger_h(s_h,a_h)=(\mathcal{T}^\dagger_h\widehat{V}_{h+1})(s_h,a_h)-\overline{Q}_h(s_h,a_h)\leq (\mathcal{T}^\dagger_h\widehat{V}_{h+1})(s_h,a_h)-\widehat{Q}^p_h(s_h,a_h)\\
	=&(\mathcal{T}^\dagger_h\widehat{V}_{h+1})(s_h,a_h)-\widehat{Q}_h(s_h,a_h)+\Gamma_h(s_h,a_h)\\
	=&(r^\dagger_h-\widehat{r}^\dagger_h)(s_h,a_h)+(P^\dagger_h-\widehat{P}^\dagger_h)\widehat{V}_{h+1}(s_h,a_h)+\Gamma_h(s_h,a_h)\\
	\leq &2\sqrt{\frac{\mathrm{Var}_{\widehat{P}^\dagger_{s_h,a_h}}(\widehat{r}^\dagger_h+\widehat{V}_{h+1})\cdot\log(HSA/\delta)}{n_{s_h,a_h}}}+\frac{14H\cdot\log(HSA/\delta)}{3n_{s_h,a_h}}+\Gamma_h(s_h,a_h)\\
	=&4\sqrt{\frac{\mathrm{Var}_{\widehat{P}^\dagger_{s_h,a_h}}(\widehat{r}^\dagger_h+\widehat{V}_{h+1})\cdot\log(HSA/\delta)}{n_{s_h,a_h}}}+\frac{28H\cdot\log(HSA/\delta)}{3n_{s_h,a_h}}.
	\end{align*}
	Combining Step 1 and Step 2 we finish the proof.
\end{proof}

\subsubsection{Proof of Theorem~\ref{thm:AFRL}}

Now we are ready to prove the Theorem~\ref{thm:AFRL}. 

	First of all, by Lemma~\ref{lem:sub_gap} and Lemma~\ref{lem:bellman_diff_tight}, for all $t\in[H]$, $s\in\mathcal{S}$ (excluding $s^\dagger$) w.p. $1-\delta$
	\begin{equation}\label{eqn:expression_bound}
	\begin{aligned}
&V_t^{\dagger\pi^\star}(s)-V_t^{\dagger\widehat{\pi}}(s)\leq \sum_{h=t}^H\E^\dagger_{\pi^\star}\left[\xi^\dagger_h(s_h,a_h)\mid s_t=s\right]-\sum_{h=t}^H\E^\dagger_{\widehat{\pi}}\left[\xi^\dagger_h(s_h,a_h)\mid s_t=s\right]\\
\leq&\sum_{h=t}^H\E^\dagger_{\pi^\star}\left[\xi^\dagger_h(s_h,a_h)\mid s_t=s\right]-0\\
\leq&\sum_{h=t}^H\E^\dagger_{\pi^\star}\left[4\sqrt{\frac{\mathrm{Var}_{\widehat{P}^\dagger_{s_h,a_h}}(\widehat{r}^\dagger_h+\widehat{V}_{h+1})\cdot\iota}{n_{s_h,a_h}}}+\frac{28H\cdot\iota}{3n_{s_h,a_h}}\mid s_t=s\right]\\
\leq&\sum_{h=t}^H\E^\dagger_{\pi^\star}\left[4\sqrt{\frac{2\mathrm{Var}_{\widehat{P}^\dagger_{s_h,a_h}}(\widehat{r}^\dagger_h+\widehat{V}_{h+1})\cdot\iota}{nd^\mu_h(s_h,a_h)}}+\frac{56H\cdot\iota}{3nd^\mu_h(s_h,a_h)}\mid s_t=s\right]\\
	\end{aligned}
	\end{equation}
	here recall the expectation is only taken over $s_h,a_h$. Note by the Pessimistic MDP $\widetilde{M}^\dagger$ ($\widehat{M}^\dagger$), for all $(s_h,a_h)\notin \mathcal{K}_h$ and $s^\dagger_h$, the pessimistic reward leads to $Q^{\dagger\pi}(s_h,a_h),V^{\dagger\pi}(s^\dagger_h)=0$ for any $\pi$, therefore Lemma~\ref{lem:bellman_diff_tight} can be applied. Moreover, the last inequality is by Lemma~\ref{lem:sufficient_sample}.

	\begin{lemma}[self-bounding]\label{lem:self_bound}
	 We prove, for all $t\in[H]$, w.p. $1-\delta$, for all $s\in\mathcal{S}$ (excluding $s^\dagger$),
	\[
	\left|V_t^{\dagger\pi^\star}(s)-\widehat{V}_t(s)\right|\leq \frac{8\sqrt{2\iota}H^2}{\sqrt{n\cdot\bar{d}_m}} +\frac{112H^2\cdot \iota}{3n\cdot \bar{d}_m}.
	\]
	where $\bar{d}_m$ is defined in Theorem~\ref{thm:AFRL}.
	\end{lemma}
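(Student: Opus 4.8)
The plan is to prove the two‑sided $\ell_\infty$ bound by controlling each direction of $V_t^{\dagger\pi^\star}(s)-\widehat V_t(s)$ separately, working throughout on the intersection of the good event $\mathcal{E}$ of Lemma~\ref{lem:sufficient_sample} — which gives $n_{s_h,a_h}\ge n d^\mu_h(s_h,a_h)/2\ge n\bar{d}_m/2$ for all $(s_h,a_h)\in\mathcal{C}_h$, and in particular $\mathcal{K}_h=\mathcal{C}_h$ for every $h$ — with the $1-\delta$ event of Lemma~\ref{lem:bellman_diff_tight}, which gives $0\le\xi^\dagger_h(s_h,a_h)\le 4\sqrt{\mathrm{Var}_{\widehat P^\dagger_{s_h,a_h}}(\widehat r^\dagger_h+\widehat V_{h+1})\,\iota/n_{s_h,a_h}}+\tfrac{28H\iota}{3 n_{s_h,a_h}}$ on $\mathcal{K}_h$. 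A union bound over these $O(1)$ events and a rescaling of $\delta$ make the conclusion hold with probability $1-\delta$ simultaneously for all $t\in[H]$ and $s\in\mathcal{S}$. I read $V_t^{\dagger\pi^\star}$ as the optimal value function of the pessimistic augmented MDP $M^\dagger$ (equivalently $V_t^{\dagger\pi}$ for an optimal policy $\pi$ of $M^\dagger$), since this is the quantity fed into the variance‑conversion step of Theorem~\ref{thm:AFRL} and the object for which both inequalities below are valid.

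For the direction $\widehat V_t(s)\le V_t^{\dagger\pi^\star}(s)$ (pessimism) I would argue by backward induction on $h$, with base case $\widehat V_{H+1}=0=V_{H+1}^{\dagger\pi^\star}$. For the step, Step~1 of Lemma~\ref{lem:bellman_diff_tight} on $\mathcal{K}_h$, together with a direct computation at the agnostic pairs $(s_h,a_h)\notin\mathcal{K}_h$ and at $s^\dagger_h$ (where $\Gamma_h=0$, $\widehat r^\dagger_h=0$, and $\widehat P^\dagger_h(\cdot\mid s_h,a_h)=\delta_{s^\dagger_{h+1}}$), gives $\widehat Q_h-\Gamma_h\le (\mathcal{T}^\dagger_h\widehat V_{h+1})(\cdot,\cdot)$ everywhere; combined with $\widehat Q_h-\Gamma_h\le H-h+1$ this yields $\overline Q_h=\max\{\widehat Q_h-\Gamma_h,0\}\le \mathcal{T}^\dagger_h\widehat V_{h+1}$ (using $r^\dagger_h,\widehat V_{h+1}\ge 0$). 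Plugging in the inductive hypothesis $\widehat V_{h+1}\le V_{h+1}^{\dagger\pi^\star}$ and using that $P^\dagger_h$ is row‑stochastic gives $\overline Q_h(s_h,a_h)\le (\mathcal{T}^\dagger_h V_{h+1}^{\dagger\pi^\star})(s_h,a_h)\le Q_h^{\dagger\pi^\star}(s_h,a_h)$, and maximizing over $a_h$ closes the induction. In particular $V_t^{\dagger\pi^\star}(s)-\widehat V_t(s)\ge 0$, so the absolute value is governed by the other direction alone.

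For the direction $V_t^{\dagger\pi^\star}(s)-\widehat V_t(s)\le$ (the stated quantity) I would apply the extended value difference lemma of \cite{cai2020provably} (as used in the proof of Lemma~\ref{lem:sub_gap}) with $\widehat Q_h=\overline Q_h$, $\widehat\pi$ the greedy policy of Algorithm~\ref{alg:APVI}, and $\pi$ taken to be an optimal policy of $M^\dagger$; since $\widehat\pi$ is greedy with respect to $\overline Q_h$ the cross term is nonpositive, so $V_t^{\dagger\pi^\star}(s)-\widehat V_t(s)\le\sum_{h=t}^H\E_{\pi^\star}^\dagger[\xi^\dagger_h(s_h,a_h)\mid s_t=s]$. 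The key point is that along any trajectory of an optimal $M^\dagger$ policy the visited $(s_h,a_h)$ is either in $\mathcal{K}_h$ — where Lemma~\ref{lem:bellman_diff_tight} applies, and I bound the empirical variance crudely by $\mathrm{Var}_{\widehat P^\dagger_{s_h,a_h}}(\widehat r^\dagger_h+\widehat V_{h+1})\le H^2$ and use $n_{s_h,a_h}\ge n\bar{d}_m/2$, so that $\xi^\dagger_h\le 4H\sqrt{2\iota/(n\bar{d}_m)}+56H\iota/(3n\bar{d}_m)$ — or it is an agnostic pair or the absorbing state $s^\dagger_h$, where $\overline Q_h=\mathcal{T}^\dagger_h\widehat V_{h+1}$ exactly and hence $\xi^\dagger_h=0$. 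Summing the at most $H$ per‑step contributions and absorbing constants gives $V_t^{\dagger\pi^\star}(s)-\widehat V_t(s)\le \frac{8\sqrt{2\iota}H^2}{\sqrt{n\bar{d}_m}}+\frac{112H^2\iota}{3n\bar{d}_m}$, which together with the pessimism direction is the claimed bound, uniformly in $t$ and $s$.

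The step I expect to be the main obstacle — rather than any heavy computation — is the bookkeeping at the agnostic locations: one must verify that $\xi^\dagger_h$ genuinely vanishes there, which is exactly where conditioning on $\mathcal{E}$ is essential, since off $\mathcal{E}$ a pair with $d^\mu_h>0$ but $n_{s_h,a_h}=0$ would leave a reward term $\ge 0$ in $\mathcal{T}^\dagger_h\widehat V_{h+1}$ while $\overline Q_h=0$, contributing a full $O(H)$ to $\xi^\dagger_h$ and destroying the $\widetilde O(H^2/\sqrt{n\bar{d}_m})$ scaling. One must also keep straight that the $\pi^\star$ appearing in this lemma is an optimal policy of $M^\dagger$ (not the original‑MDP optimal policy augmented arbitrarily), as this is what makes the pessimism inequality $\widehat V_t\le V_t^{\dagger\pi^\star}$ hold; the explicit constants $8\sqrt 2$ and $112/3$ are loose and not worth optimizing.
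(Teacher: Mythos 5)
Your upper-bound direction is essentially the paper's own argument: apply the extended value difference with the comparator policy, kill the cross term by greediness of $\widehat{\pi}$ with respect to $\overline{Q}_h$, bound each $\xi^\dagger_h$ on $\mathcal{K}_h$ via Lemma~\ref{lem:bellman_diff_tight} with the crude bound $\mathrm{Var}_{\widehat{P}^\dagger_{s_h,a_h}}(\widehat{r}^\dagger_h+\widehat{V}_{h+1})\le H^2$ and $n_{s_h,a_h}\ge n\bar{d}_m/2$ on the event $\mathcal{E}$, and observe that $\xi^\dagger_h$ vanishes at agnostic pairs and at $s^\dagger_h$. That part is correct and, as you note, works for any comparator policy.

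The gap is in the other direction, and it stems from your reinterpretation of $\pi^\star$. In the paper, the $\pi^\star$ in this lemma is fixed by \eqref{eqn:optimal_policy}: it is an optimal policy of the \emph{original} MDP, extended arbitrarily on $s^\dagger$ --- not an optimal policy of $M^\dagger$. This choice is forced by the downstream use: the variance-conversion step and the final bound \eqref{eqn:AFRL} of Theorem~\ref{thm:AFRL} are stated in terms of $V^{\dagger\pi^\star}$ and $d^{\dagger\pi^\star}$ for that original $\pi^\star$ (with the comparisons $d^{\dagger\pi^\star}_h\le d^{\pi^\star}_h$ and $V^{\dagger\pi^\star}_h\le V^\star_h$), and the reduction to Theorem~\ref{thm:APVI} needs $V^{\dagger\pi^\star}=V^\star$ under Assumption~\ref{assum:single_concen}. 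For that $\pi^\star$ your pessimism induction breaks: after $\overline{Q}_h\le\mathcal{T}^\dagger_h V^{\dagger\pi^\star}_{h+1}=Q^{\dagger\pi^\star}_h$, maximizing over $a_h$ recovers $V^{\dagger\pi^\star}_h$ only if $\pi^\star$ is greedy for $Q^{\dagger\pi^\star}_h$, i.e., optimal in $M^\dagger$. Worse, the conclusion $V_t^{\dagger\pi^\star}(s)-\widehat{V}_t(s)\ge 0$ is simply false in the assumption-free regime: at a state $s$ whose original-optimal action is uncovered by $\mu$, the augmented dynamics send $\pi^\star$ to $s^\dagger$ so $V_t^{\dagger\pi^\star}(s)=0$, while $\widehat{V}_t(s)$ can be of order $H$ through a covered action. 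The paper handles the lower direction by triangulating through $V_t^{\dagger\widehat{\pi}}$: it uses $\widehat{V}_t\le V_t^{\dagger\widehat{\pi}}$ (from $\widehat{V}_t-V_t^{\dagger\widehat{\pi}}=-\sum_h\E^\dagger_{\widehat{\pi}}[\xi^\dagger_h]$ and $\xi^\dagger_h\ge 0$) together with the two-sided bound \eqref{eqn:inter3} on $V_t^{\dagger\pi^\star}-V_t^{\dagger\widehat{\pi}}$, which is also why the stated constant is twice your single-direction bound. To prove the lemma for the $\pi^\star$ the paper actually uses, you would need to replace your pessimism step with this triangulation (or some other argument controlling $\widehat{V}_t-V_t^{\dagger\pi^\star}$ from above).
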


	\begin{remark}
		The self-bounding lemma essentially provides a crude high probability bound for $|V_t^{\dagger\pi^\star}-\widehat{V}_t|$ (or $|V_t^{\dagger\pi^\star}-V_t^{\dagger\widehat{\pi}}|$) with suboptimal order $\widetilde{O}(\frac{H^2}{\sqrt{n\bar{d}_m}})$ and we can use it to further bound the higher order term in the main result.
	\end{remark}

\begin{proof}[Proof of Lemma~\ref{lem:self_bound}]
	Indeed, by \eqref{eqn:expression_bound}, since $\mathrm{Var}_{\widehat{P}^\dagger_{s_h,a_h}}(\widehat{r}^\dagger_h+\widehat{V}_{h+1})\leq H^2$, we have w.p. $1-\delta$,
	\begin{equation}\label{eqn:inter3}
	\left|V_t^{\dagger\pi^\star}(s)-V_t^{\dagger\widehat{\pi}}(s)\right|\leq \frac{4\sqrt{2\iota}H^2}{\sqrt{n\cdot\bar{d}_m}} +\frac{56H^2\cdot \iota}{3n\cdot \bar{d}_m}
	\end{equation}
	for all $t\in[H]$. Next, when apply Lemma~\ref{lem:decompose_difference} to Lemma~\ref{lem:sub_gap}, by \eqref{eqn:inter1} and \eqref{eqn:inter2} we essentially obtain
	\begin{align*}
	V_t^{\dagger\pi^\star}(s)-\widehat{V}_t(s)=&\sum_{h=t}^H\E^\dagger_{\pi^\star}\left[\xi^\dagger_h(s_h,a_h)\mid s_t=s\right]
	+\sum_{h=t}^{H} \mathbb{E}^\dagger_{\pi^\star}\left[\langle\widehat{Q}_{h}\left(s_{h}, \cdot\right), \pi^\star_{h}\left(\cdot | s_{h}\right)-\widehat{\pi}_{h}\left(\cdot | s_{h}\right)\rangle \mid s_{t}=s\right]\\
	\leq&\frac{4\sqrt{2\iota}H^2}{\sqrt{n\cdot\bar{d}_m}} +\frac{56H^2\cdot \iota}{3n\cdot \bar{d}_m}+0
	\end{align*}
	and
	\[
	\widehat{V}_t(s)-V_t^{\dagger\widehat{\pi}}(s)=-\sum_{h=t}^H\E^\dagger_{\widehat{\pi}}\left[\xi^\dagger_h(s_h,a_h)\mid s_t=s\right]\geq 0.
	\]
	Combing those two with \eqref{eqn:inter3} we obtain the result.

\end{proof}

\begin{lemma}\label{lem:var_change}
	For all $(a_h,a_h)\in\mathcal{K}_h$ and any $||V||_\infty\leq H$, w.p. $1-\delta$,
	\[
	\sqrt{\mathrm{Var}_{\widehat{P}^\dagger_{s_h,a_h}}(V)}\leq 6H \sqrt{\frac{\iota}{n\cdot d^\mu_h(s_h,a_h)}}+\sqrt{\mathrm{Var}_{{P}^\dagger_{s_h,a_h}}(V)}.
	\] 
\end{lemma}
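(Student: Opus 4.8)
The plan is to compare the two standard deviations by passing through the \emph{variational form} of the variance, which reduces the whole problem to the fluctuation of a single \emph{fixed} function, and then to invoke Bernstein's inequality (not Hoeffding's) so that the resulting cross term completes a square. This is exactly what upgrades a crude $\sqrt{1/n}$ bound on the variance difference to the desired $\sqrt{1/n}$ bound on the \emph{standard deviation} difference.

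First I would fix $h$, a pair $(s_h,a_h)\in\mathcal{K}_h$, and $V$ with $\|V\|_\infty\le H$. Since $\mathcal{K}_h\subseteq\mathcal{C}_h$, and on $\mathcal{K}_h$ the kernels $\widehat P^\dagger_h(\cdot\mid s_h,a_h)$ and $P^\dagger_h(\cdot\mid s_h,a_h)$ coincide with $\widehat P_h(\cdot\mid s_h,a_h)$ and $P_h(\cdot\mid s_h,a_h)$ respectively (both carrying no mass on $s^\dagger$), I can drop the $\dagger$ superscripts. The key identity is $\mathrm{Var}_{\widehat P_{s_h,a_h}}(V)=\min_{c\in\R}\E_{\widehat P_{s_h,a_h}}[(V-c)^2]\le \E_{\widehat P_{s_h,a_h}}[g]$, where $g:=\big(V-\E_{P_{s_h,a_h}}[V]\big)^2$ does \emph{not} depend on the data, satisfies $0\le g\le 4H^2$, and obeys $\E_{P_{s_h,a_h}}[g]=\mathrm{Var}_{P_{s_h,a_h}}(V)$ and hence $\mathrm{Var}_{P_{s_h,a_h}}(g)\le \E_{P_{s_h,a_h}}[g^2]\le 4H^2\,\E_{P_{s_h,a_h}}[g]=4H^2\,\mathrm{Var}_{P_{s_h,a_h}}(V)$. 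Decomposing $\E_{\widehat P_{s_h,a_h}}[g]=\E_{P_{s_h,a_h}}[g]+\big(\E_{\widehat P_{s_h,a_h}}[g]-\E_{P_{s_h,a_h}}[g]\big)$ then gives $\mathrm{Var}_{\widehat P_{s_h,a_h}}(V)\le \mathrm{Var}_{P_{s_h,a_h}}(V)+\big(\E_{\widehat P_{s_h,a_h}}[g]-\E_{P_{s_h,a_h}}[g]\big)$.

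Next I would control the deviation term. Since $g$ is a fixed function valued in $[0,4H^2]$, $\E_{\widehat P_{s_h,a_h}}[g]$ is an empirical average of $n_{s_h,a_h}$ i.i.d.\ terms, so Bernstein's inequality together with a union bound over the $HSA$ triples $(h,s_h,a_h)$ yields, with probability $1-\delta$, $\E_{\widehat P_{s_h,a_h}}[g]-\E_{P_{s_h,a_h}}[g]\lesssim \sqrt{\mathrm{Var}_{P_{s_h,a_h}}(g)\,\iota/n_{s_h,a_h}}+H^2\iota/n_{s_h,a_h}\le 4H\sqrt{\mathrm{Var}_{P_{s_h,a_h}}(V)\,\iota/n_{s_h,a_h}}+cH^2\iota/n_{s_h,a_h}$ for an absolute constant $c$, where I used $\mathrm{Var}_{P_{s_h,a_h}}(g)\le 4H^2\,\mathrm{Var}_{P_{s_h,a_h}}(V)$. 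On the event $\mathcal{E}$ of Lemma~\ref{lem:sufficient_sample} (which, together with the Bernstein event, holds with probability $1-\delta$ after a union bound and rescaling of $\delta$), $n_{s_h,a_h}\ge n\,d^\mu_h(s_h,a_h)/2$, so this is at most $4\sqrt{2}\,H\sqrt{\mathrm{Var}_{P_{s_h,a_h}}(V)\,\iota/(n\,d^\mu_h(s_h,a_h))}+2c\,H^2\iota/(n\,d^\mu_h(s_h,a_h))$. When the lemma is later applied with $V=\widehat V_{h+1}$ one first conditions on the data from times $h+1,\dots,H$, under which $\widehat V_{h+1}$ is deterministic while $\widehat P_h$ uses only time-$h$ transitions, the same conditional-independence device as in Lemma~\ref{lem:bellman_diff_tight}.

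Finally I would plug this into $\mathrm{Var}_{\widehat P_{s_h,a_h}}(V)\le \mathrm{Var}_{P_{s_h,a_h}}(V)+4\sqrt2\,H\sqrt{\mathrm{Var}_{P_{s_h,a_h}}(V)}\,\sqrt{\iota/(n\,d^\mu_h(s_h,a_h))}+2c\,H^2\iota/(n\,d^\mu_h(s_h,a_h))$ and note that, tracking constants, the right-hand side is bounded by $\big(\sqrt{\mathrm{Var}_{P_{s_h,a_h}}(V)}+6H\sqrt{\iota/(n\,d^\mu_h(s_h,a_h))}\big)^2$: the term linear in $\sqrt{\mathrm{Var}_{P_{s_h,a_h}}(V)}$ and the pure $H^2\iota/(n\,d^\mu_h)$ term are each dominated by the corresponding piece of the expanded square ($4\sqrt2\le 12$ and $2c\le 36$ for a suitable Bernstein constant). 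Taking square roots gives the claim. The one genuinely delicate point — the main obstacle — is getting the \emph{rate} right: a direct Hoeffding bound on $|\mathrm{Var}_{\widehat P_{s_h,a_h}}(V)-\mathrm{Var}_{P_{s_h,a_h}}(V)|$ is only $O(H^2\sqrt{1/n})$, which after a square root degrades to the useless $O(H/n^{1/4})$; it is precisely the variational reduction to the fixed function $g$, combined with the Bernstein bound whose leading term carries the extra factor $\sqrt{\mathrm{Var}_{P_{s_h,a_h}}(V)}$, that enables the square completion and yields the advertised $O\big(H\sqrt{\iota/(n\,d^\mu_h)}\big)$ error on the standard deviations.
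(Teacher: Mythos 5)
Your proof is correct, but it takes a genuinely different route from the paper's. The paper disposes of this lemma in one line: it invokes Lemma~\ref{lem:sqrt_var_diff}, which is a direct citation of Theorem~10 of Maurer--Pontil and already gives a two-sided bound of order $H\sqrt{\log(2/\delta)/(n_{s_h,a_h}-1)}$ on the sample \emph{standard deviation} $\sqrt{\mathrm{Var}_{\widehat{P}}(V)}$ around $\sqrt{\tfrac{n-1}{n}\mathrm{Var}_{P}(V)}$; one then drops the factor $\tfrac{n-1}{n}\le 1$ and converts $n_{s_h,a_h}$ into $n\, d^\mu_h(s_h,a_h)$ via the Chernoff event of Lemma~\ref{lem:sufficient_sample}. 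You instead give a self-contained elementary argument: the variational bound $\mathrm{Var}_{\widehat{P}}(V)\le \E_{\widehat{P}}\left[(V-\E_{P}[V])^2\right]$ reduces the problem to the empirical mean of the \emph{fixed} function $g=(V-\E_{P}[V])^2\in[0,4H^2]$, whose variance is self-bounded by $4H^2\,\mathrm{Var}_{P}(V)$, so ordinary Bernstein yields a cross term of order $H\sqrt{\mathrm{Var}_{P}(V)\,\iota/n_{s_h,a_h}}$ that completes the square. Your constant accounting checks out (the cross term and the lower-order term are each dominated by the corresponding piece of $\bigl(\sqrt{\mathrm{Var}_{P}(V)}+6H\sqrt{\iota/(n d^\mu_h)}\bigr)^2$), your observations that $\mathcal{K}_h\subseteq\mathcal{C}_h$ almost surely and that the $\dagger$ kernels coincide with the original ones on $\mathcal{K}_h$ are right, and your remark on restoring conditional independence when the lemma is applied with $V=\widehat{V}_{h+1}$ matches how the paper handles the same issue. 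The trade-off is that your centering at $\E_{P}[V]$ only yields the one-sided inequality (which is all the lemma asserts), whereas the cited Maurer--Pontil result is two-sided; in exchange, your argument avoids using their self-bounding concentration theorem as a black box.
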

\begin{proof}

	This is a direct application of Lemma~\ref{lem:sqrt_var_diff} with a union bound. Specifically, we apply $\frac{n-1}{n}\leq 1$.
	
\end{proof}

Now by Lemma~\ref{lem:self_bound} and Lemma~\ref{lem:var_change}, for all $(s_h,a_h)\in\mathcal{K}_h$, w.p. $1-\delta$,
\begin{align*}
&\sqrt{\mathrm{Var}_{\widehat{P}^\dagger_{s_h,a_h}}(\widehat{r}^\dagger_h+\widehat{V}_{h+1})}\leq \sqrt{\mathrm{Var}_{{P}^\dagger_{s_h,a_h}}(\widehat{r}^\dagger_h+\widehat{V}_{h+1})}+6H \sqrt{\frac{\iota}{n\cdot d^\mu_h(s_h,a_h)}}\\
\leq &\sqrt{\mathrm{Var}_{{P}^\dagger_{s_h,a_h}}({r}^\dagger_h+{V}^{\dagger\pi^\star}_{h+1})}+\norm{(\widehat{r}^\dagger_h+\widehat{V}_{h+1})-({r}^\dagger_h+{V}^{\dagger\pi^\star}_{h+1})}_{\infty,s\in\mathcal{S}}+6H \sqrt{\frac{\iota}{n\cdot d^\mu_h(s_h,a_h)}}\\
\leq&\sqrt{\mathrm{Var}_{{P}^\dagger_{s_h,a_h}}({r}^\dagger_h+{V}^{\dagger\pi^\star}_{h+1})}+\frac{10\sqrt{2\iota}H^2}{\sqrt{n\cdot\bar{d}_m}} +\frac{112H^2\cdot \iota}{3n\cdot \bar{d}_m}+6H \sqrt{\frac{\iota}{n\cdot d^\mu_h(s_h,a_h)}}\\
\end{align*}
Therefore plug this into \eqref{eqn:expression_bound}, and average over $s_1$, we finally get, w.p. $1-\delta$,
	\begin{align*}
	&v^{\dagger\pi^\star}-v^{\dagger\widehat{\pi}}\leq 
	\sum_{h=1}^H\E^\dagger_{\pi^\star}\left[4\sqrt{\frac{2\mathrm{Var}_{\widehat{P}^\dagger_{s_h,a_h}}(\widehat{r}^\dagger_h+\widehat{V}_{h+1})\cdot\iota}{nd^\mu_h(s_h,a_h)}}+\frac{56H\cdot\iota}{3nd^\mu_h(s_h,a_h)}\mid s_1=s\right]\\
	\leq& C'\sum_{h=1}^H\E^\dagger_{\pi^\star}\left[\sqrt{\frac{\mathrm{Var}_{{P}^\dagger_{s_h,a_h}}({r}^\dagger_h+{V}^{\dagger\pi^\star}_{h+1})\cdot\iota}{nd^\mu_h(s_h,a_h)}}\right]+\widetilde{O}(\frac{H^3}{n\cdot\bar{d}_m})\\
	=&C'\sum_{h=1}^H\sum_{(s_h,a_h)\in\mathcal{K}_h}d^{\dagger\pi^\star}(s_h,a_h)\sqrt{\frac{\mathrm{Var}_{{P}^\dagger_{s_h,a_h}}({r}^\dagger_h+{V}^{\dagger\pi^\star}_{h+1})\cdot\iota}{nd^\mu_h(s_h,a_h)}}+\widetilde{O}(\frac{H^3}{n\cdot\bar{d}_m})\\
\end{align*}
here $\widetilde{O}$ absorbs log factor and even higher orders. 

Note throughout the section we assume $\widetilde{M}^\dagger={M}^\dagger$. Now be Lemma~\ref{lem:tilde_equal_non}, we can replace the $\mathcal{K}_h$ in above by $\mathcal{C}_h$ so the result holds in high probability.

Lastly, we end up with w.p. $1-\delta$
\begin{equation}\label{eqn:af_final}
\begin{aligned}
0\leq &v^{\pi^\star}-v^{\widehat{\pi}}\leq \sum_{h=2}^{H+1} d^{\dagger\pi^\star}_h(s^\dagger_h)+ v^{\dagger\pi^\star} -v^{\widehat{\pi}}\leq\sum_{h=2}^{H+1} d^{\dagger\pi^\star}_h(s^\dagger_h)+ v^{\dagger\pi^\star} -v^{\dagger\widehat{\pi}} \\
\leq&\sum_{h=2}^{H+1} d^{\dagger\pi^\star}_h(s^\dagger_h)+C'\sum_{h=1}^H\sum_{(s_h,a_h)\in\mathcal{C}_h}d^{\dagger\pi^\star}_h(s_h,a_h)\sqrt{\frac{\mathrm{Var}_{{P}^\dagger_{s_h,a_h}}({r}^\dagger_h+{V}^{\dagger\pi^\star}_{h+1})\cdot\iota}{nd^\mu_h(s_h,a_h)}}+\widetilde{O}(\frac{H^3}{n\cdot\bar{d}_m})\\
\end{aligned}
\end{equation}
where the first inequality uses Lemma~\ref{thm:pess_discrepancy} with $\pi=\pi^\star$ and the second one uses Lemma~\ref{thm:pess_discrepancy} with $\pi=\widehat{\pi}$. This concludes the proof of Theorem~\ref{thm:AFRL}. The rest of the results are coming from Lemma~\ref{lem:geq_dagger},\ref{lem:d_s_dagger}.

\begin{remark}
	We mention the summation of the main term in \eqref{eqn:af_final} does not include $s_h^\dagger$ since $V^{\dagger\pi}_{h}(s^\dagger_h)=0$ for any $\pi$ due to the pessimistic MDP design. In particular, this state contributes nothing to neither $v^{\dagger\pi^\star}$ nor $v^{\dagger\widehat{\pi}}$.
\end{remark}

\subsection{Interpretation of Theorem~\ref{thm:AFRL}}\label{sec:inter} 

The constant (in $n$) gap, which is incurred by the behavior agnostic space $\bigcup_{h=1}^H \{(s_h,a_h): d^\mu_h(s_h,a_h)=0\}$, is bounded by 
\[
\sum_{h=2}^{H+1}d^{\dagger\pi^\star}_h(s^\dagger_h)=\sum_{h=2}^{H+1}\sum_{t=1}^{h-1}\sum_{(s_t,a_t)\in\mathcal{S}\times\mathcal{A}\backslash\mathcal{C}_t}d^{\dagger\pi^\star}_t(s_t,a_t)\leq \sum_{h=2}^{H+1}\sum_{t=1}^{h-1}\sum_{(s_t,a_t)\in\mathcal{S}\times\mathcal{A}\backslash\mathcal{C}_t}d^{\pi^\star}_t(s_t,a_t),
\]
Note for quantity $d^{\dagger\pi^\star}_t(s_t,a_t)$ (where $(s_t,a_t)\in\mathcal{S}\times\mathcal{A}\backslash \mathcal{C}_t$), it is equivalently defined as 
\[
d^{\dagger\pi^\star}_t(s_t,a_t)=\P_{M^\dagger}\left[ S_t,A_t=s_t,a_t\middle| (S_{t-1},A_{t-1})\in\mathcal{C}_{t-1},\ldots,(S_{1},A_{1})\in\mathcal{C}_{1} \right]
\]
is probability for the first time the trajectory exits the reachable regions and enters $(s_t,a_t)\notin\mathcal{C}_t$. Therefore, $d^{\dagger\pi^\star}_t(s_t,a_t)$ is much smaller than $d^{\pi^\star}_t(s_t,a_t)$ for $s_t,a_t\notin\mathcal{C}_h$ (since $d^{\pi^\star}_t(s_t,a_t)$ includes the probability that trajectory $s_t,a_t$). Such a feature is reflected by the quantity that express the gap using the mass of the absorbing state: $\sum_{h=2}^{H+1}d^{\dagger\pi^\star}_h(s^\dagger_h)(=\sum_{h=2}^{H+1}\sum_{t=1}^{h-1}\sum_{(s_t,a_t)\in\mathcal{S}\times\mathcal{A}\backslash\mathcal{C}_t}d^{\dagger\pi^\star}_t(s_t,a_t))$. Especially, this gap can vary between $0$ and $H$, depending on the exploratory ability of $\mu$. Also, different from AVPI, the \emph{assumption-free} AVPI set $0$ penalty at locations where $n_{s_t,a_t}=0$. The interpretation is: the locations with $n_{s_t,a_t}=0$ in $M^\dagger$ are the fully aware locations (with deterministic transition to $s^\dagger$ and reward $0$ by design) therefore we are certain about the behaviors in those places.

\section{Proof of Theorem~\ref{thm:APVI}}\label{sec:proof_APVI}
Indeed, Theorem~\ref{thm:APVI} can be implied by Theorem~\ref{thm:AFRL} as a special case.
\begin{proof}[Proof of Theorem~\ref{thm:APVI}]
Under Assumption~\ref{assum:single_concen}, $d^{\pi^\star}_h(s_h,a_h)=0$ if $d^\mu_h(s_h,a_h)=0$. In this case, 
\begin{align*}
0\leq& \sum_{h=2}^{H+1} d^{\dagger\pi^\star}_h(s^\dagger_h)=\sum_{h=2}^{H+1}\sum_{t=1}^{h-1}\sum_{(s_t,a_t)\in\mathcal{S}\times\mathcal{A}\backslash\mathcal{C}_t}d^{\dagger\pi^\star}_t(s_t,a_t)\leq \sum_{h=2}^{H+1}\sum_{t=1}^{h-1}\sum_{(s_t,a_t)\in\mathcal{S}\times\mathcal{A}\backslash\mathcal{C}_t}d^{\pi^\star}_t(s_t,a_t)\\
=&\sum_{h=2}^{H+1}\sum_{t=1}^{h-1}\sum_{(s_t,a_t):d^\mu_t(s_t,a_t)=0}d^{\pi^\star}_t(s_t,a_t)=0
\end{align*}
due to Lemma~\ref{lem:geq_dagger},\ref{lem:d_s_dagger}. Therefore, the gap $\sum_{h=1}^Hd^{\dagger\pi^\star}_h(s^\dagger_h)$ vanishes when Assumption~\ref{assum:single_concen} is true. Also, in this case $M^\dagger$ can be replaced by a $M'$, where $M'$ is the sub-MDP induced by $\mu$. \emph{i.e.}, $M'=\bigcup_{h=1}^H \mathcal{S}_h\times\mathcal{A}_h$ with $\mathcal{S}_h\times\mathcal{A}_h=\mathcal{C}_h$.\footnote{In this sub-MDP, each state might have different number of actions!} The transitions and the rewards remain the same in $M^\dagger$.

Since there is certain $\pi^\star$ that is fully covered by $\mu$, for such $\pi^\star$ we have $V_h^{\pi^\star}|_{M}=V_h^{\pi^\star}|_{M'}$ for all $h\in[H]$. Also, in $M'$, $\mu$ can explore all the locations, therefore the probability transition to $s^\dagger_h$ is $0$. Hence, all the $d^\dagger,P^\dagger,r^\dagger,V^\dagger$ in Theorem~\ref{thm:APVI} are replaced by its original version.
	
\end{proof}

\begin{remark}
	Note even though the proof can essentially leverage the reduction of the proving procedure of Theorem~\ref{thm:AFRL}, for clear presentation of the algorithm design we still include the locations with no observation and set the severe penalty $\tilde{O}(H)$. This is different from its assumption-free version with $0$ penalty (also see Section~\ref{sec:inter} for related discussions).
\end{remark}

\section{Proof of Theorem~\ref{thm:adaptive_lower_bound}: Instance-dependent Lower Bound}\label{sec:proof_lower_bound}

Global minimax lower bound holds uniformly over large classes of models but lacks the characterization of individual instances. The more appropriate characterization of instance dependence is the (non-asymptotic) local minimax bound, which is originated from the local minimax framework \cite{cai2004adaptation} and recently used in \cite{khamaru2020temporal,khamaru2021instance}. The proof essentially relies on the reduction to the testing between two value instances with respect to the Hellinger distance. Specifically, the choice of the alternative instance should characterize the MDP problem we are considering and we fix the MDP problem (together with the behavior policy $\mu$) as: $\mathcal{P}:=(\mu,M)$ where $M=(\mathcal{S}, \mathcal{A}, P, r, H, d_1)$. Recall the local risk is defined as:
\[
\mathfrak{R}_{n}(\mathcal{P}):=\sup_{\mathcal{P}'\in\mathcal{G}}\inf_{\widehat{\pi}}\max_{\mathcal{Q}\in\{\mathcal{P},\mathcal{P}'\}}\sqrt{n}\cdot\E_{\mathcal{Q}}\left[v^\star(\mathcal{Q})-v^{\widehat{\pi}}\right]
\]
and $\mathcal{G}:=\{(\mu,M): \exists\; \pi^\star\;s.t. \; d^\mu_h(s,a)>0\;\text{if}\;d^{\pi^\star}_h(s,a)>0\}$.

For the ease of exposition, we use the notation $\mathcal{P}=(\mu,P_{1:H},r)$ instead of $(\mu,M)$. We start by considering the following two classes of alternatives instances:
\begin{equation}\label{eqn:S1S2}
\mathcal{S}_1=\{\mathcal{P}'=(\mu',P_{1:H}',r')\mid \mu'=\mu,r'=r,\mathcal{P}'\in\mathcal{G}\},\quad \mathcal{S}_2=\{\mathcal{P}'=(\mu',P_{1:H}',r')\mid \mu'=\mu,P_{1:H}'=P_{1:H},\mathcal{P}'\in\mathcal{G}\}.
\end{equation}
and define the restricted local risks w.r.t. $\mathcal{S}_i$:
\begin{equation}
\mathfrak{R}_{n}(\mathcal{P},\mathcal{S}_i):=\sup_{\mathcal{P}'\in\mathcal{S}_i}\inf_{\widehat{\pi}}\max_{\mathcal{Q}\in\{\mathcal{P},\mathcal{P}'\}}\sqrt{n}\cdot\E_{\mathcal{Q}}\left[v^\star(\mathcal{Q})-v^{\widehat{\pi}}\right],\quad i=1,2.
\end{equation}
Then it suffices to prove the following lemma:
\begin{lemma}\label{lem:lower_key}
	There exists an universal constant $C>0$ such that:
	\begin{align*}
	\mathfrak{R}_{n}(\mathcal{P},\mathcal{S}_1)&\geq C\cdot {\sum_{h=1}^H\sum_{(s_h,a_h)\in\mathcal{C}_h}d^{\pi^\star}_h(s_h,a_h)\cdot\sqrt{\frac{\mathrm{Var}_{P_{s_h,a_h}}(V^\star_{h+1})}{ \zeta\cdot d^\mu_h{(s_h,a_h)}}}},\\
	\mathfrak{R}_{n}(\mathcal{P},\mathcal{S}_2)&\geq C\cdot {\sum_{h=1}^H\sum_{(s_h,a_h)\in\mathcal{C}_h}d^{\pi^\star}_h(s_h,a_h)\cdot\sqrt{\frac{\mathrm{Var}_{s_h,a_h}(r_h)}{ \zeta\cdot d^\mu_h{(s_h,a_h)}}}}.\\
	\end{align*}
\end{lemma}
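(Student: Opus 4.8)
The plan is to prove the two displayed inequalities by a \emph{localized two-point (Le Cam) reduction}, treating $\mathcal{S}_1$ (transition perturbations) and $\mathcal{S}_2$ (reward perturbations) with one template. For the $\mathcal{S}_1$ bound I will exhibit a single alternative $\mathcal{P}'=(\mu,\{P_h'\},r)\in\mathcal{S}_1$ obtained from $\mathcal{P}$ by the baseline-subtracted multiplicative perturbation
\[
P_h'(s_{h+1}\mid s_h,a_h)=P_h(s_{h+1}\mid s_h,a_h)\Bigl(1\pm\tfrac18\cdot\frac{V^\star_{h+1}(s_{h+1})-\E_{P_{s_h,a_h}}[V^\star_{h+1}]}{\sqrt{\zeta\,\widetilde n_{s_h,a_h}\,\Var_{P_{s_h,a_h}}(V^\star_{h+1})}}\Bigr),
\]
applied at the state-actions visited by $\pi^\star$, where $\widetilde n_{s_h,a_h}:=\lceil\tfrac32 n\,d^\mu_h(s_h,a_h)\rceil$ is a deterministic surrogate for the random count (one works on the event of Lemma~\ref{lem:sufficient_sample}, so $\widetilde n_{s_h,a_h}\asymp n_{s_h,a_h}$ with probability $\ge1-\delta$). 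First I would verify that $P_h'$ is a genuine transition kernel: each row sums to $1$ because $V^\star_{h+1}-\E_{P_{s_h,a_h}}[V^\star_{h+1}]$ has zero $P_{s_h,a_h}$-mean, and the entries stay nonnegative as soon as the bracket is bounded in absolute value by a constant strictly below $1$ — this is exactly the role of the parameter $\xi$ and of the sample-size hypothesis $n\gtrsim H^6\xi^4/(\cdot)^2$, since $\xi$ controls the worst-case magnitude of the correction term. I would also check $\mathcal{P}'\in\mathcal{G}$: the perturbation is multiplicative in $P_h$, so the support of $P_h'(\cdot\mid s_h,a_h)$ equals that of $P_h(\cdot\mid s_h,a_h)$, hence the reachable sets and the coverage witness are unchanged.

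Next I would control the information cost. Since $\mathcal{D}$ consists of $n$ i.i.d.\ episodes from $\mu$, the divergence tensorizes: $\mathrm{KL}(\P_{\mathcal{P}}\Vert\P_{\mathcal{P}'})=n\sum_h\sum_{(s_h,a_h)}d^\mu_h(s_h,a_h)\,\mathrm{KL}(P_h(\cdot\mid s_h,a_h)\Vert P_h'(\cdot\mid s_h,a_h))$, and the $\chi^2$-bound on each factor gives $\mathrm{KL}(P_h(\cdot\mid s_h,a_h)\Vert P_h'(\cdot\mid s_h,a_h))\lesssim\sum_{s_{h+1}}(P_h-P_h')^2/P_h=\tfrac{1}{64\,\zeta\,\widetilde n_{s_h,a_h}}$, the variance in the numerator cancelling the one in the denominator of the construction. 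Plugging in $\widetilde n_{s_h,a_h}\asymp n\,d^\mu_h(s_h,a_h)$ the $d^\mu_h$'s cancel and $\mathrm{KL}(\P_{\mathcal{P}}\Vert\P_{\mathcal{P}'})\lesssim\tfrac1\zeta\sum_h\#\{\text{perturbed pairs at step }h\}\le\tfrac1\zeta\sum_h\tfrac1{\bar d_m}=\tfrac{H}{\zeta\bar d_m}=O(1)$, using $\#\{\text{perturbed pairs at }h\}\le|\mathcal{C}_h|\le1/\bar d_m$ (from $|\mathcal{C}_h|\cdot\bar d_m\le\sum_{(s_h,a_h)\in\mathcal{C}_h}d^\mu_h(s_h,a_h)\le1$). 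The normalization by $\zeta=H/\bar d_m$ is precisely what pins the total divergence at a constant, so the Hellinger affinity (equivalently $1-\TV(\P_{\mathcal{P}},\P_{\mathcal{P}'})$) is bounded below by a universal $p>0$; after rescaling $c$ of the perturbation one chooses this $p$.

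The last and most delicate step is the \emph{value separation}: I must show $\inf_{\pi}\bigl[(v^\star(\mathcal{P})-v^\pi(\mathcal{P}))+(v^\star(\mathcal{P}')-v^\pi(\mathcal{P}'))\bigr]\gtrsim\tfrac1{\sqrt n}\sum_h\sum_{(s_h,a_h)\in\mathcal{C}_h}d^{\pi^\star}_h(s_h,a_h)\sqrt{\Var_{P_{s_h,a_h}}(V^\star_{h+1})/(\zeta\,d^\mu_h(s_h,a_h))}$, after which the standard two-point inequality $\mathfrak{R}_n(\mathcal{P},\mathcal{S}_1)\ge\sqrt n\cdot\tfrac12\inf_\pi[\cdots]\cdot(1-\TV)$ finishes the $\mathcal{S}_1$ part. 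The starting point is the value-difference lemma of \cite{cai2020provably}: thanks to the baseline subtraction, $\bigl((P_h-P_h')(\cdot\mid s_h,a_h)\bigr)\cdot V^\star_{h+1}=\mp\tfrac18\sqrt{\Var_{P_{s_h,a_h}}(V^\star_{h+1})/(\zeta\,\widetilde n_{s_h,a_h})}$ \emph{exactly}, so to first order $v^{\pi^\star}(\mathcal{P}')-v^{\pi^\star}(\mathcal{P})$ is (up to sign) the target sum. The hard part — and the one I expect to be the real obstacle — is to route this first-order value change into a genuine \emph{policy}-suboptimality gap, i.e.\ to choose the sign and support of the perturbation so that any policy near-optimal for $\mathcal{P}$ is forced to be $\Omega(\text{target}/\sqrt n)$-suboptimal for $\mathcal{P}'$, while certifying that the second-order corrections (from $V^\star$ itself moving under $\mathcal{P}'$, and from the occupancy shift $d^{\pi^\star}_{\mathcal{P}'}$ vs.\ $d^{\pi^\star}_{\mathcal{P}}$, each of order $H^2/\sqrt{n\bar d_m}$ per step) are dominated; this is exactly where the $H^6\xi^4$ factor in the sample-size hypothesis is consumed, and where the crude self-bounding control $\|\widehat V-V^\star\|_\infty\lesssim H^2/\sqrt{n\bar d_m}$ style estimates are imported.

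The $\mathcal{S}_2$ bound follows from the same three steps with the reward distribution at the $\pi^\star$-visited pairs shifted in mean by $\pm\tfrac18\sqrt{\Var_{s_h,a_h}(r_h)/(\zeta\,\widetilde n_{s_h,a_h})}$ (a constant fraction of its own standard deviation): the KL bookkeeping is identical, and the value separation is cleaner since reward perturbations change $v^{\pi^\star}$ additively through $d^{\pi^\star}_h$ without propagating through the transitions, yielding $\gtrsim\tfrac1{\sqrt n}\sum_h\sum_{(s_h,a_h)}d^{\pi^\star}_h(s_h,a_h)\sqrt{\Var_{s_h,a_h}(r_h)/(\zeta\,d^\mu_h(s_h,a_h))}$. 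This proves Lemma~\ref{lem:lower_key}; for \eqref{eqn:lower} of Theorem~\ref{thm:adaptive_lower_bound} one then combines the two via $\mathfrak{R}_n(\mathcal{P})\ge\tfrac12(\mathfrak{R}_n(\mathcal{P},\mathcal{S}_1)+\mathfrak{R}_n(\mathcal{P},\mathcal{S}_2))$ and $\sqrt a+\sqrt b\gtrsim\sqrt{a+b}$, using $\Var_{P_{s_h,a_h}}(r_h+V^\star_{h+1})=\Var_{s_h,a_h}(r_h)+\Var_{P_{s_h,a_h}}(V^\star_{h+1})$ from the conditional independence of $r_h$ and $s_{h+1}$ given $(s_h,a_h)$.
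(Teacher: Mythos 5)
Your construction of the alternative instance and the information-cost bookkeeping coincide with the paper's: Appendix~\ref{sec:proof_lower_bound} uses exactly the baseline-subtracted multiplicative perturbation $P'_h=P_h(1+\Delta_h)$ with $\Delta_h\propto (V^\star_{h+1}-\E_{P_{s_h,a_h}}[V^\star_{h+1}])/\sqrt{\zeta\, n_{s_h,a_h}\Var_{P_{s_h,a_h}}(V^\star_{h+1})}$, checks validity and membership in $\mathcal{G}$, and bounds the per-coordinate squared Hellinger distance by $1/(2nH)$ before tensorizing (Lemma~\ref{lem:local_instance}). The genuine gap is in your final step, which you yourself flag as ``the real obstacle'' and then leave unproved: you cast the two-point bound in the policy-separation form $\inf_{\pi}\bigl[(v^\star(\mathcal{P})-v^{\pi}(\mathcal{P}))+(v^\star(\mathcal{P}')-v^{\pi}(\mathcal{P}'))\bigr]$ and propose to choose signs and supports so that no single policy is near-optimal for both instances. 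For this particular perturbation that route is not merely unfinished but likely unworkable: the perturbation acts at \emph{every} $(s_h,a_h)$ visited by $\pi^\star$ and shifts mass toward high-$V^\star_{h+1}$ successors uniformly, so it raises values without necessarily flipping any argmax; $\pi^\star$ can remain optimal under $\mathcal{P}'$, in which case the infimum above is $0$ and the policy-separation bound is vacuous.

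The paper sidesteps this with a different reduction (Lemma~\ref{lem:risk_bound}): applying Markov's inequality with threshold $\delta=\tfrac12|v^\star(\mathcal{P})-v^\star(\mathcal{P}')|$ and the triangle inequality gives $\mathfrak{R}_n(\mathcal{P},\mathcal{S}_i)\gtrsim \sqrt{n}\,\delta\,(1-\sqrt{2}\,d_{Hel}(\mathcal{P}^n,\mathcal{P}'^n))$, so it suffices to separate the \emph{optimal values} of the two instances rather than to force any policy to be suboptimal for one of them. Lemma~\ref{lem:risk_bound_second} then establishes $|v^\star(\mathcal{P})-v^\star(\mathcal{P}')|\gtrsim\sum_h\sum_{s_h,a_h}d^{\pi^\star}_h(s_h,a_h)\sqrt{\Var_{P_{s_h,a_h}}(V^\star_{h+1})/(\zeta\, n\, d^\mu_h(s_h,a_h))}$ by a two-sided recursion comparing $Q^\star_h$ and $\bar{Q}^\star_h$: the first-order term $\sum_h d^{\pi^\star}_{1:h-1}(P'_h-P_h)V^\star_{h+1}$ equals the target sum with every summand nonnegative by construction, while the cross terms $\sum_h d^{\pi^\star}_{1:h-1}|P'_h-P_h|\,|\bar{Q}^\star_{h+1}-Q^\star_{h+1}|$ are of order $H^3\xi^2/n$ and are absorbed precisely by the hypothesis $n\gtrsim H^6\xi^4/(\text{target})^2$ --- this is where that sample-size condition is consumed, not in a policy-separation argument. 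If you replace your last step with this value-estimation reduction, the remainder of your outline (including the $\mathcal{S}_2$ case via a mean shift of the reward by a constant fraction of its standard deviation) goes through.
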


Given Lemma~\ref{lem:lower_key}, we can directly prove Theorem~\ref{thm:adaptive_lower_bound} as follows.

\begin{proof}[Proof of Theorem~\ref{thm:adaptive_lower_bound}]
	Given Lemma~\ref{lem:lower_key}, we directly have 
	\begin{align*}
	\mathfrak{R}_{n}(\mathcal{P})&\geq \max\{\mathfrak{R}_{n}(\mathcal{P},\mathcal{S}_1),\mathfrak{R}_{n}(\mathcal{P},\mathcal{S}_2)\}\\
	&\geq \frac{1}{2}\left(\mathfrak{R}_{n}(\mathcal{P},\mathcal{S}_1)+\mathfrak{R}_{n}(\mathcal{P},\mathcal{S}_2)\right)\\
	&\geq \frac{C}{2}\cdot \sum_{h=1}^H\sum_{(s_h,a_h)\in\mathcal{C}_h}d^{\pi^\star}_h(s_h,a_h)\cdot\left(\sqrt{\frac{\mathrm{Var}_{P_{s_h,a_h}}(V^\star_{h+1})}{\zeta\cdot  d^\mu_h{(s_h,a_h)}}}+\sqrt{\frac{\mathrm{Var}_{{s_h,a_h}}(r_{h})}{ \zeta\cdot d^\mu_h{(s_h,a_h)}}}\right)\\
	&\geq \frac{C}{2}\cdot \sum_{h=1}^H\sum_{(s_h,a_h)\in\mathcal{C}_h}d^{\pi^\star}_h(s_h,a_h)\cdot\sqrt{\frac{\mathrm{Var}_{P_{s_h,a_h}}(r_h+V^\star_{h+1})}{\zeta\cdot  d^\mu_h{(s_h,a_h)}}}\\
	\end{align*} 
	where the last inequality uses $\sqrt{a}+\sqrt{b}\geq \sqrt{a+b}$ for all $a,b\geq 0$ and $\mathrm{Var}_{P_{s_h,a_h}}(V^\star_{h+1})+\mathrm{Var}_{{s_h,a_h}}(r_h)=\mathrm{Var}_{P_{s_h,a_h}}(r_h+V^\star_{h+1})$ since $V^\star_{h+1}$ and $r_h$ are conditionally independent given $s_h,a_h$.
\end{proof}

For the rest of the section, we prove Lemma~\ref{lem:lower_key}.

\subsection{Reduction to two-point optimal-value estimations}

We first need the following lemma, which converts local learning risk to the following $\mathfrak{M}_n$ via the reduction from estimation to testing.

\begin{lemma}\label{lem:risk_bound}
	Define 
	\begin{align*}
	\mathfrak{M}_n(\mathcal{P},\mathcal{S}_1):&=\sup_{\mathcal{P}'\in\mathcal{S}_1}\left\{\sqrt{n}\cdot |v^\star(\mathcal{P})-v^\star(\mathcal{P}')|\middle| d_{Hel}(P^n,P'^n)^2\leq 0.4\right\}\\
	\mathfrak{M}_n(\mathcal{P},\mathcal{S}_2):&=\sup_{\mathcal{P}'\in\mathcal{S}_2}\left\{\sqrt{n}\cdot |v^\star(\mathcal{P})-v^\star(\mathcal{P}')|\middle| d_{Hel}(p_r^n,p'^n_r)\leq 0.4\right\}.
	\end{align*}
	then we have 
	\[
	\mathfrak{R}_n(\mathcal{P},\mathcal{S}_i)\geq \frac{1}{50}\mathfrak{M}_n(\mathcal{P},\mathcal{S}_i), \quad i=1,2.
	\]
\end{lemma}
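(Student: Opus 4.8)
The plan is to prove Lemma~\ref{lem:risk_bound} by the standard Le Cam two-point (estimation-to-testing) reduction, handling $i=1$ and $i=2$ in parallel: the only difference is whether the alternative perturbs the transition kernels (so the Hellinger separation is measured on the transition data $P^n$ versus $P'^n$) or the reward distributions (measured on $p_r^n$ versus $p_r'^n$), which is also what makes the two variance terms $\Var_{P_{s_h,a_h}}(V^\star_{h+1})$ and $\Var_{s_h,a_h}(r_h)$ appear in Lemma~\ref{lem:lower_key}. First I would fix an alternative $\mathcal{P}'\in\mathcal{S}_i$ that (nearly) attains the supremum defining $\mathfrak{M}_n(\mathcal{P},\mathcal{S}_i)$, so that the corresponding $n$-episode data laws satisfy the stated Hellinger constraint and $\Delta:=|v^\star(\mathcal{P})-v^\star(\mathcal{P}')|$ equals the value gap; without loss of generality $v^\star(\mathcal{P})\le v^\star(\mathcal{P}')$. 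Since $\mathfrak{R}_n(\mathcal{P},\mathcal{S}_i)=\sup_{\mathcal{P}'\in\mathcal{S}_i}\inf_{\widehat{\pi}}\max_{\mathcal{Q}\in\{\mathcal{P},\mathcal{P}'\}}\sqrt{n}\,\E_{\mathcal{Q}}[v^\star(\mathcal{Q})-v^{\widehat{\pi}}]$, it suffices to show, for this fixed pair, that $\inf_{\widehat{\pi}}\max_{\mathcal{Q}\in\{\mathcal{P},\mathcal{P}'\}}\E_{\mathcal{Q}}[v^\star(\mathcal{Q})-v^{\widehat{\pi}}]\ge \tfrac1{50}\Delta$ and then take the supremum over $\mathcal{P}'$.

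The heart of the argument is a \emph{separation} claim: for every (data-dependent) policy $\pi$, writing $v^{\pi}_{\mathcal{Q}}$ for the value of $\pi$ under instance $\mathcal{Q}$, one has $\max\{\,v^\star(\mathcal{P})-v^{\pi}_{\mathcal{P}}\,,\ v^\star(\mathcal{P}')-v^{\pi}_{\mathcal{P}'}\,\}\ge c\,\Delta$ for a universal constant $c\in(0,1)$. This is exactly where the precise form of the alternative is used: because $\mathcal{P}'$ is obtained from $\mathcal{P}$ by the baseline-subtracted perturbation in the direction $V^\star_{h+1}(\cdot)-\E_{P_{s_h,a_h}}[V^\star_{h+1}]$, the instance $\mathcal{P}'$ stays valid and remains in $\mathcal{G}$, its optimal value is shifted by exactly $\Delta$, and the optimal \emph{action} is flipped precisely at the state--actions responsible for that shift; hence a policy that is $(c\Delta)$-optimal for $\mathcal{P}$ is forced to keep the "wrong" action there under $\mathcal{P}'$ and pays a suboptimality gap $\ge c\Delta$, and symmetrically. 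I expect quantitatively verifying this separation constant for the construction — showing the perturbation is simultaneously large enough in optimal value yet small enough that no policy can be near-optimal on both instances — to be the main obstacle; the Hellinger/TV smallness itself is only an assumed hypothesis here (its derivation, and the $\zeta$-loss, live in Lemma~\ref{lem:lower_key}).

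Granting the separation claim, I would finish with a Markov-plus-Le-Cam step. Let $A=\{v^\star(\mathcal{P})-v^{\widehat{\pi}}_{\mathcal{P}}<c\Delta\}$, a function of the data only. If $\E_{\mathcal{P}}[v^\star(\mathcal{P})-v^{\widehat{\pi}}_{\mathcal{P}}]\ge \tfrac1{50}\Delta$ we are done; otherwise Markov's inequality gives $\P_{\mathcal{P}}(A^c)$ small, so $\P_{\mathcal{P}}(A)\ge 1-\tfrac{1}{50c}$. By separation, on $A$ we have $v^\star(\mathcal{P}')-v^{\widehat{\pi}}_{\mathcal{P}'}\ge c\Delta$, and since $A$ depends only on the data, $\P_{\mathcal{P}'}(A)\ge \P_{\mathcal{P}}(A)-d_{TV}(P^n,P'^n)$; using $d_{TV}\le \sqrt{d_{Hel}^2(2-d_{Hel}^2)}\le\sqrt{0.64}=0.8$ (for $i=1$; the analogous bound on $p_r^n$ for $i=2$), this is a positive constant, so $\E_{\mathcal{P}'}[v^\star(\mathcal{P}')-v^{\widehat{\pi}}_{\mathcal{P}'}]\ge c\Delta\cdot\P_{\mathcal{P}'}(A)\gtrsim c\Delta$. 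Balancing the two cases in $c$ and collecting constants yields the factor $\tfrac1{50}$, and taking the supremum over admissible $\mathcal{P}'$ gives $\mathfrak{R}_n(\mathcal{P},\mathcal{S}_i)\ge\tfrac1{50}\mathfrak{M}_n(\mathcal{P},\mathcal{S}_i)$ for $i=1,2$.
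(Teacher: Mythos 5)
Your overall skeleton (Le Cam two-point reduction, Markov's inequality, converting the Hellinger constraint into a total-variation bound) matches the paper's, but the step you yourself flag as the main obstacle --- the separation claim that every policy $\pi$ satisfies $\max\{v^\star(\mathcal{P})-v^{\pi}_{\mathcal{P}},\,v^\star(\mathcal{P}')-v^{\pi}_{\mathcal{P}'}\}\ge c\,\Delta$ --- is a genuine gap, and not one you can close the way you propose. Lemma~\ref{lem:risk_bound} quantifies over \emph{all} $\mathcal{P}'\in\mathcal{S}_i$ obeying the Hellinger constraint (that is what the supremum in $\mathfrak{M}_n$ ranges over), so the near-maximizer you fix need not be the baseline-subtracted perturbation of \eqref{eqn:definition_of_P_prime}; that construction only enters later, in Lemma~\ref{lem:risk_bound_second}. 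For a generic admissible pair the separation claim is simply false: two instances in $\mathcal{S}_2$ can differ only in that the mean reward of the already-optimal action is raised by $\Delta$, in which case $|v^\star(\mathcal{P})-v^\star(\mathcal{P}')|=\Delta$ yet the common optimal policy has zero suboptimality under both instances, so no universal constant $c>0$ exists. Your Markov-plus-TV-transfer endgame therefore has nothing to stand on at the level of generality this lemma requires.

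The paper avoids the issue by not arguing about policies at all at this stage: it treats $v^{\hat{\pi}}$ as a single data-measurable scalar and reduces to two-point \emph{estimation of the optimal value} (hence the subsection title). Choosing $\delta=\tfrac{1}{2}|v^\star(\mathcal{P})-v^\star(\mathcal{P}')|$, the triangle inequality gives that $|v^\star(\mathcal{P})-v^{\hat{\pi}}|<\delta$ forces $|v^\star(\mathcal{P}')-v^{\hat{\pi}}|\ge\delta$, so the two error events cover the sample space and
\[
\mathcal{P}^n\left(|v^\star(\mathcal{P})-v^{\hat{\pi}}|\ge\delta\right)+\mathcal{P}'^n\left(|v^\star(\mathcal{P}')-v^{\hat{\pi}}|\ge\delta\right)\ge 1-\norm{\mathcal{P}^n-\mathcal{P}'^n}_{\mathrm{TV}}\ge 1-\sqrt{2}\, d_{Hel}(\mathcal{P}^n,\mathcal{P}'^n),
\]
after which the Hellinger constraint and the factor $\tfrac{1}{2}\delta$ yield the constant $1/50$. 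If you want to keep your policy-level formulation, you would have to either adopt this scalar-estimation reading of $v^{\hat{\pi}}$, or prove the separation property as a feature of the specific alternative constructed in Lemma~\ref{lem:risk_bound_second} and fold it into the definition of $\mathfrak{M}_n$; as written, your proof of the present lemma does not go through.
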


\begin{proof}[Proof of Lemma~\ref{lem:risk_bound}]
	Indeed, denote $\mathcal{P}^n$ to be a product measure induced by $n$ trajectories from $\mathcal{P}$, then for any output $\hat{\pi}$ by the averaged risk we have:
	\begin{align*}
		\max_{\mathcal{Q}\in\{\mathcal{P},\mathcal{P}'\}}\mathbb{E}_\mathcal{Q}\left[|v^\star(\mathcal{Q})-v^{\hat{\pi}}|\right]&\geq \frac{1}{2}\left(\mathbb{E}_{\mathcal{P}^n}\left[|v^\star(\mathcal{P})-v^{\hat{\pi}}|\right]+\mathbb{E}_{\mathcal{P}^{'n}}\left[|v^\star(\mathcal{P}')-v^{\hat{\pi}}|\right]\right)\\
		&\geq \frac{1}{2}\delta\left[\mathcal{P}^n\left(|v^\star(\mathcal{P})-v^{\hat{\pi}}|\geq\delta\right)+\mathcal{P}'^n\left(|v^\star(\mathcal{P}')-v^{\hat{\pi}}|\geq\delta\right)\right],
	\end{align*}
	where the last inequality is by Markov inequality. Now choose $\delta = \frac{1}{2}\cdot |v^\star(\mathcal{P})-v^\star(\mathcal{P}')|$, we have $|v^\star(\mathcal{P})-v^{\hat{\pi}}|\leq \delta$ implies $|v^\star(\mathcal{P}')-v^{\hat{\pi}}|\geq \delta$, therefore above 
	\begin{equation}\label{eqn:Hel_final}
	\begin{aligned}
	&= \frac{1}{2}\delta\cdot\left[1-\mathcal{P}^n\left(|v^\star(\mathcal{P})-v^{\hat{\pi}}|<\delta\right)+\mathcal{P}'^n\left(|v^\star(\mathcal{P}')-v^{\hat{\pi}}|\geq\delta\right)\right]\\
	&\geq \frac{1}{2}\delta\cdot\left[1-\mathcal{P}^n\left(|v^\star(\mathcal{P}')-v^{\hat{\pi}}|\geq\delta\right)+\mathcal{P}'^n\left(|v^\star(\mathcal{P}')-v^{\hat{\pi}}|\geq\delta\right)\right]\\
	&\geq \frac{1}{2}\delta\cdot \left[1-\norm{\mathcal{P}^n-\mathcal{P}'^n}_{\mathrm{TV}}\right]\geq \frac{1}{2}\delta\cdot\left[1-\sqrt{2} \cdot d_{Hel}(\mathcal{P}^n,\mathcal{P}'^n)\right]
	\end{aligned}
	\end{equation}
	
	Then plug in the condition for $d_{Hel}(\mathcal{P}^n,\mathcal{P}'^n)$ we obtain the result. The proof for the second result is similar.
	
\end{proof}

\subsection{Instance-dependent lower bound}

Now we complete the proof by the following lemma. Combing Lemma~\ref{lem:risk_bound_second} and Lemma~\ref{lem:risk_bound}, we finish the proof of Lemma~\ref{lem:lower_key}.
\begin{lemma}\label{lem:risk_bound_second}
There exists an universal constant $C>0$ such that:
\begin{align*}
\mathfrak{M}_{n}(\mathcal{P},\mathcal{S}_1)&\geq C\cdot {\sum_{h=1}^H\sum_{(s_h,a_h)\in\mathcal{C}_h}d^{\pi^\star}_h(s_h,a_h)\cdot\sqrt{\frac{\mathrm{Var}_{P_{s_h,a_h}}(V^\star_{h+1})}{ \zeta\cdot d^\mu_h{(s_h,a_h)}}}},\\
\mathfrak{M}_{n}(\mathcal{P},\mathcal{S}_2)&\geq C\cdot {\sum_{h=1}^H\sum_{(s_h,a_h)\in\mathcal{C}_h}d^{\pi^\star}_h(s_h,a_h)\cdot\sqrt{\frac{\mathrm{Var}_{s_h,a_h}(r_h)}{ \zeta\cdot d^\mu_h{(s_h,a_h)}}}}.\\
\end{align*}
\end{lemma}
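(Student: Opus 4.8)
The plan is to prove the two displays of Lemma~\ref{lem:risk_bound_second} by exhibiting, for each $i\in\{1,2\}$, a \emph{single} alternative $\mathcal{P}'\in\mathcal{S}_i$ that simultaneously keeps $d_{Hel}(\cdot,\cdot)^2\le 0.4$ and forces $\sqrt n\,|v^\star(\mathcal{P})-v^\star(\mathcal{P}')|$ to be at least a constant times the claimed quantity. For $\mathcal{S}_1$ I would take the transition perturbation flagged in Section~\ref{sec:proof_sketch}: for every $(s_h,a_h)\in\mathcal{C}_h$ set, with an absolute constant $c$,
\[
P'_h(s_{h+1}\mid s_h,a_h)=P_h(s_{h+1}\mid s_h,a_h)\Big(1+\beta_h(s_h,a_h)\big(V^\star_{h+1}(s_{h+1})-\E_{P_{s_h,a_h}}[V^\star_{h+1}]\big)\Big),\qquad \beta_h(s_h,a_h)=\frac{c}{\sqrt{\zeta\cdot n\cdot d^\mu_h(s_h,a_h)\cdot\Var_{P_{s_h,a_h}}(V^\star_{h+1})}},
\]
leaving $\mu$ and $r$ untouched. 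Subtracting the baseline $\E_{P_{s_h,a_h}}[V^\star_{h+1}]$ is exactly what makes $\sum_{s_{h+1}}P'_h(s_{h+1}\mid s_h,a_h)=1$ automatically; non-negativity of $P'_h$ — hence $\mathcal{P}'\in\mathcal{G}$, since the support is unchanged — follows from the hypothesis $n\ge c_0 H^6\xi^4/(\cdot)^2$, which by the definition of $\xi$ makes $|\beta_h(s_h,a_h)(V^\star_{h+1}(s_{h+1})-\E_{P_{s_h,a_h}}[V^\star_{h+1}])|$ a small constant uniformly. (If one prefers the random normalization $n_{s_h,a_h}$ of the sketch, one conditions on the event $\mathcal{E}'$ of Lemma~\ref{lem:sufficient_sample}.)

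For the Hellinger bound I would tensorize: $d_{Hel}(\mathcal{P}^n,\mathcal{P}'^n)^2\le \tfrac12\,\mathrm{KL}(\mathcal{P}^n\|\mathcal{P}'^n)=\tfrac n2\sum_{h}\sum_{(s_h,a_h)\in\mathcal{C}_h} d^\mu_h(s_h,a_h)\,\mathrm{KL}\!\big(P_h(\cdot\mid s_h,a_h)\,\|\,P'_h(\cdot\mid s_h,a_h)\big)$, the last identity because under $\mu$ the expected number of visits to $(s_h,a_h)$ in $n$ episodes is $n\,d^\mu_h(s_h,a_h)$ and the likelihood ratio factorizes along the trajectory. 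For a small perturbation $\mathrm{KL}(P_h\|P'_h)=(1+o(1))\,\chi^2(P'_h\|P_h)=(1+o(1))\,\beta_h(s_h,a_h)^2\Var_{P_{s_h,a_h}}(V^\star_{h+1})$, so after substituting $\beta_h$ the weight $d^\mu_h(s_h,a_h)$ and the variance cancel and $d_{Hel}^2\lesssim \tfrac{c^2}{\zeta}\sum_h|\mathcal{C}_h|$. The key counting step is $|\mathcal{C}_h|\cdot\bar d_m\le\sum_{(s_h,a_h)\in\mathcal{C}_h}d^\mu_h(s_h,a_h)=1$, hence $\sum_h|\mathcal{C}_h|\le H/\bar d_m=\zeta$ and $d_{Hel}^2\lesssim c^2\le 0.4$ once $c$ is small. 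This is precisely where $\zeta=H/\bar d_m$ enters, and why the instance lower bound ends up a factor $\sqrt\zeta$ away from \eqref{eqn:APVI}.

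To lower bound $|v^\star(\mathcal{P})-v^\star(\mathcal{P}')|$ I would use $v^\star(\mathcal{P}')\ge v^{\pi^\star}(\mathcal{P}')$ and apply the extended value difference lemma (Lemma~\ref{lem:decompose_difference}, cf.\ \cite{cai2020provably}) to $\pi^\star$ under the two kernels: $v^{\pi^\star}(\mathcal{P}')-v^{\pi^\star}(\mathcal{P})=\sum_{h=1}^H\big\langle d^{\pi^\star}_h(\cdot),\,(P'_h-P_h)(\cdot)^\top V^{\pi^\star}_{h+1}(\mathcal{P}')\big\rangle$. Because the centering gives $(P'_h-P_h)(\cdot\mid s_h,a_h)^\top V^\star_{h+1}=\beta_h(s_h,a_h)\Var_{P_{s_h,a_h}}(V^\star_{h+1})$, the main term (with $V^\star_{h+1}(\mathcal{P})$ replacing $V^{\pi^\star}_{h+1}(\mathcal{P}')$) equals $\sum_h\sum_{(s_h,a_h)\in\mathcal{C}_h}d^{\pi^\star}_h(s_h,a_h)\beta_h(s_h,a_h)\Var_{P_{s_h,a_h}}(V^\star_{h+1})=\tfrac{c}{\sqrt n}\sum_h\sum_{(s_h,a_h)\in\mathcal{C}_h}d^{\pi^\star}_h(s_h,a_h)\sqrt{\Var_{P_{s_h,a_h}}(V^\star_{h+1})/(\zeta\, d^\mu_h(s_h,a_h))}$, i.e.\ $\tfrac{c}{\sqrt n}$ times the target. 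The residual $\sum_h\big\langle d^{\pi^\star}_h,\,(P'_h-P_h)^\top\!\big(V^{\pi^\star}_{h+1}(\mathcal{P}')-V^\star_{h+1}(\mathcal{P})\big)\big\rangle$ is controlled by $\norm{V^{\pi^\star}(\mathcal{P}')-V^\star(\mathcal{P})}_\infty\cdot\sum_h\norm{P'_h-P_h}_1$, which is higher-order in $1/\sqrt n$; the sample-size hypothesis $n\ge c_0H^6\xi^4/(\cdot)^2$ is exactly calibrated so that it is at most half the main term. Multiplying by $\sqrt n$ gives the first display; I expect this residual control to be the main obstacle, requiring a quantitative Bellman telescoping in $\mathcal{P}'$ together with the explicit bound on $\norm{P'_h-P_h}_1$ to keep track of the $H$ and $\xi$ powers.

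The $\mathcal{S}_2$ bound follows the same recipe and is cleaner: perturb only the reward \emph{distribution} at each $(s_h,a_h)\in\mathcal{C}_h$, shifting its mean by $\delta_h(s_h,a_h)=c'\sqrt{\Var_{s_h,a_h}(r_h)/(\zeta\, n\, d^\mu_h(s_h,a_h))}$ toward larger rewards (still supported on $[0,1]$ and with unchanged support under the sample-size hypothesis). Now the value difference is exact, $v^{\pi^\star}(\mathcal{P}')-v^{\pi^\star}(\mathcal{P})=\sum_h\sum_{(s_h,a_h)\in\mathcal{C}_h}d^{\pi^\star}_h(s_h,a_h)\delta_h(s_h,a_h)$, with no higher-order term, and the Hellinger computation is identical — $\chi^2$ between the reward laws of order $\delta_h^2/\Var_{s_h,a_h}(r_h)$ — so the same cancellation and the same $\sum_h|\mathcal{C}_h|\le\zeta$ bound give $d_{Hel}^2\le 0.4$ and the claimed lower bound. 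This establishes Lemma~\ref{lem:risk_bound_second}; combined with $\mathfrak{R}_n(\mathcal{P},\mathcal{S}_i)\ge\tfrac1{50}\mathfrak{M}_n(\mathcal{P},\mathcal{S}_i)$ from Lemma~\ref{lem:risk_bound}, it yields Lemma~\ref{lem:lower_key}.
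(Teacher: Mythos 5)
Your proposal is correct and follows essentially the same route as the paper: the same centered perturbation $P'_h \propto P_h\bigl(1+\beta_h(V^\star_{h+1}-\E_{P_{s_h,a_h}}[V^\star_{h+1}])\bigr)$ with scale $1/\sqrt{\zeta\, n_{s_h,a_h}\Var_{P_{s_h,a_h}}(V^\star_{h+1})}$, the same value-difference decomposition along $\pi^\star$ with the cross term $(P'_h-P_h)^\top V^\star_{h+1}=\beta_h\Var_{P_{s_h,a_h}}(V^\star_{h+1})$, and the same crude $O(H^2\xi/\sqrt n)$ bound on $\|\bar Q^\star_h-Q^\star_h\|_\infty$ absorbed by the sample-size hypothesis. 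The only cosmetic deviations are your visitation-weighted KL tensorization with $\sum_h|\mathcal{C}_h|\le\zeta$ (the paper instead bounds the product of per-step worst-case Hellinger factors, each at most $1/(2nH)$) and your mean-shifted $[0,1]$-supported reward law for $\mathcal{S}_2$ (the paper uses a Gaussian reward model); both yield the same constants up to universal factors.
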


\begin{proof}[Proof of Lemma~\ref{lem:risk_bound_second}]
So far we haven't leveraged the specific structure of instance $\mathcal{P}=(\mu,P_{1:H},r)$. Now we define $P'$ as follows ($\forall h,s_h,a_h$):\footnote{In below, it suffices to only consider the instance where $n_{s_h,a_h}\cdot\mathrm{Var}_{P_{s_h,a_h}}(V^\star_{h+1})>0$ since, 1. when $\mathrm{Var}_{P_{s_h,a_h}}(V^\star_{h+1})=0$, the numerator is also $0$ therefore by convention by we can define ratio to be $0$; 2. if $n_{s_h,a_h}=0$, then with high probability $d^\mu_h(s_h,a_h)=0$, in this case the transition $P(\cdot|s_h,a_h)$ does not matter since $d^{\pi^\star}_h(s_h,a_h)=0$ by theorem condition.}
\begin{equation}\label{eqn:definition_of_P_prime}
P'_h(s_{h+1}|s_h,a_h)=P_h(s_{h+1}|s_h,a_h)+\frac{P_h(s_{h+1}|s_h,a_h)\left(V_{h+1}^\star(s_{h+1})-\E_{P_{s_h,a_h}}[V_{h+1}^\star]\right)}{8\sqrt{\zeta\cdot n_{s_h,a_h}\cdot \Var_{P_{s_h,a_h}}(V_{h+1}^\star)}}
\end{equation}
where $\zeta = H/\bar{d}_m$ and the alternative instance as $\mathcal{P}'=(\mu,P'_{1:H},r)$. Denote $\bar{Q}^\star$ to be the optimal $Q$-values under $\mathcal{P}'$ and $Q^\star$ the optimal $Q$-values under $\mathcal{P}$. $\bar{\pi}^\star$ is the optimal policy under $\mathcal{P}'$ and ${\pi}^\star$ is the optimal policy under $\mathcal{P}$. The proof has two steps.

\textbf{Step1: } we show 
\[
2\cdot d_{Hel}(\mathcal{P}^n,\mathcal{P}'^n)^2\leq 0.8
\]

Define $\tau=(s_1,a_1,s_2,a_2,\ldots, s_H,a_H)\sim P(s_1,a_1,s_2,a_2,\ldots, s_H,a_H)$ to be the trajectories, then
{\small
\begin{equation}\label{eqn:hel_p}
\begin{aligned}
&d_{Hel}(\mathcal{P}^n,\mathcal{P}'^n)^2=1-\int_{\tau^n}\sqrt{f_{P'^n}(\tau^n)\cdot f_{P^n}(\tau^n)}d\tau^n
=1-\prod_{i=1}^n \int_{\tau}\sqrt{f_{P'}(\tau)\cdot f_{P}(\tau)}d\tau\\
=&1-\prod_{i=1}^n\int_{s_1}\sqrt{d_1^{P'}(s_1)d_1^P(s_1)}\left(\int_{a_1}\sqrt{\mu_{P'}(a_1|s_1)\mu_P(a_1|s_1)}\left(\int_{s_2}\sqrt{{P'}_1(s_2|s_1,a_1)P_1(s_2|s_1,a_1)}\ldots ds_2\right)da_1\right)ds_1\\
\leq &1-\prod_{i=1}^n\prod_{h=1}^H \min_{s,a} \left(\int_{s'}\sqrt{{P'}_1(s'|s,a)P_1(s'|s,a)}\ldots ds'\right)=1-\prod_{i=1}^n \prod_{h=1}^H(1-\max_{s,a} d_{Hel}(\mathcal{P}_{h,s,a},\mathcal{P}'_{h,s,a})^2)\\
\leq &1-\prod_{i=1}^n\prod_{h=1}^H\left(1-\frac{1}{2nH}\right)=1-\left(1-\frac{1}{2nH}\right)^{nH}\leq 1-\frac{1}{\sqrt{e}}\leq 0.4,
\end{aligned}
\end{equation}
}where the second inequality uses independence of trajectories and the third equation comes from the conditional probability rule. The first inequality comes from $\int_a \sqrt{\mu_{P'}(a|s)\mu_P(a|s)}da=\int_a \mu_{P}(a|s)da=1$ and the second inequality comes from item 2 of Lemma~\ref{lem:local_instance} via Definition~\ref{def:Hellinger}. This verifies $P'$ satisfies the condition of $\mathfrak{M}_n(\mathcal{P},\mathcal{S}_1)$.

\textbf{Step2:} we show for this instance $\mathcal{P}'$ we have 
\[
\sqrt{n}|v^\star(\mathcal{P})-v^\star(\mathcal{P}')|\geq C\cdot {\sum_{h=1}^H\sum_{(s_h,a_h)\in\mathcal{C}_h}d^{\pi^\star}_h(s_h,a_h)\cdot\sqrt{\frac{\mathrm{Var}_{P_{s_h,a_h}}(V^\star_{h+1})}{ \zeta\cdot d^\mu_h{(s_h,a_h)}}}}.
\]

Define $\xi=\sup_{h,s_h,a_h,s_{h+1}, d^\mu_h(s_h,a_h)\cdot \Var_{P_{s_h,a_h}}(V_{h+1}^\star)>0}\frac{P_h(s_{h+1}|s_h,a_h)\left(V_{h+1}^\star(s_{h+1})-\E_{P_{s_h,a_h}}[V_{h+1}^\star]\right)}{4\sqrt{\zeta\cdot  d^\mu_h(s_h,a_h)\cdot \Var_{P_{s_h,a_h}}(V_{h+1}^\star)}}$,
{\small
\begin{align*}
Q^\star_1-\bar{Q}_1^\star&=\left(r_1+P^{{\pi}^\star}_1{Q}_2^\star\right)-\left(r_1+P'^{\bar{\pi}^\star}_1\bar{Q}_2^\star\right)\\
&\leq \left(r_1+P^{{\pi}^\star}_1{Q}_2^\star\right)-\left(r_1+P'^{{\pi}^\star}_1\bar{Q}_2^\star\right)\\
&=P^{{\pi}^\star}_1\left({Q}_2^\star-\bar{Q}_2^\star\right)+\left(P^{{\pi}^\star}_1-P'^{{\pi}^\star}_1\right)\bar{Q}_2^\star\\
&\leq P^{{\pi}^\star}_1P^{{\pi}^\star}_2\left({Q}_3^\star-\bar{Q}_3^\star\right)+P^{{\pi}^\star}_1\left(P^{{\pi}^\star}_2-P'^{{\pi}^\star}_2\right)\bar{Q}_3^\star+\left(P^{{\pi}^\star}_1-P'^{{\pi}^\star}_1\right)\bar{Q}_2^\star\\
&\leq \ldots\\
&\leq \sum_{h=1}^H P^{{\pi}^\star}_{1:h-1}\left(P^{{\pi}^\star}_h-P'^{{\pi}^\star}_h\right)\bar{Q}_{h+1}^\star=\sum_{h=1}^H P^{{\pi}^\star}_{1:h-1}\left(P_h-P'_h\right)\bar{Q}_{h+1}^\star(\cdot,\pi^\star(\cdot))\\
&\leq H\cdot \sup_{h,s_h,a_h,s_{h+1}}|P_h-P'_h|(s_{h+1}|s_h,a_h)\cdot \norm{\bar{Q}_{h+1}^\star}_\infty\\
&\leq H^2 \cdot \sup_{h,s_h,a_h,s_{h+1}}\frac{P_h(s_{h+1}|s_h,a_h)\left(V_{h+1}^\star(s_{h+1})-\E_{P_{s_h,a_h}}[V_{h+1}^\star]\right)}{8\sqrt{\zeta\cdot n_{s_h,a_h}\cdot \Var_{P_{s_h,a_h}}(V_{h+1}^\star)}}\\
&\leq H^2 \cdot \sup_{h,s_h,a_h,s_{h+1}}\frac{P_h(s_{h+1}|s_h,a_h)\left(V_{h+1}^\star(s_{h+1})-\E_{P_{s_h,a_h}}[V_{h+1}^\star]\right)}{8\sqrt{  \zeta\cdot n d^\mu_h(s_h,a_h)\cdot \Var_{P_{s_h,a_h}}(V_{h+1}^\star)}}=H^2\xi\sqrt{\frac{1}{n}}\\
\end{align*}
}where the first inequality is by $\bar{\pi}^\star$ is the optimal policy for $\bar{Q}^\star$ and second inequality is by recursively applying the \textbf{element-wisely} inequality for $Q^\star_2-\bar{Q}^\star_2$ and the fact that $P_1,P_2$ has non-negative coordinates. The last inequality is by Lemma~\ref{lem:chernoff_multiplicative}. By a similar calculation, we also have  
\begin{align*}
\bar{Q}_1^\star-Q^\star_1&=\left(r_1+P'^{\bar{\pi}^\star}_1\bar{Q}_2^\star\right)-\left(r_1+P^{{\pi}^\star}_1{Q}_2^\star\right)\\
&\leq \ldots\\
&\leq \sum_{h=1}^H P'^{\bar{\pi}^\star}_{1:h-1}\left(P'_h-P_h\right){Q}_{h+1}^\star(\cdot,\bar{\pi}^\star(\cdot))\leq H^2\xi\sqrt{\frac{1}{n}}\\
\end{align*}
and combing the above two we obtain $\norm{\bar{Q}_1^\star-Q_1^\star}_\infty\leq H^2\xi\sqrt{\frac{1}{n}}$, and (by similar computation and the union bound by Lemma~\ref{lem:chernoff_multiplicative}) further holds true for all $\bar{Q}^\star_h,Q^\star_h$'s, with high probability
\begin{equation}\label{eqn:crude_diff}
\max_{h}\norm{\bar{Q}_1^\star-Q_1^\star}_\infty\leq H^2\xi\sqrt{\frac{1}{n}}
\end{equation}

Now by the calculation again,
\begin{equation}\label{eqn:decomp}
\begin{aligned}
\bar{Q}_1^\star-Q^\star_1&=r_1+P'^{\bar{\pi}^\star}_1\bar{Q}_2^\star-\left(r_1+P^{{\pi}^\star}_1{Q}_2^\star\right)\\
&\geq r_1+P'^{{\pi}^\star}_1\bar{Q}_2^\star-\left(r_1+P^{{\pi}^\star}_1{Q}_2^\star\right)\\
&=P'^{{\pi}^\star}_1\left(\bar{Q}_2^\star-{Q}_2^\star\right)+\left(P'^{{\pi}^\star}_1-P^{{\pi}^\star}_1\right){Q}_2^\star\\
&=\left(P'^{{\pi}^\star}_1-P^{{\pi}^\star}_1\right)\left(\bar{Q}_2^\star-{Q}_2^\star\right)+P^{{\pi}^\star}_1\left(\bar{Q}_2^\star-{Q}_2^\star\right)+\left(P'^{{\pi}^\star}_1-P^{{\pi}^\star}_1\right){Q}_2^\star\\
&\geq\left(P'^{{\pi}^\star}_1-P^{{\pi}^\star}_1\right)\left(\bar{Q}_2^\star-{Q}_2^\star\right)+P^{{\pi}^\star}_1\left(P'^{{\pi}^\star}_2-P^{{\pi}^\star}_2\right)\left(\bar{Q}_3^\star-{Q}_3^\star\right)\\
&+\left(P'^{{\pi}^\star}_1-P^{{\pi}^\star}_1\right){Q}_2^\star+P^{{\pi}^\star}_1\left(P'^{{\pi}^\star}_2-P^{{\pi}^\star}_2\right){Q}_3^\star\\
&+P^{{\pi}^\star}_1P^{{\pi}^\star}_2\left(\bar{Q}_3^\star-{Q}_3^\star\right)\\
&\geq\ldots\\
&\geq\sum_{h=1}^H P^{{\pi}^\star}_{1:h-1}\left(P'^{{\pi}^\star}_h-P^{{\pi}^\star}_h\right)\left(\bar{Q}_{h+1}^\star-{Q}_{h+1}^\star\right)+\sum_{h=1}^HP^{{\pi}^\star}_{1:h-1}\left(P'^{{\pi}^\star}_h-P^{{\pi}^\star}_h\right){Q}_{h+1}^\star\\
&=\sum_{h=1}^H P^{{\pi}^\star}_{1:h-1}\left(P'^{{\pi}^\star}_h-P^{{\pi}^\star}_h\right)\left(\bar{Q}_{h+1}^\star-{Q}_{h+1}^\star\right)+\sum_{h=1}^HP^{{\pi}^\star}_{1:h-1}\left(P'_h-P_h\right){V}_{h+1}^\star
\end{aligned}
\end{equation}
where the second inequality recursively applies $\bar{Q}_2^\star-Q^\star_2\geq P'^{{\pi}^\star}_2\left(\bar{Q}_3^\star-{Q}_3^\star\right)+\left(P'^{{\pi}^\star}_2-P^{{\pi}^\star}_2\right){Q}_3^\star$ and the above is equivalent to 
\[
\sum_{h=1}^H P^{{\pi}^\star}_{1:h-1}\left(P^{{\pi}^\star}_h-P'^{{\pi}^\star}_h\right)\left(\bar{Q}_{h+1}^\star-{Q}_{h+1}^\star\right)+\bar{Q}_1^\star-Q^\star_1\geq \sum_{h=1}^HP^{{\pi}^\star}_{1:h-1}\left(P'_h-P_h\right){V}_{h+1}^\star.
\]

Now by Lemma~\ref{lem:local_instance} item 3, $\sum_{h=1}^HP^{{\pi}^\star}_{1:h-1}\left(P'_h-P_h\right){V}_{h+1}^\star\geq 0$, therefore multiply the initial distribution on both sides and take the absolute value on the left hand side to get 
\begin{equation}\label{eqn:lower_inter}
\sum_{h=1}^H d^{{\pi}^\star}_{1:h-1}\left|P^{{\pi}^\star}_h-P'^{{\pi}^\star}_h\right|\left|\bar{Q}_{h+1}^\star-{Q}_{h+1}^\star\right|+|\bar{v}^\star-v^\star|\geq \sum_{h=1}^Hd^{{\pi}^\star}_{1:h-1}\left(P'_h-P_h\right){V}_{h+1}^\star
\end{equation}
On one hand, 
\[
\sum_{h=1}^H d^{{\pi}^\star}_{1:h-1}\left|P^{{\pi}^\star}_h-P'^{{\pi}^\star}_h\right|\left|\bar{Q}_{h+1}^\star-{Q}_{h+1}^\star\right|\leq H\cdot \xi\sqrt{\frac{1}{n}} \sup_h\norm{\bar{Q}^\star_h-Q^\star_h}_\infty\leq H^3\xi^2\frac{1}{n},
\]
One the other hand, 
\begin{align*}
&\sum_{h=1}^Hd^{{\pi}^\star}_{1:h-1}\left(P'_h-P_h\right){V}_{h+1}^\star=\sum_{h=1}^H \sum_{s_h,a_h}d^{\pi^\star}_h(s_h,a_h)\sum_{s_{h+1}}(P'_h-P_h)(s_{h+1}|s_h,a_h)V^\star_{h+1}(s_{h+1})\\
&=\sum_{h=1}^H \sum_{s_h,a_h}d^{\pi^\star}_h(s_h,a_h)\sum_{s_{h+1}}(P'_h-P_h)(s_{h+1}|s_h,a_h)\left(V_{h+1}^\star(s_{h+1})-\E_{P_{s_h,a_h}}[V_{h+1}^\star]\right)\\
&=\sum_{h=1}^H \sum_{s_h,a_h}d^{\pi^\star}_h(s_h,a_h)\sum_{s_{h+1}}\frac{P_h(s_{h+1}|s_h,a_h)\left(V_{h+1}^\star(s_{h+1})-\E_{P_{s_h,a_h}}[V_{h+1}^\star]\right)^2}{8\sqrt{\zeta\cdot n_{s_h,a_h}\cdot \Var_{P_{s_h,a_h}}(V_{h+1}^\star)}}\\
&=\sum_{h=1}^H \sum_{s_h,a_h}d^{\pi^\star}_h(s_h,a_h)\sqrt{\frac{\Var_{P_{s_h,a_h}}(V_{h+1}^\star)}{8^2\zeta\cdot n_{s_h,a_h}}}\geq \sum_{h=1}^H \sum_{s_h,a_h}d^{\pi^\star}_h(s_h,a_h)\sqrt{\frac{\Var_{P_{s_h,a_h}}(V_{h+1}^\star)}{96 \zeta\cdot n d^\mu_h(s_h,a_h)}}\\
\end{align*}
The last step is by Lemma~\ref{lem:chernoff_multiplicative}. Combing those two into \eqref{eqn:lower_inter}, we finally obtain
\begin{equation}\label{eqn:lower_final}
\begin{aligned}
|\bar{v}^\star-v^\star|&\geq \sum_{h=1}^H \sum_{s_h,a_h}d^{\pi^\star}_h(s_h,a_h)\sqrt{\frac{\Var_{P_{s_h,a_h}}(V_{h+1}^\star)}{96\zeta\cdot n d^\mu_h(s_h,a_h)}}-H^3\xi^2\frac{1}{n}\\
&\geq \frac{1}{2}\sum_{h=1}^H \sum_{s_h,a_h}d^{\pi^\star}_h(s_h,a_h)\sqrt{\frac{\Var_{P_{s_h,a_h}}(V_{h+1}^\star)}{96\zeta\cdot n d^\mu_h(s_h,a_h)}}
\end{aligned}
\end{equation}
as long as $n\geq \frac{4H^6\xi^4}{\left(\sum_{h=1}^H \sum_{s_h,a_h}d^{\pi^\star}_h(s_h,a_h)\sqrt{\frac{\Var_{P_{s_h,a_h}}(V_{h+1}^\star)}{96\zeta d^\mu_h(s_h,a_h)}}\right)^2}$.

By \eqref{eqn:lower_final}, we finish the proof of Step2.

\textbf{The result for the reward can be similarly derived in the following sense.} First, define the perturbed mean reward as: 
\begin{equation}
r'_h(s_h,a_h)=r_h(s_h,a_h)+\frac{\sigma_r}{2\sqrt{ \zeta\cdot n_{s_h,a_h}}},
\end{equation}
where $\sigma_r>0$ is a parameter and the realization of reward is sampled from normal $r|_{s_h,a_h}\sim\mathcal{N}(r_h(s_h,a_h), \sigma_r^2)$ and $r'|_{s_h,a_h}\sim\mathcal{N}(r'_h(s_h,a_h), \sigma_r^2)$. In this scenario, similar to \eqref{eqn:hel_p}, we have 
\begin{equation}
\begin{aligned}
&d_{Hel}(\mathcal{P}^n,\mathcal{P}'^n)^2=1-\int_{\tau^n}\sqrt{f_{P'^n}(\tau^n)\cdot f_{P^n}(\tau^n)}d\tau^n\\
\leq &1-\prod_{i=1}^n \prod_{h=1}^H(1-\max_{s,a} d_{Hel}(r_{h,s,a},{r}'_{h,s,a})^2)\\
\leq &1-\prod_{i=1}^n \prod_{h=1}^H(1-\max_{s,a} D_{KL}(r_{h,s,a},{r}'_{h,s,a})^2)\\
= & 1-\prod_{i=1}^n \prod_{h=1}^H(1-\max_{s,a} \frac{|r'_h(s,a)-r_h(s,a)|^2}{2\sigma^2_r})=1-\prod_{i=1}^n \prod_{h=1}^H(1-\max_{s_h,a_h} \frac{1}{8\zeta\cdot n_{s_h,a_h}})\\
\leq &1-\prod_{i=1}^n\prod_{h=1}^H\left(1-\frac{1}{2nH}\right)=1-\left(1-\frac{1}{2nH}\right)^{nH}\leq 1-\frac{1}{\sqrt{e}}\leq 0.4,
\end{aligned}
\end{equation}
and similar to  \eqref{eqn:decomp}
\begin{equation}
\begin{aligned}
\bar{Q}_1^\star-Q^\star_1&=r'_1+P^{\bar{\pi}^\star}_1\bar{Q}_2^\star-\left(r_1+P^{{\pi}^\star}_1{Q}_2^\star\right)\\
&\geq r'_1+P^{{\pi}^\star}_1\bar{Q}_2^\star-\left(r_1+P^{{\pi}^\star}_1{Q}_2^\star\right)\\
&=P^{{\pi}^\star}_1\left(\bar{Q}_2^\star-{Q}_2^\star\right)+\left(r'_1-r_1\right)\\
&\geq\ldots\\
&\geq\sum_{h=1}^H P^{{\pi}^\star}_{1:h-1}\left(r'_h-r_h\right)=\sum_{h=1}^H P^{{\pi}^\star}_{1:h-1}\sqrt{\frac{\sigma^2_r}{2\zeta\cdot n_{s_h,a_h}}}\\
\end{aligned}
\end{equation}
the last step is to average over $d_1$ and use Lemma~\ref{lem:chernoff_multiplicative}.
\end{proof}

\section{Minimax lower bound}\label{sec:minimax_lower}

\begin{theorem}[Adaptive minimax lower bound]\label{thm:minimax_lower_bound}
	Recall for each individual instance $(\mu,M)$, $\bar{d}_m:=\min_{h\in[H]}\{d^\mu_h(s_h,a_h):d^\mu_h(s_h,a_h)>0\}$ and $\mathcal{D}$ consists of $n$ episodes. Now consider a class of problem family $\mathcal{G}:=\{(\mu,M): \exists \pi^\star\;s.t. \; d^\mu_h(s,a)>0\;\text{if}\;d^{\pi^\star}_h(s,a)>0\}$. Let $\widehat{\pi}$ be the output of any algorithm on $\mathcal{D}$. Then there exists universal constants $c_0,p,C>0$, such that if $n\geq c_0 \cdot 1/\bar{d}_m\cdot\log(HSA/p)$, with constant probability $p>0$,
	\begin{equation}\label{eqn:minimax_lower}
	\inf_{\widehat{\pi}}\sup_{(\mu,M)\in\mathcal{G}}\frac{\E_{\mu,M}\left[v^\star-v^{\widehat{\pi}}\right]}{\sum_{h=1}^H\sum_{(s_h,a_h)\in\mathcal{C}_h}d^{\pi^\star}_h(s_h,a_h)\cdot\sqrt{\frac{\mathrm{Var}_{P_{s_h,a_h}}(r_h+V^\star_{h+1})}{ n\cdot d^\mu_h{(s_h,a_h)}}}}\geq C.
	\end{equation}
\end{theorem}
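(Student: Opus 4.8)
The plan is to build, inside the class $\mathcal{G}$, a hard sub-family of instances on which no estimator can beat a constant multiple of the intrinsic quantity, and then lower bound the minimax risk over that sub-family. Unlike the per-instance bound of Theorem~\ref{thm:adaptive_lower_bound}, here we are free to choose the base instances, so we can afford full-strength perturbations and avoid the $\zeta=H/\bar{d}_m$ factor; the tool is an Assouad-type argument with one independent bit per ``decision location'' rather than the single alternative used for Theorem~\ref{thm:adaptive_lower_bound} (a single alternative cannot encode all per-location perturbations at full magnitude, which is precisely why the two-point reduction there loses $\zeta$).

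First I would construct the family. Generalizing the layered gadget construction recalled in Appendix~\ref{sec:dis_VPVI}, at each stage $h\in[H]$ I place $m_h$ mutually non-interacting ``decision gadgets''. Each gadget has an entry state with two actions; one action transitions to a pair $\{s_\oplus,s_\ominus\}$ with probabilities $(\tfrac12,\tfrac12)$, the other with probabilities $(\tfrac12+v\,\delta_h,\ \tfrac12-v\,\delta_h)$ where $v\in\{-1,+1\}$ is a free bit, $s_\oplus$ carries reward $1$ and $s_\ominus$ reward $0$, and all other rewards and all downstream transitions are deterministic. Then $V^\star_{h+1}$ restricted to a gadget depends on neither the bits of other gadgets nor the bits at other stages, and flipping a gadget's bit flips the identity of its optimal action. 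The behavior policy $\mu$ is chosen to spread mass (nearly) uniformly over all reachable state-action pairs, so that $d^\mu_h(s_h,a_h)\asymp\bar{d}_m$ on its support; in particular every member of the family lies in $\mathcal{G}$ (Assumption~\ref{assum:single_concen} holds). I choose the counts $m_h$ and magnitudes $\delta_h$ so that, for \emph{every} bit vector, the intrinsic quantity
\[
B:=\sum_{h=1}^H\sum_{(s_h,a_h)\in\mathcal{C}_h}d^{\pi^\star}_h(s_h,a_h)\sqrt{\tfrac{\mathrm{Var}_{P_{s_h,a_h}}(r_h+V^\star_{h+1})}{n\,d^\mu_h(s_h,a_h)}}
\]
equals a common value up to universal constants — possible because flipping a bit perturbs both the conditional variances and the occupancies $d^{\pi^\star}_h$ only at order $\delta_h^2$, i.e.\ at lower order than the $\Theta(\delta_h)$ change it induces in $v^\star$. (To also capture the $\mathrm{Var}(r_h)$ part one adds, on a disjoint set of gadgets, a small Gaussian perturbation $\pm\sigma/\sqrt{n_{s,a}}$ of the mean reward, as in the reward half of the proof of Theorem~\ref{thm:adaptive_lower_bound}.)

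Next comes the calibration and Assouad step. For a gadget at stage $h$ I pick $\delta_h$ so that $n\,d^\mu_h\cdot\delta_h^2/\mathrm{Var}_{P_{s_h,a_h}}(V^\star_{h+1})$ is a small universal constant; since the gadget is visited $\asymp n\,d^\mu_h\asymp n\bar{d}_m$ times, this forces the KL divergence between the $n$-episode laws of two instances differing in that one bit to be $O(1)$, so neighboring hypotheses have total variation bounded away from $1$. With this choice the value gap from flipping the bit is $\Theta\!\big(d^{\pi^\star}_h\,\delta_h\big)=\Theta\!\big(d^{\pi^\star}_h\sqrt{\mathrm{Var}_{P_{s_h,a_h}}(V^\star_{h+1})/(n\,d^\mu_h)}\big)$, i.e.\ exactly one term of $B$. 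Assouad's lemma then gives $\inf_{\widehat{\pi}}\sup_{\text{bit vectors}}\E[v^\star-v^{\widehat{\pi}}]\gtrsim\sum_{\text{gadgets}}(\text{per-gadget gap})\asymp B$, hence $\inf_{\widehat{\pi}}\sup_{(\mu,M)\in\mathcal{G}}\E[v^\star-v^{\widehat{\pi}}]/B\ge C$, which is \eqref{eqn:minimax_lower}. To obtain the ``constant probability $p$'' form rather than an in-expectation bound I would use $0\le v^\star-v^{\widehat{\pi}}\le H$ and a reverse Markov inequality to pass from $\E[v^\star-v^{\widehat{\pi}}]\gtrsim B$ to $v^\star-v^{\widehat{\pi}}\ge\tfrac12\E[\cdot]$ with constant probability; the hypothesis $n\ge c_0(1/\bar{d}_m)\log(HSA/p)$ together with Lemma~\ref{lem:sufficient_sample} supplies the good event $n_{s_h,a_h}\asymp n\,d^\mu_h(s_h,a_h)$, which is what licenses replacing random visitation counts by their means throughout the calibration.

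The main obstacle is the invariance required in the construction step: both $\mathrm{Var}_{P_{s_h,a_h}}(r_h+V^\star_{h+1})$ and $d^{\pi^\star}_h(s_h,a_h)$ move with the bit vector, so one must argue they move only at second order in the $\delta_h$'s while the value gaps are first order, and one must keep the gadgets genuinely non-interacting (the ``reward-collected-immediately, deterministic-downstream'' design is what achieves this) so that Assouad applies with independent bits and the per-gadget gaps add up to the full sum $B$ — rather than to a $\sqrt{HSA}$- or $\sqrt{\zeta}$-smaller quantity, which is the obstruction one hits with any single-alternative or perturb-everything-at-once argument. A secondary bookkeeping point, as in Theorem~\ref{thm:adaptive_lower_bound}, is ensuring the additive higher-order ($\widetilde{O}(H^3/(n\bar{d}_m))$-type) contributions are dominated by $B$, which translates into a lower bound on $n$ of the shape $n\gtrsim H^6\xi^4/B^2$ absorbed into the constant-probability caveat.
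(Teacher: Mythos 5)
Your plan is sound in outline but takes a genuinely different, and much heavier, route than the paper. The paper proves Theorem~\ref{thm:minimax_lower_bound} with a bare two-point (Le Cam) argument: it instantiates the three-state, $A$-action gadget of Appendix~\ref{sec:dis_VPVI} (states $s_1,s_+,s_-$, with stochasticity only at $h=1$), uses Lemma~\ref{lem:lower} to produce two instances $M(p^\star,p,p)$ and $M(p,p^\star,p)$ with $p^\star-p\asymp 1/\sqrt{n_1+n_2}$ and $\mathrm{KL}\le 1/2$, deduces $\sup_{l}\sqrt{n_l}\,\E[v^\star-v^{\widehat{\pi}}]\gtrsim H-1$, and then computes that on these instances the denominator of \eqref{eqn:minimax_lower} collapses to the single term $d^{\pi^\star}_1(s_1,a_1)\sqrt{\mathrm{Var}_{P_{s_1,a_1}}(V^\star_2)/(n\,d^\mu_1(s_1,a_1))}\le (H-1)/\sqrt{8n_1/3}$, so the ratio is a universal constant. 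The observation you are missing is that \eqref{eqn:minimax_lower} is a supremum of a \emph{ratio} over $\mathcal{G}$: it suffices to exhibit one family on which the intrinsic quantity has essentially one nonzero summand, so no Assouad hypercube is needed, and the $\zeta$ loss of Theorem~\ref{thm:adaptive_lower_bound} is already avoided because the denominator is evaluated on the hard instance itself rather than fixed in advance. Your multi-gadget construction would, if completed, prove something stronger (that the full sum is unavoidable on instances where it genuinely has many comparable terms), but the steps you defer --- the second-order invariance of both $\mathrm{Var}_{P_{s_h,a_h}}(r_h+V^\star_{h+1})$ and $d^{\pi^\star}_h$ across all bit vectors, the non-interaction needed for per-gadget losses to add in Assouad, and the uniformity $d^\mu_h\asymp \bar{d}_m$ --- are exactly where the work lies and are not carried out. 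Also note that the ``constant probability $p$'' in the statement refers, in the paper's proof, to the Chernoff events relating $n_{s_h,a_h}$ to $n\,d^\mu_h(s_h,a_h)$ (the numerator is already an expectation), not to a reverse-Markov step on the risk. As a proof of the theorem as stated, your proposal is a viable but over-engineered plan; the paper's argument is shorter and already tight.
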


For completeness, we provide the proof of minimax lower bound. The proof uses the hard instance construction in \cite{jin2020pessimism}. 

\subsection{Proof of Theorem~\ref{thm:minimax_lower_bound}}

\textbf{Construction of the hard MDP instances.} We define a family of MDPs where each MDP instance within in the family has three states $s_1,s_+,s_-$ and $A$ actions $a_1,\ldots,a_A$. The initial state is always $s_1$. The transition kernel at time $1$ will transition $s_1$ to either $s_+$ or $s_-$ depending on three probabilities {$p_1,p_2,p_3(=\min\{p_1,p_2\})$} and the transition kernel at time $h\geq 2$ will deterministically transition back to itself, \emph{i.e.}
\begin{align*}
P_1(s_+|s_1,a_1)=p_1,\;\;P_1(s_1|s_1,a_1)=1-p_1,\\
P_1(s_+|s_1,a_2)=p_2,\;\;P_1(s_1|s_1,a_2)=1-p_2,\\
P_1(s_+|s_1,a_j)=p_3,\;\;P_1(s_1|s_1,a_j)=1-p_3,\;\;\forall j\geq 3,\\
P_h(s_+|s_+,a)=P_h(s_-|s_-,a)=1,\;\;\forall h\geq 2,a\in\mathcal{A}.
\end{align*}
The state $s_+$ always receives reward $1$ regardless of the action and the state $s_1$, $s_-$ will always have reward $0$. We denote such an instance as $M(p_1,p_2,p_3)$.

\textbf{Bounding the suboptimality gap.} By construction, the optimal policy at time $h=1$ will be $\pi^\star_1(s_1)=a_1$ if $p_1>p_2$ and $\pi^\star_1(s_1)=a_2$ if $p_1<p_2$. For optimal policy for $h\geq 2$ is arbitrary. Therefore, for those instances (denote $j^{*}=\underset{j \in\{1,2\}}{\arg \max } p_{j}$)
\begin{align*}
v^\star=0+P_1(s_+|s_1,\pi^\star_1(s_1))\cdot 1+\sum_{h=3}^H P_1(s_+|s_1,\pi^\star_1(s_1))\cdot \ldots\cdot P_{h-1}(s_+|s_+,\pi^\star_{h-1}(s_+))\cdot 1=p_{j^\star}\cdot (H-1)\\
\end{align*}
and
\begin{align*}
v^{\widehat{\pi}}=&0+\left[\sum_{j=1}^A p_j\cdot\widehat{\pi}_1(a_j|s_1)\right]\cdot 1+\sum_{h=3}^H\left[\sum_{j=1}^A p_j\cdot\widehat{\pi}_1(a_j|s_1)\right]\cdot\ldots\cdot P_{h-1}(s_+|s_+,\widehat{\pi}_{h-1}(s_+))\cdot 1\\
=&\left[\sum_{j=1}^A p_j\cdot\widehat{\pi}_1(a_j|s_1)\right]\cdot (H-1).
\end{align*}
Therefore the suboptimality gap 
\[
v^\star-v^{\widehat{\pi}}=p_{j^\star}\cdot (H-1)-\sum_{j=1}^A p_j\cdot\widehat{\pi}_1(a_j|s_1)\cdot (H-1).
\]
Let us further denote $p^\star=p_{j^\star}$ and denote the rest of the probabilities as $p$ (\emph{i.e.} if $p_1>p_2$, then $p_1=p^\star$, $p_2=\ldots=p_A=p$; if $p_1<p_2$, then $p_2=p^\star$, $p_1=p_3=\ldots=p_A=p$), then 
\begin{align*}
v^\star-v^{\widehat{\pi}}=&p_{j^\star}\cdot (H-1)-\sum_{j=1}^A p_j\cdot\widehat{\pi}_1(a_j|s_1)\cdot (H-1)\\
=&(H-1)\cdot\left(p^\star-p^\star\widehat{\pi}_1(a_1^\star|s_1)-\sum_{j=2}^A p_j\cdot\widehat{\pi}_1(a_j|s_1)\right)\\
=&(H-1)\cdot\left(1-\widehat{\pi}_1(a_1^\star|s_1)\right)(p^\star-p).
\end{align*}
Let {\small$\mathcal{D}=\left\{\left(s_{h}^{\tau}, a_{h}^{\tau}, r_{h}^{\tau}, s_{h+1}^{\tau}\right)\right\}_{\tau\in[n]}^{h \in[H]}$} is coming from $\mu$ where $\mu$ satisfies $(\mu,M)$ belongs to $\mathcal{G}:=\{(\mu,M): \exists \pi^\star\;s.t. \; d^\mu_h(s,a)>0\;\text{if}\;d^{\pi^\star}_h(s,a)>0\}$.
Define $n_j=\sum_{\tau=1}^n \mathbf{1}[a_1^\tau=a_j]$. Consider two MDPs $\mathcal{M}_1=M(p^\star,p,p)$ and $M_2=M(p,p^\star,p)$, then 
\begin{equation}\label{eqn:lower_1}
\begin{aligned}
&\sup_{l\in\{1,2\}}\sqrt{n_l}\cdot\E_{\mu,\mathcal{M}_l}\left[v^\star-v^{\widehat{\pi}}\right]\\
\geq&\frac{\sqrt{n_1n_2}}{\sqrt{n_1}+\sqrt{n_2}}\cdot\left(\E_{\mu,\mathcal{M}_1}\left[v^\star-v^{\widehat{\pi}}\right]+\E_{\mu,\mathcal{M}_2}\left[v^\star-v^{\widehat{\pi}}\right]\right)\\
=&\frac{\sqrt{n_1n_2}}{\sqrt{n_1}+\sqrt{n_2}}\cdot(p^\star-p)\cdot (H-1) \cdot\left(\E_{\mu,\mathcal{M}_1}[1-\widehat{\pi}_1(a_1|s_1)]+\E_{\mu,\mathcal{M}_2}[1-\widehat{\pi}_1(a_2|s_1)]\right)
\end{aligned}
\end{equation}
where the first inequality uses $\max \{x, y\} \geq a \cdot x+(1-a) y$ for any $a\in[0,1],\;x,y\geq 0$. Importantly, we choose the above $\mu$ to satisfy $\mu_1(a_1|s_1)>0$, $\mu_1(a_2|s_1)>0$. In this scenario, it satisfies $d^\mu_h(s,a)>0\;\text{if}\;d^{\pi^\star}_h(s,a)>0$ (since $\mu$ can reach both $a_1$ and $a_2$, hence $(\mu,M)\in\mathcal{G}$ for $M$ to be either $\mathcal{M}_1$ or $\mathcal{M}_2$) and by the condition $n\geq \widetilde{O}(1/\bar{d}_m)$, $n_1,n_2>0$ with high probability (depends on $p$) by Chernoff bound hence the above inequality apply. 

Now define the (randomized) test function:
\[
\psi(\widehat{\pi})=\mathbf{1}\{a\neq a_1\},\quad \text{where} \; a\sim\widehat{\pi}_1(\cdot|s_1),
\]
then 
\begin{equation}\label{eqn:lower_2}
\begin{aligned}
&\E_{\mu,\mathcal{M}_1}[1-\widehat{\pi}_1(a_1|s_1)]+\E_{\mu,\mathcal{M}_2}[1-\widehat{\pi}_1(a_2|s_1)]=\E_{\mu,\mathcal{M}_1}[\mathbf{1}\{\psi(\widehat{\pi})=1\}]+\E_{\mu,\mathcal{M}_2}[\mathbf{1}\{\psi(\widehat{\pi})=0\}]\\
&\geq 1-\mathrm{TV}(\P_{\mathcal{D}\sim(\mu,\mathcal{M}_1)},\P_{\mathcal{D}\sim(\mu,\mathcal{M}_2)})\geq 1-\sqrt{\mathrm{KL}(\P_{\mathcal{D}\sim(\mu,\mathcal{M}_1)}||\P_{\mathcal{D}\sim(\mu,\mathcal{M}_2)})/2}.
\end{aligned}
\end{equation}
Now we apply the following lemma in \cite{jin2020pessimism}.

\begin{lemma}[(C.17) in \cite{jin2020pessimism}]\label{lem:lower}
	Let $n_1,n_2\geq 4$ and $\min\{n_1/n_2,n_2/n_1\}>c$ for some absolute constant $c$. Then there exists some $p\neq p^\star\in[1/4,3/4]$ such that $p^\star-p=\frac{\sqrt{3}}{4\sqrt{2(n_1+n_2)}}$ and 
	\[
	\mathrm{KL}(\P_{\mathcal{D}\sim(\mu,\mathcal{M}_1)}||\P_{\mathcal{D}\sim(\mu,\mathcal{M}_2)})\leq 1/2.
	\]
\end{lemma}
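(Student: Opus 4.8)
The plan is to fix the perturbation size exactly as in the statement and then verify the $\mathrm{KL}$ bound by a chain-rule decomposition that isolates the single distinguishing coordinate of the two instances. Concretely, I would set $p^\star - p = \frac{\sqrt{3}}{4\sqrt{2(n_1+n_2)}}$ and place the pair symmetrically about $1/2$, say $p = \frac{1}{2} - \frac{1}{2}(p^\star-p)$ and $p^\star = \frac{1}{2} + \frac{1}{2}(p^\star-p)$. Since $n_1,n_2\geq 4$ forces $n_1+n_2\geq 8$, the gap is at most $\frac{\sqrt{3}}{16}<\frac{1}{8}$, so both $p$ and $p^\star$ lie comfortably inside $[1/4,3/4]$; this is essentially the only role of the hypothesis $n_1,n_2\geq 4$.

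Next I would reduce the $n$-trajectory $\mathrm{KL}$ to a sum of one-dimensional Bernoulli $\mathrm{KL}$'s. The two instances $\mathcal{M}_1=M(p^\star,p,p)$ and $\mathcal{M}_2=M(p,p^\star,p)$ share the same initial distribution, the same behavior policy $\mu$, identical deterministic transitions for all $h\geq 2$, identical state-determined rewards, and identical first-step transitions for every action $a_j$ with $j\geq 3$ (both equal to $p$). Writing each trajectory density as $d_1(s_1)\prod_{h=1}^H \mu_h(a_h|s_h)P_h(s_{h+1}|s_h,a_h)$ and applying the chain rule for $\mathrm{KL}$, every factor cancels except the $h=1$ transition under $a_1$ and $a_2$. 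Conditioning on the (commonly distributed) action sequence, so that $n_1,n_2$ become the realized visitation counts of $a_1,a_2$ at $s_1$, this yields
\begin{equation*}
\mathrm{KL}\big(\P_{\mathcal{D}\sim(\mu,\mathcal{M}_1)}\,\|\,\P_{\mathcal{D}\sim(\mu,\mathcal{M}_2)}\big) = n_1\,\mathrm{KL}\big(\mathrm{Ber}(p^\star)\,\|\,\mathrm{Ber}(p)\big) + n_2\,\mathrm{KL}\big(\mathrm{Ber}(p)\,\|\,\mathrm{Ber}(p^\star)\big),
\end{equation*}
where the first term collects the samples from $a_1$ (success probability $p^\star$ under $\mathcal{M}_1$ and $p$ under $\mathcal{M}_2$) and the second collects those from $a_2$.

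To finish I would bound each Bernoulli $\mathrm{KL}$ by the corresponding $\chi^2$-divergence, $\mathrm{KL}(\mathrm{Ber}(a)\|\mathrm{Ber}(b))\leq \chi^2(\mathrm{Ber}(a)\|\mathrm{Ber}(b)) = \frac{(a-b)^2}{b(1-b)}$, and use $b(1-b)\geq 3/16$ whenever $b\in[1/4,3/4]$. This gives the uniform estimate $\mathrm{KL}(\mathrm{Ber}(a)\|\mathrm{Ber}(b))\leq \frac{16}{3}(p^\star-p)^2$ for both orderings, hence $\mathrm{KL}(\P_{\mathcal{M}_1}\|\P_{\mathcal{M}_2})\leq (n_1+n_2)\frac{16}{3}(p^\star-p)^2$. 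Substituting $(p^\star-p)^2=\frac{3}{32(n_1+n_2)}$ collapses this to exactly $\frac{16}{3}\cdot\frac{3}{32}=\frac{1}{2}$, as required.

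The main obstacle I anticipate is the bookkeeping in the chain-rule step: one must argue that the downstream randomness (the behavior policy's later action choices and the state visitations it induces, which genuinely differ between the two MDPs) contributes nothing to the $\mathrm{KL}$, because all $h\geq 2$ transitions and all rewards are deterministic and identical, so conditioning on the state at time $2$ makes the two trajectory tails identically distributed. Equivalently, one should verify that the only coordinate on which the two product measures differ is the first-step next-state under $a_1$ or $a_2$, and that the counts $n_1,n_2$ carry a common law under both instances, so that treating them as fixed in the conditional $\mathrm{KL}$ is legitimate. The ratio hypothesis $\min\{n_1/n_2,n_2/n_1\}>c$ is not actually needed for this $\mathrm{KL}$ estimate — it is inherited from the surrounding two-point argument — so I would simply carry it along.
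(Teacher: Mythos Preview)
The paper does not actually prove this lemma; it is imported verbatim from \cite{jin2020pessimism} (their equation (C.17)) and simply invoked. So there is no in-paper proof to compare against.

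Your argument is correct and is precisely the standard route one takes for this kind of statement. The three ingredients---symmetric placement of $p,p^\star$ about $1/2$ so that both lie in $[1/4,3/4]$, the chain-rule reduction of the trajectory KL to a weighted sum of the two first-step Bernoulli KLs, and the $\mathrm{KL}\leq\chi^2$ bound $\mathrm{KL}(\mathrm{Ber}(a)\|\mathrm{Ber}(b))\leq (a-b)^2/[b(1-b)]\leq \tfrac{16}{3}(a-b)^2$---combine to give exactly $1/2$ with your choice of gap. Your observation that the log-density ratio depends only on the first-step transition under $a_1$ or $a_2$ is the clean way to see that the downstream (state-dependent) behavior-policy randomness contributes nothing. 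One small clarification: the identity you write,
\[
\mathrm{KL}\big(\P_{\mathcal{M}_1}\|\P_{\mathcal{M}_2}\big)=n_1\,\mathrm{KL}(\mathrm{Ber}(p^\star)\|\mathrm{Ber}(p))+n_2\,\mathrm{KL}(\mathrm{Ber}(p)\|\mathrm{Ber}(p^\star)),
\]
is the KL conditional on the first-step action sequence (which indeed has the same law under both instances); the unconditional KL carries $\E[n_1],\E[n_2]$ in place of the realized counts. Either reading suffices for how the lemma is used in the surrounding two-point argument, and you are right that the ratio condition $\min\{n_1/n_2,n_2/n_1\}>c$ plays no role in the KL estimate itself.
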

Note the condition of Lemma~\ref{lem:lower} is satisfied with high probability by the condition $n\geq \widetilde{O}(1/\bar{d}_m)$ and the design that $\mu_1(a_1|s_1)>0$, $\mu_1(a_2|s_1)>0$. Hence, by \eqref{eqn:lower_1}, \eqref{eqn:lower_2} and Lemma~\ref{lem:lower} we have 
\begin{equation}\label{eqn:lower_inter_minimax}
\sup_{l\in\{1,2\}}\sqrt{n_l}\cdot\E_{\mu,\mathcal{M}_l}\left[v^\star-v^{\widehat{\pi}}\right]\geq C'(H-1),
\end{equation}
where $C'=\sqrt{3}/(\sqrt{c}+1)\sqrt{8c(c+1)}>0$. 

\subsubsection{The intrinsic quantity in the hard instances}\label{subsec:lower}

Note under the family $M(p_1,p_2,p_3)$, the optimal values
\[
V^\star_2(s_+)=H-1,\;V^\star_2(s_-)=0,\;
\]
also, since $P_h$ is deterministic for $h\geq 2$, therefore 
\[
\mathrm{Var}_{P_h}(r_h+V^\star_{h+1})=0,\quad \forall h\geq 2.
\] 

For convenience, let us assume the MDP is $M(p^\star,p,p)$. Then in this case
\begin{equation}\label{eqn:lower_inter_2}
\begin{aligned}
&\sum_{h=1}^H\sum_{(s_h,a_h)\in\mathcal{C}_h}d^{\pi^\star}_h(s_h,a_h)\cdot\sqrt{\frac{\mathrm{Var}_{P_{s_h,a_h}}(r_h+V^\star_{h+1})}{ n\cdot d^\mu_h{(s_h,a_h)}}}\\
= &\sum_{(s_1,a_1)\in\mathcal{C}_1}d^{\pi^\star}_1(s_1,a_1)\cdot\sqrt{\frac{\mathrm{Var}_{P_{s_1,a_1}}(r_1+V^\star_{2})}{ n\cdot d^\mu_1{(s_1,a_1)}}}\\
=&d^{\pi^\star}_1(s_1,a_1)\cdot\sqrt{\frac{\mathrm{Var}_{P_{s_1,a_1}}(r_1+V^\star_{2})}{ n\cdot d^\mu_1{(s_1,a_1)}}}+d^{\pi^\star}_1(s_1,a_2)\cdot\sqrt{\frac{\mathrm{Var}_{P_{s_1,a_2}}(r_1+V^\star_{2})}{ n\cdot d^\mu_1{(s_1,a_2)}}}\\
=&d^{\pi^\star}_1(s_1,a_1)\cdot\sqrt{\frac{\mathrm{Var}_{P_{s_1,a_1}}(r_1+V^\star_{2})}{ n\cdot d^\mu_1{(s_1,a_1)}}}+0\cdot\sqrt{\frac{\mathrm{Var}_{P_{s_1,a_2}}(r_1+V^\star_{2})}{ n\cdot d^\mu_1{(s_1,a_2)}}}\\
=&1\cdot\sqrt{\frac{\mathrm{Var}_{P_{s_1,a_1}}(r_1+V^\star_{2})}{ n\cdot d^\mu_1{(s_1,a_1)}}}\leq \sqrt{\frac{3\mathrm{Var}_{P_{s_1,a_1}}(r_1+V^\star_{2})}{2 n_{s_1,a_1}}}\\
=&\sqrt{\frac{3\mathrm{Var}_{P_{s_1,a_1}}(V^\star_{2})}{ 2n_{s_1,a_1}}}=\sqrt{\frac{3p^\star(1-p^\star)(H-1)^2}{2 n_{s_1,a_1}}}\leq \frac{H-1}{\sqrt{8n_{s_1,a_1}/3}}=\frac{H-1}{\sqrt{8n_{1}/3}}
\end{aligned}
\end{equation}
where the first inequality uses the Chernoff bound.

\textbf{Finish the proof.}
By \eqref{eqn:lower_inter_minimax} and \eqref{eqn:lower_inter_2}, we have with constant probability, for any arbitrary algorithm $\widehat{\pi}$,
\begin{align*}
&\sup_{l\in\{1,2\}}\frac{\E_{\mu,\mathcal{M}_l}[v^\star-v^{\widehat{\pi}}]}{\sum_{h=1}^H\sum_{(s_h,a_h)\in\mathcal{C}_h}d^{\pi^\star}_h(s_h,a_h)\cdot\sqrt{\frac{\mathrm{Var}_{P_{s_h,a_h}}(r_h+V^\star_{h+1})}{ n\cdot d^\mu_h{(s_h,a_h)}}}}\\
\geq&\sup_{l\in\{1,2\}}\frac{\E_{\mu,\mathcal{M}_l}[v^\star-v^{\widehat{\pi}}]}{\frac{H-1}{\sqrt{8n_{l}/3}}}\\
=&\sup_{l\in\{1,2\}}\sqrt{\frac{8}{3}}\cdot\frac{\sqrt{n_l}\cdot\E_{\mu,\mathcal{M}_l}[v^\star-v^{\widehat{\pi}}]}{H-1}\geq \sqrt{\frac{8}{3}}C':=C,\\
\end{align*}
this concludes the proof.

\begin{remark}\label{remark_lower}
	In the proofing procedure \ref{subsec:lower}, we can actually get rid of the hard instance construction of \cite{jin2020pessimism} by setting the hard instances at any time step $t$, concretely:
	\begin{itemize}
		\item From time $1$ to $t-1$, there is only one absorbing state $s_a$ with reward $0$;
		\item At time $t$, $s_a$ can transition to either $s_+$ or $s_-$ follow the same transition as above; from $t+1$ to $H$, $s_+$ and $s_-$ are absorbing states;
		\item 	$s_+$ has reward $1$ and $s_-$ has reward $0$.
	\end{itemize}
	Those instances still validate the intrinsic bound is required due to the fact that there is at least one stochastic transition kernel. This finding is interesting as it reveals the intrinsic bound is only ``hard'' for offline reinforcement learning when there are stochasticity in the dynamic. Under the deterministic family, those hard instances fail and we enter the faster convergence regime. 
	
\end{remark}




\section{Discussions and missing derivations in Section~\ref{sec:intrinsic}}\label{sec:missing_dev}

We omit the $\widetilde{O}$ notation in the derivations for the simplicity.

\subsection{Derivation in Section~\ref{subsec:one}}

When the uniform data-coverage is satisfied,
\begin{align*}
	v^\star-v^{\widehat{\pi}}\lesssim&\sum_{h=1}^H\sum_{(s_h,a_h)\in\mathcal{C}_h}d^{\pi^\star}_h(s_h,a_h)\cdot\sqrt{\frac{\mathrm{Var}_{P_{s_h,a_h}}(r_h+V^\star_{h+1})}{ n\cdot d^\mu_h{(s_h,a_h)}}}\\
	\leq &\sqrt{\frac{1}{n d_m}}\sum_{h=1}^H\sum_{(s_h,a_h)\in\mathcal{C}_h}d^{\pi^\star}_h(s_h,a_h)\cdot\sqrt{{\mathrm{Var}_{P_{s_h,a_h}}(r_h+V^\star_{h+1})}}\\
	\leq &\sqrt{\frac{1}{n d_m}}\sum_{h=1}^H\sum_{(s_h,a_h)\in\mathcal{S}\times\mathcal{A}}d^{\pi^\star}_h(s_h,a_h)\cdot\sqrt{{\mathrm{Var}_{P_{s_h,a_h}}(r_h+V^\star_{h+1})}}\\
	= &\sqrt{\frac{1}{n d_m}}\sum_{h=1}^H\sum_{(s_h,a_h)\in\mathcal{S}\times\mathcal{A}}\sqrt{d^{\pi^\star}_h(s_h,a_h)}\cdot\sqrt{{d^{\pi^\star}_h(s_h,a_h)\mathrm{Var}_{P_{s_h,a_h}}(r_h+V^\star_{h+1})}}\\
	\leq &\sqrt{\frac{1}{n d_m}}\sum_{h=1}^H\sqrt{\sum_{(s_h,a_h)\in\mathcal{S}\times\mathcal{A}}d^{\pi^\star}_h(s_h,a_h)}\cdot\sqrt{\sum_{(s_h,a_h)\in\mathcal{S}\times\mathcal{A}}{d^{\pi^\star}_h(s_h,a_h)\mathrm{Var}_{P_{s_h,a_h}}(r_h+V^\star_{h+1})}}\\
	= &\sqrt{\frac{1}{n d_m}}\sum_{h=1}^H\sqrt{\sum_{(s_h,a_h)\in\mathcal{S}\times\mathcal{A}}{d^{\pi^\star}_h(s_h,a_h)\mathrm{Var}_{P_{s_h,a_h}}(r_h+V^\star_{h+1})}}\\
	\leq &\sqrt{\frac{1}{n d_m}}\sqrt{\sum_{h=1}^H 1}\cdot\sqrt{\sum_{h=1}^H\sum_{(s_h,a_h)\in\mathcal{S}\times\mathcal{A}}{d^{\pi^\star}_h(s_h,a_h)\mathrm{Var}_{P_{s_h,a_h}}(r_h+V^\star_{h+1})}}\\
	\leq &\sqrt{\frac{1}{n d_m}} \sqrt{H}\cdot\sqrt{\mathrm{Var}_{\pi}\left[\sum_{t=1}^{H} r_{t}\right]}\leq \sqrt{\frac{H^3}{n d_m}}, 
\end{align*}
where we use the Cauchy inequality and Lemma~\ref{lem:H3toH2}.

\subsection{Uniform data-coverage in the time-invariant setting (Remark~\ref{remark:time-variant})}

In the time-invariant setting, $P$ is identical, therefore given data {\small$\mathcal{D}=\left\{\left(s_{h}^{\tau}, a_{h}^{\tau}, r_{h}^{\tau}, s_{h+1}^{\tau}\right)\right\}_{\tau\in[n]}^{h \in[H]}$}, we should modify $n_{s,a}:=\sum_{h=1}^H\sum_{\tau=1}^n\mathbf{1}[s_h^{\tau},a_h^{\tau}=s_h,a_h]$ and
{\small
	\[
	\widehat{P}(s'|s,a)=\frac{\sum_{h=1}^H\sum_{\tau=1}^n\mathbf{1}[(s^{\tau}_{h+1},a^{\tau}_h,s^{\tau}_h)=(s^\prime,s,a)]}{n_{s,a}},\; \widehat{r}(s,a)=\frac{\sum_{h=1}^H\sum_{\tau=1}^n\mathbf{1}[(a^{\tau}_h,s^{\tau}_h)=(s,a)]\cdot r_h^\tau}{n_{s,a}},
	\]
}if $n_{s_h,a_h}>0$ and $\widehat{P}(s'|s,a)={1}/{S},\widehat{r}(s,a)=0$ if $n_{s,a}=0$. Define $\bar{d}^\mu(s,a)=\frac{1}{H}\sum_{h=1}^Hd^\mu_h(s,a)$, then since in this case
\[
\E[n_{s,a}]=\sum_{h=1}^H\sum_{\tau=1}^nd^\mu_h(s_h,a_h)=nH\bar{d}^\mu(s,a),
\]
A similar algorithm should yield 
{\small
\[
\sqrt{\frac{1}{n H d_m}} \sqrt{H}\cdot\sqrt{\mathrm{Var}_{\pi}\left[\sum_{t=1}^{H} r_{t}\right]}\leq \sqrt{\frac{H^2}{n d_m}}.
\]
}Formalizing this result depends on decoupling the dependence between $\widehat{P}$ and $\widehat{V}_h$, which could be more tricky (see \cite{yin2021optimal,ren2021nearly} for two treatments under the uniform data coverage assumption). We leave this as the future work.

\subsection{Derivation in Section~\ref{subsec:btr}}

This follows from the derivation of Section~\ref{subsec:one} by bounding 
\[
v^\star-v^{\widehat{\pi}}\lesssim \sqrt{\frac{1}{n d_m}} \sqrt{H}\cdot\sqrt{\mathrm{Var}_{\pi}\left[\sum_{t=1}^{H} r_{t}\right]}\leq \sqrt{\frac{H}{n d_m}}.
\]

\subsection{Derivation in Section~\ref{subsec:two}}

Using the single concentrability coefficient $C^\star$, when $\pi^\star$ is deterministic,
{\small
\begin{align*}
v^\star-v^{\widehat{\pi}}\lesssim&\sum_{h=1}^H\sum_{(s_h,a_h)\in\mathcal{C}_h}d^{\pi^\star}_h(s_h,a_h)\cdot\sqrt{\frac{\mathrm{Var}_{P_{s_h,a_h}}(r_h+V^\star_{h+1})}{ n\cdot d^\mu_h{(s_h,a_h)}}}
\leq \sqrt{\frac{C^\star}{n }}\sum_{h=1}^H\sum_{(s_h,a_h)\in\mathcal{C}_h}\sqrt{{d^{\pi^\star}_h(s_h,a_h)\cdot \mathrm{Var}_{P_{s_h,a_h}}(r_h+V^\star_{h+1})}}\\
\leq &\sqrt{\frac{C^\star}{n }}\sum_{h=1}^H\sum_{(s_h,a_h)\in\mathcal{S}\times\mathcal{A}}\sqrt{{d^{\pi^\star}_h(s_h,a_h)\cdot \mathrm{Var}_{P_{s_h,a_h}}(r_h+V^\star_{h+1})}}\\
=&\sqrt{\frac{C^\star}{n }}\sum_{h=1}^H\sum_{s_h\in\mathcal{S}}\sqrt{{d^{\pi^\star}_h(s_h,\pi^\star_h(s_h))\cdot \mathrm{Var}_{P_{s_h,\pi^\star_h(s_h)}}(r_h+V^\star_{h+1})}}\\
\leq & \sqrt{\frac{C^\star}{n }}\sum_{h=1}^H\sqrt{\sum_{s_h\in\mathcal{S}}1}\sqrt{{\sum_{s_h\in\mathcal{S}}d^{\pi^\star}_h(s_h,\pi^\star_h(s_h))\cdot \mathrm{Var}_{P_{s_h,\pi^\star_h(s_h)}}(r_h+V^\star_{h+1})}}\\
\leq & \sqrt{\frac{SC^\star}{n }}\sum_{h=1}^H\sqrt{{\sum_{s_h\in\mathcal{S}}d^{\pi^\star}_h(s_h,\pi^\star_h(s_h))\cdot \mathrm{Var}_{P_{s_h,\pi^\star_h(s_h)}}(r_h+V^\star_{h+1})}}
\leq \sqrt{\frac{SC^\star}{n }} \sqrt{H}\cdot\sqrt{\mathrm{Var}_{\pi}\left[\sum_{t=1}^{H} r_{t}\right]}\leq \sqrt{\frac{H^3SC^\star}{n }}.
\end{align*}}
where we use the Cauchy inequality and Lemma~\ref{lem:H3toH2}. Also, from the discussion in Section~\ref{sec:dis_VPVI}, we know this is minimax rate optimal.

\subsection{Derivation in Section~\ref{subsec:three}}

The derivation of Proposition~\ref{prop} is similar to the previous cases except we use the bounds $\mathrm{Var}_{P_h}(V^\star_{h+1})\leq \mathbb{Q}^\star_h$ and $\sum_{h=1}^H r_h\leq \mathcal{B}$. The derivations for the deterministic system or the partially deterministic system are straightforward. For the fast mixing example, we leverage the fact that for any random variable $X$, $|X-\E[X]|\leq \mathrm{rng}(X)$, hence $\mathbb{Q}^\star\leq 1+(\mathrm{rng}V^\star)^2\leq 2$. 

Last but not least, we mention the \emph{per-step environmental norm} $\mathbb{Q}^\star_h:=\max_{s_h,a_h}\mathrm{Var}_{P_{s_h,a_h}}(V^\star_{h+1})$ is more general than its maximal version in \cite{zanette2019tighter} with $\mathbb{Q}^\star:=\max_{s_h,a_h,h}\mathrm{Var}_{P_{s_h,a_h}}(V^\star_{h+1})$. Improvement can be made for the $\mathbb{Q}^\star_h$ version, \emph{e.g.} for the partially deterministic systems, $t\sqrt{\mathbb{Q}^\star/n\bar{d}_m}$ vs $H\sqrt{\mathbb{Q}^\star/n\bar{d}_m}$. Even though \cite{zanette2019tighter} considers the time-invariant setting, \emph{i.e.} $P$ is identical, the quantity $\mathbb{Q}^\star_h:=\max_{s,a}\mathrm{Var}_{P_{s,a}}(V^\star_{h+1})$ can still be much smaller than $\mathbb{Q}^\star$, \emph{e.g.} when the range of $V^\star_t,\ldots,V^\star_H$ is relatively small and the range of $V^\star_1,\ldots,V^\star_{t-1}$ is relatively large. 

In this sense, beyond the current adaptive regret $\sqrt{\mathbb{Q}^\star SAT}$ \citep{zanette2019tighter}, the more adaptive regret should have a form like either
\[
\sqrt{\frac{\sum_{h=1}^H\mathbb{Q}^\star_h SAT}{H}}\quad \text{or}\quad \sum_{h=1}^H \frac{\sqrt{\mathbb{Q}_h^\star SAT}}{H}.
\]
This remains an open question in online RL.

\section{Assisting lemmas}

\begin{lemma}[Multiplicative Chernoff bound \cite{chernoff1952measure}]\label{lem:chernoff_multiplicative}
	Let $X$ be a Binomial random variable with parameter $p,n$. For any $1\geq\theta>0$, we have that 
	$$
	\mathbb{P}[X<(1-\theta) p n]<e^{-\frac{\theta^{2} p n}{2}} . \quad \text { and } \quad \mathbb{P}[X \geq(1+\theta) p n]<e^{-\frac{\theta^{2} p n}{3}}
	$$
\end{lemma}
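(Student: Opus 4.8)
The plan is to use the classical Chernoff (exponential-moment) method, treating the two tails separately. Write $X=\sum_{i=1}^n X_i$ with the $X_i$ i.i.d.\ Bernoulli$(p)$, so that $\mathbb{E}[e^{tX}]=(1-p+pe^t)^n$ for every $t\in\mathbb{R}$. The only analytic input needed is $1+x\le e^x$, which gives $1-p+pe^t=1+p(e^t-1)\le e^{p(e^t-1)}$ and hence $\mathbb{E}[e^{tX}]\le e^{np(e^t-1)}$; the same computation with $t$ replaced by $-t$ yields $\mathbb{E}[e^{-tX}]\le e^{np(e^{-t}-1)}$.

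For the upper tail, fix $\theta\in(0,1]$ and take $t=\ln(1+\theta)>0$. Markov's inequality applied to $e^{tX}$ gives
\[
\mathbb{P}[X\ge (1+\theta)pn]\le e^{-t(1+\theta)pn}\,\mathbb{E}[e^{tX}]\le \exp\!\Big(np\big[\theta-(1+\theta)\ln(1+\theta)\big]\Big),
\]
and it remains only to verify the scalar inequality $(1+\theta)\ln(1+\theta)-\theta\ge \theta^2/3$ on $[0,1]$. For the lower tail, take $t=-\ln(1-\theta)>0$ and apply Markov's inequality to $e^{-tX}$:
\[
\mathbb{P}[X< (1-\theta)pn]\le \mathbb{P}[X\le (1-\theta)pn]\le e^{t(1-\theta)pn}\,\mathbb{E}[e^{-tX}]\le \exp\!\Big(np\big[-\theta-(1-\theta)\ln(1-\theta)\big]\Big),
\]
where $e^{-t}-1=-\theta$ was used; here one needs $-(1-\theta)\ln(1-\theta)-\theta\le -\theta^2/2$ on $[0,1)$, i.e.\ $g(\theta):=(1-\theta)\ln(1-\theta)+\theta-\theta^2/2\ge 0$.

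The proof is otherwise mechanical; the only place calling for care is this pair of one-variable inequalities on the Chernoff rate functions, and this is the step I would state most carefully — though it is routine calculus and poses no real obstacle. For the lower-tail one, $g(0)=0$ and $g'(\theta)=-\ln(1-\theta)-\theta=\theta^2/2+\theta^3/3+\cdots\ge 0$, so $g\ge 0$. For the upper-tail one, set $h(\theta):=(1+\theta)\ln(1+\theta)-\theta-\theta^2/3$; then $h(0)=0$ and $h'(\theta)=\ln(1+\theta)-\tfrac23\theta$, which on $[0,1]$ is unimodal with $h'(0)=0$ and $h'(1)=\ln 2-\tfrac23>0$, hence nonnegative, so $h\ge 0$ on $[0,1]$ (alternatively invoke the standard refinement $(1+\theta)\ln(1+\theta)-\theta\ge \theta^2/(2+\tfrac23\theta)\ge \tfrac13\theta^2$). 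Substituting the two bounds into the displays above gives exactly $\mathbb{P}[X\ge(1+\theta)pn]<e^{-\theta^2 pn/3}$ and $\mathbb{P}[X<(1-\theta)pn]<e^{-\theta^2 pn/2}$, completing the proof.
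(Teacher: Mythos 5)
Your proof is correct and complete: the exponential-moment/Markov argument with $t=\ln(1+\theta)$ and $t=-\ln(1-\theta)$, combined with the verified rate-function inequalities $(1+\theta)\ln(1+\theta)-\theta\ge\theta^2/3$ and $(1-\theta)\ln(1-\theta)+\theta\ge\theta^2/2$ on the stated ranges, is exactly the standard derivation of this bound. The paper itself offers no proof of this lemma — it is imported as a classical result with a citation — so there is nothing to compare against; your argument would serve as a self-contained substitute.
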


\begin{lemma}[Hoeffding’s Inequality \cite{sridharan2002gentle}]\label{lem:hoeffding_ineq}
	Let $x_1,...,x_n$ be independent bounded random variables such that $\E[x_i]=0$ and $|x_i|\leq \xi_i$ with probability $1$. Then for any $\epsilon >0$ we have 
	$$
	\P\left( \frac{1}{n}\sum_{i=1}^nx_i\geq \epsilon\right) \leq e^{-\frac{2n^2\epsilon^2}{\sum_{i=1}^n\xi_i^2}}.
	$$
\end{lemma}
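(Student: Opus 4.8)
The plan is to run the standard Cram\'er--Chernoff exponential-moment argument. First I would fix any $t>0$ and apply Markov's inequality to the nonnegative random variable $e^{t\sum_{i=1}^n x_i}$:
\[
\P\!\left(\tfrac1n\textstyle\sum_{i=1}^n x_i\ge \epsilon\right)=\P\!\left(e^{t\sum_i x_i}\ge e^{tn\epsilon}\right)\le e^{-tn\epsilon}\,\E\!\left[e^{t\sum_i x_i}\right].
\]
Since $x_1,\dots,x_n$ are independent, the moment generating function factorizes, $\E[e^{t\sum_i x_i}]=\prod_{i=1}^n\E[e^{tx_i}]$, so the task reduces to bounding the moment generating function of a single bounded, mean-zero summand.

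The second and crucial step is Hoeffding's lemma: if $\E[x]=0$ and $|x|\le\xi$, then $\E[e^{tx}]\le e^{c\,t^2\xi^2}$ for an absolute constant $c$. I would prove it by convexity: on $[-\xi,\xi]$ the function $s\mapsto e^{ts}$ lies below the chord joining its endpoints, so $e^{tx}\le \tfrac{\xi-x}{2\xi}e^{-t\xi}+\tfrac{\xi+x}{2\xi}e^{t\xi}$ pointwise; taking expectations and using $\E[x]=0$ gives $\E[e^{tx}]\le\cosh(t\xi)$, and then $\cosh(u)\le e^{u^2/2}$ by a term-by-term comparison of Taylor series. An equivalent route I would keep in reserve is to set $\psi(t)=\log\E[e^{tx}]$, verify $\psi(0)=\psi'(0)=0$, bound $\psi''(t)$ by the variance of the exponentially tilted law (which is supported in $[-\xi,\xi]$ and hence has variance at most $\xi^2$), and Taylor-expand; this formulation is also the one that yields the sharpest numerical constant.

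Third, I would combine the pieces: $\P(\cdots)\le\exp\!\big(-tn\epsilon+c\,t^2\sum_{i=1}^n\xi_i^2\big)$, and minimize the quadratic exponent over $t>0$. The optimizer is $t^\star=n\epsilon/(2c\sum_i\xi_i^2)$, producing a tail of the advertised form $\exp\!\big(-c'\,n^2\epsilon^2/\sum_i\xi_i^2\big)$; pinning down the exact constant in the statement is then just a matter of inserting the sharpest available sub-Gaussian parameter from the previous step, and is otherwise routine.

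The main obstacle is the second step, Hoeffding's lemma itself. The Chernoff reduction and the final optimization are pure bookkeeping, but the sub-Gaussian moment bound for a bounded centered random variable genuinely needs the convexity/chord comparison together with a non-elementary scalar inequality ($\cosh u\le e^{u^2/2}$, or equivalently the bound on $\psi''$), and tracking the constant cleanly through this step is the only delicate point.
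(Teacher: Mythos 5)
The paper does not prove this lemma at all; it is imported from the cited survey as a black box, so there is no in-paper argument to compare against. Your Cram\'er--Chernoff route is the standard (and essentially the only) proof: Markov's inequality on $e^{t\sum_i x_i}$, factorization by independence, Hoeffding's lemma for a single bounded centered summand via the chord bound and $\cosh u\le e^{u^2/2}$, then optimization over $t$. All of that is correct.

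The one step you dismiss as routine --- ``pinning down the exact constant in the statement'' --- is where things break, because the constant in the statement is not attainable under the stated hypothesis. For a centered $x$ with $|x|\le\xi$ the sharpest sub-Gaussian bound is $\E[e^{tx}]\le e^{t^2\xi^2/2}$ (tight for $x=\pm\xi$ with equal probability), so your optimization yields
$\P\bigl(\tfrac1n\sum_i x_i\ge\epsilon\bigr)\le \exp\bigl(-n^2\epsilon^2/(2\sum_i\xi_i^2)\bigr)$,
whose exponent is a factor of $4$ smaller than the claimed $2n^2\epsilon^2/\sum_i\xi_i^2$. Indeed the lemma as printed is false: take $n=1$ and $x_1=\pm\xi$ with probability $1/2$ each; then $\P(x_1\ge\xi)=1/2$, while the claimed bound at $\epsilon=\xi$ is $e^{-2}<1/2$. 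The statement is a mis-transcription of Hoeffding's inequality for variables in $[a_i,b_i]$, where the denominator is $\sum_i(b_i-a_i)^2=4\sum_i\xi_i^2$ under $|x_i|\le\xi_i$. So your proof is the right one, but you should state explicitly that it establishes the exponent $-n^2\epsilon^2/(2\sum_i\xi_i^2)$ (which is what the paper actually needs in its applications) and that the constant $2$ in the displayed bound cannot be recovered from the hypothesis as written.
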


\begin{lemma}[Bernstein’s Inequality]\label{lem:bernstein_ineq}
	Let $x_1,...,x_n$ be independent bounded random variables such that $\E[x_i]=0$ and $|x_i|\leq \xi$ with probability $1$. Let $\sigma^2 = \frac{1}{n}\sum_{i=1}^n \mathrm{Var}[x_i]$, then with probability $1-\delta$ we have 
	$$
	\frac{1}{n}\sum_{i=1}^n x_i\leq \sqrt{\frac{2\sigma^2\cdot\log(1/\delta)}{n}}+\frac{2\xi}{3n}\log(1/\delta)
	$$
\end{lemma}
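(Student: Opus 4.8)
The plan is to run the classical Chernoff (exponential Markov) argument. Writing $S_n := \sum_{i=1}^n x_i$, for any $\lambda > 0$ I would apply Markov's inequality to the nonnegative random variable $e^{\lambda S_n}$ and invoke independence:
\[
\P\big(S_n \ge t\big) \;\le\; e^{-\lambda t}\,\E\big[e^{\lambda S_n}\big] \;=\; e^{-\lambda t}\prod_{i=1}^n \E\big[e^{\lambda x_i}\big].
\]
The whole argument then reduces to a uniform bound on the moment generating function of a single mean-zero, $\xi$-bounded random variable.

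The key intermediate lemma I would establish first: if $x$ satisfies $\E[x]=0$, $|x|\le \xi$ almost surely, and $\Var(x)=v$, then for all $0\le \lambda < 3/\xi$,
\[
\E\big[e^{\lambda x}\big] \;\le\; \exp\!\Big(\frac{\lambda^2 v/2}{1-\lambda\xi/3}\Big).
\]
To see this I would expand $e^{\lambda x} = 1+\lambda x + \sum_{k\ge 2}\tfrac{\lambda^k x^k}{k!}$, take expectations (the first-order term vanishes by mean-zero), use $\E[x^k]\le \E[x^2]\,\xi^{k-2} = v\,\xi^{k-2}$ for $k\ge 2$, and sum using $k! \ge 2\cdot 3^{\,k-2}$ to get $\sum_{k\ge 2}\tfrac{\lambda^k v\xi^{k-2}}{k!}\le \tfrac{\lambda^2 v}{2}\sum_{j\ge 0}(\lambda\xi/3)^j = \tfrac{\lambda^2 v/2}{1-\lambda\xi/3}$; the elementary bound $1+u\le e^u$ then finishes it.

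Multiplying over the $n$ independent coordinates gives $\E[e^{\lambda S_n}]\le \exp\!\big(\tfrac{\lambda^2 n\sigma^2/2}{1-\lambda\xi/3}\big)$, where $n\sigma^2 = \sum_i \Var(x_i)$, hence $\P(S_n\ge t)\le \exp\!\big(-\lambda t + \tfrac{\lambda^2 n\sigma^2/2}{1-\lambda\xi/3}\big)$. I would then plug in the standard near-optimizer $\lambda = \tfrac{t}{n\sigma^2 + \xi t/3}$ (which lies in $[0,3/\xi)$), obtaining the Bernstein tail $\P(S_n\ge t)\le \exp\!\big(-\tfrac{t^2/2}{n\sigma^2+\xi t/3}\big)$. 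Setting this equal to $\delta$ leaves a quadratic in $t$; rather than solving it exactly I would use a two-regime argument: if $t \ge \sqrt{2n\sigma^2\log(1/\delta)} + \tfrac{2\xi}{3}\log(1/\delta)$ then, expanding the square and splitting the denominator, $\tfrac{t^2/2}{n\sigma^2+\xi t/3}\ge \log(1/\delta)$, so this value of $t$ is a valid high-probability bound on $S_n$; dividing by $n$ yields the stated inequality for $\tfrac1n\sum_i x_i$.

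The only mildly delicate points are the combinatorial estimate $k!\ge 2\cdot 3^{\,k-2}$ (which produces exactly the $\xi/3$ constant appearing in the statement) and the final inversion of the tail bound, where one must be slightly careful decoupling the sub-Gaussian term $\sqrt{2\sigma^2\log(1/\delta)/n}$ from the sub-exponential term $\tfrac{2\xi}{3n}\log(1/\delta)$; everything else is routine bookkeeping, and — in contrast to the rest of the paper — no structure of the MDP enters here at all.
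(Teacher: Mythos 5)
Your proof is correct and complete: the MGF bound via the series expansion with $k!\ge 2\cdot 3^{k-2}$, the choice $\lambda = t/(n\sigma^2+\xi t/3)$ yielding the tail $\exp\!\big(-\tfrac{t^2/2}{n\sigma^2+\xi t/3}\big)$, and the inversion at $t=\sqrt{2n\sigma^2\log(1/\delta)}+\tfrac{2\xi}{3}\log(1/\delta)$ (which reduces to the trivially true $ab\ge 0$) all check out. The paper states this lemma as a standard assisting result and gives no proof of its own, so there is nothing to compare against; your argument is the canonical derivation and fully justifies the statement as written.
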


\begin{lemma}[Empirical Bernstein’s Inequality \citep{maurer2009empirical}]\label{lem:empirical_bernstein_ineq}
	Let $x_1,...,x_n$ be i.i.d random variables such that $|x_i|\leq \xi$ with probability $1$. Let $\bar{x}=\frac{1}{n}\sum_{i=1}^nx_i$ and $\widehat{V}_n=\frac{1}{n}\sum_{i=1}^n(x_i-\bar{x})^2$, then with probability $1-\delta$ we have 
	$$
	\left|\frac{1}{n}\sum_{i=1}^n x_i-\E[x]\right|\leq \sqrt{\frac{2\widehat{V}_n\cdot\log(2/\delta)}{n}}+\frac{7\xi}{3n}\log(2/\delta).
	$$
\end{lemma}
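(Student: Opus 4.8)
The plan is to reproduce the empirical Bernstein inequality of \citet{maurer2009empirical}. The statement is obtained by combining two ingredients: the classical Bernstein inequality (Lemma~\ref{lem:bernstein_ineq}), which controls $\frac1n\sum_i x_i-\E[x]$ in terms of the \emph{true} variance, and a comparison bound showing that with high probability the \emph{empirical} variance $\widehat V_n$ is not much smaller than $\Var(X)$, so that the data-dependent $\widehat V_n$ can legitimately replace $\Var(X)$ in the Bernstein bound. A union bound over the failure events then gives overall probability $1-\delta$.

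First I would reduce to $[0,1]$-valued variables: since $|x_i|\le\xi$, the variables $y_i:=(x_i+\xi)/(2\xi)$ lie in $[0,1]$, while $\widehat V_n$ and $\Var$ both scale by $(2\xi)^2$ and the centered mean by $2\xi$; proving the bound in the unit-interval case and undoing the rescaling recovers the factor $\xi$ in the additive term. Next I would establish the variance-comparison bound. The key point is that the sample variance concentrates around $\Var(X)$ with a \emph{Bernstein-type} tail whose variance proxy is of order $\Var(X)/n$ rather than $1/n$; this is where boundedness is used, through $\E[(X-\E X)^4]\le\Var(X)$ for $[0,1]$-valued $X$, and it is obtained by a self-bounding / bounded-differences-with-variance argument applied to $n\widehat V_n$ (viewing $\sqrt{\widehat V_n}$ as a multiple of the Euclidean distance from $(x_1,\dots,x_n)$ to $\mathrm{span}(\mathbf 1)$). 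From this, with probability at least $1-\delta/2$,
\[
\Var(X)\ \le\ \widehat V_n\;+\;c_1\,\xi\sqrt{\tfrac{\Var(X)\,\log(2/\delta)}{n}}\;+\;c_2\,\xi^{2}\tfrac{\log(2/\delta)}{n};
\]
solving this quadratic inequality in $\sqrt{\Var(X)}$ and undoing the rescaling yields $\sqrt{\Var(X)}\le\sqrt{\widehat V_n}+C\,\xi\sqrt{\log(2/\delta)/n}$.

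Then I would invoke Bernstein (Lemma~\ref{lem:bernstein_ineq}), in two-sided form after a union bound over the two tails: with probability at least $1-\delta/2$,
\[
\Big|\tfrac1n\textstyle\sum_{i=1}^n x_i-\E[x]\Big|\ \le\ \sqrt{\tfrac{2\Var(X)\log(2/\delta)}{n}}+\tfrac{2\xi\log(2/\delta)}{3n}.
\]
Substituting the variance-comparison bound and using $\sqrt{a+b}\le\sqrt a+\sqrt b$ to peel off the data-dependent term $\sqrt{2\widehat V_n\log(2/\delta)/n}$, one finds that the correction $\sqrt{2\log(2/\delta)/n}\cdot C\xi\sqrt{\log(2/\delta)/n}$ is of the form $(\text{const})\,\xi\log(2/\delta)/n$; adding this to the $\tfrac23$ already present from Bernstein lands on the constant $\tfrac73$. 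A final union bound over the two events (each of probability $\delta/2$) gives total failure probability $\delta$.

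The main obstacle is the variance-comparison step: a naive bounded-differences bound on $\widehat V_n$ only yields a variance proxy of order $1/n$ (not $\Var(X)/n$), which after taking square roots degrades the rate to $(\log(2/\delta)/n)^{1/4}$ and spoils the additive $O(\xi\log(2/\delta)/n)$ term. Getting the correct $n^{-1/2}$ rate for $\sqrt{\Var(X)}-\sqrt{\widehat V_n}$ — hence the sharp constant $\tfrac73$ — requires the variance-sensitive (self-bounding / entropy-method) concentration of the sample variance together with the quadratic-inequality manipulation; once that lemma is in hand, the remaining substitution and constant-chasing is routine bookkeeping.
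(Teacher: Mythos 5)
This lemma is not proved in the paper; it is imported verbatim from \cite{maurer2009empirical} as an assisting result. Your reconstruction --- classical Bernstein with the true variance, a variance-sensitive (self-bounding / entropy-method) concentration of the sample variance yielding $\sqrt{\Var(X)}\leq\sqrt{\widehat V_n}+O(\xi\sqrt{\log(2/\delta)/n})$ via the quadratic inequality, and a union bound over the two events --- is exactly the argument of that reference, and you correctly flag the one genuinely non-routine point, namely that a naive bounded-differences bound on $\widehat V_n$ gives only an $n^{-1/4}$ rate and would spoil the additive $O(\xi\log(2/\delta)/n)$ term. The only caveat is bookkeeping in the rescaling: $|x_i|\leq\xi$ gives range $2\xi$ and Maurer--Pontil use the unbiased sample variance with an $n-1$ denominator, so a literal transcription produces $\tfrac{14\xi}{3(n-1)}\log(2/\delta)$ rather than $\tfrac{7\xi}{3n}\log(2/\delta)$; this constant-level discrepancy is inherited from how the lemma is stated here and is immaterial to how it is used downstream.
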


\begin{lemma}[Freedman's inequality \cite{tropp2011freedman}]\label{lem:freedman}
	Let $X$ be the martingale associated with a filter $\mathcal{F}$ (\textit{i.e.} $X_i=\E[X|\mathcal{F}_i]$) satisfying $|X_i-X_{i-1}|\leq M$ for $i=1,...,n$. Denote $W:=\sum_{i=1}^n\Var(X_i|\mathcal{F}_{i-1})$  then we have 
	\[
	\P(|X-\E[X]|\geq\epsilon,W\leq \sigma^2)\leq 2 e^{-\frac{\epsilon^2}{2(\sigma^2+M\epsilon/3)}}.
	\]
	Or in other words, with probability $1-\delta$,
	\[
	|X-\E[X]|\leq \sqrt{{8\sigma^2\cdot\log(1/\delta)}}+\frac{2M}{3}\cdot\log(1/\delta), \quad\text{Or} \quad W\geq \sigma^2.
	\]
\end{lemma}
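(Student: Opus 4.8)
The plan is to run the classical exponential-supermartingale (Chernoff) argument for martingales. First I would reduce to a one-sided tail for a centered process. Writing $S_k := X_k - \E[X] = \sum_{i=1}^k d_i$ with martingale differences $d_i := X_i - X_{i-1}$, we have $\E[d_i\mid\mathcal{F}_{i-1}]=0$, $|d_i|\le M$, and $W = W_n$ where $W_k := \sum_{i=1}^k \E[d_i^2\mid\mathcal{F}_{i-1}] = \sum_{i=1}^k \Var(X_i\mid\mathcal{F}_{i-1})$ (using $\E[X_i\mid\mathcal{F}_{i-1}]=X_{i-1}$) is $\mathcal{F}_{k-1}$-predictable and nondecreasing. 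The two-sided bound will follow by applying the one-sided bound also to $-S$ and taking a union bound, which accounts for the factor $2$.

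The key step is a conditional moment generating function estimate. From the scalar inequality $e^x - 1 - x \le \tfrac{x^2/2}{1-x/3}$ valid for $0\le x<3$, together with $d_i\le M$ and $\E[d_i\mid\mathcal{F}_{i-1}]=0$, one gets, for every $\theta\in[0,3/M)$,
\[
\E\!\left[e^{\theta d_i}\mid\mathcal{F}_{i-1}\right] \le \exp\!\left(\phi(\theta)\,\E[d_i^2\mid\mathcal{F}_{i-1}]\right), \qquad \phi(\theta):=\frac{\theta^2/2}{1-\theta M/3}.
\]
Hence $Z_k := \exp\!\left(\theta S_k - \phi(\theta)W_k\right)$ is a nonnegative supermartingale with $Z_0=1$: since $W_k$ is $\mathcal{F}_{k-1}$-measurable, $\E[Z_k\mid\mathcal{F}_{k-1}] = \exp(\theta S_{k-1} - \phi(\theta)W_k)\,\E[e^{\theta d_k}\mid\mathcal{F}_{k-1}] \le \exp(\theta S_{k-1}-\phi(\theta)W_{k-1}) = Z_{k-1}$.

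Next I would couple the deviation event with the variance constraint $\{W_n\le\sigma^2\}$ by a stopping time. Because $W_{k+1}$ is $\mathcal{F}_k$-measurable, $\tau := \inf\{k\ge 0: W_{k+1}>\sigma^2\}\wedge n$ is a stopping time, $Z_{k\wedge\tau}$ is still a supermartingale, and optional stopping gives $\E[Z_\tau]\le Z_0 = 1$. On $\{S_n\ge\epsilon\}\cap\{W_n\le\sigma^2\}$ the monotonicity of $W$ forces $\tau=n$, so $Z_\tau = Z_n \ge \exp(\theta\epsilon - \phi(\theta)\sigma^2)$, and Markov's inequality yields $\P(S_n\ge\epsilon,\,W_n\le\sigma^2)\le \exp(-\theta\epsilon + \phi(\theta)\sigma^2)$. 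Taking $\theta = \epsilon/(\sigma^2+M\epsilon/3)\in[0,3/M)$ gives $1-\theta M/3 = \sigma^2/(\sigma^2+M\epsilon/3)$, and a one-line computation collapses the exponent to $-\epsilon^2/\big(2(\sigma^2+M\epsilon/3)\big)$, which is the stated tail bound; symmetrizing and union-bounding produces the factor $2$. The ``in other words'' reformulation follows by setting the tail equal to $\delta$, solving the quadratic $\epsilon^2 = 2(\sigma^2+M\epsilon/3)\log(2/\delta)$, and splitting via $\sqrt{a+b}\le\sqrt a+\sqrt b$ into a Gaussian term of order $\sqrt{\sigma^2\log(1/\delta)}$ and a Bernstein term of order $M\log(1/\delta)$, the complementary event being exactly $\{W\ge\sigma^2\}$.

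I expect the main obstacle to be the stopping-time coupling in the last paragraph: one must insert $W$ into $Z_k$ with exactly the right index to keep $Z$ a supermartingale, and define $\tau$ through the predictable version $W_{k+1}$ (not $W_k$) so that $\{W_n\le\sigma^2\}\subseteq\{\tau=n\}$. The remaining ingredients — the scalar bound $e^x-1-x\le\tfrac{x^2/2}{1-x/3}$ and the optimization over $\theta$ — are routine.
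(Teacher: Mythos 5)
Your proof is correct and is the canonical exponential-supermartingale argument; the paper does not prove this lemma but simply cites it, and your route (conditional MGF bound $\E[e^{\theta d_i}\mid\mathcal{F}_{i-1}]\le e^{\phi(\theta)\E[d_i^2\mid\mathcal{F}_{i-1}]}$, the supermartingale $e^{\theta S_k-\phi(\theta)W_k}$, the predictable-index stopping time, and $\theta=\epsilon/(\sigma^2+M\epsilon/3)$) is exactly the proof in the cited references. The one point to tighten: you state the scalar bound $e^x-1-x\le \frac{x^2/2}{1-x/3}$ only for $0\le x<3$, but you apply it at $x=\theta d_i$, which can be negative; the inequality does hold on all of $(-\infty,3)$ so the step survives, but for $x<0$ it is not the termwise power-series comparison and needs a separate (one-line) justification.
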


\begin{lemma}[Sum of Total Variance, Lemma~3.4 of \cite{yin2020asymptotically}]\label{lem:H3toH2}

	\begin{align*}
	&\mathrm{Var}_{\pi}\left[\sum_{t=h}^{H} r_{t}\right]\\
	=&\sum_{t=h}^{H}\Bigg(\mathbb{E}_{\pi}\left[\operatorname{Var}\left[r_{t}+V_{t+1}^{\pi}\left(s_{t+1}\right) \mid s_{t}, a_{t}\right] \right]
	+\mathbb{E}_{\pi}\left[\operatorname{Var}\left[\mathbb{E}\left[r_{t}+V_{t+1}^{\pi}\left(s_{t+1}\right) \mid s_{t}, a_{t}\right] \middle| s_{t}\right] \right]\Bigg)
	\end{align*}
	here $s_t,a_t,r_t,\ldots$ is a random trajectory.
\end{lemma}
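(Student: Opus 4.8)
\textbf{Proof plan for Lemma~\ref{lem:H3toH2}.} The plan is to establish the identity conditionally on the state $s_h$ at the starting time by backward induction on $h$, the only tools being the law of total variance (applied twice per step) and the Markov property; the displayed statement is then this conditional version read at a fixed starting index $h$ (equivalently, the case where the distribution of $s_h$ is degenerate). Write $G_t:=\sum_{\tau=t}^H r_\tau$ for the return-to-go, so that $G_t=r_t+G_{t+1}$ with $G_{H+1}\equiv 0$, and recall $\E_\pi[G_t\mid s_t]=V_t^\pi(s_t)$ and $\E_\pi[G_t\mid s_t,a_t]=\E_\pi[r_t+V_{t+1}^\pi(s_{t+1})\mid s_t,a_t]$, both immediate from the Markov property and the tower rule.

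The base case $h=H+1$ is trivial: $\Var_\pi(G_{H+1}\mid s_{H+1})=0$ and the sum over $t=H+1,\dots,H$ is empty. For the inductive step, assume the identity for $h+1$. Conditioning on $s_h$ and splitting off the action,
$$\Var_\pi(G_h\mid s_h)=\E_\pi\!\left[\Var_\pi(G_h\mid s_h,a_h)\,\middle|\,s_h\right]+\Var_\pi\!\left[\E_\pi(G_h\mid s_h,a_h)\,\middle|\,s_h\right],$$
and the second summand is already the $t=h$ contribution to the second family of terms, since $\E_\pi(G_h\mid s_h,a_h)=\E_\pi(r_h+V_{h+1}^\pi(s_{h+1})\mid s_h,a_h)$. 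For the first summand I would apply the law of total variance again, conditioning further on $(r_h,s_{h+1})$: by the Markov property, given $(s_h,a_h,r_h,s_{h+1})$ the law of $G_{h+1}$ depends only on $s_{h+1}$, so $\Var_\pi(r_h+G_{h+1}\mid s_h,a_h,r_h,s_{h+1})=\Var_\pi(G_{h+1}\mid s_{h+1})$ and $\E_\pi(r_h+G_{h+1}\mid s_h,a_h,r_h,s_{h+1})=r_h+V_{h+1}^\pi(s_{h+1})$, giving
$$\Var_\pi(G_h\mid s_h,a_h)=\E_\pi\!\left[\Var_\pi(G_{h+1}\mid s_{h+1})\,\middle|\,s_h,a_h\right]+\Var_\pi\!\left[r_h+V_{h+1}^\pi(s_{h+1})\,\middle|\,s_h,a_h\right].$$
Taking $\E_\pi[\,\cdot\mid s_h]$, collapsing the nested conditional expectations by the tower rule, and plugging in the inductive hypothesis for $\Var_\pi(G_{h+1}\mid s_{h+1})$, the first piece becomes $\sum_{t=h+1}^H$ of the two families conditioned on $s_h$ while the second piece is the $t=h$ contribution to the first family; combined with the earlier summand this is exactly $\sum_{t=h}^H$ of the stated terms, closing the induction.

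I expect no genuine obstacle here — the lemma is a routine recursive use of the law of total variance — so the point that needs care is the \emph{order} of the two conditionings in the inductive step (first peel off $a_h$, then $(r_h,s_{h+1})$) and the clean invocation of the Markov property that makes the inductive hypothesis applicable. An equivalent, slicker route I would mention is the telescoping decomposition $G_h-V_h^\pi(s_h)=\sum_{t=h}^H\big[(r_t+V_{t+1}^\pi(s_{t+1})-Q_t^\pi(s_t,a_t))+(Q_t^\pi(s_t,a_t)-V_t^\pi(s_t))\big]$: with respect to the natural filtration its $2(H-h+1)$ summands are mean zero and pairwise uncorrelated, and one conditional expectation computes the variance of each summand, recovering $\E_\pi[\Var_\pi(r_t+V_{t+1}^\pi(s_{t+1})\mid s_t,a_t)]$ and $\E_\pi[\Var_\pi(\E_\pi(r_t+V_{t+1}^\pi(s_{t+1})\mid s_t,a_t)\mid s_t)]$ directly.

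Finally, if $s_h$ is itself drawn at random, the left-hand side picks up an extra $\Var_{s_h}(V_h^\pi(s_h))\ge 0$; this is harmless for the use made of the lemma in \eqref{eqn:inter_derivation}, where only the inequality $\sum_h \E_\pi[\Var_\pi(r_h+V^\star_{h+1}(s_{h+1})\mid s_h,a_h)]\le \Var_\pi(\sum_h r_h)$ is needed (the dropped second-family terms and the initial-state term are all nonnegative).
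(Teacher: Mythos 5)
Your argument is correct: the backward induction with two applications of the law of total variance per step (first peeling off $a_h$, then $(r_h,s_{h+1})$, using the Markov property to reduce $\mathrm{Var}(G_{h+1}\mid s_h,a_h,r_h,s_{h+1})$ to $\mathrm{Var}(G_{h+1}\mid s_{h+1})$) is exactly the standard derivation of this identity; the paper itself gives no proof and simply imports the lemma from \cite{yin2020asymptotically}, so there is nothing to compare against beyond noting that your martingale-telescoping alternative is the other standard route and is equally valid. Your caveat about the extra $\mathrm{Var}(V_h^\pi(s_h))$ term when $s_h$ is random is the right observation — the identity as displayed is the version conditional on (or with deterministic) $s_h$ — and, as you say, it is immaterial for the only use made of the lemma in \eqref{eqn:inter_derivation}, where just the upper bound $\sum_{t}\mathbb{E}_{\pi}[\mathrm{Var}(r_t+V^\star_{t+1}(s_{t+1})\mid s_t,a_t)]\leq \mathrm{Var}_{\pi}(\sum_t r_t)$ is needed.
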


\begin{remark}
	The infinite horizon discounted setting counterpart is $(I-\gamma P^\pi)^{-1}\sigma_{V^\pi}\leq (1-\gamma)^{-3/2}$.
\end{remark}

\begin{lemma}[Empirical Bernstein Inequality]\label{lem:sqrt_var_diff}
	Let $n\geq 2$ and $V\in\R^S$ be any function with $||V||_\infty\leq H$, $P$ be any $S$-dimensional distribution and $\widehat{P}$ be its empirical version using $n$ samples. Then with probability $1-\delta$,
	\[
	\left|\sqrt{\mathrm{Var}_{\widehat{P}}(V)}-\sqrt{\frac{n-1}{n}\mathrm{Var}_{{P}}(V)}\right|\leq 2H\sqrt{\frac{\log(2/\delta)}{n-1}}.
	\]
\end{lemma}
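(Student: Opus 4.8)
The plan is to read this off the empirical Bernstein machinery of Maurer--Pontil (our Lemma~\ref{lem:empirical_bernstein_ineq}, i.e.\ \cite{maurer2009empirical}); the statement is exactly the ``square-root form'' of that inequality, specialized to the bounded random variables $Y_i:=V(X_i)$, where $X_1,\dots,X_n\overset{iid}{\sim}P$ are the samples defining $\widehat P$. Since $\|V\|_\infty\le H$, each $Y_i$ lies in an interval of length at most $2H$; moreover $\widehat{\mathrm{Var}}_{\widehat P}(V)=\frac1n\sum_i(Y_i-\bar Y)^2$ is the \emph{biased} sample variance, with $\E[\widehat{\mathrm{Var}}_{\widehat P}(V)]=\tfrac{n-1}{n}\mathrm{Var}_P(V)$. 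Writing $V_n:=\tfrac1{n-1}\sum_i(Y_i-\bar Y)^2$ for the usual unbiased estimator, we have $\widehat{\mathrm{Var}}_{\widehat P}(V)=\tfrac{n-1}{n}V_n$, hence $\sqrt{\widehat{\mathrm{Var}}_{\widehat P}(V)}=\sqrt{\tfrac{n-1}{n}}\,\sqrt{V_n}$ and $\sqrt{\tfrac{n-1}{n}\mathrm{Var}_P(V)}=\sqrt{\tfrac{n-1}{n}}\,\sqrt{\mathrm{Var}_P(V)}$. Pulling out the common factor $\sqrt{(n-1)/n}\le 1$, the claim reduces to the two-sided concentration of the sample standard deviation, $\bigl|\sqrt{V_n}-\sqrt{\mathrm{Var}_P(V)}\bigr|\le 2H\sqrt{\log(2/\delta)/(n-1)}$ (up to an immaterial absolute constant), which is what we borrow from \cite{maurer2009empirical}.

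To establish that bound one records first the variational identity $\sqrt{(n-1)V_n(\mathbf y)}=\mathrm{dist}_{\ell_2}\!\bigl(\mathbf y,\ \mathrm{span}(\mathbf 1)\bigr)$, so that $\mathbf y\mapsto\sqrt{V_n(\mathbf y)}$ is $\tfrac1{\sqrt{n-1}}$-Lipschitz in the Euclidean metric on the sample cube of side $2H$. The point is that one cannot finish by plain McDiarmid/Hoeffding on $\sqrt{V_n}$: the $n$ per-coordinate increments of size $2H/\sqrt{n-1}$ contribute $\sum_i(2H/\sqrt{n-1})^2\asymp H^2$, giving only the $n$-free bound $\sim H\sqrt{\log(1/\delta)}$; and the crude $|\sqrt a-\sqrt b|\le\sqrt{|a-b|}$ applied to a Bernstein tail for $V_n$ degrades to an $n^{-1/4}$ rate in the high-variance regime. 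Instead one invokes the self-bounding property of the sample variance (replacing one sample by the mean of the others only decreases $V_n$, and the aggregate such decrease is $O(V_n)$); feeding this into the Boucheron--Lugosi--Massart concentration for weakly self-bounding functions yields a Bernstein-type deviation for $V_n$ whose variance proxy is $\mathrm{Var}_P(V)\cdot H^2/(n-1)$ rather than $H^4/\sqrt{n-1}$. One then applies the square root through $|\sqrt a-\sqrt b|=\tfrac{|a-b|}{\sqrt a+\sqrt b}\le\tfrac{|a-b|}{\sqrt b}$ on the event ``$\mathrm{Var}_P(V)$ not too small'' and through $\sqrt{|a-b|}$ on its complement (where both $V_n$ and $\mathrm{Var}_P(V)$ are $O(H^2\log(1/\delta)/(n-1))$ anyway), collects constants, and multiplies back by $\sqrt{(n-1)/n}$. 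In practice one simply cites Theorem~10 and its sample-standard-deviation corollary in \cite{maurer2009empirical} and does the one-line rescaling.

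The main obstacle is precisely getting the $\sqrt{1/(n-1)}$ rate \emph{uniformly over the unknown variance level} $\mathrm{Var}_P(V)\in[0,H^2]$: both elementary routes fail (direct McDiarmid gives no $n$-dependence, and $|\sqrt a-\sqrt b|\le\sqrt{|a-b|}$ on a sample-variance tail only gives $n^{-1/4}$), so everything hinges on the self-bounding/empirical-Bernstein control of $V_n$---equivalently on exploiting that the $\ell_2$-Lipschitz constant of $\sqrt{V_n}$ is $O(n^{-1/2})$, not merely that each coordinate is individually influential. The remaining ingredients---the variational identity, the case split on the variance level, and the bookkeeping between $\widehat{\mathrm{Var}}_{\widehat P}$ and the unbiased $V_n$---are routine.
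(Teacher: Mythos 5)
Your proposal is correct and follows essentially the same route as the paper: identify $\mathrm{Var}_{\widehat P}(V)$ with the (rescaled) sample variance, note its expectation is $\tfrac{n-1}{n}\mathrm{Var}_P(V)$, and invoke Theorem~10 of \cite{maurer2009empirical} (the self-bounding concentration of the sample standard deviation) together with the range rescaling from $[0,1]$ to length $2H$. Your additional discussion of why naive McDiarmid or $|\sqrt a-\sqrt b|\le\sqrt{|a-b|}$ would fail is accurate but not needed, since the paper likewise just cites the Maurer--Pontil result and does the one-line bookkeeping (modulo an immaterial absolute constant, as you note).
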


\begin{proof}
	This is a directly application of Theorem~10 in \cite{maurer2009empirical}. Indeed, by direct translating Theorem~10 of \cite{maurer2009empirical}, 
	\[
	V_n(V)=\frac{1}{n(n-1)} \sum_{1 \leq i<j \leq n}\left(V(s_{i})-V(s_{j})\right)^{2}=\frac{1}{n}\sum_{i=1}^n(V(s_i)-\overline{V})^2=\mathrm{Var}_{\widehat{P}}(V).
	\]
	where $s_i\sim P$ are i.i.d random variables and 
	\begin{align*}
	\E[V_n]=&\E\left[\mathrm{Var}_{\widehat{P}}(V)\right]=\E\left[\E_{\widehat{P}}[V^2]-\left(\E_{\widehat{P}}\left[V\right]\right)^2\right]\\
	=&\E\left[\frac{1}{n}\sum_{i=1}^n V^2(s_i)\right]-\E\left[\left(\frac{1}{n}\sum_{i=1}^n V(s_i)\right)^2\right]\\
	=&\E\left[V^2\right]-\frac{1}{n^2}\E\left[\sum_{i=1}^n V^2(s_i)+2\sum_{1\leq i<j\leq n}V(s_i)V(s_j)\right]\\
	=&\E\left[V^2\right]-\frac{1}{n}\E\left[V^2\right]-2\frac{n(n-1)/2}{n^2}(\E[V])^2\\
	=&\frac{n-1}{n}\mathrm{Var}_{{P}}(V).
	\end{align*}
	Therefore by Theorem~10 of \cite{maurer2009empirical} we directly have the result.

\end{proof}

\subsection{Extend Value Difference}

The extended value difference lemma helps characterize the difference between the estimated value $\widehat{V}_1$ and the true value $V_1^\pi$, which was first summarized in \cite{cai2020provably} and also used in \cite{jin2020pessimism}.

\begin{lemma}[Extended Value Difference (Section~B.1 in \cite{cai2020provably})]\label{lem:evd}
	Let $\pi=\{\pi_h\}_{h=1}^H$ and $\pi'=\{\pi'_h\}_{h=1}^H$ be two arbitrary policies and let $\{\widehat{Q}_h\}_{h=1}^H$ be any given Q-functions. Then define $\widehat{V}_{h}(s):=\langle\widehat{Q}_{h}(s, \cdot), \pi_{h}(\cdot \mid s)\rangle$ for all $s\in\mathcal{S}$. Then for all $s\in\mathcal{S}$,
	
	\begin{equation}
	\begin{aligned}
	\widehat{V}_{1}(s)-V_{1}^{\pi^{\prime}}(s)=& \sum_{h=1}^{H} \mathbb{E}_{\pi^{\prime}}\left[\langle\widehat{Q}_{h}\left(s_{h}, \cdot\right), \pi_{h}\left(\cdot \mid s_{h}\right)-\pi_{h}^{\prime}\left(\cdot \mid s_{h}\right)\rangle \mid s_{1}=s\right] \\
	&+\sum_{h=1}^{H} \mathbb{E}_{\pi^{\prime}}\left[\widehat{Q}_{h}\left(s_{h}, a_{h}\right)-\left(\mathcal{T}_{h} \widehat{V}_{h+1}\right)\left(s_{h}, a_{h}\right) \mid s_{1}=s\right]
	\end{aligned}
	\end{equation}
	where $(\mathcal{T}_{h} V)(\cdot,\cdot):=r_h(\cdot,\cdot)+(P_hV)(\cdot,\cdot)$ for any $V\in\R^S$.
\end{lemma}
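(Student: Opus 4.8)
The plan is to prove the identity by decomposing the one-step value gap $\widehat{V}_h - V_h^{\pi'}$ into three pieces—a policy-mismatch term, a Bellman-residual term, and a propagation term—and then telescoping the propagation term down the horizon under the trajectory law induced by $\pi'$. Throughout I fix the initial state $s$, write $Q_h^{\pi'} = \cT_h V_{h+1}^{\pi'}$ (the Bellman consistency equation for the evaluated policy), and recall the boundary condition $\widehat{V}_{H+1} = V_{H+1}^{\pi'} = 0$.

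First I would derive the pointwise one-step decomposition. For any state $s_h$, starting from $\widehat{V}_h(s_h) = \langle \widehat{Q}_h(s_h,\cdot), \pi_h(\cdot|s_h)\rangle$ and $V_h^{\pi'}(s_h) = \langle Q_h^{\pi'}(s_h,\cdot), \pi'_h(\cdot|s_h)\rangle$, I would add and subtract $\langle \widehat{Q}_h(s_h,\cdot), \pi'_h(\cdot|s_h)\rangle$ to split off the policy-mismatch term $\langle \widehat{Q}_h(s_h,\cdot), \pi_h(\cdot|s_h)-\pi'_h(\cdot|s_h)\rangle$. The remaining term $\langle \widehat{Q}_h(s_h,\cdot) - Q_h^{\pi'}(s_h,\cdot), \pi'_h(\cdot|s_h)\rangle$ I would rewrite by inserting $\cT_h\widehat V_{h+1}$: since $Q_h^{\pi'} = \cT_h V_{h+1}^{\pi'}$ and $\cT_h V = r_h + P_h V$ is affine in $V$, the difference $\cT_h \widehat V_{h+1} - \cT_h V_{h+1}^{\pi'} = P_h(\widehat V_{h+1} - V_{h+1}^{\pi'})$ cancels the reward and leaves a pure transition term. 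This yields
\begin{equation}
\begin{aligned}
\widehat V_h(s_h)-V_h^{\pi'}(s_h) &= \langle \widehat Q_h(s_h,\cdot),\pi_h(\cdot|s_h)-\pi'_h(\cdot|s_h)\rangle \\
&\quad + \langle \widehat Q_h(s_h,\cdot)-(\cT_h\widehat V_{h+1})(s_h,\cdot),\pi'_h(\cdot|s_h)\rangle \\
&\quad + \E_{a_h\sim\pi'_h(\cdot|s_h),\,s_{h+1}\sim P_h(\cdot|s_h,a_h)}\big[\widehat V_{h+1}(s_{h+1})-V_{h+1}^{\pi'}(s_{h+1})\big].
\end{aligned}
\end{equation}

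Next I would take the conditional expectation $\E_{\pi'}[\,\cdot\mid s_1=s]$ of both sides and telescope. By the tower property the propagation term equals $\E_{\pi'}[\widehat V_{h+1}(s_{h+1})-V_{h+1}^{\pi'}(s_{h+1})\mid s_1=s]$, which is exactly the left-hand quantity at step $h+1$ advanced one level of conditioning. Unrolling this recursion from $h=1$ to $H$—formally a finite induction on the horizon with base case $\widehat V_{H+1}=V_{H+1}^{\pi'}=0$—collapses all propagation terms and accumulates the first two terms across $h$. Writing $\langle \widehat Q_h(s_h,\cdot)-(\cT_h\widehat V_{h+1})(s_h,\cdot),\pi'_h(\cdot|s_h)\rangle = \E_{a_h\sim\pi'_h}[\widehat Q_h(s_h,a_h)-(\cT_h\widehat V_{h+1})(s_h,a_h)]$ and folding this inner expectation into the outer $\E_{\pi'}$ recovers precisely the stated Bellman-residual sum.

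The argument is essentially bookkeeping, so the only real care needed is in the expectation management: each per-step identity must be stated pointwise in $s_h$ before any expectation is taken, after which the law of total expectation is applied consistently so that conditioning on $s_1=s$ and averaging over the $\pi'$-induced trajectory turns the one-step propagation terms into a clean telescope. The main (mild) obstacle is keeping the two roles of $\pi$ and $\pi'$ straight—$\pi$ defines $\widehat V_h$ via the inner product against $\widehat Q_h$, while $\pi'$ governs both the trajectory distribution and the operator $\cT_h$ acting through $Q_h^{\pi'}$—and verifying that it is exactly the affinity of $\cT_h$ that makes the reward terms cancel in the transition step. No concentration or tail bounds are involved; the identity is purely algebraic.
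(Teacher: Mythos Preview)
Your proposal is correct and follows essentially the same route as the paper: both proofs split $\widehat V_h - V_h^{\pi'}$ by adding and subtracting $\langle \widehat Q_h,\pi'_h\rangle$, then use $Q_h^{\pi'}=\cT_h V_{h+1}^{\pi'}$ and insert $\cT_h\widehat V_{h+1}$ to expose the Bellman residual $\xi_h=\widehat Q_h-\cT_h\widehat V_{h+1}$ and a propagation term $\langle P_h(\widehat V_{h+1}-V_{h+1}^{\pi'}),\pi'_h\rangle$, which is then telescoped under the $\pi'$-trajectory law. Your write-up is somewhat more explicit about the pointwise-then-expectation bookkeeping and the boundary condition, but the underlying algebra is identical.
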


\begin{proof}
	
	Denote $\xi_h=\widehat{Q}_h-\mathcal{T}_h\widehat{V}_{h+1}$. For any $h\in[H]$, we have 
	\begin{align*}
	\widehat{V}_h-V^{\pi'}_h&=\langle \widehat{Q}_h,\pi_h\rangle-\langle {Q}^{\pi'}_h,\pi'_h\rangle\\
	&=\langle \widehat{Q}_h,\pi_h-\pi'_h\rangle+\langle \widehat{Q}_h-{Q}^{\pi'}_h,\pi'_h\rangle\\
	&=\langle \widehat{Q}_h,\pi_h-\pi'_h\rangle+\langle P_h(\widehat{V}_{h+1}-V^{\pi'}_{h+1})+\xi_h,\pi'_h\rangle\\
	&=\langle \widehat{Q}_h,\pi_h-\pi'_h\rangle+\langle P_h(\widehat{V}_{h+1}-V^{\pi'}_{h+1}),\pi'_h\rangle+
	\langle \xi_h,\pi'_h\rangle
	\end{align*}
	recursively apply the above for $\widehat{V}_{h+1}-V^{\pi'}_{h+1}$ and use the $\mathbb{E}_{\pi'}$ notation (instead of the inner product of $P_h,\pi'_h$) we can finish the prove of this lemma.
\end{proof}

The following lemma helps to characterize the gap between any two policies.

\begin{lemma}\label{lem:decompose_difference} 
	Let $\widehat{\pi}=\left\{\widehat{\pi}_{h}\right\}_{h=1}^{H}$ and $\widehat{Q}_h(\cdot,\cdot)$ be the arbitrary policy and Q-function and also $\widehat{V}_h(s)=\langle \widehat{Q}_h(s,\cdot),\widehat{\pi}_h(\cdot|s)\rangle$ $\forall s\in\mathcal{S}$.  and $\xi_h(s,a)=(\mathcal{T}_h\widehat{V}_{h+1})(s,a)-\widehat{Q}_h(s,a)$ element-wisely. Then for any arbitrary $\pi$, we have 
	\begin{align*}
	V_1^{\pi}(s)-V_1^{\widehat{\pi}}(s)=&\sum_{h=1}^H\E_{\pi}\left[\xi_h(s_h,a_h)\mid s_1=s\right]-\sum_{h=1}^H\E_{\widehat{\pi}}\left[\xi_h(s_h,a_h)\mid s_1=s\right]\\
	+&\sum_{h=1}^{H} \mathbb{E}_{\pi}\left[\langle\widehat{Q}_{h}\left(s_{h}, \cdot\right), \pi_{h}\left(\cdot | s_{h}\right)-\widehat{\pi}_{h}\left(\cdot | s_{h}\right)\rangle \mid s_{1}=x\right]
	\end{align*}
	where the expectation are taken over $s_h,a_h$.
\end{lemma}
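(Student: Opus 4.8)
The plan is to obtain this identity as a direct corollary of the Extended Value Difference Lemma (Lemma~\ref{lem:evd}), applied twice with suitably chosen policy arguments, and then to telescope through the intermediate quantity $\widehat V_1$. The only thing to be careful about is that Lemma~\ref{lem:evd} builds its auxiliary value function from its \emph{first} policy slot, whereas our $\widehat V_h(s)=\langle\widehat Q_h(s,\cdot),\widehat\pi_h(\cdot\mid s)\rangle$ is built from $\widehat\pi$ — the policy sitting in the \emph{second} position of the difference $V_1^\pi-V_1^{\widehat\pi}$ — so in both invocations we feed $\widehat\pi$ into the first slot of Lemma~\ref{lem:evd}.

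\textbf{First application.} Invoke Lemma~\ref{lem:evd} with its first policy equal to $\widehat\pi$, its second policy equal to $\pi$, and its family of $Q$-functions equal to $\{\widehat Q_h\}_{h=1}^H$. Since $\widehat V_h$ is exactly the value function that Lemma~\ref{lem:evd} associates to the pair $(\widehat\pi,\{\widehat Q_h\})$, and since $\widehat Q_h-\mathcal{T}_h\widehat V_{h+1}=-\xi_h$, the lemma gives
\[
\widehat V_1(s)-V_1^{\pi}(s)=\sum_{h=1}^H\E_{\pi}\!\left[\langle \widehat Q_h(s_h,\cdot),\widehat\pi_h(\cdot\mid s_h)-\pi_h(\cdot\mid s_h)\rangle\mid s_1=s\right]-\sum_{h=1}^H\E_{\pi}\!\left[\xi_h(s_h,a_h)\mid s_1=s\right].
\]
Negating both sides expresses $V_1^{\pi}(s)-\widehat V_1(s)$ as the policy-mismatch term under $\pi$ plus $\sum_h\E_\pi[\xi_h]$.

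\textbf{Second application.} Invoke Lemma~\ref{lem:evd} again, now with both policy arguments equal to $\widehat\pi$ and the same $\{\widehat Q_h\}$. The policy-mismatch term vanishes identically (the bracket $\widehat\pi_h-\widehat\pi_h$ is zero), leaving
\[
\widehat V_1(s)-V_1^{\widehat\pi}(s)=-\sum_{h=1}^H\E_{\widehat\pi}\!\left[\xi_h(s_h,a_h)\mid s_1=s\right].
\]
Writing $V_1^{\pi}(s)-V_1^{\widehat\pi}(s)=\bigl(V_1^{\pi}(s)-\widehat V_1(s)\bigr)+\bigl(\widehat V_1(s)-V_1^{\widehat\pi}(s)\bigr)$ and substituting the two displays above yields exactly the claimed identity; the conditioning on $s_1=s$ is inherited verbatim from Lemma~\ref{lem:evd}.

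\textbf{Main obstacle.} There is essentially no analytic content — this is a bookkeeping lemma — so the ``hard part'' is purely organizational: keeping straight which policy occupies which slot of Lemma~\ref{lem:evd} in each application, and tracking the sign flip between $\xi_h=\mathcal{T}_h\widehat V_{h+1}-\widehat Q_h$ used here and the residual $\widehat Q_h-\mathcal{T}_h\widehat V_{h+1}$ appearing in Lemma~\ref{lem:evd}. Should one wish to avoid citing Lemma~\ref{lem:evd}, a self-contained backward induction on $h$ works: expand $V_h^{\pi}-V_h^{\widehat\pi}=\langle Q_h^{\pi},\pi_h\rangle-\langle\widehat Q_h,\widehat\pi_h\rangle$, add and subtract $\langle\widehat Q_h,\pi_h\rangle$, use $Q_h^\pi=r_h+P_hV_{h+1}^\pi$ and the definition of $\xi_h$ to peel off one time step, and recurse — but this merely re-derives Lemma~\ref{lem:evd}, so the two-line reduction above is the cleaner route.
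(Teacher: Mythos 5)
Your proof is correct and is essentially identical to the paper's: both decompose $V_1^{\pi}-V_1^{\widehat\pi}$ through $\widehat V_1$ and apply the Extended Value Difference Lemma twice, first with policy pair $(\widehat\pi,\pi)$ and then with $(\widehat\pi,\widehat\pi)$, with the same sign bookkeeping for $\xi_h$. No gaps.
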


\begin{proof}
	Note the gap can be rewritten as 
	\[
	V_1^{\pi}(s)-V_1^{\widehat{\pi}}(s)=V_1^{\pi}(s)-\widehat{V}_1(s)+\widehat{V}_1(s)-V_1^{\widehat{\pi}}(s).
	\]
	By Lemma~\ref{lem:evd} with $\pi=\widehat{\pi}$, $\pi'=\pi$, we directly have 
	\begin{equation}\label{eqn:inter1}
	V_1^{\pi}(s)-\widehat{V}_1(s)=\sum_{h=1}^H\E_{\pi}\left[\xi_h(s_h,a_h)\mid s_1=s\right]
	+\sum_{h=1}^{H} \mathbb{E}_{\pi}\left[\langle\widehat{Q}_{h}\left(s_{h}, \cdot\right), \pi_{h}\left(\cdot | s_{h}\right)-\widehat{\pi}_{h}\left(\cdot | s_{h}\right)\rangle \mid s_{1}=s\right]
	\end{equation}
	Next apply Lemma~\ref{lem:evd} again with $\pi=\pi'=\widehat{\pi}$, we directly have 
	\begin{equation}\label{eqn:inter2}
	\widehat{V}_1(s)-V_1^{\widehat{\pi}}(s)=-\sum_{h=1}^H\E_{\widehat{\pi}}\left[\xi_h(s_h,a_h)\mid s_1=s\right].
	\end{equation}
	Combine the above two results we prove the stated result.
\end{proof}

\subsection{Hellinger Distance}

\begin{definition}\label{def:Hellinger}
	Let $f,g$ are the two probability densities on the same probability space $\mathcal{X}$. Then the Hellinger distance between $f$ and $g$ is defined as:
	\[
	d_{Hel}^2(f,g)=\frac{1}{2}\int_{x\in\mathcal{X}}\left(\sqrt{f(x)}-\sqrt{g(x)}\right)^2dx=1-\int_{x\in\mathcal{X}}\sqrt{f(x)g(x)}dx.
	\]
	In particular, it holds that 
	\[
	\lvert\lvert f-g\rvert\rvert_{TV}\leq \sqrt{2}\cdot d_{Hel}(f,g).
	\]
	
\end{definition}

\subsection{Propoerty of Local Instance}
\begin{lemma}\label{lem:local_instance}
	
Recall the definition of $P'$ in \eqref{eqn:definition_of_P_prime} 
\[
P'_h(s_{h+1}|s_h,a_h)=P_h(s_{h+1}|s_h,a_h)+\frac{P_h(s_{h+1}|s_h,a_h)\left(V_{h+1}^\star(s_{h+1})-\E_{P_{s_h,a_h}}[V_{h+1}^\star]\right)}{8\sqrt{\zeta\cdot n_{s_h,a_h}\cdot \Var_{P_{s_h,a_h}}(V_{h+1}^\star)}},
\]
where $\zeta=  H/\bar{d}_m$ and $P$ is the original instance. Then :
\begin{itemize}
	\item with high probability, when $n\geq C\cdot \sup_{h,s_{h+1},s_h,a_h} \left(\frac{H}{1/P_h(s_{h+1}|s_h,a_h)-1}\right)^2\cdot\frac{1}{H\cdot\Var(V^\star_{h+1})}$, $P'$ is a valid probability distribution; 
	\item $d_{Hel}(P'_h(\cdot|s_h,a_h),P_h(\cdot|s_h,a_h))\leq\frac{1}{\sqrt{2nH}}$ for $n$ sufficiently large;
	\item Elementwisely, $(P'_h-P_h)V^\star_{h+1}\geq \mathbf{0}$ for all $h\in[H]$.
\end{itemize}

\end{lemma}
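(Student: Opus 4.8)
The plan is to handle the three claims separately; each reduces to the explicit form of $P'$ together with the sample-count control $n_{s_h,a_h}\ge n\,d^\mu_h(s_h,a_h)/2\ge n\bar{d}_m/2$ from Lemma~\ref{lem:sufficient_sample}, valid on the event $\mathcal{E}$ of probability $\ge 1-\delta$ once $n\gtrsim 1/\bar{d}_m$. Throughout I would restrict to triples $(h,s_h,a_h)$ with $n_{s_h,a_h}\cdot\Var_{P_{s_h,a_h}}(V^\star_{h+1})>0$, since by the stated convention $P'_h(\cdot\mid s_h,a_h)=P_h(\cdot\mid s_h,a_h)$ otherwise and all three claims are trivial there. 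A convenient device is the factorization $P'_h(s_{h+1}\mid s_h,a_h)=P_h(s_{h+1}\mid s_h,a_h)(1+\varepsilon_{s_{h+1}})$ with $\varepsilon_{s_{h+1}}:=\big(V^\star_{h+1}(s_{h+1})-\E_{P_{s_h,a_h}}[V^\star_{h+1}]\big)\big/\big(8\sqrt{\zeta\, n_{s_h,a_h}\,\Var_{P_{s_h,a_h}}(V^\star_{h+1})}\big)$.

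\textbf{Item 1 (valid distribution).} Summing the factorization, $\sum_{s_{h+1}}P_h(s_{h+1}\mid s_h,a_h)\,\varepsilon_{s_{h+1}}$ is a positive multiple of $\E_{P_{s_h,a_h}}[V^\star_{h+1}]-\E_{P_{s_h,a_h}}[V^\star_{h+1}]=0$, so $\sum_{s_{h+1}}P'_h(s_{h+1}\mid s_h,a_h)=1$; this is exactly the purpose of subtracting the baseline $\E_{P_{s_h,a_h}}[V^\star_{h+1}]$. For nonnegativity I would show $|\varepsilon_{s_{h+1}}|\le 1$ whenever $P_h(s_{h+1}\mid s_h,a_h)>0$: bound the centered deviation $V^\star_{h+1}(s_{h+1})-\E_{P_{s_h,a_h}}[V^\star_{h+1}]$ — either crudely by $H$, or by the elementary variance-weighted estimate $(x_0-\E X)^2\le(1/\Pr[X=x_0]-1)\Var(X)$ — and combine with $n_{s_h,a_h}\ge n\bar{d}_m/2$ and $\zeta=H/\bar{d}_m$; the stated lower bound on $n$, with an appropriate absolute constant $C$, then forces $|\varepsilon_{s_{h+1}}|\le 1$ simultaneously over all $(h,s_h,a_h,s_{h+1})$, giving $P'_h(s_{h+1}\mid s_h,a_h)=P_h(s_{h+1}\mid s_h,a_h)(1+\varepsilon_{s_{h+1}})\ge 0$.

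\textbf{Item 2 (Hellinger).} Using $1+\varepsilon_{s_{h+1}}\ge 0$ from Item 1, Definition~\ref{def:Hellinger} gives
\[
d_{Hel}^2\big(P'_h(\cdot\mid s_h,a_h),P_h(\cdot\mid s_h,a_h)\big)=\tfrac12\sum_{s_{h+1}}P_h(s_{h+1}\mid s_h,a_h)\big(\sqrt{1+\varepsilon_{s_{h+1}}}-1\big)^2\le\tfrac12\sum_{s_{h+1}}P_h(s_{h+1}\mid s_h,a_h)\,\varepsilon_{s_{h+1}}^2,
\]
where the last step uses $(\sqrt{1+t}-1)^2=t^2/(\sqrt{1+t}+1)^2\le t^2$ for $t\ge-1$. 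The right-hand side equals $1/(128\,\zeta\, n_{s_h,a_h})$ because $\sum_{s_{h+1}}P_h(s_{h+1}\mid s_h,a_h)\big(V^\star_{h+1}(s_{h+1})-\E_{P_{s_h,a_h}}[V^\star_{h+1}]\big)^2=\Var_{P_{s_h,a_h}}(V^\star_{h+1})$, and substituting $\zeta=H/\bar{d}_m$ and $n_{s_h,a_h}\ge n\bar{d}_m/2$ yields $d_{Hel}^2\le 1/(64\,nH)\le 1/(2nH)$.

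\textbf{Item 3 (sign) and the main obstacle.} Item 3 is immediate: $\big((P'_h-P_h)V^\star_{h+1}\big)(s_h,a_h)=\E_{P_{s_h,a_h}}\!\big[(V^\star_{h+1}-\E_{P_{s_h,a_h}}[V^\star_{h+1}])\,V^\star_{h+1}\big]\big/\big(8\sqrt{\zeta\, n_{s_h,a_h}\,\Var_{P_{s_h,a_h}}(V^\star_{h+1})}\big)=\sqrt{\Var_{P_{s_h,a_h}}(V^\star_{h+1})}\big/\big(8\sqrt{\zeta\, n_{s_h,a_h}}\big)\ge 0$, since $\E[(X-\E X)X]=\Var(X)$; this holds coordinate-wise. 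The only genuinely delicate point is Item 1, namely pinning down the constant in the threshold on $n$: because $n_{s_h,a_h}$ is random and $\varepsilon_{s_{h+1}}$ must be controlled uniformly over all $(h,s_h,a_h,s_{h+1})$, I would first condition on $\mathcal{E}$ (a single high-probability event that already absorbs the relevant union bound), replacing $n_{s_h,a_h}$ by a constant multiple of $n\,d^\mu_h(s_h,a_h)$, after which the admissible $C$ is a routine consequence of whichever deviation bound on the centered $V^\star_{h+1}$ one adopts. Items 2 and 3 are pure algebra with generous numerical slack.
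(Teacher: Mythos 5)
Your proposal is correct and follows essentially the same route as the paper: sum-to-one via cancellation of the baseline $\E_{P_{s_h,a_h}}[V^\star_{h+1}]$, nonnegativity by bounding the multiplicative perturbation using $n_{s_h,a_h}\gtrsim n\,d^\mu_h(s_h,a_h)$ from the Chernoff event, the Hellinger bound via $\sum_{s_{h+1}}P_h\,\Delta^2=1/(64\,\zeta\,n_{s_h,a_h})$, and Item 3 via $\E[(X-\E X)X]=\Var(X)$. Your only departure is a small improvement in rigor on Item 2 — the exact identity $d_{Hel}^2=\tfrac12\sum P(\sqrt{1+\varepsilon}-1)^2\le\tfrac12\sum P\varepsilon^2$ replaces the paper's second-order Taylor expansion with omitted higher-order terms — and your Item 3 constant $1/8$ corrects the paper's $1/2$, neither of which changes the argument.
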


\begin{remark}\label{remark}
	WLOG, let us assume $n_{s_h,a_h}\cdot \Var_{P_{s_h,a_h}}(V_{h+1}^\star)>0$ and this is valid since: 1. if $\Var_{P_{s_h,a_h}}(V_{h+1}^\star)=0$ then these is no need to use multiple samples to test $P_h(\cdot|s_h,a_h)$ if $P_h(\cdot|s_h,a_h)$ is deterministic or if $V_{h+1}^\star\equiv 0$ then we can define $\frac{\left(V_{h+1}^\star(s_{h+1})-\E_{P_{s_h,a_h}}[V_{h+1}^\star]\right)}{2\sqrt{n_{s_h,a_h}\cdot \Var_{P_{s_h,a_h}}(V_{h+1}^\star)}}=0$. 2. If $d^\mu_h(s_h,a_h)>0$, then by Lemma~\ref{lem:chernoff_multiplicative} $n_{s_h,a_h}>0$ with high probability.
\end{remark}

\begin{proof}[Proof of Lemma~\ref{lem:local_instance}]
\textbf{Proof of item 1.}	First,
\begin{align*}
&\sum_{s_{h+1}}P'_h(s_{h+1}|s_h,a_h)=\sum_{s_{h+1}}P_h(s_{h+1}|s_h,a_h)\\
+&\sum_{s_{h+1}}\frac{P_h(s_{h+1}|s_h,a_h)\left(V_{h+1}^\star(s_{h+1})-\E_{P_{s_h,a_h}}[V_{h+1}^\star]\right)}{8\sqrt{\zeta\cdot n_{s_h,a_h}\cdot \Var_{P_{s_h,a_h}}(V_{h+1}^\star)}}\\
=&1+\frac{\E_{P_{s_h,a_h}}[V_{h+1}^\star]-\E_{P_{s_h,a_h}}[V_{h+1}^\star]}{8\sqrt{\zeta\cdot n_{s_h,a_h}\cdot \Var_{P_{s_h,a_h}}(V_{h+1}^\star)}}=1\\
\end{align*}

Second, the non-negativity holds as long as
\[
n_{s_h,a_h}\geq \left(\frac{H}{1/P_h(s_{h+1}|s_h,a_h)-1}\right)^2\cdot\frac{1}{\zeta\cdot \Var(V^\star_{h+1})},\quad \forall s_h,a_h \quad s.t. \quad \Var(V^\star_{h+1})>0.
\]
This is guaranteed by Lemma~\ref{lem:chernoff_multiplicative} with high probability when 
\[
n\geq C\cdot \sup_{h,s_{h+1},s_h,a_h} \left(\frac{H}{1/P_h(s_{h+1}|s_h,a_h)-1}\right)^2\cdot\frac{1}{H\cdot\Var(V^\star_{h+1})},
\]
where the $\sup$ is over all terms such that $\Var(V^\star_{h+1})d^\mu_h(s_h,a_h)>0$.

\textbf{Proof of item 2.}
Denote $\Delta_h:=\frac{V_{h+1}^\star(s_{h+1})-\E_{P_{s_h,a_h}}[V_{h+1}^\star]}{8\sqrt{\zeta\cdot n_{s_h,a_h}\cdot \Var_{P_{s_h,a_h}}(V_{h+1}^\star)}}$
By the definition, we have 
\begin{align*}
&d^2_{Hel}(P'_h(\cdot|s_h,a_h),P_h(\cdot|s_h,a_h))=\left\lvert 1-\sum_{s_{h+1}}\sqrt{P_h(s_{h+1}|s_h,a_h)\cdot P'_h(s_{h+1}|s_h,a_h)}\right\rvert \\
&=\left\lvert 1-\sum_{s_{h+1}}\sqrt{P_h^2(s_{h+1}|s_h,a_h)(1+\Delta_h)} \right\rvert=\left\lvert 1-\sum_{s_{h+1}}P_h(s_{h+1}|s_h,a_h)\sqrt{(1+\Delta_h)} \right\rvert\\
&=\left\lvert \sum_{s_{h+1}}P_h(s_{h+1}|s_h,a_h)-\sum_{s_{h+1}}P_h(s_{h+1}|s_h,a_h)\sqrt{(1+\Delta_h)} \right\rvert=\left\lvert \sum_{s_{h+1}}P_h(s_{h+1}|s_h,a_h)\left(1-\sqrt{(1+\Delta_h)} \right)\right\rvert\\
&=\left\lvert \sum_{s_{h+1}}P_h(s_{h+1}|s_h,a_h)\left(1-\left(1+\frac{\Delta}{2}-\frac{\Delta^2}{8}\right) \right)\right\rvert \quad (*)\\
&=\left\lvert \sum_{s_{h+1}}P_h(s_{h+1}|s_h,a_h)\left(-\frac{\Delta}{2}+\frac{\Delta^2}{8}\right) \right\rvert =\left\lvert \sum_{s_{h+1}}P_h(s_{h+1}|s_h,a_h)\cdot\frac{\Delta^2}{8} \right\rvert =\frac{1}{8^3\cdot \zeta\cdot n_{s_h,a_h}  }\leq \frac{1}{2n\cdot H}\\
\end{align*}
the step $(*)$ comes from second order Taylor expansion (where we omit the higher order since $n$ is sufficiently large already) and the next equation uses $\sum_{s_{h+1}} P_h(s_{h+1}|s_h,a_h)\cdot \Delta=0$ and the last inequality uses Lemma~\ref{lem:chernoff_multiplicative} $n_{s_h,a_h}$ such that $n_{s_h,a_h}\cdot\zeta\geq C \cdot n\cdot H$ with high probability and $d^2_{Hel}(P'_h(\cdot|s_h,a_h),P_h(\cdot|s_h,a_h))\leq \frac{1}{2nH}$.

\textbf{Proof of item 3.} Note
\begin{align*}
&[(P'_h-P_h)V^\star_{h+1}](s_h,a_h)=\sum_{s_{h+1}}(P'_h(s_{h+1}|s_h,a_h)-P_h(s_{h+1}|s_h,a_h))V^\star_{h+1}(s_{h+1})\\
&=\sum_{s_{h+1}}(P'_h(s_{h+1}|s_h,a_h)-P_h(s_{h+1}|s_h,a_h))(V^\star_{h+1}(s_{h+1})-\E_{P_{s_h,a_h}}[V^\star_{h+1}])\\
&=\frac{1}{2}\sqrt{\frac{\Var_{P_{s_h,a_h}}(V^\star_{h+1})}{\zeta\cdot n_{s_h,a_h}}}\geq 0
\end{align*}
	
\end{proof}


\end{document}